\documentclass{article}

\usepackage{microtype}
\usepackage{graphicx}
\usepackage{booktabs} 
\usepackage{tabularx}
\usepackage{longtable}
\usepackage{booktabs}
\usepackage{natbib}
\usepackage{bbm}
\usepackage{enumitem}
\usepackage{graphicx}
\usepackage{comment}
\usepackage{caption}
\usepackage{subcaption}
\usepackage{tikz}
\usepackage{dsfont}
\usetikzlibrary{shapes.geometric, arrows.meta, positioning, shadows}

\usepackage{colortbl} 
\usepackage[table,xcdraw]{xcolor} 

\usepackage{algpseudocode}
\usepackage{algorithm}
\usepackage{archive}

 \algnewcommand\Parameters{\item[\textbf{Parameters:}]}

\usepackage[mathscr]{euscript}

\usepackage{multirow}
\usepackage{adjustbox}
\usepackage{graphicx}
\usepackage{caption}
\usepackage{subcaption}

\usepackage{amsmath}
\usepackage{amssymb}
\usepackage{mathtools}
\usepackage{amsthm}
\usepackage{mathrsfs}
\usepackage{nameref}
\usepackage{float}
\usepackage{soul}
\usepackage{stmaryrd}
\usepackage{tikz}
\usetikzlibrary{calc,positioning,intersections,quotes,decorations.markings,arrows.meta, bending}
\usepackage[doc]{optional}
\usepackage{tkz-euclide}

\definecolor{lightgray}{gray}{0.9}
\definecolor{darkblue}{rgb}{0.0,0.0,0.65}

\usepackage{xcolor}
\definecolor{refkey}{rgb}{0,0.6,0.0}
\definecolor{Brown}{rgb}{0.45,0.0,0.05}
\definecolor{lime}{rgb}{0.00,0.8,0.0}
\definecolor{lblue}{rgb}{0.5,0.5,0.99}
\usepackage[colorlinks=true,
            linkcolor= Brown,
            urlcolor=lblue,
            citecolor=blue]{hyperref}

\usepackage{graphicx}
\definecolor{labelkey}{rgb}{0,0.08,0.45}

\usepackage[capitalize,nameinlink]{cleveref}
\crefname{equation}{}{equations}
\crefname{figure}{Figure}{Figures}
\crefname{chapter}{Appendix}{chapters}
\crefname{item}{}{items}
\crefname{enumi}{}{}

\theoremstyle{plain}
\newtheorem{theorem}{Theorem}[section]
\newtheorem{proposition}[theorem]{Proposition}
\newtheorem{lemma}[theorem]{Lemma}
\newtheorem{corollary}[theorem]{Corollary}
\theoremstyle{definition}
\newtheorem{definition}[theorem]{Definition}
\newtheorem{assumption}[theorem]{Assumption}

\theoremstyle{remark}
\newtheorem{remark}[theorem]{Remark}

\crefname{assumption}{Assumption}{Assumptions}
\crefname{proposition}{Proposition}{Propositions}
\crefname{definition}{Definition}{Definitions}
\crefname{corollary}{Corollary}{Corollaries}

\usepackage{xspace}

\usepackage[textsize=tiny]{todonotes}

\title{Percolation Dynamics in Optimization: Variance Cascades and Nested Symmetry}

\author{{\bfseries Sai Niranjan Ramachandran}\thanks{School of Computation, Information and Technology, Technical University of Munich, Germany}\, \thanks{Munich Center for Machine Learning (MCML) }%
  \and
  {\bfseries Suvrit Sra}\footnotemark[1]\, \footnotemark[2] }

\begin{document}

\maketitle
\begin{abstract}
We study the dynamics of Stochastic Gradient Descent (SGD), which is known to
steer deep neural networks toward invariant sets that correspond to simpler
subnetworks. How this steering unfolds over time remains poorly understood.
We answer this by modeling the stochastic gradient flow (SGF) as a
percolation process, in which nested architectural symmetries force subnetworks to
merge in discrete blocks rather than by single-edge attachment. These
structural transitions register as variance spikes in a macroscopic order
parameter echoing physical phase transitions. We further state sufficient conditions under which the
trapping argument carries over to Adam and AdamW under heavy-tailed gradient
noise and measure them on a trained Transformer.
\end{abstract}

\section{Introduction}

Unlike classical statistical learning, deep learning systems rely on a complex interaction between the dataset, architecture, and choice of optimizer. Furthermore, they are known to achieve strong generalization without explicit regularization, a phenomenon attributed to implicit biases introduced during training. Recent work has established that Stochastic Gradient Descent (SGD) is a central source of this implicit bias collapsing networks onto invariant sets that
behave like much simpler subnetworks \citep{wei2008dynamics, chen2023stochastic}. However, the dynamics by which these invariant sets are reached are poorly understood.

This gap limits our ability to mechanistically explain anomalous training behaviors. Delayed generalization in transformer networks \citep{power2022grokking, liu2022towards} is one example. A network may memorize the training data perfectly, but it only discovers a generalizing solution after thousands of epochs, challenging the classical view of smooth optimization. Similar patterns can also be seen in exact deep linear networks \citep{saxe2013exact} and task shifts \citep{goodfellow2013empirical,
kirkpatrick2017overcoming}. This suggests that the topological structure of these dynamics contains crucial information about how optimization progresses.

To chart this evolution, we describe the optimization continuously with a stochastic differential equation (SDE) that records how parameters drift and diffuse over time \citep{li2017stochastic}. Architectural symmetries produce invariant sets which draw parameter trajectories \citep{chen2023stochastic}. Independent parameters entering these sets come together and fuse into equivalence classes, and the converging weights collide and merge, folding the parameter space into a simpler topology in the settings we examine \citep{naitzat2020topology}. We represent this dynamics as a percolation process where isolated components link and fuse, potentially causing sudden macroscopic transitions \citep{achlioptas2009explosive, chen2014microtransition}.  We develop this framework for SGD, where the mechanism is cleanest to state and prove, but most large-scale training uses adaptive optimizers, so we state sufficient conditions under which the trapping and cascade arguments carry over to Adam and AdamW with heavy-tailed gradient noise.

Building on this percolation framework, our contributions are as follows:

\begin{enumerate}[label=\textbf{\arabic*.}]
\item \textbf{Percolation Model of Topological Condensation:} We construct a
framework that maps stochastic gradient flow near invariant sets onto a graph
that tracks merges and splits (Reeb graph), which is then renormalized into a
percolation process. Nested architectural symmetries force merges across subtrees to occur as block merges, alongside
single-edge attachment within them, and we give the condition under which this
discontinuity survives as the network grows large.

\item \textbf{Variance Peaks and Topological Cascades:} We use relative
variance across training trajectories to isolate discrete microtransitions.
Each block merge of multiplicity $n$ produces a peak of height $(n-1)^2/(4n)$
at a fixed quantile of its hitting law. Modeling the block merges as a coalescence process, we show that the peaks
form a geometric cascade exhibiting the Discrete Scale Invariance (DSI) of
physical critical phenomena \citep{sornette1998discrete}. The ratio between
successive levels is $n$ in the merge fraction and $n^{1-\gamma_K}$ in
training time.

\item \textbf{Empirical Observations:} We test this framework across
environments and observe the predicted cascade structure. In toy models we
track parameters clustering under shifting data distributions, and a
comparison of a flat layer with a binary tree built from the same units shows
that nested symmetry produces the macroscopic block merges and that a
weight-decay-paced clock spaces the levels uniformly in time. Cascades with graded statistical support appear in MLPs on UCI tabular classification and vision
benchmarks and in a Transformer on modular arithmetic, on which we also
measure the Adam trapping condition.
\end{enumerate}

\textbf{Key takeaways.} Implicit bias is frequently characterized as a continuous shift toward simpler networks. We show that it instead happens via discrete merges of subnetworks, where each merge is marked by a peak in relative variance. Deep networks inherit nested symmetry from composition, leading to discontinuous joint merges of groups of subnetworks that are absent in a single flat layer. Such merges follow a geometric cascade and are evenly spaced under an implicit clock. We observe that in AdamW the weight decay additionally functions as a driver of subnetwork merges.


 \section{Stochastic Collapse in SGD}
\label{sec:stochastic_collapse}
As our principal aim is to study the transient dynamics of a network, we characterize the invariant sets of the training dynamics and the conditions under which they are attractive. We rely on permutation symmetries for this purpose.

Let $\boldsymbol{\theta} \in \mathbb{R}^d$ denote a model's parameter vector and $\mathcal{L}(\boldsymbol{\theta})$ a nonconvex loss, for example a neural network's empirical risk. At iteration $t$, optimizing $\mathcal{L}$ with learning rate $\eta$ over a mini-batch $\mathcal{B}_t$ yields the discrete update:
\begin{equation*}
    \boldsymbol{\theta}_{t+1} = \boldsymbol{\theta}_t - \eta \nabla \mathcal{L}_{\mathcal{B}_t}(\boldsymbol{\theta}_t).
\end{equation*}
By separating the exact full-batch gradient $\nabla \mathcal{L}(\boldsymbol{\theta}_t)$ from the zero-mean batch noise, we approximate this discrete sequence continuously using a Stochastic Gradient Flow (SGF) modeled by an It\^o SDE \citep{li2017stochastic}:
\begin{equation}
    d\boldsymbol{\theta}_t = -\nabla \mathcal{L}(\boldsymbol{\theta}_t)dt + \sqrt{\eta \Sigma(\boldsymbol{\theta}_t)} dW_t,
\end{equation}
where $W_t$ is a standard $d$-dimensional Wiener process and $\Sigma(\boldsymbol{\theta})$ is the position-dependent noise covariance matrix of the mini-batch gradients.

Architectural symmetries, such as permutation invariances among neurons, generically create flat, degenerate regions in the parameter space. We formally define these regions as invariant sets, following \citep{chen2023stochastic}.

\begin{definition}[Invariant Set]
A Borel set $A \subset \mathbb{R}^d$ is invariant for a stochastic process $\{\boldsymbol{\theta}_t\}$ if every trajectory initialized in $A$ stays in $A$ almost surely. For discrete SGD, $t$ ranges over the iterations.
\end{definition}

We note that for affine invariant sets, invariance under SGD carries over exactly to SGF.

\begin{proposition}[Affine Invariance in SGF]\label{prop:affine_invariance_main}
Consider an SGD process where the individual sample gradients $\nabla \ell(\cdot; x_i, y_i)$ are Lipschitz continuous and bounded. If an affine subset $A \subseteq \mathbb{R}^d$ is invariant under every sample gradient step, so that each $\nabla\ell(\cdot; x_i, y_i)$ is tangent to $A$ on $A$, then $A$ is an invariant set of the continuous SGF process.
\end{proposition}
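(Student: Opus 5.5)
Write $A=\boldsymbol{a}+V$ with $V$ a linear subspace, and let $P$ be the orthogonal projection onto $V^\perp$. The plan is to show that the normal component $\boldsymbol{\xi}_t = P(\boldsymbol{\theta}_t-\boldsymbol{a})$ of the SGF stays identically zero when $\boldsymbol{\theta}_0\in A$. Two structural facts about $A$ drive this.

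\textbf{Step 1: both SDE coefficients are tangent to $A$.} For every $\boldsymbol{\theta}\in A$ the hypothesis gives $P\nabla\ell(\boldsymbol{\theta};x_i,y_i)=0$ for all $i$. Averaging, $P\nabla\mathcal{L}(\boldsymbol{\theta})=0$. The mini-batch covariance $\Sigma(\boldsymbol{\theta})$ is a positive combination of outer products of the centered sample gradients $\nabla\ell_i-\nabla\mathcal{L}$, all of which lie in $V$. Hence $P\Sigma(\boldsymbol{\theta})=\Sigma(\boldsymbol{\theta})P=0$ on $A$, so $\mathrm{range}\,\Sigma(\boldsymbol{\theta})\subseteq V$. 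Taking the symmetric PSD square root preserves the range, so $P\sqrt{\eta\Sigma(\boldsymbol{\theta})}=0$ on $A$ as well.

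\textbf{Step 2: well-posedness.} Since the per-sample gradients are Lipschitz and bounded, the drift $-\nabla\mathcal{L}$ is globally Lipschitz and $\Sigma$ is bounded and Lipschitz. The matrix square root is only $1/2$-Hölder near degenerate points. I would sidestep this in one of two ways. The first is to interpret the diffusion as $\sqrt{\eta/B}\,G(\boldsymbol{\theta})\,dW$, where $G$ is the $d\times N$ matrix of centered per-sample gradients, which is Lipschitz; this is the standard convention in \citep{li2017stochastic}. The second is to assume the square root is Lipschitz. Either way the SDE has a unique strong solution.

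\textbf{Step 3: the restricted process stays in $A$.} Define a process living on $A$ by solving the SDE with coefficients restricted to $A$: for $\boldsymbol{\theta}=\boldsymbol{a}+\boldsymbol{v}$ with $\boldsymbol{v}\in V$, use the drift $-\nabla\mathcal{L}(\boldsymbol{a}+\boldsymbol{v})$ and the diffusion $G(\boldsymbol{a}+\boldsymbol{v})$. By Step 1, $P$ annihilates both coefficients on $A$, so this is an SDE on the affine space $A\cong V$. Its Itô increments $d\boldsymbol{\theta}_t$ lie in $V$, so $\boldsymbol{\theta}_t-\boldsymbol{a}\in V$ for all $t$ almost surely. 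The coefficients are still Lipschitz on $V$, so this restricted SDE has a unique strong solution $\tilde{\boldsymbol{\theta}}_t\in A$. At points of $A$ the restricted coefficients coincide with the original ones, so $\tilde{\boldsymbol{\theta}}_t$ also solves the full SDE in $\mathbb{R}^d$.

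\textbf{Step 4: uniqueness transfers the conclusion.} By pathwise uniqueness of the full SDE from Step 2, $\boldsymbol{\theta}_t=\tilde{\boldsymbol{\theta}}_t\in A$ for all $t$ almost surely. Hence $A$ is invariant for SGF.

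As a backup for Step 3, apply Itô's formula to $\|\boldsymbol{\xi}_t\|^2$. Lipschitz continuity together with the vanishing of $P\nabla\mathcal{L}$ and $PG$ on $A$ gives $\|P\nabla\mathcal{L}(\boldsymbol{\theta})\|\le L\|\boldsymbol{\xi}\|$ and $\|PG(\boldsymbol{\theta})\|_F\le L\|\boldsymbol{\xi}\|$. These bounds yield $\frac{d}{dt}\mathbb{E}\|\boldsymbol{\xi}_t\|^2\le C\,\mathbb{E}\|\boldsymbol{\xi}_t\|^2$, and Grönwall's inequality from $\boldsymbol{\xi}_0=0$ forces $\boldsymbol{\xi}\equiv0$.

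The main obstacle is the regularity of $\sqrt{\Sigma}$ at points where it degenerates, which is precisely on $A$. This is why I would commit to the factorized Lipschitz diffusion $G$ rather than the matrix square root.
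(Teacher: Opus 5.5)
Your proposal is correct and follows essentially the paper's proof. The paper likewise drives the SGF by the Lipschitz factor $G$ of centered sample gradients, because the symmetric root is only H\"older-$\tfrac12$ where it degenerates on $A$. It then builds a solution confined to $A$ from an auxiliary SDE whose coefficients agree with the original ones on $A$, and concludes by pathwise uniqueness.

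The only difference is cosmetic. The paper writes the auxiliary SDE on all of $\mathbb{R}^d$ with coefficients $P\nabla\mathcal{L}$ and $PG$, where its $P$ projects onto the direction space of $A$, rather than restricting the coefficients to $A$. Your Gr\"onwall backup on $\|P(\boldsymbol{\theta}_t-\boldsymbol{a})\|^2$ is also a valid alternative, and it avoids constructing the confined solution altogether.
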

\begin{proof}[Proof Sketch \citep{chen2023stochastic}]
As the symmetric root of $\eta\Sigma$ is only H\"older-$\tfrac12$ on $A$, we drive the SDE by the Lipschitz factor $G(\boldsymbol{\theta})$ of the centered sample gradients, $G G^\top = \eta\Sigma$, which admits a unique strong solution. Since the drift and $G$ are tangent to $A$ on $A$, the projected SDE matches the original one and trajectories stay in $A$ (Appendix \ref{app:sde_and_invariance}).
\end{proof}

We note that our analysis uses SGF while our experiments run on discrete SGD, which shares the same invariant sets (Proposition~\ref{prop:affine_invariance_main}). The discretization error vanishes on the invariant set \citep{li2017stochastic, chen2023stochastic}. Further, since the cascade predictions are ratios of times, they are unaffected by the conversion from steps to continuous time.

Once parameters drift near these invariant sets, the model predicts a qualitative change in the optimization dynamics. If the mini-batch variance $\Sigma(\boldsymbol{\theta})$ decays as $\boldsymbol{\theta}$ approaches $A$, an inward pull emerges that can trap trajectories near $A$. We formalize this as stochastic attractivity:

\begin{definition}[Stochastic Attractivity via Drift Dominance]
\label{def:stoch-attac}
Write $\boldsymbol{x} = \boldsymbol{\theta}^{(i)} - \pi_A(\boldsymbol{\theta}^{(i)})$ for the transverse vector, $Y_t^{(i)} = \|\boldsymbol{x}_t\|_2^2$ for the transverse distance process, and $\Sigma_\perp = P^\perp \Sigma_{ii} P^\perp$ for the transverse noise covariance. An invariant set $A \subset \mathbb{R}^{d_i}$ of the block $\boldsymbol{\theta}^{(i)}$ (Definition~\ref{def:subnet-part}) of the stochastic process $\{\boldsymbol{\theta}_t \in \mathbb{R}^d : t \geq 0\}$ is locally stochastically attractive of order $\alpha \in (0,1]$ if there exists a basin $\mathcal{N}(A, \epsilon)$ such that for all $\boldsymbol{\theta} \in \mathcal{N}(A, \epsilon) \setminus A$,
\begin{equation}
\label{eq:attractivity}
    -2\, \boldsymbol{x}^\top \nabla_{\boldsymbol{\theta}^{(i)}} \mathcal{L}(\boldsymbol{\theta}) + \eta \operatorname{tr}\Sigma_\perp(\boldsymbol{\theta}) \;\le\; 2(1-\alpha)\, \eta\, \frac{\boldsymbol{x}^\top \Sigma_\perp(\boldsymbol{\theta}) \boldsymbol{x}}{\|\boldsymbol{x}\|_2^2}.
\end{equation}
$A$ is strictly attractive with rate $\kappa > 0$ if the gap between the two sides is at least $(\kappa/\alpha)\|\boldsymbol{x}\|_2^2$, so that $\mathscr{G}(Y^{(i)})^\alpha \le -\kappa\,(Y^{(i)})^\alpha$. The infinitesimal generator applied to $(Y_t^{(i)})^\alpha$ equals $\alpha (Y_t^{(i)})^{\alpha-1}$ times the difference of the two sides of \eqref{eq:attractivity} (Appendix \ref{sec:appendix_topological_mechanics}, Equation \eqref{eq:generator}), so \eqref{eq:attractivity} states that the inward drift, helped by the It\^o correction of the concave map $Y \mapsto Y^\alpha$, overpowers the outward transverse diffusion.
\end{definition}

Sending $\alpha \downarrow 0$ recovers the collapse threshold $\mu \le \zeta^2/2$ of Appendix \ref{sec:appendix_toy_experiments} \citep{ziyin2022sgd, chen2023stochastic}. With isotropic noise in $k$ transverse directions, noise alone traps only for $k < 2(1-\alpha)$, and beyond that the inward drift or weight decay must carry the condition. SGF behaves as a non-equilibrium system driven by anisotropic, position-dependent noise \citep{chaudhari2018stochastic, mandt2017stochastic, mori2022logarithmic}, which motivates the analysis below.

\section{Topological Dynamics of SGF via Attractor Equivalence}
\label{sec:topological_dynamics}

To characterize how the network approaches these sets, we analyze the convergence of subnetworks under symmetries, e.g.\ the neurons of a layer, which are interchangeable under permutation and whose redundancy is exploited in pruning and dropout \citep{simsek2021geometry, frankle2019lottery, srivastava2014dropout}.

\begin{definition}[Subnetwork Partition]
\label{def:subnet-part}
Partition the global parameter vector into $N$ distinct subnetworks, $\boldsymbol{\theta} = [\boldsymbol{\theta}^{(1)}, \dots, \boldsymbol{\theta}^{(N)}]$, such that each $\boldsymbol{\theta}^{(i)} \in \mathbb{R}^{d_i}$ evolves over a marginal function space $C([0,\infty), \mathbb{R}^{d_i})$.
\end{definition}

We model the collapse of a network as a sequence of topological phase transitions between its subnetworks. To this end, we track the marginal path measure of each subnetwork over a filtered probability space $(\Omega, \mathcal{F}, \{\mathcal{F}_t\}_{t \ge 0}, \mathbb{P})$. We treat subnetworks separately, as symmetry gives each pair of identical subnetworks its own invariant set, while neural network training is documented to possess a simplicity bias that decouples parameter blocks \citep{simsek2021geometry, huh2021lowrank, shah2020pitfalls}.

The constructions below use only the transverse distance of each subnetwork to its invariant set, so no separate bound on the cross-covariance between blocks is needed: per-sample symmetry makes the pairwise transverse noise vanish on the coincidence set. To formalize structural binding, we project these trajectories into a shared quotient space by tracking their squared transverse distance $Y_t^{(i)} = \|\boldsymbol{\theta}_t^{(i)} - \pi_A(\boldsymbol{\theta}_t^{(i)})\|_2^2$ to a target invariant set $A$. Under stochastic attractivity, the inward deterministic gradient dominates the transverse diffusion, opposing outward drift. We show this is enough to trap the subnetwork within the local basin.

\begin{theorem}[Local Supermartingale Trapping]
\label{thm:supermartingale}
Given a stochastically attractive invariant set $A$ of order $\alpha$, there exists a local basin $\mathcal{N}(A, \epsilon)$ such that the stopped transverse process $(Y_{t \wedge \tau_\epsilon}^{(i)})^{\alpha}$ operates as a non-negative supermartingale, where $\tau_\epsilon = \inf\{t \ge t_0 : Y_t^{(i)} \ge \epsilon\}$ marks the boundary of escape. By Doob's maximal inequality, the probability of escaping the $\epsilon$-neighborhood is then at most $\epsilon^{-\alpha}\, \mathbb{E}[(Y_0^{(i)})^\alpha]$.
\end{theorem}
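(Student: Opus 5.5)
The plan is to apply It\^o's formula to $f(Y)=Y^\alpha$ along the stopped process, show that the resulting drift is nonpositive inside the basin, control the martingale part by localization, and finish with Doob's maximal inequality for non-negative supermartingales.

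\textbf{Step 1: the semimartingale form of $Y_t^{(i)}$.} Write $\boldsymbol{x}_t=\boldsymbol{\theta}^{(i)}_t-\pi_A(\boldsymbol{\theta}^{(i)}_t)$. Since $A$ is affine, $\boldsymbol{x}=P^\perp(\boldsymbol{\theta}^{(i)}-a_0)$ is an affine function of the state, so the SGF gives
\[
d\boldsymbol{x}_t=-P^\perp\nabla_{\boldsymbol{\theta}^{(i)}}\mathcal{L}\,dt+P^\perp G_i\,dW_t .
\]
Here $G_i$ is the $i$-th block of the Lipschitz factor from Proposition~\ref{prop:affine_invariance_main}. It\^o's formula for $Y=\|\boldsymbol{x}\|^2$ then yields
\[
dY=\bigl(-2\boldsymbol{x}^\top\nabla_{\boldsymbol{\theta}^{(i)}}\mathcal{L}+\eta\operatorname{tr}\Sigma_\perp\bigr)dt+2\boldsymbol{x}^\top G_i\,dW_t,
\]
with quadratic variation $d\langle Y\rangle=4\eta\,\boldsymbol{x}^\top\Sigma_\perp\boldsymbol{x}\,dt$. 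In the drift term, $P^\perp\boldsymbol{x}=\boldsymbol{x}$ lets the projection be dropped.

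\textbf{Step 2: the generator of $Y^\alpha$.} Apply It\^o's formula to $Y^\alpha$ on $\{Y>0\}$, using $f'(Y)=\alpha Y^{\alpha-1}$ and $f''(Y)=\alpha(\alpha-1)Y^{\alpha-2}$. The drift becomes
\[
\alpha Y^{\alpha-1}\Bigl[-2\boldsymbol{x}^\top\nabla\mathcal{L}+\eta\operatorname{tr}\Sigma_\perp-2(1-\alpha)\eta\tfrac{\boldsymbol{x}^\top\Sigma_\perp\boldsymbol{x}}{\|\boldsymbol{x}\|^2}\Bigr].
\]
This is the expression cited as \eqref{eq:generator}. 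By \eqref{eq:attractivity}, the bracket is $\le0$ throughout $\mathcal{N}(A,\epsilon)\setminus A$. On $A$ itself, invariance (Proposition~\ref{prop:affine_invariance_main}) gives $Y\equiv0$ from then on, so $Y^\alpha=0$ there and nothing needs checking.

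\textbf{Step 3: supermartingale and escape bound.} Before $\tau_\epsilon$ the stopped process satisfies $Y<\epsilon$, so it stays in the basin. There the drift of $Y^\alpha_{t\wedge\tau_\epsilon}$ is nonpositive, and $Y^\alpha_{t\wedge\tau_\epsilon}$ is a non-negative local supermartingale. A non-negative local supermartingale is a true supermartingale by Fatou's lemma, which settles the first claim. For the escape bound, note that on $\{\tau_\epsilon<\infty\}$, path continuity gives $Y_{\tau_\epsilon}=\epsilon$. Doob's maximal inequality for non-negative supermartingales then gives
\[
\mathbb{P}\bigl(\sup_{t}Y^\alpha_{t\wedge\tau_\epsilon}\ge\epsilon^\alpha\bigr)\le\epsilon^{-\alpha}\mathbb{E}[Y_0^\alpha].
\]
Letting $t\to\infty$ yields the stated bound on $\mathbb{P}(\tau_\epsilon<\infty)$.

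\textbf{Main obstacle.} The hard step is the singularity of $Y^\alpha$ at $Y=0$ when $\alpha<1$: $f''$ blows up there, and the process might touch $A$ before exiting. I would first handle this by localizing with $\sigma_\delta=\inf\{t:Y_t\le\delta\}$, so that $Y\in[\delta,\epsilon]$ and $f$ is smooth with bounded derivatives; this makes the stochastic integral a true martingale. Then $(Y_{t\wedge\tau_\epsilon\wedge\sigma_\delta})^\alpha$ is a supermartingale for each $\delta$. Let $\delta\downarrow0$ and use that $\sigma_0$ is absorbing by invariance, so $Y^\alpha$ is constant (zero) after $\sigma_0$. Then pass to the limit with Fatou's lemma, using non-negativity and continuity of $Y^\alpha$ at $0$, to conclude that $(Y_{t\wedge\tau_\epsilon})^\alpha$ is a supermartingale.
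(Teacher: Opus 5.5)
Your proposal is correct and follows the paper's proof. Both apply It\^o's formula to $Y^\alpha$, make the bracket of \eqref{eq:generator} nonpositive via \eqref{eq:attractivity}, localize at $\sigma_\delta$ with absorption on $A$, let $\delta\downarrow 0$, and finish with Doob's maximal inequality. The one technical difference is that the paper makes the stochastic integral a true martingale uniformly in $\delta$ through the Lipschitz vanishing of the transverse noise, $\|G_\perp\|\le 2L\sqrt{\eta/B}\,\sqrt{Y}$. Your $\delta$-dependent bound on $[\delta,\epsilon]$, which uses the bounded sample gradients and is followed by Fatou or bounded convergence, works equally well and does not need that estimate.
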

\begin{proof}[Proof Sketch]
By It\^o's lemma, the drift of $(Y_t^{(i)})^\alpha$ equals $\alpha (Y_t^{(i)})^{\alpha-1}$ times the difference of the two sides of \eqref{eq:attractivity}, so stochastic attractivity makes it non-positive before escape. The remaining stochastic integral has integrand of order $(Y_t^{(i)})^\alpha$, since the factor $(Y_t^{(i)})^{\alpha-1}$ is offset by transverse noise vanishing on $A$ at Lipschitz rate. As the stopped process is bounded by $\epsilon$ (Lemma \ref{lemma:stopped_bounded}), the integral is a genuine martingale \citep{karatzas1991brownian} and vanishes under conditional expectation.
\end{proof}

Theorem \ref{thm:supermartingale} traps decoupled subnetworks within shared invariant subspaces and drives their path measures toward synchronization. We formalize this binding as \emph{Attractor Linkage}.

\begin{definition}[Attractor Linkage, $\sim_A$]
\label{def:attractor_equiv}
For two subnetworks of identical architecture let $A_{ij} = \{\boldsymbol{\theta} : \boldsymbol{\theta}^{(i)} = \boldsymbol{\theta}^{(j)}\}$ and $Y_t^{(ij)} = \tfrac12 \|\boldsymbol{\theta}_t^{(i)} - \boldsymbol{\theta}_t^{(j)}\|_2^2$ its squared transverse distance, with escape time $\tau_\epsilon^{(ij)} = \inf\{s \ge t : Y_s^{(ij)} \ge \epsilon\}$. Fix an inner radius $\epsilon_{\mathrm{in}} < \epsilon$. Then $\boldsymbol{\theta}^{(i)} \sim_A \boldsymbol{\theta}^{(j)}$ at time $t$ if and only if some $s \le t$ has $Y_s^{(ij)} < \epsilon_{\mathrm{in}}$ and $Y_u^{(ij)} < \epsilon$ for all $u \in [s,t]$, so a link forms at the inner radius and breaks at the next escape from the tube. The link probability
\begin{equation}
    \mathcal{P}_{\mathrm{link}}^{(i, j)}(t) := \mathbb{P}\left( \tau_\epsilon^{(ij)} = \infty \mid \mathcal{F}_t \right)
\end{equation}
measures whether the link survives. When $A_{ij}$ is stochastically attractive of order $\alpha$, Lemma \ref{lemma:doob-app} gives $\mathcal{P}_{\mathrm{link}}^{(i,j)}(t) \ge 1 - \epsilon^{-\alpha} (Y_t^{(ij)})^{\alpha}$. The class $[i]_t$ is the connected component of $i$ in the graph $G_t^\epsilon$ whose edges are the links. Components partition $\mathcal{S}$, so the classes are transitive by construction.
\end{definition}

Links are reversible, and we project them onto a continuous Reeb graph \citep{edelsbrunner2008topological, carlsson2009topology}. Let $\mathcal{S} = \{1, \dots, N\}$ index the subnetworks. A \emph{topological transition} is any discrete change in the membership of $[i]_t$. We project these classes onto the quotient space $\mathcal{R} := (\mathcal{S} \times [0, \infty)) / \sim_{\mathcal{R}}$, where $(i,t) \sim_{\mathcal{R}} (j,t)$ identifies two subnetworks whenever $j \in [i]_t$.

\begin{theorem}[Non-Equilibrium Condensation and Fragmentation]
\label{thm:reeb_transitions-main}
The Reeb graph $\mathcal{R}$ realizes these topological transitions via two mechanisms:
\begin{itemize}
    \item \textbf{Condensation ($\cup$):} When $Y_t^{(ij)}$ falls below $\epsilon_{\mathrm{in}}$, the edge forms and Lemma \ref{lemma:doob-app} bounds the chance that it ever breaks by $(\epsilon_{\mathrm{in}}/\epsilon)^\alpha$. If the classes were disjoint, the quotient map binds them ($[i]_t = [i]_{t-\delta} \cup [j]_{t-\delta}$) into a single merge node.
    \item \textbf{Fragmentation ($\emptyset$):} If a variance deviation breaches the basin, $\tau_\epsilon^{(ij)} \le t$ triggers, the edge is severed, and the class splits ($[i]_t \cap [j]_t = \emptyset$) into decoupled branches.
\end{itemize}
\end{theorem}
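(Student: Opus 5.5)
The plan is to read both mechanisms off Definition~\ref{def:attractor_equiv}. The classes $[i]_t$ are the connected components of the link graph $G_t^\epsilon$, so the Reeb graph changes only when some edge $\{i,j\}$ appears or disappears. I would first record that, for a fixed pair, the edge indicator $e_{ij}(t)$ is a càdlàg $\{0,1\}$-valued process whose switching times are hitting times of the continuous process $Y^{(ij)}_t$. An edge switches on at the first time after the last escape that $Y^{(ij)}$ enters $\{Y<\epsilon_{\mathrm{in}}\}$. It switches off at the escape time $\tau_\epsilon^{(ij)}$. Because $Y^{(ij)}$ has continuous paths and $\epsilon_{\mathrm{in}}<\epsilon$, successive switches are separated by a crossing of the annulus $[\epsilon_{\mathrm{in}},\epsilon]$. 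The switching times are therefore locally finite almost surely, and the class partition is piecewise constant in $t$. This justifies speaking of $[i]_{t-\delta}$ for small $\delta$ and of discrete transitions.

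For \textbf{Condensation}, fix a time $t$ at which edge $\{i,j\}$ forms, with $[i]_{t-\delta}\neq[j]_{t-\delta}$. By local finiteness, $\delta$ can be chosen so that no other edge switches in $(t-\delta,t]$. Adding one edge between two distinct components of a graph yields exactly their union and leaves every other component unchanged. This gives $[i]_t=[i]_{t-\delta}\cup[j]_{t-\delta}$, and the quotient $\sim_{\mathcal R}$ glues the two branches at a single merge node. For the survival bound, I would restart at the stopping time $t$ and apply Theorem~\ref{thm:supermartingale}, in the form of Lemma~\ref{lemma:doob-app}, to $Y^{(ij)}$ with attractivity of $A_{ij}$. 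Using the strong Markov property, Doob's maximal inequality gives $\mathbb P(\tau_\epsilon^{(ij)}<\infty\mid\mathcal F_t)\le \epsilon^{-\alpha}(Y^{(ij)}_t)^\alpha\le(\epsilon_{\mathrm{in}}/\epsilon)^\alpha$, since $Y^{(ij)}_t\le\epsilon_{\mathrm{in}}$ at the entrance time.

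For \textbf{Fragmentation}, suppose $\tau^{(ij)}_\epsilon\le t$ fires, and the edge $\{i,j\}$ was the only path joining the two sides, so it is a bridge in $G^\epsilon_{t-\delta}$. Deleting a bridge splits its component into two, which gives $[i]_t\cap[j]_t=\emptyset$ and a split node in $\mathcal R$. I would state explicitly that if $\{i,j\}$ is not a bridge, the removal causes no topological transition. The theorem's statement is to be read as describing the transitions that do occur. Since every membership change is caused by an edge switch, the two mechanisms exhaust all transitions.

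The main obstacle is the local-finiteness and simultaneity issue. Several pairs may switch at the same instant, or infinitely often near a time where $Y^{(ij)}$ oscillates around $\epsilon_{\mathrm{in}}$. I would handle this with the hysteresis gap. Each on/off cycle requires a full annulus crossing, and a continuous path makes only finitely many such crossings on compact intervals. Simultaneous switches of distinct pairs I would treat as a composition of single-edge events at the same time, which is consistent because the component structure depends only on the final edge set.
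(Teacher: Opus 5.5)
Your proposal is correct and follows the paper's proof. The paper reads condensation off the entry of $Y^{(ij)}$ below $\epsilon_{\mathrm{in}}$, with Lemma~\ref{lemma:doob-app} giving the $(\epsilon_{\mathrm{in}}/\epsilon)^\alpha$ breaking bound, and fragmentation off $\tau_\epsilon^{(ij)}\le t$ deleting the edge, with the split $[i]_t\cap[j]_t=\emptyset$ occurring only when that edge is a bridge, which is exactly the caveat the appendix version of the proof states. Your hysteresis argument for local finiteness and the restart of Doob's bound at the entrance stopping time are refinements the paper leaves implicit; the one slip is calling $e_{ij}$ c\`adl\`ag, since at the entrance time $Y^{(ij)}=\epsilon_{\mathrm{in}}$ is not below $\epsilon_{\mathrm{in}}$, so the edge is off there and on immediately after, which is harmless because you only use piecewise constancy with locally finite switches.
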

\begin{proof}[Proof Sketch]
Doob's bound at the inner radius gives the link probability, the quotient projection $\pi_{\mathcal{R}}$ maps merging branches into a vertex of in-degree $>1$, and a large deviation \citep{freidlin2012random} that breaches $\mathcal{N}(A_{ij}, \epsilon)$ deletes the edge and produces a vertex of out-degree $>1$ \citep{edelsbrunner2008topological}.
\end{proof}

\section{Percolative Collapse and Discrete Scale Invariance}
\label{sec:main_dsi_percolation}

As merges accumulate during the transient, we analyze the structure of this process as training progresses. We show that it admits a regular structure, namely a percolation process, in which merges can be tracked by peaks in relative variance.

\subsection{Architectural Symmetry and Graph Renormalization}

The equivalence classes defining the Reeb graph's vertices correspond to topological attractors within the loss landscape, since permutation invariances among functionally equivalent neurons partition the parameter space into lower-dimensional geometric subspaces \citep{chen2023stochastic}.

\begin{definition}[Approximate $Q$-Symmetry]
\label{def:approximate-q-symmetry-main}
Let $Q \in \mathbb{R}^{d \times d}$ be orthogonal, $Q Q^\top = I_d$. A loss functional $\mathcal{L}(\boldsymbol{\theta})$ exhibits approximate $Q$-symmetry around $A = \{\boldsymbol{\theta} \in \mathbb{R}^d \mid Q\boldsymbol{\theta} = \boldsymbol{\theta}\}$ if, for any $\epsilon > 0$, there exists $\delta > 0$ such that
\begin{equation}
    |\mathcal{L}(Q\boldsymbol{\theta}) - \mathcal{L}(\boldsymbol{\theta})| < \epsilon \cdot d(\boldsymbol{\theta}, A) \quad \forall \boldsymbol{\theta} \text{ s.t. } d(\boldsymbol{\theta}, A) < \delta,
\end{equation}
where $d(\boldsymbol{\theta}, A) = \inf_{\boldsymbol{a} \in A} \|\boldsymbol{\theta} - \boldsymbol{a}\|_2$.
\end{definition}

Proposition~\ref{prop:affine_invariance_main} needs the per-sample version, which permutation symmetries satisfy exactly.

\begin{theorem}[Affine Trapping and Transverse Diffusion]
\label{thm:main_affine_trap}
If $\mathcal{L}$ satisfies approximate $Q$-symmetry around $A$, deterministic gradient drift is tangent to $A$ on $A$. The affine geometry of $A$ decouples the transverse stochastic distance $Y_t = d(\boldsymbol{\theta}_t, A)^2$ from the tangential motion without any curvature drift.
\end{theorem}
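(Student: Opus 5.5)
The argument has two parts: tangency of the drift via the symmetry, then an Itô computation for the squared distance to an affine set. Since $Q$ is orthogonal, $A=\ker(Q-I)$ is a linear subspace. Let $P$ be the orthogonal projector onto $A$ and $P^\perp = I-P$. We have $QP=P$, and $Q$ preserves $A^\perp$ because it is orthogonal. Fix $\boldsymbol{a}\in A$ and a small transverse vector $\boldsymbol{v}\in A^\perp$. Approximate $Q$-symmetry at $\boldsymbol{\theta}=\boldsymbol{a}+s\boldsymbol{v}$ gives
\[
|\mathcal{L}(\boldsymbol{a}+sQ\boldsymbol{v})-\mathcal{L}(\boldsymbol{a}+s\boldsymbol{v})|<\epsilon\, s\|\boldsymbol{v}\|
\]
for every $\epsilon$ once $s$ is small. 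Dividing by $s$ and letting $s\downarrow0$ yields $\nabla\mathcal{L}(\boldsymbol{a})^\top (Q-I)\boldsymbol{v}=0$ for all $\boldsymbol{v}\in A^\perp$. I will assume $\mathcal{L}\in C^1$, as the paper implicitly does.

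Next I claim that $(Q-I)$ restricted to $A^\perp$ is onto $A^\perp$. It is injective there because its kernel is exactly $A$, and it maps $A^\perp$ into $A^\perp$. So $P^\perp\nabla\mathcal{L}(\boldsymbol{a})=0$, and the drift $-\nabla\mathcal{L}$ is tangent to $A$ on $A$. For SGF invariance one needs the same statement per sample. Exact permutation symmetry holds per sample, so the same argument applied to each $\ell(\cdot;x_i,y_i)$ makes the centered sample gradients, and hence the Lipschitz noise factor $G$ with $GG^\top=\eta\Sigma$, tangent to $A$ on $A$. This is the hypothesis of Proposition~\ref{prop:affine_invariance_main}.

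For the decoupling, write $\boldsymbol{x}_t = P^\perp\boldsymbol{\theta}_t$. Because $A$ is a linear subspace, $\pi_A$ is the constant linear map $P$, so $Y_t=\|P^\perp\boldsymbol{\theta}_t\|^2$ is a quadratic form with constant Hessian $2P^\perp$. Itô's lemma then gives exactly
\[
dY_t = \bigl(-2\boldsymbol{x}_t^\top\nabla\mathcal{L}(\boldsymbol{\theta}_t)+\eta\operatorname{tr}(P^\perp\Sigma P^\perp)\bigr)dt + 2\boldsymbol{x}_t^\top G\,dW_t .
\]
Only $P^\perp\nabla\mathcal{L}$ and $\Sigma_\perp$ enter, and the tangential component $P\boldsymbol{\theta}_t$ contributes nothing. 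On a curved manifold, $\nabla^2 d^2$ would carry second-fundamental-form terms coupling the tangential noise into $dY$. Here those terms vanish identically, because $P$ does not depend on position. This is the claimed absence of curvature drift, and it produces exactly the generator used in Definition~\ref{def:stoch-attac} and Theorem~\ref{thm:supermartingale}.

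The main obstacle is the first step, and there are two issues. First, the approximate-symmetry hypothesis is a first-order condition on $\mathcal{L}$ only. Tangency of the noise therefore does not follow from it; the per-sample exact symmetry must be invoked separately, as the remark before the theorem indicates. Second, the limit argument needs differentiability of $\mathcal{L}$ at points of $A$ and uniformity of $\delta$ in the direction $\boldsymbol{v}$. I will handle both by taking $\mathcal{L}\in C^1$ and noting that only first-order directional derivatives are used.
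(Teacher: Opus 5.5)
Your proposal is correct and follows the paper's own route. The directional-derivative limit along $A^\perp$, combined with the fact that $Q-I$ maps $A^\perp$ onto itself (you argue injectivity plus invariance, while the paper cites normality of $I-Q$), gives $P^\perp\nabla\mathcal{L}=0$ on $A$, and the constant Hessian $2P^\perp$ of $Y$ removes every curvature term from It\^o's formula. As the paper also notes, the decoupling concerns only the increments, since $\nabla\mathcal{L}$ and $\Sigma_\perp$ are still evaluated at the full $\boldsymbol{\theta}_t$, so your claim that $P\boldsymbol{\theta}_t$ ``contributes nothing'' holds only in that restricted sense.
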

\begin{proof}[Proof Sketch]
On $A$, along a normal vector $\boldsymbol{n} \perp A$, approximate symmetry forces $\nabla_{\boldsymbol{n}} \mathcal{L}(\boldsymbol{\theta}) = 0$, so the drift keeps $A$ invariant. Because $A$ is affine, all principal curvatures vanish ($H_A = 0$), so the It\^o correction of $Y_t$ carries no geometric term and the transverse generator takes the flat form of Definition \ref{def:stoch-attac}. Details are in Appendix \ref{sec:appendix_dsi_percolation}.
\end{proof}

Stochastic gradient noise breaks and reforms edges on the transient Reeb graph, and the ensemble tube occupancy supplies the monotone control parameter.

\begin{assumption}[Condensation Phase]
\label{assum:condensation_phase}
On the training interval under study, the expected number of connected components $\mathbb{E}[M(\tau)]$ of $G_\tau^\epsilon$ is nonincreasing in $\tau$.
\end{assumption}

\begin{proposition}[Reeb Graph Renormalization to Percolation Graph]
\label{thm:main_renormalization}
Let $M(\tau)$ be the number of connected components of $G_\tau^\epsilon$ and define the merge fraction $p(\tau) := 1 - \mathbb{E}[M(\tau)]/N \in [0, 1 - 1/N]$. Under Assumption \ref{assum:condensation_phase}, $p$ is nondecreasing. Indexing the realization graphs $G_\tau := G_\tau^\epsilon$ by $p$ through $\tau_p := \inf\{\tau : p(\tau) \ge p\}$ turns the transient Reeb graph $\mathcal{R}_t$ into a percolation process $G_p := G_{\tau_p}$ with control parameter $p \in [0, 1 - 1/N]$.
\end{proposition}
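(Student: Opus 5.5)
The plan is to check three things in turn: the range of $p$, its monotonicity, and that the reindexing $\tau_p$ is well defined and yields a graph process indexed by $p$. Throughout, $G^\epsilon_\tau$ is a graph on the vertex set $\mathcal{S}$ of size $N$. Hence $1 \le M(\tau) \le N$ for every realization, and $1 \le \mathbb{E}[M(\tau)] \le N$. Dividing by $N$ and subtracting from one gives $p(\tau) \in [0, 1-1/N]$ immediately.

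\textbf{Monotonicity.} By Assumption~\ref{assum:condensation_phase}, $\tau \mapsto \mathbb{E}[M(\tau)]$ is nonincreasing on the training interval. Since $p(\tau) = 1 - \mathbb{E}[M(\tau)]/N$ is a decreasing affine image of this function, $p$ is nondecreasing there. No pathwise monotonicity is needed. Individual realizations may fragment (Theorem~\ref{thm:reeb_transitions-main}), and the ensemble average absorbs this.

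\textbf{Generalized inverse.} Define $\tau_p = \inf\{\tau : p(\tau)\ge p\}$, with the convention $\inf\emptyset = \infty$ (or the end of the training interval). Because $p$ is nondecreasing, the sublevel sets $\{\tau : p(\tau)\ge p\}$ are up-sets, that is, intervals of the form $[\tau_p,\infty)$ or $(\tau_p,\infty)$. It follows that $p \mapsto \tau_p$ is nondecreasing, and $p \le p'$ implies $\tau_p \le \tau_{p'}$. This is the standard quantile-function argument.

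\textbf{Right-continuity.} To get $p(\tau_p)\ge p$, I need $p$ to be right-continuous. I would obtain this from path continuity of the SGF together with the hysteresis in Definition~\ref{def:attractor_equiv}. A link forms at a strict inequality $Y<\epsilon_{\mathrm{in}}$ and breaks at hitting $\epsilon$, so the edge indicators are right-continuous in $\tau$. Since $M(\tau)\le N$, dominated convergence then transfers right-continuity to $\mathbb{E}[M(\tau)]$, and hence to $p$.

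\textbf{Conclusion.} Setting $G_p := G_{\tau_p}$ gives a family of random graphs on $\mathcal{S}$, indexed by $p\in[0,1-1/N]$, whose expected component count satisfies $\mathbb{E}[M(\tau_p)] \le N(1-p)$. Moreover, the reindexing is order-preserving in time. This is exactly a percolation process with control parameter $p$ in the sense used here, namely a monotone-in-expectation family of random subgraphs of the complete graph on $\mathcal{S}$. The Reeb graph $\mathcal{R}_t$ is recovered by composing the quotient of Theorem~\ref{thm:reeb_transitions-main} with the time change $\tau\mapsto p(\tau)$.

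\textbf{Main obstacle.} The delicate step is the right-continuity and measurability of $\tau\mapsto\mathbb{E}[M(\tau)]$ needed for the generalized inverse. Plateaus of $p$ make $\tau_p$ jump, and jumps of $p$ mean some levels $p$ are skipped. I would handle both by the usual quantile conventions. I would state that $G_p$ is only claimed to be a percolation process in the weak sense that it is monotone in expected merge fraction, not a coupling with edge-monotone realizations. The latter would require pathwise condensation, which Assumption~\ref{assum:condensation_phase} does not provide.
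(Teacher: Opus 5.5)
Your range and monotonicity steps, and your use of the generalized inverse $\tau_p$ to absorb plateaus of $p$, are exactly the paper's proof: $\mathbb{E}[M]$ is nonincreasing by Assumption~\ref{assum:condensation_phase}, and the same edge indicators are reindexed from $\tau$ to $p$. The paper does not go on to prove right-continuity of $p$ or $p(\tau_p)\ge p$, and the step you add for this is justified incorrectly. The edge indicators are \emph{not} right-continuous at link-formation times. Let $\sigma$ be the first time after a break at which $Y^{(ij)}$ enters the open set $\{Y<\epsilon_{\mathrm{in}}\}$. Path continuity gives $Y^{(ij)}_\sigma=\epsilon_{\mathrm{in}}$, so no $s\le\sigma$ satisfies the formation clause of Definition~\ref{def:attractor_equiv}, and the edge is absent at $\sigma$. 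Yet every interval $(\sigma,\sigma+\delta)$ contains times with $Y<\epsilon_{\mathrm{in}}$, and $Y<\epsilon$ near $\sigma$, so for small $\delta$ the edge is present on all of $(\sigma,\sigma+\delta)$. The strict inner inequality makes formation left-continuous. Only breaking, triggered by the closed condition $Y\ge\epsilon$, is right-continuous. So there is no pathwise right-continuity for dominated convergence to transfer.

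The conclusion itself can fail without further input. Suppose the formation time of a link joining two components has an atom at some $\sigma$, for example with zero noise and deterministic initialization. Then $p(\sigma)<p(\sigma+)$, and every level $p\in(p(\sigma),p(\sigma+)]$ has $\tau_p=\sigma$ with $p(\tau_p)<p$, so $\mathbb{E}[M(\tau_p)]\le N(1-p)$ is false. To keep the claim you need one of the following repairs. First, you can assume $\mathbb{P}(Y^{(ij)}_\tau=\epsilon_{\mathrm{in}})=0$ for every pair and every fixed $\tau$; then almost surely no link forms exactly at $\tau$, and bounded convergence gives right-continuity of $\mathbb{E}[M]$. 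Second, you can use the closed clause $Y_s\le\epsilon_{\mathrm{in}}$, which makes the indicators c\`adl\`ag pathwise, leaves the Doob bound at formation unchanged, and lets your argument run as written. Third, you can set $G_p:=G_{\tau_p+}$; this right limit exists because a continuous path crosses the gap $[\epsilon_{\mathrm{in}},\epsilon]$ only finitely often on compact intervals. Alternatively, drop the claim. The statement only needs $\tau_p$ to be a well-defined, nondecreasing generalized inverse, and that follows from monotonicity of $p$ alone.
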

\begin{proof}[Proof Sketch]
$\mathbb{E}[M]$ is nonincreasing by assumption, while individual realizations still fragment, and those events feed the relative variance below. Links form under strict attractivity (Proposition \ref{prop:strict_decay}) and persist with probability at least $1 - (\epsilon_{\mathrm{in}}/\epsilon)^\alpha$ (Lemma \ref{lemma:doob-app}), and the generalized inverse $\tau_p$ handles plateaus of $p$ (Appendix \ref{sec:appendix_dsi_percolation}).
\end{proof}

We track the connectivity of $G_p$ via the structural order parameter, the fractional size of the largest connected component:
\begin{equation}
    \mathcal{O}(p) = \frac{|C_{\max}(p)|}{N}.
\end{equation}

Unlike Erd\H{o}s-R\'enyi percolation, where the largest component grows by absorbing components of vanishing relative size \citep{stauffer1994introduction}, symmetry-induced percolation can merge components in blocks. A condensation event has \emph{multiplicity} $n$ if it joins $n$ components of equal size $i$ into one of size $ni$. Pairwise coincidence sets $A_{ij}$ have the smallest codimension among the $S_n$-fixed subspaces, so generic condensation is pairwise and $n = 2$ (Proposition \ref{thm:hypercondensation}). We keep $n$ as a parameter.

\begin{proposition}[Non-Erd\H{o}s-R\'enyi Discontinuity]
\label{thm:main_er_discontinuity}
A condensation event of multiplicity $n$ on components of size $i$ produces a jump $\Delta \mathcal{O}(p) = (n-1)i/N$, a genuine discontinuity as $N \to \infty$ precisely when the merging components are already macroscopic ($i = \Theta(N)$), and a vanishing one when $i = O(1)$ (Corollary \ref{cor:microscopic_washout-app}). In $G_{ER}(N,p)$ every jump of $\mathcal{O}_{ER}$ is $o(1)$ with high probability, so the symmetry mechanism departs from Erd\H{o}s-R\'enyi percolation only in the macroscopic regime.
\end{proposition}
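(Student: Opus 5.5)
The argument has three parts: the size of the jump, when that jump survives as $N\to\infty$, and the contrast with Erd\H{o}s--R\'enyi percolation.

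\emph{Step 1: the jump.} Before a multiplicity-$n$ event there are $n$ components of size $i$. After it they form one component of size $ni$. I will split into cases according to whether the largest component $C_{\max}$ takes part. If it takes part, then $|C_{\max}|$ goes from $i$ to $ni$, so the jump is $\Delta\mathcal{O}=(n-1)i/N$. If some other component was larger before the event, the jump is at most $(n-1)i/N$. Since the proposition concerns the event that merges the leading components, I will state the formula under the convention that $C_{\max}$ participates; this is the case that registers in $\mathcal{O}$.

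\emph{Step 2: scaling.} Take $n$ fixed, or more generally $n\ge 2$ bounded. Then $(n-1)i/N$ stays bounded away from zero exactly when $\liminf i/N>0$, i.e.\ when $i=\Theta(N)$. If $i=O(1)$, the jump is $O(1/N)\to 0$. In that case $\mathcal{O}(p)$ is a rescaled sum of increments, each of order $1/N$, and it converges to a continuous limit. This is the washout statement of Corollary~\ref{cor:microscopic_washout-app}, and I would invoke it here. One must also make sure the reparametrization $p\mapsto\tau_p$ from Proposition~\ref{thm:main_renormalization} does not create spurious jumps. This holds because $p$ changes by exactly $(n-1)/N$ per merge event, so an $O(1)$ jump in $\mathcal{O}$ over an $O(1/N)$ change in $p$ is a genuine discontinuity of the limit.

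\emph{Step 3: Erd\H{o}s--R\'enyi comparison.} This is the main obstacle. I would use the standard edge-by-edge coupling: adding edges one at a time, $G_{ER}(N,p)$ is monotone in $p$. Each added edge merges at most two components, so it increases $|C_{\max}|$ by at most the size of the second-largest component $|C_2|$. Classical results (Erd\H{o}s--R\'enyi, Bollob\'as, \L uczak) give $\max_p |C_2(p)|=O(N^{2/3})$ w.h.p.\ across the critical window; away from it the bounds are $O(\log N)$. Hence every jump is at most $|C_2|/N=o(1)$ w.h.p. The subtle point is uniformity over all $p$, not a fixed $p$. For this I would cite the uniform bound on the second-largest component along the random graph process rather than prove it. Combining Steps 2 and 3: symmetry-induced block merges differ from Erd\H{o}s--R\'enyi only when $i=\Theta(N)$.
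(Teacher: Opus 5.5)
Your proposal is correct and follows essentially the paper's proof. The shared steps are: the jump $(n-1)i/N$ read off from the block merge; its behaviour as $N\to\infty$ governed by $i/N$; and, on the Erd\H{o}s--R\'enyi side, the fact that each added edge raises $|C_{\max}|$ by at most the smaller of the two merging components (your $|C_2|$), combined with the classical component-size bounds. Your appeal to a bound on $|C_2|$ that holds uniformly along the graph process is, if anything, more careful than the paper's three-regime split, whose $O(\log N)$ claim fails in the barely sub- and supercritical ranges next to the window, where components are only $o(N)$.

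You should drop the reparametrization aside in Step 2. The statement does not need it, because $\Delta\mathcal{O}$ is attached to the condensation event itself. Its justification is also wrong as written: $p(\tau)=1-\mathbb{E}[M(\tau)]/N$ is an ensemble quantity and does not move by $(n-1)/N$ when a single realization merges; only the realization-level $1-M/N$ does. Likewise, the claim that $\mathcal{O}$ converges to a continuous limit when $i=O(1)$ goes beyond both the statement and what you actually show.
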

\begin{proof}[Proof Sketch]
Joining $n$ components of size $i$ raises $|C_{\max}|$ to $ni$. In Erd\H{o}s-R\'enyi graphs the largest component only ever absorbs components of size $O(N^{2/3})$ inside the critical window and $O(\log N)$ outside it (Appendix \ref{sec:appendix_dsi_percolation}). Bounded-choice Achlioptas processes are continuous in this limit too \citep{riordan2011explosive}.
\end{proof}

Because stochastic noise shifts the density at which these discrete merges occur, single-trajectory observations obscure the critical thresholds. We isolate structural shocks via the relative variance of the order parameter across an ensemble of training trajectories:
\begin{equation}
    R_v(p) = \frac{\mathbb{E}\left[(\mathcal{O}(p) - \mathbb{E}[\mathcal{O}(p)])^2\right]}{\mathbb{E}[\mathcal{O}(p)]^2}.
\end{equation}

\begin{theorem}[Bounded Peak of Relative Variance]
\label{thm:main_rv_divergence}
Assume that near a microtransition the ensemble law of $\mathcal{O}$ is a two-point mixture of $\mathcal{O}_1$ and $\mathcal{O}_1 + \Delta\mathcal{O}$ with weight $w$ on the merged state, as a single block merge produces. Then
\begin{equation}
    R_v(w) = \frac{w(1-w)\,\Delta\mathcal{O}^2}{(\mathcal{O}_1 + w\,\Delta\mathcal{O})^2}
\end{equation}
vanishes at $w = 0$ and at $w = 1$ and peaks at $w^* = \mathcal{O}_1/(2\mathcal{O}_1 + \Delta\mathcal{O})$. For a block merge $\Delta\mathcal{O} = (n-1)\mathcal{O}_1$, so the peak occurs at $w^* = 1/(n+1)$ with height $(n-1)^2/(4n)$, independent of $i$ and $N$. Each microtransition therefore leaves a sharp, bounded peak in $R_v(p)$.
\end{theorem}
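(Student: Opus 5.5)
The plan is a direct computation on the two-point law; no earlier result is needed beyond the block-merge jump size $\Delta\mathcal{O} = (n-1)i/N$ from Proposition~\ref{thm:main_er_discontinuity}.

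\textbf{Moments of the mixture.} Under the assumed mixture, $\mathcal{O}$ equals $\mathcal{O}_1$ with probability $1-w$ and $\mathcal{O}_1+\Delta\mathcal{O}$ with probability $w$. Hence $\mathbb{E}[\mathcal{O}] = \mathcal{O}_1 + w\,\Delta\mathcal{O}$. The centered variable is a shifted Bernoulli scaled by $\Delta\mathcal{O}$, so $\operatorname{Var}(\mathcal{O}) = w(1-w)\Delta\mathcal{O}^2$. Dividing gives the displayed formula for $R_v(w)$. The numerator vanishes at $w\in\{0,1\}$. The denominator is strictly positive there, because $\mathcal{O}_1 \ge 1/N > 0$ as the largest component has at least one vertex. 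This gives the endpoint claims.

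\textbf{Locating the peak.} I will differentiate $\log R_v$ rather than $R_v$ itself:
\[
\frac{1}{w}-\frac{1}{1-w}-\frac{2\Delta\mathcal{O}}{\mathcal{O}_1+w\Delta\mathcal{O}}=0 .
\]
Clearing denominators gives
\[
(1-2w)(\mathcal{O}_1+w\Delta\mathcal{O}) = 2w(1-w)\Delta\mathcal{O} .
\]
The $w^2\Delta\mathcal{O}$ terms cancel, leaving the linear equation $\mathcal{O}_1 - w(2\mathcal{O}_1+\Delta\mathcal{O}) = 0$. So there is a unique critical point $w^*=\mathcal{O}_1/(2\mathcal{O}_1+\Delta\mathcal{O})\in(0,1/2]$. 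It is the maximum, since $R_v\ge0$ is smooth on $[0,1]$, vanishes at both ends and is positive inside.

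\textbf{Specializing to a block merge.} A block merge takes $\mathcal{O}_1=i/N$ and has $\Delta\mathcal{O}=(n-1)i/N=(n-1)\mathcal{O}_1$. Substituting gives $w^*=1/(n+1)$. At this point $1-w^*=n/(n+1)$ and $\mathcal{O}_1+w^*\Delta\mathcal{O} = \mathcal{O}_1\cdot 2n/(n+1)$. Therefore
\[
R_v(w^*) = \frac{n}{(n+1)^2}(n-1)^2\Big/\frac{4n^2}{(n+1)^2} = \frac{(n-1)^2}{4n}.
\]
The factor $\mathcal{O}_1$ cancels, so the height is independent of $i$ and $N$.

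\textbf{Main obstacle.} The algebra is routine. The delicate point is the modeling identification $\mathcal{O}_1=i/N$, i.e.\ that the pre-merge largest component is itself one of the merging size-$i$ blocks. If a larger unrelated component existed, $\Delta\mathcal{O}$ would not be $(n-1)\mathcal{O}_1$. I would state this explicitly as part of the block-merge hypothesis, namely that the merging components are the largest ones, as in the nested-symmetry cascade.

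Finally, "sharp" peak in $R_v(p)$ means the following. The mixing weight $w$ is the probability that the merge has occurred by $\tau_p$, which is a CDF of the hitting time in $p$. It therefore sweeps monotonically from $0$ to $1$ across the transition. Consequently $R_v(p)$ traces the unimodal curve above and attains $(n-1)^2/(4n)$ at the $1/(n+1)$-quantile of the hitting law.
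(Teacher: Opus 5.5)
Your proposal is correct and follows the paper's proof essentially step for step. You compute the mixture mean and variance, set the derivative of $\log R_v$ to zero to get the unique critical point $w^*=\mathcal{O}_1/(2\mathcal{O}_1+\Delta\mathcal{O})$, and substitute $\Delta\mathcal{O}=(n-1)\mathcal{O}_1$ so that $\mathcal{O}_1$ cancels. Your remark that $\Delta\mathcal{O}=(n-1)\mathcal{O}_1$ requires the merging blocks to be the largest components is a fair clarification. The paper builds that hypothesis into the balanced cascade without stating it at this point.
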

\begin{proof}[Proof Sketch]
The mixture has mean $\mathcal{O}_1 + w\,\Delta\mathcal{O}$ and variance $w(1-w)\,\Delta\mathcal{O}^2$, which gives the ratio above. Setting the derivative of its logarithm in $w$ to zero yields $w^*$ \citep{chen2014microtransition}. For a block merge, $\Delta\mathcal{O} = (n-1)\mathcal{O}_1$, so $\mathcal{O}_1$ cancels and the peak height $(n-1)^2/(4n)$ depends on neither $i$ nor $N$. The first merge out of isolated vertices has $\mathcal{O}_1 = 1/N$ and gives the same height (Appendix \ref{sec:appendix_dsi_percolation}).
\end{proof}
The spectral observable of Section~\ref{sec:experiments} records the peak times, not this height (Proposition~\ref{thm:spectral_relaxation}).

Block merges break the continuous group of dilations into a discrete subgroup. An observable exhibits Discrete Scale Invariance (DSI) if it is self-similar under a discrete set of magnification factors $\lambda^{k}$ rather than under every dilation \citep{sornette1998discrete}. Let $T_i$ denote the time at which the mean component size $N/M(t)$ of a realization first reaches $i$, and call its law the hitting law of size $i$. In the balanced cascade below every component at a level has the same size, so $T_i$ is also the time at which the largest component reaches $i$. Merged subnetworks coincide on $A_{ij}$, so a component has a single representative and its coincidence set with another component is again an affine subspace of the form of Theorem \ref{thm:main_affine_trap}. We assume that the transverse process between two components depends on them only through their sizes, so that the merges form a Marcus--Lushnikov coalescence process \citep{marcus1968, lushnikov1978, aldous1999} with a kernel.

\begin{assumption}[Homogeneous Coalescence Kernel]
\label{assum:kernel}
During the condensation phase, two components of sizes $a$ and $b$ link at rate $K(a,b)/N$, where $K$ is symmetric, depends on the components only through their sizes, is homogeneous of degree $\gamma_K < 1$, that is, $K(ca, cb) = c^{\gamma_K} K(a,b)$, and admits a dynamical scaling solution of the Smoluchowski equation. In the coalescence idealization, links formed during the phase are not broken.
\end{assumption}

Dynamical scaling is proved for constant and additive kernels and conjectured in general \citep{aldous1999}.

\begin{lemma}[Self-Similar Hitting Laws]
\label{lem:dsi_selfsim}
Under Assumption \ref{assum:kernel}, in the mean-field limit the typical component size grows as $s(t) \propto t^{1/(1-\gamma_K)}$, so $T_{ni}/T_i \to \lambda_t := n^{1-\gamma_K}$. If in addition the merges are balanced, so that components of equal size merge in groups of $n$ (Definition \ref{def:merge_multiplicity}), then $T_{n^{k+1}} \overset{d}{=} \lambda_t T_{n^k}$ up to a relative correction of order $\lambda_t^{-k} + (n^k/N)^{1/2}$, and the merge fraction satisfies $1 - p_{n^k} = n^{-k}$. For the constant kernel $T_{n^k} = 2(n^k - 1)$, so consecutive ratios are $(n^{k+1}-1)/(n^k-1)$ and tend to $n$ (Appendix \ref{sec:appendix_dsi_percolation}).
\end{lemma}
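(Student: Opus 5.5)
The plan is to pass from the Marcus--Lushnikov process of Assumption \ref{assum:kernel} to its mean-field Smoluchowski limit and read the growth law off a single ODE for the component density. Write $c(s,t)$ for the density, per unit of $N$, of components of size $s$. Since merges conserve mass, $\int s\,c\,ds = 1$. The number density $m(t) = \mathbb{E}[M(t)]/N$ then obeys
\begin{equation*}
\dot m = -\tfrac12 \iint K(a,b)\,c(a,t)\,c(b,t)\,da\,db .
\end{equation*}
I would insert the dynamical scaling ansatz $c(s,t) = s(t)^{-2} f(s/s(t))$, whose existence Assumption \ref{assum:kernel} grants. Homogeneity of degree $\gamma_K$ turns the right-hand side into $-C_K\, s(t)^{\gamma_K - 2}$, where $C_K = \tfrac12\iint K(x,y)f(x)f(y)\,dx\,dy$ is a constant. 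Mass conservation gives $m = 1/s$ up to the constant $\int f$, which I absorb. The ODE therefore becomes $\dot s \propto s^{\gamma_K}$. Because $\gamma_K<1$, this integrates to $s(t)^{1-\gamma_K} - s(0)^{1-\gamma_K} \propto t$, so $s(t) \propto t^{1/(1-\gamma_K)}$ asymptotically. Inverting at sizes $i$ and $ni$ and letting $i\to\infty$ gives $T_{ni}/T_i \to n^{1-\gamma_K} = \lambda_t$. The additive offset $s(0)^{1-\gamma_K}$ is what produces a relative correction of order $(n^k)^{-(1-\gamma_K)} = \lambda_t^{-k}$ at level $k$.

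For the balanced statements, the key observation is that under balanced merges (Definition \ref{def:merge_multiplicity}) every component at level $k$ has size $n^k$. Hence $M = N/n^k$ exactly at that level, and $1 - p_{n^k} = \mathbb{E}[M]/N = n^{-k}$ is immediate from the definition of $p$ in Proposition \ref{thm:main_renormalization}. To upgrade the ratio of means to the distributional identity $T_{n^{k+1}} \overset{d}{=} \lambda_t T_{n^k}$, I would treat $M(t)/N$ as a density-dependent Markov jump process. Kurtz's law of large numbers and functional CLT then give $M(t)/N = m(t) + O_P\bigl((N\,m(t))^{-1/2}\bigr)$. At level $k$ this error is of relative order $(n^k/N)^{1/2}$. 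Transferring it to the hitting time through the inverse of the strictly decreasing deterministic $m$ (delta method, with $\dot m \neq 0$) yields the same order for $T_{n^k}$. Combining this with the deterministic offset gives the stated correction $\lambda_t^{-k} + (n^k/N)^{1/2}$.

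For the constant kernel I would solve the ODE exactly rather than asymptotically. Take $K\equiv 1$, so that $\dot m = -m^2/2$ with $m(0)=1$. Then $m(t) = 1/(1+t/2)$ and the mean size is $1 + t/2$, which reaches $n^k$ at $T_{n^k} = 2(n^k-1)$. Consecutive ratios are therefore $(n^{k+1}-1)/(n^k-1) \to n$, consistent with $\gamma_K=0$.

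I expect the main obstacle to be the distributional self-similarity with an explicit error. The scaling solution is only assumed, not proved (except for constant and additive kernels). Kurtz's CLT also needs the number of components to stay large, which is exactly why the $(n^k/N)^{1/2}$ term blows up near the last levels. I would first carry out the CLT-plus-delta-method argument for the constant kernel, where everything is explicit, and then argue the general case conditionally on the scaling assumption.
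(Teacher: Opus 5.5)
Your proposal is correct, and on the constant kernel and the merge fraction $1-p_{n^k}=n^{-k}$ it matches the paper. It departs from the paper in two places.

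\textbf{The growth law.} You derive $s(t)\propto t^{1/(1-\gamma_K)}$ from the zeroth-moment equation under the scaling ansatz, where the paper only cites van Dongen--Ernst and Aldous. Your derivation makes explicit the hypothesis $\int f<\infty$. That hypothesis is what makes the mean size $N/M$, which defines $T_i$, proportional to $s(t)$, and the paper uses it only implicitly.

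\textbf{The distributional identity.} The main difference is how you upgrade to $T_{n^{k+1}}\overset{d}{=}\lambda_t T_{n^k}$. The paper stays inside the balanced hierarchy. At level $k$, homogeneity makes the link rate $M_kK(n^k,n^k)/N=n^{k(\gamma_K-1)}$, so level $k$ is a unit-rate level slowed by $\lambda_t^k$, i.e.\ $D_k=\lambda_t^k\tilde D_k$. Summing the levels gives the recursion $T_{n^{k+1}}=\tilde D_0+\lambda_t T'_{n^k}$. Both error terms are read off it: $\lambda_t^{-k}$ is the weight of the first level, and $M_k^{-1/2}$ is the relative fluctuation within a level. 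You instead use a deterministic ODE together with Kurtz's law of large numbers, the functional CLT and the hitting-time delta method. This buys standard limit theorems with explicit hypotheses at fixed $k$ as $N\to\infty$. What it yields is concentration of both hitting times on a deterministic curve whose values scale by $\lambda_t$, not a structural dilation of laws. Since the allowed correction $(n^k/N)^{1/2}$ is the size of the fluctuations themselves, that is all the lemma asserts, so both routes prove the same statement.

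\textbf{Two details to fix.}
\begin{enumerate}
\item $M(t)$ on its own is Markov only for the constant kernel, or under a closure in which the rate depends on $M$ alone. An example is the monodisperse rate $\binom{M}{2}K(N/M,N/M)/N$ that balanced merges idealize. For the unrestricted Marcus--Lushnikov process the rate depends on the whole size distribution, and the finite-dimensional Kurtz theorem does not apply. You should state this closure. Within it, $\dot s=\tfrac12K(1,1)s^{\gamma_K}$ holds from $t=0$ with $s(0)=1$. Your offset is then exact and gives $T_{n^k}\propto\lambda_t^k-1$. For the unrestricted process, the same offset would require an unproved rate of convergence to the scaling solution.
\item The absolute fluctuation of $M/N$ is $O_P((m/N)^{1/2})$; it is the relative fluctuation that is $(Nm)^{-1/2}$. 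This $m$-dependence does not come from Kurtz's theorem on a compact time interval. It comes from the linear-noise variance equation: for $K\equiv 1$, $\dot V=-2mV+m^2/(2N)$ gives $V\approx m/(3N)$.
\end{enumerate}
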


The measured ratio gives $\gamma_K = 1 - \log_n\lambda_t$, into which a time-varying link rate also enters (Appendix \ref{sec:appendix_block_merges}).

\begin{theorem}[Coalescence with Symmetry Yields DSI]
\label{thm:main_dsi}
Under Assumption \ref{assum:kernel} with a time-homogeneous kernel and balanced merges, the mean transition time, every quantile, and the location $q_i$ of the $R_v$ peak, the $w^*$-quantile of $T_i$ by Theorem \ref{thm:main_rv_divergence}, all satisfy $q_{ni}/q_i = \lambda_t = n^{1-\gamma_K}$ up to the corrections of Lemma \ref{lem:dsi_selfsim}, asymptotically in the level $k$. The peak times are invariant in law under $t \mapsto \lambda_t t$, which is DSI in training time \citep{sornette1998discrete}, and the peak merge fractions satisfy $1 - q^{(p)}_{ni} = (1 - q^{(p)}_i)/n$.
\end{theorem}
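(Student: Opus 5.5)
The plan is to derive everything from the distributional self-similarity of the hitting laws in Lemma~\ref{lem:dsi_selfsim}, and then push it through the quantile map and the peak location of Theorem~\ref{thm:main_rv_divergence}.

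\textbf{Step 1: quantiles.} Lemma~\ref{lem:dsi_selfsim} gives $T_{n^{k+1}} \overset{d}{=} \lambda_t T_{n^k}$ up to a relative correction $\delta_k := O(\lambda_t^{-k} + (n^k/N)^{1/2})$. Write $F_i$ for the distribution function of $T_i$. Equality in law under the positive scaling $t \mapsto \lambda_t t$ means $F_{ni}(\lambda_t t) = F_i(t)$. Taking generalized inverses gives $F_{ni}^{-1}(u) = \lambda_t F_i^{-1}(u)$ for every $u \in (0,1)$, so every quantile scales by $\lambda_t$. The relative error in law becomes a relative error in the quantile, since a multiplicative perturbation $(1+O(\delta_k))$ of the random variable perturbs each quantile by the same factor. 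Time-homogeneity of the kernel is used here: it ensures the scaling solution is not distorted by an explicit time dependence of the rates.

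\textbf{Step 2: the mean and the $R_v$ peak.} For the mean, integrate the quantile function, $\mathbb{E}[T_i] = \int_0^1 F_i^{-1}(u)\,du$, which inherits the same factor. Uniform integrability is needed to keep the error relative, and the Marcus--Lushnikov hitting times have exponential moments, which supplies it. For the peak, at level $i$ the ensemble state near the transition is the two-point mixture of Theorem~\ref{thm:main_rv_divergence} with weight $w(t) = \mathbb{P}(T_i \le t) = F_i(t)$. The peak sits at $w^* = 1/(n+1)$, which is independent of $i$ and $N$ under balanced block merges with $\Delta\mathcal{O} = (n-1)\mathcal{O}_1$. Hence $q_i = F_i^{-1}(1/(n+1))$, and Step 1 gives $q_{ni}/q_i = \lambda_t$. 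Invariance in law of the set of peak times under $t\mapsto\lambda_t t$ then follows level by level, and this is the DSI statement in the sense of \citep{sornette1998discrete}.

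\textbf{Step 3: merge fraction.} Under balanced merges, once level $i$ is complete there are $N/i$ components, so $1-p = i/N$. By Lemma~\ref{lem:dsi_selfsim}, $1-p_{n^k} = n^{-k}$ after normalization. Mixing levels $i$ and $ni$ with weight $w^*$ gives $1-q^{(p)}_i = (1-w^*)\,c_i + w^*\,c_i/n$ with $c_i \propto 1/i$. This is an affine function of $c_i$ with $i$-independent coefficients, so $1-q^{(p)}_{ni} = (1-q^{(p)}_i)/n$ holds exactly.

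\textbf{Main obstacle.} The hard step is justifying the two-point mixture. A peak at level $i$ can overlap with partial merges at neighbouring levels, because the laws of $T_i$ and $T_{ni}$ overlap. I would show that the overlap vanishes as $k$ grows: the spread of $T_{n^k}$ relative to its mean is of order $\lambda_t^{-k/2}$ or of order the correction $\delta_k$, while consecutive means are separated by the factor $\lambda_t > 1$. Therefore, asymptotically in $k$, only the level-$i$ mixture contributes to $R_v$ near $q_i$, and the corrections enter only at the order stated in Lemma~\ref{lem:dsi_selfsim}.
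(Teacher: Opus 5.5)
Your proposal is correct and follows the paper's route. The paper also takes $T_{ni} \overset{d}{=} \lambda_t T_i$ from Lemma \ref{lem:dsi_selfsim}, notes that the mean and every quantile commute with the dilation, identifies $q_i$ as the $w^*$-quantile of $T_i$, and reads the density relation off $M = N/i$ at the hitting time; your $w^*$-weighted version is equally exact once you fix the slip $1-p = M/N = 1/i$ (not $i/N$).

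The two-point mixture you call the main obstacle is a hypothesis the paper takes over from Theorem \ref{thm:main_rv_divergence}, not something it proves. Your sketch of it is also off: at fixed $N$ the relative spread of $T_{n^k}$ grows like $(n^k/N)^{1/2}$ rather than decaying like $\lambda_t^{-k/2}$. Consecutive hitting laws therefore separate only in the regime $N \to \infty$ at fixed $k$, which is the regime in which the lemma is stated.
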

\begin{proof}[Proof Sketch]
Under dynamical scaling, the typical component size grows as $t^{1/(1-\gamma_K)}$, so the time to reach size $i$ scales as $i^{1-\gamma_K}$ and $T_{ni}/T_i \to n^{1-\gamma_K}$ (Lemma \ref{lem:dsi_selfsim}). Rescaling a law by $\lambda_t$ rescales its mean and every quantile by $\lambda_t$. The $R_v$ peak is the $w^*$-quantile of $T_i$, so its location inherits the same factor. Balanced merges divide the number of components by $n$ at each level, which gives the relation for the merge fraction (Appendix \ref{sec:appendix_dsi_percolation}).
\end{proof}
The theorem describes the balanced, time-homogeneous coalescence idealization, and Section \ref{sec:experiments} tests trained networks for the geometric spacing and peak structure it predicts. A weight-decay-paced clock keeps this level structure but spaces the levels uniformly in $t$ (Appendix~\ref{sec:appendix_block_merges}).

\section{Conditional Extension to Adam and AdamW}
\label{sec:main_adam}

As most large-scale training uses adaptive optimizers, we extend the trapping and cascade arguments of Sections \ref{sec:topological_dynamics} and \ref{sec:main_dsi_percolation} to Adam \citep{kingma2015adam} and AdamW. We show that they carry over under a trapping condition, which we measure on a trained Transformer in Section \ref{sec:experiments}, with full derivations in Appendix \ref{sec:appendix_adam}. Throughout, $\epsilon_{\mathrm{opt}}$ denotes Adam's numerical constant and $\epsilon$ the tube radius.

Adam and AdamW maintain a joint state $(\boldsymbol\theta_t,m_t,v_t)$, and the elementwise squaring in $v_t$ narrows the admissible symmetry group to coordinate permutations $P_\pi$, a class that already contains the $S_n$ neuron-permutation symmetry, so the joint invariant set $\tilde A=\{(\boldsymbol\theta,m,v):P_\pi\boldsymbol\theta=\boldsymbol\theta,P_\pi m=m,P_\pi v=v\}$ extends Theorem \ref{thm:affine_generation} without modifying the geometry of $A$.

\begin{assumption}[Heavy-Tailed Gradient Noise]
\label{assum:main_heavy_tail}
There exist $p\in(1,2]$ and $\sigma>0$ such that $\mathbb E[|g_{t,i}|^p \mid \mathcal{F}_{t-1}]\le\sigma^p$ almost surely for every coordinate $i$ and every $t$, matching the heavy-tailed regime documented for attention-based architectures \citep{zhang2020adaptive}.
\end{assumption}

As heavy tails permit infinite variance for $p<2$, we analyze Adam with gradients clipped at a threshold $\tau$, $\hat g_t := \mathrm{clip}_\tau(g_t)$, as attention-based models are trained in practice. Clipping bounds every moment at a bias of at most $\sigma^p\tau^{1-p}$. We show that $\hat v_t$ and $\hat m_t$ then track the second moment and the gradient with high probability, where $\tau_2 = 1/(1-\beta_2)$ is the memory of $\hat v_t$ (Appendix \ref{sec:appendix_adam}). Across a symmetric pair of units, the preconditioner splits as $c_t + E_t$, with $c_t$ diagonal, role-wise and shared by both units. Theorem \ref{thm:main_adam_trapping} then states the condition that fixes the sign of the transverse drift.

\begin{assumption}[Local Quasi-Static Regime and Macroscopic Merges]
\label{assum:main_nondeg_blocksc}
We assume that $\hat v_i(\boldsymbol\theta_t) \ge v_{\min} > 0$ for every coordinate under analysis and $t \le \tau_\epsilon^{\mathrm{blk}}$, that $1-\beta_2$ is small enough for $\hat v$ on the block to stay frozen over the trapping horizon, so that $c_t = c$ and $E_t = E$, and that the merging components satisfy $i(N) = \Theta(N)$.
\end{assumption}

Assumption \ref{assum:main_nondeg_blocksc} describes a local window of a run, and Remark \ref{remark:adam_diagnostic} measures the variation of $\hat v$ inside such windows. Under the frozen preconditioner, the transverse coordinate $\boldsymbol{x}_t$ of the block follows the Euler--Maruyama step of the reduced SDE \eqref{eq:adam_reduced}, with drift $-\eta c\nabla_\perp\mathcal L - \eta\lambda\boldsymbol{x}_t\mathds{1}_{\mathrm{AdamW}}$, diffusion $\eta c(\Sigma^\tau_\perp)^{1/2}$ for the transverse covariance $\Sigma^\tau_\perp$ of the clipped gradient, a frozen forcing $\boldsymbol{f} = -\eta P^\perp E\,\nabla\mathcal L$, and a residual $\|\boldsymbol\varrho_t\| \le \rho_{\max}$ collecting the momentum lag, clipping bias and tracking errors (Proposition \ref{thm:adam_sgf_reduction}). The white-noise reading of the filtered momentum noise in that proposition is a modeling step of the same kind as the one that turns SGD into the SGF of Section \ref{sec:stochastic_collapse}. The forcing is $O(\sqrt{\epsilon})$ and moves the trap off $A$ by $O(\|\boldsymbol{f}\|)$.

\begin{theorem}[Conditional Trapping Criterion for Adam and AdamW]
\label{thm:main_adam_trapping}
Fix $\alpha \in (0,1]$ and $0 < r_0 < \sqrt\epsilon$, let $\rho_\parallel \le \rho_{\max}$ be a constant with $|\boldsymbol{x}_t^\top\boldsymbol{\varrho}_t| \le \rho_\parallel\sqrt{Y_t}$ on the tracking event of Lemma \ref{lemma:uniform_tracking}, and suppose that on $\{r_0^2 \le Y < \epsilon\}$
\begin{equation}
\label{eq:adam_condition_main}
    -2\eta\, \boldsymbol{x}^\top c\,\nabla_\perp\mathcal L - 2\eta\lambda Y \mathds{1}_{\mathrm{AdamW}} + 2\,\boldsymbol{x}^\top\boldsymbol{f} + 2\sqrt{Y}\,\rho_\parallel + \eta^2 \operatorname{tr}(c\,\Sigma^\tau_\perp c) \;\le\; 2(1-\alpha)\,\eta^2\, \frac{\boldsymbol{x}^\top c\,\Sigma^\tau_\perp c\,\boldsymbol{x}}{Y}.
\end{equation}
Then, under \eqref{eq:adam_reduced}, $Y^{\alpha}$, stopped when $Y$ leaves the annulus $[r_0^2, \epsilon)$, is a non-negative supermartingale, and over $T$ steps the trajectory escapes the tube before reaching the core $\{Y < r_0^2\}$ with probability at most $\epsilon^{-\alpha}\mathbb E[Y_{t_0}^\alpha \mid \mathcal F_{t_0}] + 2\delta$, the $2\delta$ from Lemma \ref{lemma:uniform_tracking}.
\end{theorem}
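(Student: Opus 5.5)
The plan is to repeat the argument of Theorem~\ref{thm:supermartingale} for the reduced dynamics \eqref{eq:adam_reduced} of Proposition~\ref{thm:adam_sgf_reduction}, with three changes. The preconditioner $c$ enters the drift and the diffusion, the forcing $\boldsymbol f$ and residual $\boldsymbol\varrho_t$ add to the drift, and everything is run on the tracking event of Lemma~\ref{lemma:uniform_tracking}. Write the transverse dynamics as $d\boldsymbol x_t = \boldsymbol b_t\,dt + \eta c(\Sigma^\tau_\perp)^{1/2}dW_t$, where
\[
\boldsymbol b_t = -\eta c\nabla_\perp\mathcal L - \eta\lambda\boldsymbol x_t\mathds 1_{\mathrm{AdamW}} + \boldsymbol f + \boldsymbol\varrho_t ,
\]
and set $D := \eta^2 c\,\Sigma^\tau_\perp c$. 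We read the Euler--Maruyama step in its continuous form, which is the same modeling step as for the SGF. Since $A$ is affine (Theorem~\ref{thm:main_affine_trap}), It\^o's lemma for $Y=\|\boldsymbol x\|_2^2$ has no curvature term. For $Y^\alpha$, exactly as in \eqref{eq:generator}, it gives
\[
\mathscr G Y^\alpha = \alpha Y^{\alpha-1}\Bigl[2\boldsymbol x^\top\boldsymbol b + \operatorname{tr}D - 2(1-\alpha)\tfrac{\boldsymbol x^\top D\boldsymbol x}{Y}\Bigr].
\]

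The first step is to bound the bracket. Expanding $2\boldsymbol x^\top\boldsymbol b$ produces the gradient, weight-decay and forcing terms of \eqref{eq:adam_condition_main} verbatim; for the weight decay, $-2\eta\lambda\boldsymbol x^\top\boldsymbol x = -2\eta\lambda Y$. The residual contributes $2\boldsymbol x^\top\boldsymbol\varrho_t$. On the tracking event this is at most $2\rho_\parallel\sqrt Y$ by the hypothesis on $\rho_\parallel$. Hence, on $\{r_0^2\le Y<\epsilon\}$ and on the tracking event, the bracket is at most the difference of the two sides of \eqref{eq:adam_condition_main}. The drift of $Y^\alpha$ is therefore non-positive up to the stopping time
\[
\sigma := \inf\{t : Y_t\notin[r_0^2,\epsilon)\}\wedge\tau_{\mathrm{track}},
\]
where $\tau_{\mathrm{track}}$ is the first time tracking fails.

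The second step is to show that the stochastic integral $\int \alpha Y^{\alpha-1}2\boldsymbol x^\top \eta c(\Sigma^\tau_\perp)^{1/2}dW$ is a true martingale up to $\sigma$. This is simpler than in Theorem~\ref{thm:supermartingale}. Inside the annulus $Y\ge r_0^2$, so $Y^{\alpha-1}\le r_0^{2(\alpha-1)}$. Also $\|\boldsymbol x\|\le\sqrt\epsilon$, and clipping makes $\Sigma^\tau_\perp$ bounded by $\tau^2$. The integrand is therefore bounded, so we do not need noise that vanishes on $A$. This is exactly why the core $\{Y<r_0^2\}$ is excised: the frozen forcing moves the trap off $A$, so there is no Lipschitz decay of the noise to exploit. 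Together, the two steps make $Y^\alpha_{t\wedge\sigma}$ a non-negative supermartingale.

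The third step is the escape bound. Doob's maximal inequality, applied over the $T$ steps to the stopped supermartingale, gives
\[
\mathbb P\bigl(Y \text{ reaches } \epsilon \text{ before } r_0^2 \text{ and before } \tau_{\mathrm{track}}\bigr)\le \epsilon^{-\alpha}\,\mathbb E[Y_{t_0}^\alpha\mid\mathcal F_{t_0}].
\]
A union bound with $\mathbb P(\tau_{\mathrm{track}}\le T)\le 2\delta$ from Lemma~\ref{lemma:uniform_tracking} then yields the stated probability.

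I expect the main obstacle to be justifying the continuous generator computation for what is really a discrete Euler--Maruyama step. In discrete time, $\mathbb E[Y_{t+1}-Y_t\mid\mathcal F_t]$ contains an additional $\|\boldsymbol b_t\|^2$ term. Moreover, the negative second-order credit $-2(1-\alpha)\boldsymbol x^\top D\boldsymbol x/Y$ for the concave map $y\mapsto y^\alpha$ only appears after a Taylor expansion whose remainder must be controlled, including possible overshoot past $\epsilon$ or below $r_0^2$. My first attempt would be to rely on the paper's modeling convention that \eqref{eq:adam_reduced} is read as an SDE, as for the SGF. Failing that, I would absorb the $O(\eta^2)$ term $\|\boldsymbol b\|^2$ and the third-order Taylor remainder into $\rho_\parallel$. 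This uses that $Y\ge r_0^2$ keeps all derivatives of $y^\alpha$ bounded on the annulus, and that clipping bounds the step size.
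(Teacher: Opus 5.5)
Your proposal is correct and follows the paper's argument essentially step for step. The steps are the It\^o generator of $Y^\alpha$ under \eqref{eq:adam_reduced}, the bound $2\boldsymbol{x}^\top\boldsymbol{\varrho}_t \le 2\sqrt{Y}\rho_\parallel$ that makes the bracket non-positive on the annulus, and boundedness of the stochastic integrand because $Y \ge r_0^2$; then, exactly as in the paper's Corollary \ref{cor:adam_nonescape}, you stop additionally at the first tracking failure, apply Doob's maximal inequality, and pay $2\delta$ for $\{\tau_{\mathrm{track}} \le T\}$. Your treatment of the discrete-versus-continuous issue also matches the paper, which states the result for the reduced SDE and records in Remark \ref{remark:white_noise} that the Euler--Maruyama error is left uncontrolled (its discrete fallback drops the concavity credit and uses the pathwise bound $\|\Delta\boldsymbol{x}\|^2 \le n\Delta_{\max}^2$, rather than absorbing remainders into $\rho_\parallel$).
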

\begin{proof}[Proof Sketch]
It\^o's lemma on $Y^\alpha$ under \eqref{eq:adam_reduced} produces the left side of \eqref{eq:adam_condition_main} and the concavity correction on the right. The diffusion enters at second order in $c$ on both sides, so positivity of $c$ alone does not determine the sign, and \eqref{eq:adam_condition_main} is the hypothesis that determines it. The forcing and the residual enter through their projections on $\boldsymbol{x}$, and Cauchy--Schwarz gives the cruder sufficient condition with $2\sqrt Y(\|\boldsymbol{f}\| + \rho_{\max})$ in their place. Doob's inequality gives the escape bound (Appendix \ref{sec:appendix_adam}).
\end{proof}

\begin{theorem}[Conditional Cascade Transfer to Adam and AdamW]
\label{thm:main_adam_dsi}
Under Assumptions \ref{assum:main_heavy_tail}, \ref{assum:main_nondeg_blocksc}, \ref{assum:condensation_phase} and \ref{assum:kernel}, and the confinement condition of Corollary \ref{cor:adam_nonescape}, on $t \le \tau_\epsilon^{\mathrm{blk}}$ the Adam- or AdamW-trained trajectory admits a percolation process whose $R_v$ peaks form the cascade of Theorem \ref{thm:main_dsi} with the same factor $\lambda_t$.
\end{theorem}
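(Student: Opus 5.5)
The plan is to reduce the Adam/AdamW setting to the SGD pipeline of Sections \ref{sec:topological_dynamics} and \ref{sec:main_dsi_percolation}, checking each link of that chain under the frozen-preconditioner reduction. The chain is: invariant set, trapping, links, Reeb graph, percolation, relative-variance peaks, cascade.

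\textbf{Invariance.} I would first use the joint invariant set $\tilde A$. Coordinate permutations fix $(\boldsymbol\theta,m,v)$ jointly, so the Adam state on $\tilde A$ stays on $\tilde A$, and the pairwise coincidence sets $A_{ij}$ remain affine. Theorem \ref{thm:main_affine_trap} therefore applies verbatim to the transverse coordinate $\boldsymbol{x}_t$.

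\textbf{Trapping and links.} Under Assumptions \ref{assum:main_heavy_tail} and \ref{assum:main_nondeg_blocksc}, Proposition \ref{thm:adam_sgf_reduction} gives the reduced SDE \eqref{eq:adam_reduced}. On the tracking event of Lemma \ref{lemma:uniform_tracking}, the confinement condition of Corollary \ref{cor:adam_nonescape} is exactly what lets Theorem \ref{thm:main_adam_trapping} hold with a shifted core of radius $r_0 = O(\|\boldsymbol f\| + \rho_{\max})$. This makes $Y^\alpha$ a stopped supermartingale on the annulus.

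I would then redefine links with $\epsilon_{\mathrm{in}} > r_0^2$, so that the trap's offset from $A$ lies inside the inner tube. The link-survival bound of Definition \ref{def:attractor_equiv} then holds with an additive $2\delta$: it becomes $1-(\epsilon_{\mathrm{in}}/\epsilon)^\alpha - 2\delta$. With this bound, Theorem \ref{thm:reeb_transitions-main} goes through unchanged, giving condensation and fragmentation on $t \le \tau_\epsilon^{\mathrm{blk}}$.

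\textbf{Percolation and peaks.} Assumption \ref{assum:condensation_phase} is imposed directly, so Proposition \ref{thm:main_renormalization} yields the percolation process $G_p$. Theorem \ref{thm:main_rv_divergence} is purely distributional: it depends only on the two-point mixture produced by a block merge of multiplicity $n$. It therefore transfers as is, giving peaks of height $(n-1)^2/(4n)$ at $w^*=1/(n+1)$. The condition $i=\Theta(N)$ in Assumption \ref{assum:main_nondeg_blocksc} makes these merges macroscopic, by Proposition \ref{thm:main_er_discontinuity}.

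\textbf{Cascade.} Finally, Assumption \ref{assum:kernel} is posed for the Adam-driven link rates themselves. Lemma \ref{lem:dsi_selfsim} and Theorem \ref{thm:main_dsi} use only the kernel's homogeneity and balanced merges, not the underlying optimizer, so they give $q_{ni}/q_i\to n^{1-\gamma_K}$. The factor $\lambda_t$ is the same because it is defined through $\gamma_K$.

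\textbf{Main obstacle.} The hardest step is justifying that the frozen preconditioner $c$ does not change the kernel's homogeneity. A role-wise constant $c$ rescales all link rates by a size-independent factor, and I would show it is absorbed into a time change that leaves ratios of times invariant. For AdamW, the weight-decay clock may make the kernel time-inhomogeneous; I would handle this as in Appendix \ref{sec:appendix_block_merges}, reading the result in the clock variable.

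A second technical point is the union bound over all $\binom N2$ pairs for the $2\delta$ tracking failures. Here I would take $\delta$ uniform over the block, as Lemma \ref{lemma:uniform_tracking} provides, so the failure event is shared rather than summed.
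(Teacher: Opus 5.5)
The gap is in your link-survival step. You derive it from the annulus supermartingale of Theorem \ref{thm:main_adam_trapping}, but that theorem's hypothesis \eqref{eq:adam_condition_main} is not among the assumptions of the statement. The confinement condition $\mathscr{G}\tilde Y \le -2\kappa\tilde Y + C$ that is assumed does not imply it. It concerns $\tilde Y = \|\boldsymbol{x}-\boldsymbol{x}^*\|^2$, it allows $C>0$, and the two conditions are the hypotheses of two different bounds in Corollary \ref{cor:adam_nonescape}.

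Even if you grant the annulus supermartingale, it controls only the \emph{first} exit from $[r_0^2,\epsilon)$. From $Y=\epsilon_{\mathrm{in}}$ it bounds the chance of reaching $\epsilon$ before the core by $(\epsilon_{\mathrm{in}}/\epsilon)^\alpha+2\delta$. A path that enters the core can then return to the annulus, each such excursion costs up to $(r_0^2/\epsilon)^\alpha$, and nothing bounds their number. The trap is the zero $\boldsymbol{x}^*\neq 0$ of the transverse drift, and the diffusion does not vanish there. So the path is confined near $\boldsymbol{x}^*$ rather than trapped at it, and these excursions recur. Choosing $\epsilon_{\mathrm{in}}>r_0^2$ so that the offset sits inside the inner tube is exactly what makes them recur.

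In addition, the $2\delta$ is a horizon-$T$ quantity, since $\delta'$ and $\delta''$ grow with $\log T$. So $1-(\epsilon_{\mathrm{in}}/\epsilon)^\alpha-2\delta$ cannot serve as a bound on $\mathbb{P}(\tau_\epsilon=\infty)$.

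The paper instead uses the second bound of Corollary \ref{cor:adam_nonescape}, $\mathbb{P}(\tilde Y \text{ exits before } T)\le (\tilde Y_{t_0}+C(T-t_0))/\epsilon+2\delta$. This covers every excursion within the horizon but grows linearly in $T$. It therefore certifies survival only for $T\lesssim \epsilon/C$, and it replaces Lemma \ref{lemma:doob-app} window by window; that finite-horizon survival is all the theorem delivers. With this substitution, the rest of your chain coincides with the paper's argument: $G_p$ from Assumption \ref{assum:condensation_phase}, the purely distributional $R_v$ peak, macroscopic merges from $i=\Theta(N)$, and the kernel imposed on the realized link rates.

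Your stated main obstacle is not one, and the mechanism you propose for it is incorrect. Once Assumption \ref{assum:kernel} is imposed on the link rates of the Adam process as run, as you say yourself, $c$ never enters the ratio. That is the paper's entire argument at this point. A constant $c$ is also not a time change of \eqref{eq:adam_reduced}. It multiplies the drift by $c$ and the covariance by $c^2$, whereas a time change scales both by the same factor; this is why the paper stresses that $c>0$ alone does not fix the sign of the transverse drift.

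Likewise, reinterpreting AdamW in a weight-decay clock variable would prove a different statement. A link rate $K(a,b)/N$ that depends only on sizes already excludes that clock. Under that clock (Appendix \ref{sec:appendix_block_merges}), successive levels are nearly evenly spaced in $t$, so the factor $\lambda_t$ in training time is not preserved.
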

\begin{proof}[Proof Sketch]
The confinement bound of Corollary \ref{cor:adam_nonescape} supplies the link survival that Lemma \ref{lemma:doob-app} provides for SGD, and the frozen preconditioner $c$ never enters the ratio (Appendix \ref{sec:appendix_adam}, measured in Remark \ref{remark:adam_diagnostic}).
\end{proof}

Corollary \ref{cor:adam_nonescape} certifies less than one inter-slingshot window, so the guarantee applies window by window.


\section{Empirical Validation}
\label{sec:experiments}
 
To test whether this percolation holds beyond our assumptions, we track merges in trained networks through the order parameter $\mathcal{O}$ and its relative variance $\mathcal{R}_v$ across training seeds, which realizes the two-point mixture of Theorem~\ref{thm:main_rv_divergence}. We start from toy models, where merges can be observed directly, and move to MLPs and a Transformer, where we rely on the spectral effective rank (Appendices~\ref{sec:appendix_toy_experiments} and~\ref{sec:appendix_empirical_suite}).

\subsection{Stochastic Condensation and Reactive Fragmentation}
 
In a controlled toy setting adapting \citet{chen2023stochastic}, pairwise
merges ($n = 2$) under a constant kernel ($\gamma_K = 0$) give $\lambda_t = 2$
asymptotically (Corollary~\ref{cor:peak_times}). A constrained $N=3$ kinematic
model checks that the detection pipeline recovers imposed merge times, and
unconstrained $N=6$ label-noise SGD gives a two-peak ratio $\lambda_t \approx 2.00$ from
localized $\mathcal{R}_v$ peaks, with the finite-$N$ prediction for six units at $2.5$
(Appendix~\ref{sec:appendix_toy_experiments}), and a task shift at $t=3000$ confirms the predicted
reversibility, producing fragmentation at the rate $\eta(\mu - \zeta^2/2)$ sets
(Appendix~\ref{sec:appendix_toy_experiments}, Figure~\ref{fig:toy_experiments}).
We also compare a nested binary-tree network with a flat layer built from the
same units and trained by SGD with weight decay, whose exponential clock lies outside Lemma~\ref{lem:dsi_selfsim} but keeps its level structure. The tree also produces the macroscopic block
merges of Proposition~\ref{thm:main_er_discontinuity} that make the $R_v$
peaks sharp, with the first-level peak at the height $1/8$ and weight $w^* = 1/3$ of Theorem~\ref{thm:main_rv_divergence}, reaching one cluster in $47\%$ of seeds at $N=32$, while the flat
layer condenses to two clusters by single attachments
(Figure~\ref{fig:block_merges}, Appendix~\ref{sec:appendix_block_merges}).
 
\subsection{Universality, Grokking, and Benchmark Generalization}
 
Deeper networks carry the nested structure of the tree through the
composition of their layers; however, moving from the toy models to general,
unconstrained networks requires replacing distance-based clustering with
Spectral Effective Rank as the order parameter.\footnote{Spectral Effective Rank measures soft dimensionality collapse via the exponentiated Shannon entropy of a weight matrix's singular value spectrum, generalizing hard, exact-zero collapse to the case where weights rarely vanish exactly.}
 
In a Transformer
trained on modular arithmetic, delayed generalization (grokking;
\citealp{power2022grokking}) coincides with a 3-peak cascade (epochs $4500$, $9500$, $20000$; Figure~\ref{fig:unified_generalization}) immediately
preceding the performance spike ($\lambda_t = 2.11$, phase-randomized spectral
null false positive rate (FPR) $\le 0.1\%$), though we do not claim this
cascade is the sole driver of the transition. On the same AdamW run we measured the quantities in condition \eqref{eq:adam_condition_main} on permutation pairs of feed-forward units (Remark \ref{remark:adam_diagnostic}). With dropout active, the trapping condition holds at $50$ to $62\%$ of checkpoints ($62$ to $66\%$ inside the tube), the transverse drift is inward at $63$ to $74\%$, and contraction phases reach $3.6$ decades. Within the horizon that Corollary \ref{cor:adam_nonescape} certifies, these measurements agree with the local assumptions of Section \ref{sec:main_adam}. Slingshot instabilities \citep{thilak2022slingshot} punctuate the run about once per $1.4\,\tau_2$ and restart the condensation phase (Remark~\ref{remark:adam_diagnostic}). In contrast, a cascade structure
appears with cleaner transient dynamics and fractional $\lambda_t$ across UCI tabular classification and vision benchmarks trained with SGD, with graded surrogate support
(Appendix~\ref{sec:appendix_empirical_suite}, Figures~\ref{fig:uci_heart} and~\ref{fig:suite_fashionmnist}).

\begin{figure}[t]
    \centering
    \includegraphics[width=\textwidth]{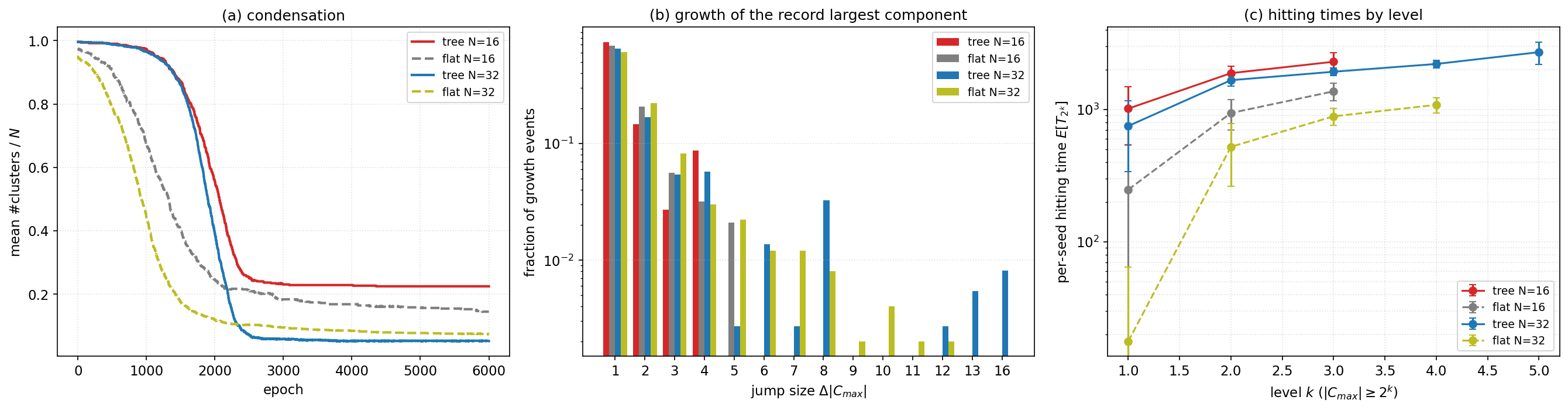}
    \caption{\textbf{Block merges under nested symmetry.} Binary tree (solid) and flat layer (dashed) at $N=16$ and $32$ over $32$ seeds. \textbf{(a)} Only the tree reaches a single cluster. \textbf{(b)} Jumps of $+16$, merging two half-trees, occur only in the tree. \textbf{(c)} Hitting times per level are evenly spaced under weight decay.}
    \label{fig:block_merges}
    \vspace{-0.5em}
\end{figure}
\begin{figure}[t]
    \centering
    \begin{minipage}[c]{0.64\textwidth}
        \centering
        \includegraphics[width=\linewidth]{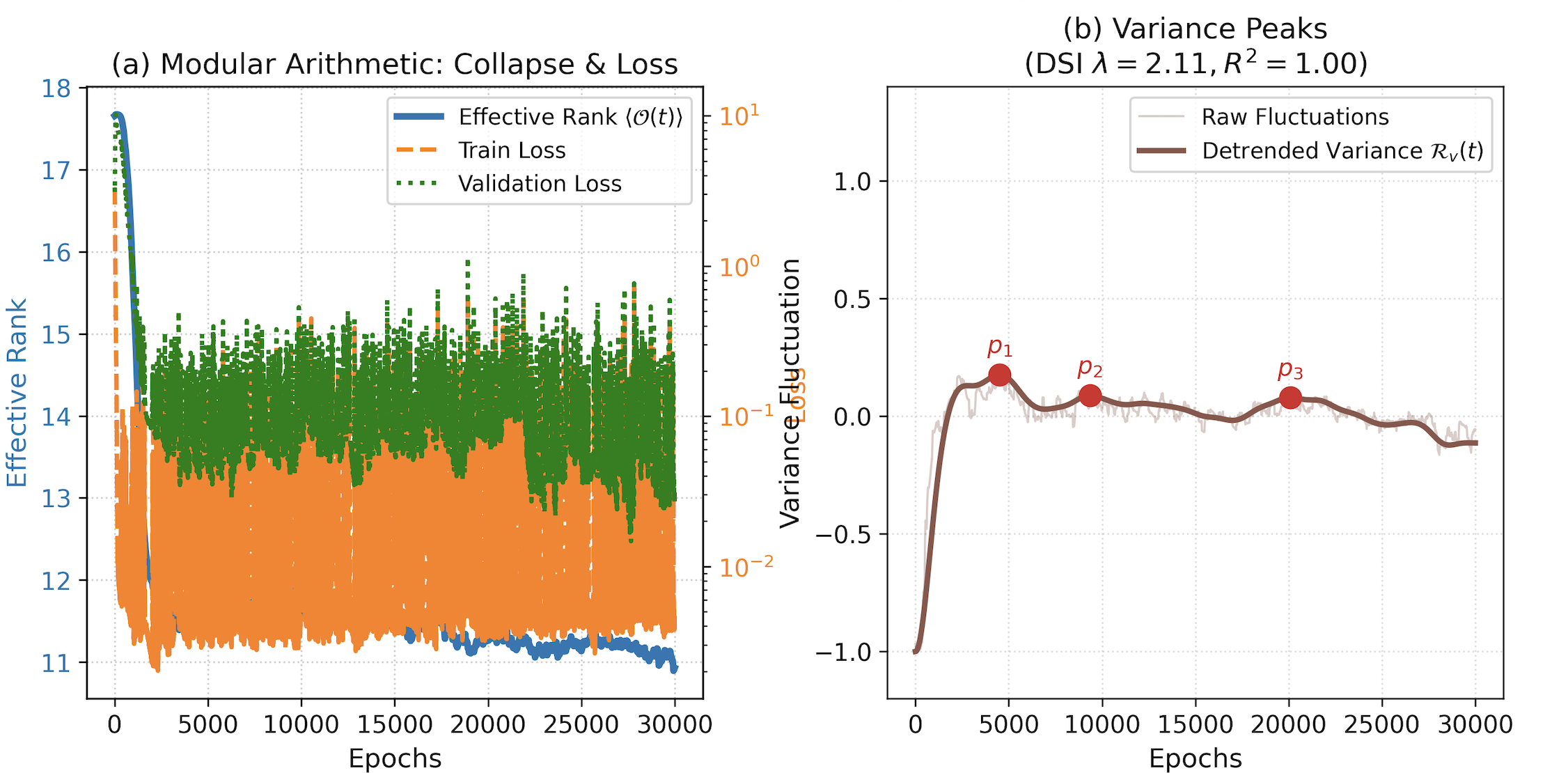}
    \end{minipage}\hfill
    \begin{minipage}[c]{0.33\textwidth}
        \captionsetup{justification=raggedright,singlelinecheck=false}
        \caption{\textbf{Grokking (modular arithmetic).} Delayed generalization co-occurs with dimensionality collapse and a log-linear cascade of variance peaks (epochs $4500$, $9500$, $20000$; $\lambda_t = 2.11$). Other datasets show similar cascades (Appendix~\ref{sec:appendix_empirical_suite}).}
        \label{fig:unified_generalization}
    \end{minipage}
    \vspace{-1em}
\end{figure}

\section{Related Work}
\label{sec:related_work}

\textbf{Stochastic Dynamics and Implicit Bias:} Traditionally SGD is modeled
as driving networks toward low-rank manifolds \citep{blanc2020implicit,
nacson2022implicit, chen2023stochastic} through anomalous diffusion and glassy
dynamics \citep{baityjesi2018comparing, geiger2019jamming, kunin2024limiting,
chen2020anomalous}. We instead track the transient path to these manifolds
through network percolation on a continuous Reeb graph.

\textbf{Graph and Percolation Perspectives on Training:} Recent work applies
percolation tools to training, studying connectivity under dropout
\citep{devlin2025dropout} or synaptic invariants that forecast performance
\citep{li2022capacitance}. Our percolation graph is instead constructed from
a renormalized Reeb graph (Theorem~\ref{thm:main_renormalization}) that
tracks collapse into shared invariant sets (Definition~\ref{def:attractor_equiv}),
with edges driven by architectural symmetry rather than random deletion or
correlation. 

\textbf{Neuron and Weight Condensation:} Condensation serves as a complementary lens on stochastic collapse focusing on initialization scale rather than symmetry
\citep{luo2021phase, zhou2022condensation, zhou2023phase}. Our framework gives an explicit mechanism for the transient dynamics. 

\section{Limitations, Conclusion and Future Work}
\label{sec:conclusion}

We formalize SGD's transient dynamics as a symmetry-induced percolation
process and show under assumptions that it exhibits Discrete Scale Invariance most sharply when the parameters carry the
nested symmetry that deep networks inherit from composition. Future work could test whether the discontinuities persist at practical widths
(Corollary~\ref{cor:microscopic_washout-app}) and larger moduli, and relate the cascades to representation-level measures such as neural collapse. Interesting directions include an extension towards 
curved invariant manifolds, the usage of $\mathcal{R}_v$ peaks to schedule
learning rates and uncovering deeper connections to empirical scaling laws
(Remark~\ref{remark:scaling_laws_stagewise}). Extending the Adam and AdamW measurements to large-scale training is left to future work.

\clearpage
\newpage

\bibliographystyle{plainnat}
\bibliography{references}

\begin{thebibliography}{54}
\providecommand{\natexlab}[1]{#1}
\providecommand{\url}[1]{\texttt{#1}}
\expandafter\ifx\csname urlstyle\endcsname\relax
  \providecommand{\doi}[1]{doi: #1}\else
  \providecommand{\doi}{doi: \begingroup \urlstyle{rm}\Url}\fi

\bibitem[Achlioptas et~al.(2009)Achlioptas, D'Souza, and Spencer]{achlioptas2009explosive}
Dimitris Achlioptas, Raissa~M D'Souza, and Joel Spencer.
\newblock Explosive percolation in random networks.
\newblock \emph{Science}, 323\penalty0 (5920):\penalty0 1453--1455, 2009.

\bibitem[Baity-Jesi et~al.(2018)Baity-Jesi, Sagun, Geiger, Spigler, and Ben~Arous]{baityjesi2018comparing}
Marco Baity-Jesi, Levent Sagun, Mario Geiger, Stefano Spigler, and G{\'e}rard Ben~Arous.
\newblock Comparing dynamics: Deep neural networks versus glassy systems.
\newblock \emph{Journal of Statistical Mechanics}, 2018:\penalty0 033301, 2018.

\bibitem[Blanc et~al.(2020)Blanc, Gupta, Valiant, and Valiant]{blanc2020implicit}
Guy Blanc, Neha Gupta, Gregory Valiant, and Paul Valiant.
\newblock Implicit regularization for deep neural networks driven by an ornstein-uhlenbeck like process.
\newblock In \emph{Conference on Learning Theory}, pages 483--513. PMLR, 2020.

\bibitem[Carlsson(2009)]{carlsson2009topology}
Gunnar Carlsson.
\newblock Topology and data.
\newblock \emph{Bulletin of the American Mathematical Society}, 46\penalty0 (2):\penalty0 255--308, 2009.

\bibitem[Chaudhari and Soatto(2018)]{chaudhari2018stochastic}
Pratik Chaudhari and Stefano Soatto.
\newblock Stochastic gradient descent performs variational inference, converges to limit cycles for deep networks.
\newblock In \emph{International Conference on Learning Representations}, 2018.

\bibitem[Chen et~al.(2023)Chen, Kunin, Yamamura, and Ganguli]{chen2023stochastic}
Feng Chen, Daniel Kunin, Atsushi Yamamura, and Surya Ganguli.
\newblock Stochastic collapse: How gradient noise attracts {SGD} dynamics towards simpler subnetworks.
\newblock In \emph{Thirty-seventh Conference on Neural Information Processing Systems}, 2023.

\bibitem[Chen et~al.(2020)Chen, Qu, and Gong]{chen2020anomalous}
Guozhang Chen, Cheng Qu, and Pulin Gong.
\newblock Anomalous diffusion dynamics of learning in deep neural networks.
\newblock \emph{arXiv preprint arXiv:2009.10588}, 2020.

\bibitem[Chen et~al.(2014)Chen, Schr{\"o}der, D’Souza, Sornette, and Nagler]{chen2014microtransition}
Wei Chen, Malte Schr{\"o}der, Raissa~M D’Souza, Didier Sornette, and Jan Nagler.
\newblock Microtransition cascades to percolation.
\newblock \emph{Physical review letters}, 112\penalty0 (15):\penalty0 155701, 2014.

\bibitem[\c{S}im\c{s}ek et~al.(2021)\c{S}im\c{s}ek, Ged, Jacot, Spadaro, Hongler, Gerstner, and Brea]{simsek2021geometry}
Berfin \c{S}im\c{s}ek, Fran\c{c}ois Ged, Arthur Jacot, Francesco Spadaro, Cl{\'e}ment Hongler, Wulfram Gerstner, and Johanni Brea.
\newblock Geometry of the loss landscape in overparameterized neural networks: Symmetries and invariances.
\newblock In \emph{Proceedings of the 38th International Conference on Machine Learning}, volume 139, pages 9722--9732. PMLR, 2021.

\bibitem[Devlin and Sanders(2025)]{devlin2025dropout}
Finley Devlin and Jaron Sanders.
\newblock Dropout neural network training viewed from a percolation perspective.
\newblock \emph{arXiv preprint arXiv:2512.13853}, 2025.

\bibitem[Edelsbrunner and Harer(2008)]{edelsbrunner2008topological}
Herbert Edelsbrunner and John Harer.
\newblock Topological persistence and simplification.
\newblock \emph{Discrete \& Computational Geometry}, 39\penalty0 (1--3):\penalty0 427--463, 2008.

\bibitem[Freidlin and Wentzell(2012)]{freidlin2012random}
Mark~I Freidlin and Alexander~D Wentzell.
\newblock \emph{Random perturbations of dynamical systems}.
\newblock Springer Science \& Business Media, 2012.

\bibitem[Gardiner(2009)]{gardiner2009stochastic}
Crispin Gardiner.
\newblock \emph{Stochastic Methods: A Handbook for the Natural and Social Sciences}.
\newblock Springer Berlin Heidelberg, 2009.

\bibitem[Geiger et~al.(2019)Geiger, Spigler, d'Ascoli, Sagun, and Baity-Jesi]{geiger2019jamming}
Mario Geiger, Stefano Spigler, St{\'e}phane d'Ascoli, Levent Sagun, and Marco Baity-Jesi.
\newblock The jamming transition as a paradigm to understand the loss landscape of deep neural networks.
\newblock \emph{Physical Review E}, 100:\penalty0 012115, 2019.

\bibitem[Goodfellow et~al.(2013)Goodfellow, Mirza, Xiao, Courville, and Bengio]{goodfellow2013empirical}
Ian~J Goodfellow, Mehdi Mirza, Da~Xiao, Aaron Courville, and Yoshua Bengio.
\newblock An empirical investigation of catastrophic forgetting in gradient-based neural networks.
\newblock \emph{arXiv preprint arXiv:1312.6211}, 2013.

\bibitem[HaoChen et~al.(2021)HaoChen, Wei, Lee, and Ma]{haochen2021shape}
Jeff~Z HaoChen, Colin Wei, Jason Lee, and Tengyu Ma.
\newblock Shape matters: Understanding the implicit bias of the noise covariance.
\newblock In \emph{Conference on Learning Theory}, pages 2315--2357. PMLR, 2021.

\bibitem[Hoffmann et~al.(2022)Hoffmann, Borgeaud, Mensch, Buchatskaya, Cai, Rutherford, Casas, Hendricks, Welbl, Clark, et~al.]{hoffmann2022training}
Jordan Hoffmann, Sebastian Borgeaud, Arthur Mensch, Elena Buchatskaya, Trevor Cai, Eliza Rutherford, Diego de~Las Casas, Lisa~Anne Hendricks, Johannes Welbl, Aidan Clark, et~al.
\newblock Training compute-optimal large language models.
\newblock In \emph{Advances in Neural Information Processing Systems}, volume~35, pages 30016--30030, 2022.

\bibitem[Hoogland et~al.(2024)Hoogland, Wang, Farrugia-Roberts, Carroll, Wei, and Murfet]{hoogland2024loss}
Jesse Hoogland, George Wang, Matthew Farrugia-Roberts, Liam Carroll, Susan Wei, and Daniel Murfet.
\newblock Loss landscape degeneracy and stagewise development in transformers.
\newblock \emph{Transactions on Machine Learning Research}, 2024.

\bibitem[Huh et~al.(2021)Huh, Mobahi, Zhang, Cheung, Agrawal, and Isola]{huh2021lowrank}
Minyoung Huh, Hossein Mobahi, Richard Zhang, Brian Cheung, Pulkit Agrawal, and Phillip Isola.
\newblock The low-rank simplicity bias in deep networks.
\newblock \emph{Transactions on Machine Learning Research}, 2021.
\newblock URL \url{https://openreview.net/forum?id=3a0Cbtb1Hs}.

\bibitem[Kaplan et~al.(2020)Kaplan, McCandlish, Henighan, Brown, Chess, Child, Gray, Radford, Wu, and Amodei]{kaplan2020scaling}
Jared Kaplan, Sam McCandlish, Tom Henighan, Tom~B Brown, Benjamin Chess, Rewon Child, Scott Gray, Alec Radford, Jeffrey Wu, and Dario Amodei.
\newblock Scaling laws for neural language models.
\newblock \emph{arXiv preprint arXiv:2001.08361}, 2020.

\bibitem[Karatzas and Shreve(1991)]{karatzas1991brownian}
Ioannis Karatzas and Steven~E. Shreve.
\newblock \emph{Brownian Motion and Stochastic Calculus}, volume 113 of \emph{Graduate Texts in Mathematics}.
\newblock Springer-Verlag, 2nd edition, 1991.

\bibitem[Kingma and Ba(2015)]{kingma2015adam}
Diederik~P. Kingma and Jimmy Ba.
\newblock Adam: A method for stochastic optimization.
\newblock In \emph{International Conference on Learning Representations}, 2015.

\bibitem[Kirkpatrick et~al.(2017)Kirkpatrick, Pascanu, Rabinowitz, Veness, Desjardins, Rusu, Milan, Quan, Ramalho, Grabska-Barwinska, et~al.]{kirkpatrick2017overcoming}
James Kirkpatrick, Razvan Pascanu, Neil Rabinowitz, Joel Veness, Guillaume Desjardins, Andrei~A Rusu, Kieran Milan, John Quan, Tiago Ramalho, Agnieszka Grabska-Barwinska, et~al.
\newblock Overcoming catastrophic forgetting in neural networks.
\newblock \emph{Proceedings of the national academy of sciences}, 114\penalty0 (13):\penalty0 3521--3526, 2017.

\bibitem[Kunin et~al.(2024)Kunin, Sagastuy-Bre{\~n}a, Gillespie, Margalit, and Tanaka]{kunin2024limiting}
Daniel Kunin, Javier Sagastuy-Bre{\~n}a, Lauren Gillespie, Eshed Margalit, and Hidenori Tanaka.
\newblock The limiting dynamics of {SGD}: Modified loss, phase-space oscillations, and anomalous diffusion.
\newblock \emph{Neural Computation}, 36:\penalty0 151--197, 2024.

\bibitem[Li et~al.(2017)Li, Tai, and E]{li2017stochastic}
Qianxiao Li, Cheng Tai, and Weinan E.
\newblock Stochastic modified equations and adaptive stochastic gradient algorithms.
\newblock In \emph{International Conference on Machine Learning}, pages 2101--2110. PMLR, 2017.

\bibitem[Li et~al.(2022{\natexlab{a}})Li, Ma, Lu, Chen, and Song]{li2022capacitance}
Yang Li, Kai Ma, Han Lu, Xin Chen, and Renjie Song.
\newblock Neural capacitance: A new perspective of neural network selection via edge dynamics.
\newblock \emph{arXiv preprint arXiv:2201.04194}, 2022{\natexlab{a}}.

\bibitem[Li et~al.(2022{\natexlab{b}})Li, Wang, and Arora]{li2022what}
Zhiyuan Li, Tianhao Wang, and Sanjeev Arora.
\newblock What happens after sgd reaches zero loss? -a mathematical framework.
\newblock In \emph{International Conference on Learning Representations}, 2022{\natexlab{b}}.

\bibitem[Liu et~al.(2022)Liu, Kitouni, Nolte, Michaud, Tegmark, and Williams]{liu2022towards}
Ziming Liu, Ouail Kitouni, Niklas Nolte, Eric Michaud, Max Tegmark, and Mike Williams.
\newblock Towards understanding grokking: An effective theory of representation learning.
\newblock In \emph{Advances in Neural Information Processing Systems}, volume~35, pages 34651--34663, 2022.

\bibitem[Loshchilov and Hutter(2019)]{loshchilov2019decoupled}
Ilya Loshchilov and Frank Hutter.
\newblock Decoupled weight decay regularization.
\newblock In \emph{International Conference on Learning Representations}, 2019.

\bibitem[Luo et~al.(2021)Luo, Xu, Ma, and Zhang]{luo2021phase}
Tao Luo, Zhi-Qin~John Xu, Zheng Ma, and Yaoyu Zhang.
\newblock Phase diagram for two-layer {ReLU} neural networks at infinite-width limit.
\newblock \emph{Journal of Machine Learning Research}, 22\penalty0 (71):\penalty0 1--47, 2021.

\bibitem[Mandt et~al.(2017)Mandt, Hoffman, and Blei]{mandt2017stochastic}
Stephan Mandt, Matthew~D Hoffman, and David~M Blei.
\newblock Stochastic gradient descent as approximate bayesian inference.
\newblock \emph{The Journal of Machine Learning Research}, 18\penalty0 (1):\penalty0 4873--4907, 2017.

\bibitem[Mori et~al.(2022)Mori, Ziyin, Liu, and Ueda]{mori2022logarithmic}
Takashi Mori, Liu Ziyin, Kangqiao Liu, and Masahito Ueda.
\newblock Logarithmic landscape and power-law escape rate of {SGD}.
\newblock In \emph{International Conference on Machine Learning}, pages 15959--15978. PMLR, 2022.

\bibitem[Nacson et~al.(2022)Nacson, Ravichandran, Srebro, and Soudry]{nacson2022implicit}
Mor~Shpigel Nacson, Kavya Ravichandran, Nathan Srebro, and Daniel Soudry.
\newblock Implicit bias of the step size in linear diagonal neural networks.
\newblock In \emph{International Conference on Machine Learning}, pages 16270--16295. PMLR, 2022.

\bibitem[Naitzat et~al.(2020)Naitzat, Zhitnikov, and Lim]{naitzat2020topology}
Gregory Naitzat, Andrey Zhitnikov, and Lek-Heng Lim.
\newblock Topology of data in deep learning.
\newblock \emph{Frontiers in Applied Mathematics and Statistics}, 6:\penalty0 554449, 2020.

\bibitem[Nanda et~al.(2023)Nanda, Chan, Lieberum, Smith, and Steinhardt]{nanda2022progress}
Neel Nanda, Lawrence Chan, Tom Lieberum, Jess Smith, and Jacob Steinhardt.
\newblock Progress measures for grokking via mechanistic interpretability.
\newblock In \emph{The Eleventh International Conference on Learning Representations}, 2023.

\bibitem[Oppenheim and Schafer(2010)]{oppenheim2010discrete}
Alan~V. Oppenheim and Ronald~W. Schafer.
\newblock \emph{Discrete-Time Signal Processing}.
\newblock Prentice Hall, Upper Saddle River, NJ, 3rd edition, 2010.

\bibitem[Papoulis and Pillai(2002)]{papoulis2002probability}
Athanasios Papoulis and S.~Unnikrishna Pillai.
\newblock \emph{Probability, Random Variables, and Stochastic Processes}.
\newblock McGraw-Hill, New York, 4th edition, 2002.

\bibitem[Power et~al.(2022)Power, Burda, Edwards, Babuschkin, and Misra]{power2022grokking}
Alethea Power, Yuri Burda, Harri Edwards, Igor Babuschkin, and Vedant Misra.
\newblock Grokking: Generalization beyond overfitting on small algorithmic datasets.
\newblock In \emph{International Conference on Learning Representations}, 2022.
\newblock URL \url{https://openreview.net/forum?id=9Vrb9D0WI4}.

\bibitem[Ramachandran and Sra(2026)]{ramachandrantrees}
Sai~Niranjan Ramachandran and Suvrit Sra.
\newblock Trees to flows and back: Unifying decision trees and diffusion models.
\newblock In \emph{Forty-third International Conference on Machine Learning}, 2026.

\bibitem[Revuz and Yor(1999)]{revuz1999continuous}
Daniel Revuz and Marc Yor.
\newblock \emph{Continuous martingales and Brownian motion}, volume 293.
\newblock Springer Science \& Business Media, 1999.

\bibitem[Riordan and Warnke(2011)]{riordan2011explosive}
Oliver Riordan and Lutz Warnke.
\newblock Explosive percolation is continuous.
\newblock \emph{Science}, 333:\penalty0 322--324, 2011.

\bibitem[Risken(1996)]{risken1996fokker}
Hannes Risken.
\newblock \emph{The Fokker-Planck Equation: Methods of Solution and Applications}.
\newblock Springer-Verlag, 1996.

\bibitem[Samorodnitsky and Taqqu(1994)]{samorodnitsky1994stable}
Gennady Samorodnitsky and Murad~S. Taqqu.
\newblock \emph{Stable Non-{G}aussian Random Processes: Stochastic Models with Infinite Variance}.
\newblock Chapman \& Hall, New York, 1994.

\bibitem[Saxe et~al.(2013)Saxe, McClelland, and Ganguli]{saxe2013exact}
Andrew~M. Saxe, James~L. McClelland, and Surya Ganguli.
\newblock Exact solutions to the nonlinear dynamics of learning in deep linear neural networks.
\newblock \emph{arXiv preprint arXiv:1312.6120}, 2013.

\bibitem[Shah et~al.(2020)Shah, Tamuly, Raghunathan, Jain, and Netrapalli]{shah2020pitfalls}
Harshay Shah, Kaustubh Tamuly, Aditi Raghunathan, Prateek Jain, and Praneeth Netrapalli.
\newblock The pitfalls of simplicity bias in neural networks.
\newblock In \emph{Advances in Neural Information Processing Systems}, volume~33, pages 9573--9585, 2020.

\bibitem[Sornette(1998)]{sornette1998discrete}
Didier Sornette.
\newblock Discrete scale invariance and complex dimensions.
\newblock \emph{Physics Reports}, 297\penalty0 (5):\penalty0 239--270, 1998.

\bibitem[Stauffer and Aharony(1994)]{stauffer1994introduction}
Dietrich Stauffer and Ammon Aharony.
\newblock \emph{Introduction to Percolation Theory}.
\newblock Taylor \& Francis, 1994.

\bibitem[Thilak et~al.(2022)Thilak, Littwin, Zhai, Saremi, Joshua, and Susskind]{thilak2022slingshot}
Vimal Thilak, Etai Littwin, Shuanghai Zhai, Omid Saremi, Roni Joshua, and Leonard Susskind.
\newblock The slingshot mechanism: An empirical study of adaptive optimizers and the grokking phenomenon.
\newblock \emph{arXiv preprint arXiv:2206.04817}, 2022.

\bibitem[Wei et~al.(2008)Wei, Zhang, Cousseau, Ozeki, and Amari]{wei2008dynamics}
Haikun Wei, Jun Zhang, Florent Cousseau, Tomoko Ozeki, and Shun-ichi Amari.
\newblock Dynamics of learning near singularities in layered networks.
\newblock \emph{Neural Computation}, 20\penalty0 (3):\penalty0 813--843, 2008.

\bibitem[Wei et~al.(2022)Wei, Murfet, Gong, Li, Gell-Redman, and Quella]{wei2022deep}
Susan Wei, Daniel Murfet, Ming Gong, Huan Li, Jesse Gell-Redman, and Thomas Quella.
\newblock Deep learning is singular, and that's good.
\newblock \emph{IEEE Transactions on Neural Networks and Learning Systems}, 34\penalty0 (12):\penalty0 10473--10486, 2022.

\bibitem[Zhang et~al.(2020)Zhang, Karimireddy, Veit, Kim, Reddi, Kumar, and Sra]{zhang2020adaptive}
Jingzhao Zhang, Sai~Praneeth Karimireddy, Andreas Veit, Seungyeon Kim, Sashank~J. Reddi, Sanjiv Kumar, and Suvrit Sra.
\newblock Why are adaptive methods good for attention models?
\newblock In \emph{Advances in Neural Information Processing Systems}, volume~33, 2020.

\bibitem[Zhou et~al.(2022)Zhou, Zhou, Luo, Zhang, and Xu]{zhou2022condensation}
Hanxu Zhou, Qixuan Zhou, Tao Luo, Yaoyu Zhang, and Zhi-Qin~John Xu.
\newblock Towards understanding the condensation of neural networks at initial training.
\newblock In \emph{Advances in Neural Information Processing Systems}, volume~35, 2022.

\bibitem[Zhou et~al.(2023)Zhou, Zhou, Jin, Luo, Zhang, and Xu]{zhou2023phase}
Hanxu Zhou, Qixuan Zhou, Zhenyuan Jin, Tao Luo, Yaoyu Zhang, and Zhi-Qin~John Xu.
\newblock Phase diagram of initial condensation for two-layer neural networks.
\newblock \emph{arXiv preprint arXiv:2303.06561}, 2023.

\bibitem[Ziyin et~al.(2022)Ziyin, Li, Simon, and Ueda]{ziyin2022sgd}
Liu Ziyin, Botao Li, James~B Simon, and Masahito Ueda.
\newblock {SGD} with a constant large learning rate can converge to local maxima.
\newblock In \emph{International Conference on Learning Representations}, 2022.

\end{thebibliography}

\newpage
\tableofcontents
\newpage
\appendix

\section{SDE Formulations and Invariant Sets of Learning Dynamics}
\label{app:sde_and_invariance}

To analyze the transient dynamics of stochastic collapse, we formalize the relationship between discrete Stochastic Gradient Descent (SGD) and continuous Stochastic Gradient Flow (SGF). We then formally define invariant sets for these continuous processes and prove that affine invariant sets in SGD strictly translate to SGF. We adapt this section from \cite{chen2023stochastic}.

\subsection{Deriving Stochastic Gradient Flow from SGD}
\label{app:sgd_sgf_relation}

Recall the standard SGD update rule. At iteration $t$, optimizing a network parameterized by $\boldsymbol{\theta} \in \mathbb{R}^d$ with learning rate $\eta$ over a mini-batch $\mathcal{B}_t$ of size $B$ drawn from a dataset of size $N_{\mathcal{D}}$ yields:
\begin{equation}
    \boldsymbol{\theta}_{t+1} = \boldsymbol{\theta}_t - \frac{\eta}{B} \sum_{i \in \mathcal{B}_t} \nabla \ell(\boldsymbol{\theta}_t; x_i, y_i).
    \label{eq:sgd_update}
\end{equation}

We factorize the right-hand side to explicitly isolate the exact full-batch population gradient $\nabla \mathcal{L}(\boldsymbol{\theta}_t)$ from the zero-mean mini-batch deviation:
\begin{equation*}
    \boldsymbol{\theta}_{t+1} = \boldsymbol{\theta}_t - \eta \nabla \mathcal{L}(\boldsymbol{\theta}_t) - \sqrt{\eta} \left( \sqrt{\eta} \frac{1}{B} \sum_{i \in \mathcal{B}_t} \Big( \nabla \ell(\boldsymbol{\theta}_t; x_i, y_i) - \nabla \mathcal{L}(\boldsymbol{\theta}_t) \Big) \right).
\end{equation*}

Let $\xi_{\mathcal{B}}(\boldsymbol{\theta})$ represent the gradient noise at position $\boldsymbol{\theta}$. Assuming we sample mini-batches with replacement, we disentangle this batch noise into a sum of independent, identically distributed per-sample gradient deviations: $\xi_{\mathcal{B}}(\boldsymbol{\theta}) = \frac{\sqrt{\eta}}{B} \sum_{i \in \mathcal{B}_t} \xi_i$, where $\xi_i = \nabla \ell(\boldsymbol{\theta}; x_i, y_i) - \nabla \mathcal{L}(\boldsymbol{\theta})$. By definition, the expectation $\mathbb{E}[\xi_i(\boldsymbol{\theta})] = 0$ for all $\boldsymbol{\theta}$.

Assuming these gradient noises accumulate as Gaussian random variables, we approximate the discrete step as:
\begin{equation*}
    \Delta \boldsymbol{\theta}_t = -\eta \nabla \mathcal{L}(\boldsymbol{\theta}_t) + \sqrt{2 D(\boldsymbol{\theta}_t)} \Delta W,
\end{equation*}
where $\Delta W \sim \mathcal{N}(0, \eta I)$ and the diffusion matrix is defined as $D(\boldsymbol{\theta}) = \frac{1}{2} \text{Var}(\xi_{\mathcal{B}}) = \frac{\eta}{2B} \text{Var}(\xi_i)$. This discrete step functions as the Euler-Maruyama discretization of the following continuous-time It\^o Stochastic Differential Equation (SDE), which we formally denote as Stochastic Gradient Flow (SGF):
\begin{equation}
    d\boldsymbol{\theta}_t = -\nabla \mathcal{L}(\boldsymbol{\theta}_t) dt + \sqrt{2 D(\boldsymbol{\theta}_t)} dW_t, \quad \boldsymbol{\theta}_0 = \boldsymbol{\theta}(0).
    \label{eq:sgf_sde}
\end{equation}

We express the position-dependent diffusion matrix exactly as:
\begin{equation*}
    D(\boldsymbol{\theta}_t) = \frac{\eta}{2B} \left( \frac{1}{N_{\mathcal{D}}} \sum_{i=1}^{N_{\mathcal{D}}} \Big( \nabla \ell(\boldsymbol{\theta}_t; x_i, y_i) - \nabla \mathcal{L}(\boldsymbol{\theta}_t) \Big) \Big( \nabla \ell(\boldsymbol{\theta}_t; x_i, y_i) - \nabla \mathcal{L}(\boldsymbol{\theta}_t) \Big)^\top \right).
\end{equation*}

This formulation explicitly decomposes the diffusion matrix into a product of a parameter-independent magnitude scalar $D_m$ and a parameter-dependent shape matrix $D_s(\boldsymbol{\theta})$ such that $D(\boldsymbol{\theta}_t) = D_m D_s(\boldsymbol{\theta}_t)$, where:
\begin{align*}
    D_m &= \frac{\eta}{2B}, \\
    D_s(\boldsymbol{\theta}) &= \frac{1}{N_{\mathcal{D}}} \sum_{i=1}^{N_{\mathcal{D}}} \Big( \nabla \ell(\boldsymbol{\theta}_t; x_i, y_i) - \nabla \mathcal{L}(\boldsymbol{\theta}_t) \Big) \Big( \nabla \ell(\boldsymbol{\theta}_t; x_i, y_i) - \nabla \mathcal{L}(\boldsymbol{\theta}_t) \Big)^\top.
\end{align*}
Optimization hyperparameters, specifically learning rate and batch size, strictly control the scalar magnitude $D_m$. Conversely, the network architecture, training dataset, and loss topology completely dictate the geometric shape $D_s(\boldsymbol{\theta})$.

The matrix $2D$ factors as $2D(\boldsymbol{\theta}) = G(\boldsymbol{\theta}) G(\boldsymbol{\theta})^\top$ with the $d \times N_{\mathcal{D}}$ noise factor
\begin{equation*}
    G(\boldsymbol{\theta}) := \sqrt{\frac{\eta}{B N_{\mathcal{D}}}}\,\Big[\nabla \bar{\ell}(\boldsymbol{\theta}; x_1, y_1),\ \dots,\ \nabla \bar{\ell}(\boldsymbol{\theta}; x_{N_{\mathcal{D}}}, y_{N_{\mathcal{D}}})\Big], \qquad \bar{\ell}(\boldsymbol{\theta}; x_i, y_i) := \ell(\boldsymbol{\theta}; x_i, y_i) - \mathcal{L}(\boldsymbol{\theta}),
\end{equation*}
and we drive the SGF by an $N_{\mathcal{D}}$-dimensional Brownian motion through $G$, $d\boldsymbol{\theta}_t = -\nabla \mathcal{L}(\boldsymbol{\theta}_t) dt + G(\boldsymbol{\theta}_t) dW_t$. The law of the solution depends only on $GG^\top = 2D$, so the choice of factor does not affect any statement that follows. The factor itself is the reason the coefficients are Lipschitz.

\subsection{Invariant Sets of Continuous Processes}
\label{app:invariant_sets}

To mathematically track how trajectories collapse, we formalize the topological regions that permanently trap optimization dynamics.

\begin{definition}[Invariant Sets of Continuous Processes]
For a given stochastic process $\{\boldsymbol{\theta}_t \in \mathbb{R}^d : t \geq 0\}$, a Borel-measurable set $A \subset \mathbb{R}^d$ acts as an invariant set if, for every initial point $\boldsymbol{\theta}_0$ strictly inside $A$, the probability that the process remains in $A$ for all subsequent time $t \geq 0$ equals 1. Formally:
\begin{equation}
    P\big[\boldsymbol{\theta}_t \in A \text{ for all } t \ge 0 \mid \boldsymbol{\theta}_0 \in A\big] = 1.
    \label{eq:invariant_prob}
\end{equation}
\end{definition}

\begin{proposition}[Affine Invariance Transfer]
\label{prop:affine_invariance}
Consider a discrete SGD process given by Equation \eqref{eq:sgd_update}, where the individual sample gradients $\nabla \ell(\cdot; x_i, y_i) : \mathbb{R}^d \to \mathbb{R}^d$ remain Lipschitz continuous and bounded for any $i \in [N_{\mathcal{D}}]$. Let an affine subset $A \subseteq \mathbb{R}^d$ be invariant for every sample gradient step, that is, $P\nabla\ell(\boldsymbol{\theta}; x_i, y_i) = \nabla\ell(\boldsymbol{\theta}; x_i, y_i)$ for all $i \in [N_{\mathcal{D}}]$ and all $\boldsymbol{\theta} \in A$, where $P$ is the orthogonal projection onto the linear subspace parallel to $A$. Then $A$ is a strict invariant set for the continuous SGF process given by Equation \eqref{eq:sgf_sde}.
\end{proposition}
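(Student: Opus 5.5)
We will follow the route of \citet{chen2023stochastic}: first establish strong well-posedness of the SGF driven by the factor $G$, then construct a solution that lives in $A$, and finally invoke pathwise uniqueness to conclude that the solution started in $A$ is that one.

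\textbf{Step 1 (well-posedness).} The drift $-\nabla\mathcal L = -\frac1{N_{\mathcal D}}\sum_i \nabla\ell(\cdot;x_i,y_i)$ is Lipschitz and bounded, since each sample gradient is. Each column of $G$ is a constant multiple of $\nabla\ell(\cdot;x_i,y_i) - \nabla\mathcal L$, so $G$ is globally Lipschitz and bounded as well. By the standard It\^o existence and uniqueness theorem \citep{karatzas1991brownian}, the equation
\[
d\boldsymbol\theta_t = -\nabla\mathcal L(\boldsymbol\theta_t)\,dt + G(\boldsymbol\theta_t)\,dW_t
\]
has a unique strong solution for every initial condition. By the remark in Appendix~\ref{app:sgd_sgf_relation}, its law coincides with that of \eqref{eq:sgf_sde}.

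\textbf{Step 2 (tangency on $A$).} Write $A = \boldsymbol a + V$, let $P$ be the orthogonal projection onto $V$, and set $\Pi(\boldsymbol\theta) := \boldsymbol a + P(\boldsymbol\theta - \boldsymbol a)$ for the affine projection onto $A$. The hypothesis $P\nabla\ell_i = \nabla\ell_i$ on $A$ for every $i$ gives $P\nabla\mathcal L = \nabla\mathcal L$ on $A$ by averaging. It also gives $PG = G$ on $A$, since each column of $G$ is a difference of vectors fixed by $P$.

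\textbf{Step 3 (projected SDE).} Define the auxiliary coefficients $\tilde b(\boldsymbol\theta) := -P\nabla\mathcal L(\Pi\boldsymbol\theta)$ and $\tilde G(\boldsymbol\theta) := PG(\Pi\boldsymbol\theta)$. They are again Lipschitz and bounded, because $\Pi$ is $1$-Lipschitz. Let $\tilde{\boldsymbol\theta}_t$ be the unique strong solution of the auxiliary SDE started at $\boldsymbol\theta_0 \in A$. All increments of $\tilde{\boldsymbol\theta}$ lie in $V$, so $(I-P)(\tilde{\boldsymbol\theta}_t - \boldsymbol a) = (I-P)(\boldsymbol\theta_0 - \boldsymbol a) = 0$. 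Hence $\tilde{\boldsymbol\theta}_t \in A$ for all $t$ almost surely, which means $\Pi\tilde{\boldsymbol\theta}_t = \tilde{\boldsymbol\theta}_t$. By Step 2 this gives
\[
\tilde b(\tilde{\boldsymbol\theta}_t) = -\nabla\mathcal L(\tilde{\boldsymbol\theta}_t), \qquad \tilde G(\tilde{\boldsymbol\theta}_t) = G(\tilde{\boldsymbol\theta}_t),
\]
so $\tilde{\boldsymbol\theta}$ also solves the original SGF with the same Brownian motion and initial point. Pathwise uniqueness from Step 1 then yields $\boldsymbol\theta_t = \tilde{\boldsymbol\theta}_t \in A$ for all $t \ge 0$ almost surely. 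This is exactly \eqref{eq:invariant_prob}.

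\textbf{Main obstacle.} The delicate point is the choice of diffusion coefficient. If we used the symmetric square root $\sqrt{2D}$, uniqueness could fail: $D$ degenerates on $A$, and its square root is only H\"older-$\tfrac12$ there. This is why we work throughout with the Lipschitz factor $G$ and use that the law of the solution depends only on $GG^\top$. Closing the argument also requires the extension of the coefficients off $A$ to preserve the Lipschitz property. Composing with the affine projection $\Pi$ handles this cleanly, since $\Pi$ is affine and hence $1$-Lipschitz.
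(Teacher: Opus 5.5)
Your proposal is correct and follows the paper's proof: you get well-posedness from the Lipschitz factor $G$ rather than $\sqrt{2D}$, use tangency of $\nabla\mathcal{L}$ and of every column of $G$ on $A$, build a projected SDE whose increments lie in $V$, and identify its solution with the SGF started in $A$ by pathwise uniqueness. The only difference is cosmetic. The paper uses the coefficients $-P\nabla\mathcal{L}(\tilde{\boldsymbol\theta})$ and $PG(\tilde{\boldsymbol\theta})$ directly, and these are already globally Lipschitz because $P$ is a fixed linear map, so your composition with $\Pi$ is harmless but unnecessary and the extension off $A$ is not a real obstacle.
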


\begin{proof}
By our foundational assumption, the individual sample gradients act as $L$-Lipschitz continuous in $\boldsymbol{\theta}$ and remain bounded such that $\|\nabla \ell(\boldsymbol{\theta}; x_i, y_i)\| \le L$ for some constant $L > 0$. We first demonstrate that the population gradient $\nabla \mathcal{L}(\boldsymbol{\theta})$ inherits these exact structural properties. Bounding the difference yields:
\begin{equation*}
    \|\nabla \mathcal{L}(\boldsymbol{\theta}_1) - \nabla \mathcal{L}(\boldsymbol{\theta}_2)\| = \left\| \frac{1}{N_{\mathcal{D}}} \sum_{i=1}^{N_{\mathcal{D}}} \Big(\nabla \ell(\boldsymbol{\theta}_1; x_i, y_i) - \nabla \ell(\boldsymbol{\theta}_2; x_i, y_i)\Big) \right\| \le L \|\boldsymbol{\theta}_1 - \boldsymbol{\theta}_2\|.
\end{equation*}
Similarly, the global norm remains bounded by $\|\nabla \mathcal{L}(\boldsymbol{\theta})\| \le \frac{1}{N_{\mathcal{D}}} \sum_{i=1}^{N_{\mathcal{D}}} \|\nabla \ell(\boldsymbol{\theta}; x_i, y_i)\| \le L$.

The centered sample gradient $\nabla \bar{\ell}(\cdot; x_i, y_i)$ is $2L$-Lipschitz continuous and bounded by $2L$ for any index $i \in [N_{\mathcal{D}}]$, by the triangle inequality. Hence every column of the noise factor $G(\boldsymbol{\theta})$ of Section \ref{app:sgd_sgf_relation} is $2L\sqrt{\eta/(B N_{\mathcal{D}})}$-Lipschitz, and
\begin{equation*}
    \|G(\boldsymbol{\theta}_1) - G(\boldsymbol{\theta}_2)\|_F \le 2L\sqrt{\frac{\eta}{B}}\, \|\boldsymbol{\theta}_1 - \boldsymbol{\theta}_2\|.
\end{equation*}
Drift and diffusion coefficient are therefore globally Lipschitz, which guarantees the existence of a unique strong solution to the SDE in Equation \eqref{eq:sgf_sde} driven by $G$. We work with $G$ rather than with the symmetric root $\sqrt{2D}$ because the Powers--St{\o}rmer inequality, $\|\sqrt{2D_1} - \sqrt{2D_2}\|_F^2 \le \|2D_1 - 2D_2\|_1$, makes the symmetric root only H\"older-$\tfrac12$ in $\boldsymbol{\theta}$ wherever $D$ degenerates, and $D$ degenerates in the transverse directions on $A$ itself, where every sample gradient is tangent to $A$. H\"older-$\tfrac12$ diffusion coefficients do not guarantee pathwise uniqueness in dimension $d > 1$, and the next step requires pathwise uniqueness.

It now suffices to prove that this continuous SDE possesses a solution $\{\boldsymbol{\theta}_t : t \ge 0\}$ that satisfies the invariant probability condition from Equation \eqref{eq:invariant_prob}. We define $P : \mathbb{R}^d \to \mathbb{R}^d$ as the orthogonal projection onto the linear subspace $V$ parallel to $A$ and construct an auxiliary projected SDE started in $A$:
\begin{equation}
    d\tilde{\boldsymbol{\theta}}_t = -P\nabla \mathcal{L}(\tilde{\boldsymbol{\theta}}_t)dt + P G(\tilde{\boldsymbol{\theta}}_t) dW_t.
\end{equation}
Its increments lie in $V$, so this projected system confines its solution $\tilde{\boldsymbol{\theta}}_t$ within $A$, satisfying the invariance condition, and its coefficients are Lipschitz, so its solution is unique. By hypothesis every sample gradient is tangent to $A$ on $A$, hence $P\nabla\mathcal{L} = \nabla\mathcal{L}$ and $PG = G$ on $A$, column by column. The projected SDE and the original SDE therefore have identical coefficients along any path in $A$, and pathwise uniqueness identifies their solutions from any $\boldsymbol{\theta}_0 \in A$ almost surely. Hence the affine subset $A$ acts as a strict invariant set for the continuous SGF process.
\end{proof}

\section{Mechanics of Topological Transitions}
\label{sec:appendix_topological_mechanics}

Following the subnetwork partition (Definition \ref{def:subnet-part}) established in the main text, we isolate the local dynamics. The constructions below use the transverse distance $Y^{(i)}$ of each subnetwork to a target set $A$ and the transverse distance $Y^{(ij)}$ of each pair to its coincidence set $A_{ij}$. Both are well defined for any coupling between blocks; the vanishing of the pairwise transverse noise on $A_{ij}$ comes from per-sample symmetry, which forces $\Sigma_{ij} \to \Sigma_{ii}$ there.

To formalize how these path measures structurally bind, we project the local SDEs into a shared quotient space by rigorously defining the transverse geometry.

\subsection{Formalizing the Transverse Process}

\begin{definition}[Transverse Distance Process]
Let $A \subset \mathbb{R}^{d_i}$ define an invariant manifold of the $i$-th block. Define the orthogonal projection operator $\pi_A: \mathbb{R}^{d_i} \to A$. We establish a local tubular neighborhood $\mathcal{N}(A, \epsilon) = \{\boldsymbol{\theta} \in \mathbb{R}^d : \|\boldsymbol{\theta}^{(i)} - \pi_A(\boldsymbol{\theta}^{(i)})\|_2^2 < \epsilon\}$ wherein this projection remains uniquely defined. Within this boundary, we map the local dynamics into the quotient space $\mathbb{R}^{d_i} / A$ by continuously tracking the squared transverse distance functional for the $i$-th subnetwork:
\begin{equation}
    Y_t^{(i)} = \|\boldsymbol{\theta}_t^{(i)} - \pi_A(\boldsymbol{\theta}_t^{(i)})\|_2^2.
\end{equation}
\end{definition}

Mechanically, stochastic attractivity pulls the inward deterministic gradient to overcome the transverse diffusion. We map this directly to the infinitesimal generator $\mathscr{G}$ of the SDE. Write $\boldsymbol{x}_t = \boldsymbol{\theta}_t^{(i)} - \pi_A(\boldsymbol{\theta}_t^{(i)})$ and $\Sigma_\perp = P^\perp \Sigma_{ii} P^\perp$. Applying It\^o's Lemma to $Y_t^{(i)}$ and to $\phi_\alpha(Y) = Y^\alpha$, $\alpha \in (0,1]$, expands the generators as
\begin{align}
    \mathscr{G} Y_t^{(i)} &= -2\, \boldsymbol{x}_t^\top \nabla_{\boldsymbol{\theta}^{(i)}} \mathcal{L}(\boldsymbol{\theta}_t) + \eta \operatorname{tr}\Sigma_\perp(\boldsymbol{\theta}_t), \label{eq:generator_Y}\\
    \mathscr{G} (Y_t^{(i)})^\alpha &= \alpha (Y_t^{(i)})^{\alpha-1}\Big[\mathscr{G} Y_t^{(i)} - 2(1-\alpha)\,\eta\, \frac{\boldsymbol{x}_t^\top \Sigma_\perp(\boldsymbol{\theta}_t) \boldsymbol{x}_t}{Y_t^{(i)}}\Big], \label{eq:generator}
\end{align}
the second term in the bracket being $\tfrac12 \phi_\alpha''(Y)\, \nabla Y^\top (\eta\Sigma_{ii}) \nabla Y$ with $\nabla Y = 2\boldsymbol{x}$. Only the transverse block of $\Sigma_{ii}$ enters, since $\nabla Y$ and the Hessian $2P^\perp$ of $Y$ both live in $A^\perp$.

\begin{lemma}[Boundedness of the Stopped Transverse Process]
\label{lemma:stopped_bounded}
Suppose $\boldsymbol{\theta}_t$ solves the SDE in Equation \eqref{eq:sgf_sde} with (locally) Lipschitz
coefficients, so that it admits a strong solution with almost surely continuous sample paths.
Then $Y_t^{(i)} = \|\boldsymbol{\theta}_t^{(i)} - \pi_A(\boldsymbol{\theta}_t^{(i)})\|_2^2$ is a continuous
functional of $\boldsymbol{\theta}_t^{(i)}$, and consequently the stopped process satisfies
\begin{equation}
    Y_{s \wedge \tau_\epsilon}^{(i)} \in [0, \epsilon], \qquad (Y_{s \wedge \tau_\epsilon}^{(i)})^\alpha \in [0, \epsilon^\alpha] \quad \text{a.s. for all } s \ge t_0.
\end{equation}
\end{lemma}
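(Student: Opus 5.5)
The plan has three steps: show that $Y^{(i)}$ is a continuous function of the parameter vector, use continuity of sample paths to show the stopped path cannot overshoot $\epsilon$, and then bound the power $\alpha$.

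\textbf{Step 1 (continuity of $Y^{(i)}$).} We consider the regime of Theorem~\ref{thm:main_affine_trap}, where $A$ is affine. Then $\pi_A(\boldsymbol{z}) = \boldsymbol{a}_0 + P(\boldsymbol{z}-\boldsymbol{a}_0)$ for a fixed orthogonal projection $P$. It follows that $\boldsymbol{z}\mapsto \boldsymbol{z}-\pi_A(\boldsymbol{z}) = P^\perp(\boldsymbol{z}-\boldsymbol{a}_0)$ is affine and $1$-Lipschitz. Hence $Y^{(i)} = \|P^\perp(\boldsymbol{\theta}^{(i)}-\boldsymbol{a}_0)\|_2^2$ is a polynomial, and in particular continuous, in $\boldsymbol{\theta}$. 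For a general closed $A$ we would instead use $Y = d(\boldsymbol{\theta}^{(i)},A)^2$, which is continuous because the distance function is $1$-Lipschitz. This covers the tubular neighbourhood where $\pi_A$ is uniquely defined.

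\textbf{Step 2 (no overshoot at $\tau_\epsilon$).} The strong solution has almost surely continuous sample paths, so $s\mapsto Y^{(i)}_s$ is almost surely continuous. We split into two cases.
\begin{itemize}
\item On $\{\tau_\epsilon=\infty\}$, the definition $\tau_\epsilon=\inf\{s\ge t_0: Y_s\ge\epsilon\}$ gives $Y_s<\epsilon$ for every $s\ge t_0$.
\item On $\{\tau_\epsilon<\infty\}$, we have $Y_s<\epsilon$ for $s\in[t_0,\tau_\epsilon)$. Taking the limit from the left and using continuity gives $Y_{\tau_\epsilon}\le\epsilon$. Since $\{s: Y_s\ge \epsilon\}$ is closed by continuity, the infimum is attained and $Y_{\tau_\epsilon}\ge\epsilon$. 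Together these give $Y_{\tau_\epsilon}=\epsilon$.
\end{itemize}
The case $Y_{t_0}\ge\epsilon$ gives $\tau_\epsilon=t_0$. It is excluded by the standing assumption that the process starts in the basin $\mathcal{N}(A,\epsilon)$, so $Y_{t_0}<\epsilon$. Otherwise the bound can fail there. In every remaining case $Y_{s\wedge\tau_\epsilon}\in[0,\epsilon]$, and nonnegativity is immediate from the squared norm. A technical point is that $\tau_\epsilon$ is a stopping time, being the hitting time of a closed set by a continuous adapted process.

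\textbf{Step 3 (powers).} Since $y\mapsto y^\alpha$ is monotone increasing on $[0,\infty)$ for $\alpha\in(0,1]$, the bound transfers directly: $(Y_{s\wedge\tau_\epsilon})^\alpha\in[0,\epsilon^\alpha]$.

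\textbf{Main obstacle.} The only real care is the no-overshoot step. It relies entirely on path continuity, which is why the lemma is stated for the continuous SGF rather than for discrete SGD; for discrete SGD the stopped value can exceed $\epsilon$. The other point requiring care is the initial-condition caveat $Y_{t_0}<\epsilon$.
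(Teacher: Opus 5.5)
Your proof is correct and follows the paper's argument: continuity of $Y^{(i)}$ from the affine projection, no overshoot at $\tau_\epsilon$ by path continuity, and monotonicity of $y\mapsto y^\alpha$. Your explicit initial-condition requirement is a genuine point that the paper's proof leaves implicit when it asserts $Y_{\tau_\epsilon}=\epsilon$ whenever $\tau_\epsilon<\infty$. Only $Y_{t_0}>\epsilon$ actually breaks the bound, so $Y_{t_0}\le\epsilon$ is the condition needed.
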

\begin{proof}
Continuity of $\boldsymbol{\theta}_t^{(i)}$ and of the projection $\pi_A$ (an affine
map with constant linear part $P$, by Theorem \ref{thm:affine_generation}) imply $Y_t^{(i)}$ is a.s.
continuous in $t$. By definition $\tau_\epsilon = \inf\{t \ge t_0: Y_t^{(i)} \ge \epsilon\}$;
continuity of the path forces $Y_t^{(i)} < \epsilon$ for $t < \tau_\epsilon$ and
$Y_{\tau_\epsilon}^{(i)} = \epsilon$ whenever $\tau_\epsilon < \infty$ — a continuous path cannot
jump over the boundary. Hence $Y_{s\wedge\tau_\epsilon}^{(i)} \le \epsilon$ for every $s$, and
non-negativity of $Y^{(i)}$ gives the lower bound. Monotonicity of $y \mapsto y^\alpha$ transfers both bounds to the power.
\end{proof}

\begin{theorem}[Local Supermartingale Trapping]
\label{thm:supermartingale-app}
Given a stochastically attractive invariant set $A$ of order $\alpha$, there exists an $\epsilon > 0$ defining the basin $\mathcal{N}(A, \epsilon)$ such that the stopped transverse process $(Y_{t \wedge \tau_\epsilon}^{(i)})^\alpha$ operates as a non-negative supermartingale, where $\tau_\epsilon = \inf\{t \ge t_0 : Y_t^{(i)} \ge \epsilon\}$ marks the exact boundary of escape.
\end{theorem}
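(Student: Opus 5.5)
The plan is to apply It\^o's lemma to $\phi_\alpha(Y)=Y^\alpha$ along the stopped path and to show that the drift is non-positive while the local martingale part is a true martingale. One technical point comes first. The map $y\mapsto y^\alpha$ is not $C^2$ at $y=0$ when $\alpha<1$. I would therefore work with $\phi_{\alpha,\delta}(y)=(y+\delta)^\alpha$, or equivalently stop additionally at $\sigma_\delta=\inf\{t: Y_t\le\delta\}$. I would prove the supermartingale inequality for the regularized process and let $\delta\downarrow 0$ at the end.

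The first step is the drift. On $\{t<\tau_\epsilon\}$ the process lies in $\mathcal N(A,\epsilon)\setminus A$ (or on $A$, where invariance from Proposition~\ref{prop:affine_invariance} keeps it for all time, so $Y\equiv 0$ and nothing remains to prove). Equation \eqref{eq:generator} gives $\mathscr G Y^\alpha=\alpha Y^{\alpha-1}[\text{LHS}-\text{RHS}]$ of \eqref{eq:attractivity}, and this is $\le 0$ by Definition~\ref{def:stoch-attac}.

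For the $\delta$-regularized version the concavity term becomes $2(1-\alpha)\eta\, \boldsymbol x^\top\Sigma_\perp\boldsymbol x/(Y+\delta)$, which is smaller than the undamped term. I therefore plan to handle $\delta$ by stopping at $\sigma_\delta$ instead of smoothing. Then $Y\in[\delta,\epsilon]$ on the relevant interval, $\phi_\alpha$ is smooth there, and the drift bound applies verbatim. The basin radius $\epsilon$ is the one supplied by the attractivity hypothesis, which settles the existence claim.

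The second step, which I expect to be the main obstacle, is the martingale part $M_t=\int_{t_0}^{t\wedge\tau_\epsilon\wedge\sigma_\delta}2\alpha Y_s^{\alpha-1}\boldsymbol x_s^\top P^\perp G(\boldsymbol\theta_s)\,dW_s$. I would bound its integrand using the Lipschitz noise factor $G$ from Appendix~\ref{app:sgd_sgf_relation}. On $A$ every column of $P^\perp G$ vanishes, because the sample gradients are tangent. Lipschitz continuity then gives $\|P^\perp G(\boldsymbol\theta)\|_F\le 2L\sqrt{\eta/B}\,\|\boldsymbol x\|$. Hence the integrand is bounded by $C\,Y^{\alpha-1}\cdot Y=C\,Y^{\alpha}\le C\epsilon^\alpha$, using Lemma~\ref{lemma:stopped_bounded}. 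This bound is uniform in $\delta$, so the integrand is square integrable on every finite horizon and $M$ is a genuine $L^2$ martingale by \citep{karatzas1991brownian}.

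Putting the steps together, $\mathbb E[(Y_{t\wedge\tau_\epsilon\wedge\sigma_\delta})^\alpha\mid\mathcal F_s]\le (Y_{s\wedge\tau_\epsilon\wedge\sigma_\delta})^\alpha$. It remains to remove $\delta$. As $\delta\downarrow0$ we have $\sigma_\delta\uparrow\sigma_0$, the hitting time of $A$. After $\sigma_0$ the process stays in $A$ by invariance, so $Y^\alpha\equiv0$ and the inequality continues trivially. Path continuity and bounded convergence (all quantities lie in $[0,\epsilon^\alpha]$) pass the conditional inequality to the limit. This yields that $(Y_{t\wedge\tau_\epsilon})^\alpha$ is a non-negative supermartingale.
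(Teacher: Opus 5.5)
Your proposal is correct and follows the paper's proof essentially step for step. You stop additionally at $\sigma_\delta$ so that $Y^\alpha$ is $C^2$, get a non-positive drift from \eqref{eq:generator} and the attractivity inequality, bound the stochastic integrand by $4\alpha L\sqrt{\eta/B}\,Y^\alpha \le 4\alpha L\sqrt{\eta/B}\,\epsilon^\alpha$ through the Lipschitz factor $G$ whose transverse part vanishes on $A$, and remove $\delta$ by invariance and bounded convergence. Your explicit check that $\sigma_\delta \uparrow \sigma_0$ and your observation that the smoothing $(y+\delta)^\alpha$ would weaken the concavity term are small additions that the paper leaves implicit.
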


\begin{proof}
Let $A$ be stochastically attractive of order $\alpha$. By Definition \ref{def:stoch-attac}, the bracket in Equation \eqref{eq:generator} is non-positive on $\mathcal{N}(A,\epsilon) \setminus A$, so $\mathscr{G}(Y_t^{(i)})^\alpha \le 0$ prior to the stopping time $\tau_\epsilon$ wherever $Y_t^{(i)} > 0$. Since $A$ is invariant (Proposition \ref{prop:affine_invariance}), a path that reaches $Y = 0$ stays there. For $\delta > 0$ let $\sigma_\delta := \inf\{t \ge t_0 : Y_t^{(i)} \le \delta\}$. On $[t_0, \tau_\epsilon \wedge \sigma_\delta]$ the map $\phi_\alpha$ is $C^2$, so we argue for the process stopped at $\tau_\epsilon \wedge \sigma_\delta$ and let $\delta \downarrow 0$ at the end.

Applying It\^o's formula to the process stopped at $\tau_\epsilon \wedge \sigma_\delta$ decomposes the trajectory into a Lebesgue drift integral and a stochastic integral:
\begin{equation}
    (Y_{t \wedge \tau_\epsilon}^{(i)})^\alpha = (Y_{t_0}^{(i)})^\alpha + \int_{t_0}^{t \wedge \tau_\epsilon} \mathscr{G} (Y_s^{(i)})^\alpha\, ds + \int_{t_0}^{t \wedge \tau_\epsilon} 2\alpha (Y_s^{(i)})^{\alpha-1}\, \boldsymbol{x}_s^\top G_\perp(\boldsymbol{\theta}_s)\, dW_s,
\end{equation}
with $G_\perp = P^\perp G_i$ the transverse part of the rows $G_i$ of the noise factor of Appendix \ref{app:sde_and_invariance} that belong to $\boldsymbol{\theta}^{(i)}$. Because $\mathscr{G} (Y_s^{(i)})^\alpha \le 0$, the Lebesgue integral accumulates non-positive drift. On $A$ every sample gradient is tangent to $A$, so $P^\perp\nabla\bar\ell(\boldsymbol{\theta}) = P^\perp\big(\nabla\bar\ell(\boldsymbol{\theta}) - \nabla\bar\ell(\pi_A\boldsymbol{\theta})\big)$ and $\|G_\perp(\boldsymbol{\theta})\| \le 2L\sqrt{\eta/B}\,\sqrt{Y}$. The transverse noise amplitude therefore vanishes on $A$ at Lipschitz rate. The integrand of the stochastic integral is therefore bounded by $2\alpha Y^{\alpha-1}\sqrt{Y}\cdot 2L\sqrt{\eta/B}\sqrt{Y} = 4\alpha L\sqrt{\eta/B}\, Y^\alpha$, which is at most $4\alpha L\sqrt{\eta/B}\,\epsilon^\alpha$ by Lemma \ref{lemma:stopped_bounded}. A stochastic integral with a bounded, progressively measurable integrand is a genuine $L^2$-martingale rather than only a local one \citep{karatzas1991brownian}; taking the conditional expectation $\mathbb{E}[\cdot \mid \mathcal{F}_t]$ annihilates it without any further localization argument. This leaves only the non-positive accumulated drift, enforcing
\begin{equation}
    \mathbb{E}\left[(Y_{(t+s) \wedge \tau_\epsilon}^{(i)})^\alpha \,\Big|\, \mathcal{F}_t\right] \le (Y_{t \wedge \tau_\epsilon}^{(i)})^\alpha.
\end{equation}
Since $Y_t^{(i)}$ measures a squared distance, it is bounded below by $0$. Letting $\delta \downarrow 0$, bounded convergence ($0 \le Y^\alpha \le \epsilon^\alpha$) carries the inequality over to the process stopped at $\tau_\epsilon$ alone. Thus $(Y_{t \wedge \tau_\epsilon}^{(i)})^\alpha$ constitutes a non-negative supermartingale.
\end{proof}

\begin{corollary}[Probabilistic Non-Escape]
\label{cor:prob_non_escape-app}
For any stochastically attractive invariant set $A$ of order $\alpha$, the probability of a trajectory initialized at $\boldsymbol{\theta}_0$ escaping the $\epsilon$-neighborhood is bounded by $\epsilon^{-\alpha}\,\mathbb{E}[(Y_{t_0}^{(i)})^\alpha \mid \mathcal{F}_{t_0}]$.
\end{corollary}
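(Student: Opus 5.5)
The plan is to derive the corollary directly from Theorem \ref{thm:supermartingale-app} with Doob's maximal inequality for non-negative supermartingales. Write $Z_t := (Y^{(i)}_{t\wedge\tau_\epsilon})^\alpha$, which that theorem shows is a non-negative supermartingale with respect to $\{\mathcal{F}_t\}_{t\ge t_0}$. Lemma \ref{lemma:stopped_bounded} shows its paths are a.s.\ continuous and take values in $[0,\epsilon^\alpha]$. The escape event, read as escape before a finite horizon $T$, is $\{\tau_\epsilon \le T\}$. By path continuity of $Y^{(i)}$, we have $Y^{(i)}_{\tau_\epsilon} = \epsilon$ on this event. Hence
\[
\{\tau_\epsilon \le T\} \subseteq \Big\{\sup_{t_0\le t\le T} Z_t \ge \epsilon^\alpha\Big\}.
\]

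Next, apply the conditional form of Doob's maximal inequality for right-continuous non-negative supermartingales, $\lambda\,\mathbb{P}(\sup_{t_0\le t\le T} Z_t \ge \lambda \mid \mathcal{F}_{t_0}) \le Z_{t_0}$, with $\lambda = \epsilon^\alpha$. This gives $\mathbb{P}(\tau_\epsilon \le T \mid \mathcal{F}_{t_0}) \le \epsilon^{-\alpha} Z_{t_0}$. Since $Z_{t_0} = (Y^{(i)}_{t_0})^\alpha$ is $\mathcal{F}_{t_0}$-measurable, it equals $\mathbb{E}[(Y^{(i)}_{t_0})^\alpha\mid\mathcal{F}_{t_0}]$, which is exactly the stated bound. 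To obtain the conditional version I would run the unconditional inequality on the supermartingale restricted to sets $F\in\mathcal{F}_{t_0}$, via $Z\mathbf{1}_F$, and read off the conditional statement.

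Finally, let $T\to\infty$. The events $\{\tau_\epsilon\le T\}$ increase to $\{\tau_\epsilon<\infty\}$, so continuity from below of conditional probability (monotone convergence) extends the bound to escape at any finite time. The resulting bound is uniform in $T$. Taking expectations recovers the unconditional bound $\epsilon^{-\alpha}\mathbb{E}[(Y_{t_0}^{(i)})^\alpha]$ quoted in Theorem \ref{thm:supermartingale}.

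The only delicate step is making sure the supermartingale property applies on the whole horizon, including paths that start at or reach $A$, where $Y^\alpha$ is not $C^2$. This is already handled inside Theorem \ref{thm:supermartingale-app} through the $\sigma_\delta$ localization and bounded convergence, so I would simply cite it. I would also note that on $\{Y^{(i)}_{t_0}\ge\epsilon\}$ the bound is at least $1$ and holds trivially. The main remaining care is the identification of $Y_{\tau_\epsilon}=\epsilon$, which relies on path continuity from Lemma \ref{lemma:stopped_bounded}.
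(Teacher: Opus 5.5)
Your proposal is correct and follows essentially the same route as the paper. Theorem~\ref{thm:supermartingale-app} makes the stopped process $(Y^{(i)}_{t\wedge\tau_\epsilon})^\alpha$ a non-negative continuous supermartingale, Doob's maximal inequality at level $\epsilon^\alpha$ bounds its supremum, and path continuity (Lemma~\ref{lemma:stopped_bounded}) connects that supremum event to escape. Your finite-horizon limit $T\to\infty$ and the $Z\mathbf{1}_F$ derivation of the conditional form are only small refinements of the paper's direct infinite-horizon application. Your inclusion $\{\tau_\epsilon<\infty\}\subseteq\{\sup Z\ge\epsilon^\alpha\}$ is in fact slightly more careful than the paper's identification of the two events, because $\sup_{s\ge t_0}Y^{(i)}_{s\wedge\tau_\epsilon}$ can approach $\epsilon$ without $\tau_\epsilon$ being finite.
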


\begin{proof}
By Theorem \ref{thm:supermartingale-app}, $(Y_{s \wedge \tau_\epsilon}^{(i)})^\alpha$ is a continuous, non-negative supermartingale for all $s \ge t_0$. Doob's maximal inequality for continuous supermartingales \citep{revuz1999continuous} bounds the tail probability of its supremum:
\begin{equation}
    \mathbb{P}\left(\sup_{s \ge t_0} Y_{s \wedge \tau_\epsilon}^{(i)} \ge \epsilon \,\Big|\, \mathcal{F}_{t_0}\right) = \mathbb{P}\left(\sup_{s \ge t_0} (Y_{s \wedge \tau_\epsilon}^{(i)})^\alpha \ge \epsilon^\alpha \,\Big|\, \mathcal{F}_{t_0}\right) \le \frac{1}{\epsilon^\alpha} \mathbb{E}\left[(Y_{t_0}^{(i)})^\alpha \mid \mathcal{F}_{t_0}\right].
\end{equation}
The event $\left\{\sup_{s \ge t_0} Y_{s \wedge \tau_\epsilon}^{(i)} \ge \epsilon\right\}$ identifies the escape event $\{\tau_\epsilon^{(i)} < \infty\}$ by Lemma \ref{lemma:stopped_bounded}.
\end{proof}

\subsection{Attractor Linkage}

To map the structural binding of decoupled subnetworks, we define the conditions under which independent stochastic trajectories synchronize. For a pair $(i,j)$ of subnetworks with identical architecture the shared manifold is the coincidence set $A_{ij} = \{\boldsymbol{\theta}^{(i)} = \boldsymbol{\theta}^{(j)}\}$, an affine subspace with $Y_t^{(ij)} = \tfrac12\|\boldsymbol{\theta}_t^{(i)} - \boldsymbol{\theta}_t^{(j)}\|_2^2 = d(\boldsymbol{\theta}_t, A_{ij})^2$ and escape time $\tau_\epsilon^{(ij)} = \inf\{s \ge t : Y_s^{(ij)} \ge \epsilon\}$. Every statement of Section \ref{sec:appendix_topological_mechanics} applies to $A_{ij}$ verbatim.

\begin{definition}[Attractor Link Probability]
We define the link probability $\mathcal{P}_{\mathrm{link}}^{(i, j)}(t)$ as the conditional probability that the pair never breaches the boundary of the local basin $\mathcal{N}(A_{ij}, \epsilon)$ from time $t$ onward:
\begin{equation}
    \mathcal{P}_{\mathrm{link}}^{(i, j)}(t) := \mathbb{P}\left( \tau_\epsilon^{(ij)} = \infty \mid \mathcal{F}_t \right) = \mathbb{P}\left( \sup_{s \ge t} Y_s^{(ij)} < \epsilon \,\Big|\, \mathcal{F}_t \right).
\end{equation}
\end{definition}

\begin{lemma}[Maximal Link Bound]
\label{lemma:doob-app}
Let $A_{ij}$ be stochastically attractive of order $\alpha$. Given an instantaneous state inside the local basin ($Y_t^{(ij)} < \epsilon$), the link probability satisfies
\begin{equation}
    \mathcal{P}_{\mathrm{link}}^{(i,j)}(t) \ge 1 - \frac{(Y_t^{(ij)})^\alpha}{\epsilon^\alpha}.
\end{equation}
\end{lemma}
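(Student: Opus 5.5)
The plan is to reduce the statement to Corollary \ref{cor:prob_non_escape-app}, applied to the pairwise coincidence set $A_{ij}$ in place of $A$ and started at time $t$ instead of $t_0$. First I check that $A_{ij}$ fits the setting of Theorem \ref{thm:supermartingale-app}. It is affine, being the kernel of the linear map $\boldsymbol{\theta}\mapsto \boldsymbol{\theta}^{(i)}-\boldsymbol{\theta}^{(j)}$. Its squared distance is exactly $Y^{(ij)} = \tfrac12\|\boldsymbol{\theta}^{(i)}-\boldsymbol{\theta}^{(j)}\|_2^2$, since the orthogonal projection onto $A_{ij}$ replaces both blocks by their average. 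Per-sample permutation symmetry makes every sample gradient tangent to $A_{ij}$ on $A_{ij}$, so Proposition \ref{prop:affine_invariance} gives invariance. The Lipschitz bound $\|G_\perp(\boldsymbol{\theta})\|\le 2L\sqrt{\eta/B}\,\sqrt{Y^{(ij)}}$ then follows exactly as in the proof of Theorem \ref{thm:supermartingale-app}, with $P^\perp$ the projection onto $A_{ij}^\perp$. By hypothesis $A_{ij}$ is stochastically attractive of order $\alpha$. Hence the stopped process $(Y^{(ij)}_{s\wedge\tau^{(ij)}_\epsilon})^\alpha$, $s\ge t$, is a non-negative supermartingale bounded by $\epsilon^\alpha$ (Lemma \ref{lemma:stopped_bounded}).

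Next I apply Doob's maximal inequality conditionally on $\mathcal{F}_t$, as in Corollary \ref{cor:prob_non_escape-app}. This gives
\[
\mathbb{P}\bigl(\tau^{(ij)}_\epsilon<\infty \mid \mathcal{F}_t\bigr) = \mathbb{P}\Bigl(\sup_{s\ge t}(Y^{(ij)}_{s\wedge\tau^{(ij)}_\epsilon})^\alpha \ge \epsilon^\alpha \Bigm| \mathcal{F}_t\Bigr) \le \epsilon^{-\alpha}\,(Y^{(ij)}_t)^\alpha .
\]
Two facts are used here. First, $Y^{(ij)}_t$ is $\mathcal{F}_t$-measurable, so the conditional expectation of the initial value is the value itself. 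Second, by path continuity escape is equivalent to the stopped supremum reaching $\epsilon$. The assumption $Y^{(ij)}_t<\epsilon$ ensures $\tau^{(ij)}_\epsilon>t$, so the process starts strictly inside the tube and the bound is nontrivial. Taking complements gives $\mathcal{P}^{(i,j)}_{\mathrm{link}}(t) = 1-\mathbb{P}(\tau^{(ij)}_\epsilon<\infty\mid\mathcal{F}_t) \ge 1-(Y^{(ij)}_t)^\alpha/\epsilon^\alpha$.

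The main obstacle is the conditional version of Doob's inequality started at a random time-$t$ state. I would handle it with the Markov property of the strong solution: the conditional law given $\mathcal{F}_t$ is the law of the SDE started from $\boldsymbol{\theta}_t$, and the unconditional maximal inequality applies to that law. A secondary point is that the supermartingale property of Theorem \ref{thm:supermartingale-app} is stated only for the stopped process. This is harmless, because escape is a property of the stopped path alone.
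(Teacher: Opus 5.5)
Your proposal is correct and is essentially the paper's proof: you apply Theorem~\ref{thm:supermartingale-app} to $A_{ij}$, invoke Doob's maximal inequality for the stopped non-negative supermartingale conditionally on $\mathcal{F}_t$, and pass to the complement of the escape event. Two simplifications are available: the Markov-property detour is unnecessary, because applying the unconditional inequality to $\mathds{1}_F\,(Y^{(ij)}_{s\wedge\tau_\epsilon})^\alpha$, $s\ge t$, for each $F\in\mathcal{F}_t$ already yields the conditional bound; and you only need the inclusion $\{\tau^{(ij)}_\epsilon<\infty\}\subseteq\{\sup_{s\ge t}Y^{(ij)}_{s\wedge\tau_\epsilon}\ge\epsilon\}$, not the equality you state.
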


\begin{proof}
By Theorem \ref{thm:supermartingale-app} applied to $A_{ij}$, the stopped process $(Y_{s \wedge \tau_\epsilon}^{(ij)})^\alpha$ is a continuous, non-negative supermartingale for all $s \ge t$. Doob's maximal inequality \citep{revuz1999continuous} gives
\begin{equation}
    \mathbb{P}\left(\sup_{s \ge t} Y_{s \wedge \tau_\epsilon}^{(ij)} \ge \epsilon \,\Big|\, \mathcal{F}_t\right) \le \frac{(Y_t^{(ij)})^\alpha}{\epsilon^\alpha}.
\end{equation}
The event on the left is the escape event $\{\tau_\epsilon^{(ij)} < \infty\}$, and its complement is the link event, which yields the bound.
\end{proof}

\begin{definition}[Attractor Linkage, $\sim_A$]
\label{def:attractor_equiv-app}
Two subnetworks are linked at time $t$, $\boldsymbol{\theta}^{(i)} \sim_A \boldsymbol{\theta}^{(j)}$, if and only if some $s \le t$ has $Y_s^{(ij)} < \epsilon_{\mathrm{in}}$ and $Y_u^{(ij)} < \epsilon$ for all $u \in [s,t]$. A link forms when $Y^{(ij)}$ first falls below an inner radius $\epsilon_{\mathrm{in}} < \epsilon$ and breaks at the next escape from the tube of radius $\epsilon$. By Lemma \ref{lemma:doob-app} a link survives forever with probability at least $1 - (\epsilon_{\mathrm{in}}/\epsilon)^\alpha$. Without the inner radius the bound at formation would be $1 - \epsilon^{-\alpha}\epsilon^\alpha = 0$, since a continuous path enters the tube at $Y = \epsilon$ and a nondegenerate diffusion recrosses the level immediately.
\end{definition}

\begin{proposition}[Linkage Classes]
\label{prop:linkage_classes}
$\sim_A$ is reflexive and symmetric. The class $[i]_t$ of $i$ is the connected component of $i$ in the graph $G_t^\epsilon$ with edge set $\{\{i,j\} : \boldsymbol{\theta}^{(i)} \sim_A \boldsymbol{\theta}^{(j)}\}$. The classes partition $\mathcal{S}$ at every time, and two subnetworks in the same class are joined by a chain of links, each surviving with the probability of Lemma \ref{lemma:doob-app}.
\end{proposition}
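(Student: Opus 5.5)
Reflexivity and symmetry come straight from Definition \ref{def:attractor_equiv-app}. For reflexivity, note that $A_{ii}$ is the whole parameter space, so $Y^{(ii)}_s = \tfrac12\|\boldsymbol{\theta}^{(i)}_s-\boldsymbol{\theta}^{(i)}_s\|^2 = 0$ for every $s$. Taking $s=t$ then gives $Y^{(ii)}_t = 0 < \epsilon_{\mathrm{in}}$, and $Y^{(ii)}_u = 0 < \epsilon$ on $[t,t]$, so $i \sim_A i$. For symmetry, $Y^{(ij)}_s = Y^{(ji)}_s$ for all $s$ because the norm is invariant under $\boldsymbol{\theta}^{(i)}-\boldsymbol{\theta}^{(j)} \mapsto \boldsymbol{\theta}^{(j)}-\boldsymbol{\theta}^{(i)}$. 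Since the definition of the link depends on the pair only through $Y^{(ij)}$, it is symmetric. In particular the edge set $\{\{i,j\}: \boldsymbol{\theta}^{(i)}\sim_A\boldsymbol{\theta}^{(j)}\}$ is a well-defined undirected graph $G^\epsilon_t$ on $\mathcal{S}$, with self-loops being harmless.

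I will not attempt to show that $\sim_A$ itself is transitive; the statement does not claim this. Two pairs may each sit within $\epsilon$ of their coincidence sets while the third pair does not, since $\|\boldsymbol{\theta}^{(i)}-\boldsymbol{\theta}^{(k)}\|$ can be as large as $2\sqrt{2\epsilon}$ by the triangle inequality. Instead I will define $[i]_t$ as the connected component of $i$ in $G^\epsilon_t$. Equivalently, $[i]_t$ is the class of $i$ under the transitive closure of $\sim_A$, which is an equivalence relation because the closure of a reflexive symmetric relation is reflexive, symmetric and transitive. The standard fact that connected components of a finite graph partition its vertex set then gives that the classes partition $\mathcal{S}$ at every fixed $t$. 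The membership claim, that two subnetworks in the same class are joined by a chain of links, is by definition: $j\in[i]_t$ means there is a path $i=k_0,k_1,\dots,k_m=j$ in $G^\epsilon_t$ with each consecutive pair linked at time $t$.

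The probabilistic part is the survival of each link in the chain. I will fix an edge $\{k_{l},k_{l+1}\}$ and let $s_l\le t$ be a formation time with $Y^{(k_lk_{l+1})}_{s_l}<\epsilon_{\mathrm{in}}$. I can choose $s_l$ as a stopping time, namely the last entry below $\epsilon_{\mathrm{in}}$ before $t$ within the current tube excursion, or simply the present time $t$ with $Y_t<\epsilon$. Applying Lemma \ref{lemma:doob-app} to $A_{k_lk_{l+1}}$, which is stochastically attractive by hypothesis, gives $\mathcal{P}^{(k_l,k_{l+1})}_{\mathrm{link}}(t)\ge 1-(Y^{(k_lk_{l+1})}_t)^\alpha/\epsilon^\alpha$. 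If I apply the lemma at the formation time instead, the bound becomes $1-(\epsilon_{\mathrm{in}}/\epsilon)^\alpha$, using the strong Markov property at $s_l$.

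The main subtlety is this last step. The lemma is stated at deterministic times conditioned on $\mathcal{F}_t$, whereas formation is a random time. To handle it, I will invoke optional stopping for the nonnegative supermartingale of Theorem \ref{thm:supermartingale-app} at the hitting time of $\{Y<\epsilon_{\mathrm{in}}\}$. By path continuity, $Y\le\epsilon_{\mathrm{in}}$ at that time. I make no independence claim across the chain: only each link individually carries its Doob bound. If a joint statement were needed, a union bound would give that the whole chain survives with probability at least $1-\sum_l (Y^{(k_lk_{l+1})}_t)^\alpha\epsilon^{-\alpha}$.
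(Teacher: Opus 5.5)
Your proof is correct and follows the paper's argument. Reflexivity and symmetry come from $Y^{(ii)}=0$ and $Y^{(ij)}=Y^{(ji)}$, and your use of $0<\epsilon_{\mathrm{in}}$ for reflexivity is, if anything, more careful than the paper's. The classes are the connected components of $G_t^\epsilon$, equivalently the transitive closure of $\sim_A$, so they partition $\mathcal{S}$, and the triangle-inequality bound $Y^{(ik)}<4\epsilon$ explains why $\sim_A$ itself is not transitive. Your per-link application of Lemma \ref{lemma:doob-app} at the deterministic time $t$, together with the union bound for the whole chain, spells out a step the paper leaves implicit. One slip in your optional refinement: the last entry below $\epsilon_{\mathrm{in}}$ before $t$ is not a stopping time. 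The $(\epsilon_{\mathrm{in}}/\epsilon)^\alpha$ bound should instead be applied at the successive first-entry times after each tube exit, which is what your later appeal to the hitting time of $\{Y<\epsilon_{\mathrm{in}}\}$ does.
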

\begin{proof}
$Y_t^{(ii)} = 0 < \epsilon$ and $Y_t^{(ij)} = Y_t^{(ji)}$ give reflexivity and symmetry. Connected components of a graph partition its vertex set. They form the transitive closure of $\sim_A$, and the closure is all that the constructions below use. A single-pair relation cannot be transitive at fixed $\epsilon$, since $Y^{(ij)}, Y^{(jk)} < \epsilon$ only gives $Y^{(ik)} < 4\epsilon$ by the triangle inequality, and components resolve this without weakening the tube.
\end{proof}

\subsection{Constructing the Reeb Topology}

Now, we map the links onto a discrete spatial graph, using $\sim_A$ as an adjacency function.

\begin{definition}[The $\epsilon$-Fixed Topological Network]
At any fixed training time $t \in [0, \infty)$ and spatial boundary $\epsilon > 0$, define the instantaneous spatial network $G_t^\epsilon = (\mathcal{S}, E_t^\epsilon)$. The node set $\mathcal{S} = \{1, \dots, N\}$ indexes the decoupled subnetworks. The relation $\sim_A$ of Definition \ref{def:attractor_equiv-app} is the adjacency function, so an undirected edge $e_{ij} \in E_t^\epsilon$ is present if and only if the pair is linked at time $t$.
\end{definition}

This $\epsilon$-fixed network captures instantaneous connectivity. To model stochastic collapse, we track how this network rewires across training time by elevating the discrete spatial networks into a continuous topological space.

\begin{definition}[Continuous Reeb Graph]
We define the macroscopic topological state of the neural network as the Reeb graph $\mathcal{R}$, constructed as the quotient space over the product of the subnetwork indices and time. The quotient is taken with respect to the components of a time-dependent graph rather than the level sets of a scalar function, so $\mathcal{R}$ is a Reeb-type quotient graph, and we keep the shorter name throughout:
\begin{equation}
    \mathcal{R} := \left(\mathcal{S} \times [0, \infty)\right) / \sim_{\mathcal{R}}
\end{equation}
where the identification $(i, t) \sim_{\mathcal{R}} (j, t)$ holds if and only if $j \in [i]_t$, the connected component of $i$ in $G_t^\epsilon$. We take $\sim_{\mathcal{R}}$ to be the smallest closed equivalence relation containing these identifications, so that at a split time both branches share the split vertex and $\mathcal{R}$ is Hausdorff.
\end{definition}

\begin{theorem}[Non-Equilibrium Condensation and Fragmentation]
\label{thm:reeb_transitions}
The Reeb graph $\mathcal{R}$ maps the structural phase transitions of the SDE via two topological mechanisms:
\begin{itemize}
    \item \textbf{Condensation ($\cup$):} As a pair's transverse distance falls below $\epsilon_{\mathrm{in}}$, an edge forms in $G_t^\epsilon$. If the two classes were disjoint, the quotient map fuses them into a single topological merge node on $\mathcal{R}$; otherwise the edge closes a cycle and $\mathcal{R}$ is unchanged.
    \item \textbf{Fragmentation ($\emptyset$):} If an injected variance deviation breaches the basin $\mathcal{N}(A_{ij}, \epsilon)$, the edge in $G_t^\epsilon$ deletes. If it was a bridge of its component, the quotient map splits the node into decoupled branches on $\mathcal{R}$.
\end{itemize}
\end{theorem}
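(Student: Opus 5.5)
The plan is to read the statement as a claim about how the quotient $\mathcal{R}$ changes when a single edge of $G_t^\epsilon$ is added or removed. I would first fix the combinatorial dictionary: the fibre of $\mathcal{R}$ over time $t$ is the set of connected components of $G_t^\epsilon$, by the definition of $\sim_{\mathcal{R}}$. Vertices of $\mathcal{R}$ are the times at which this component set changes. Edges of $\mathcal{R}$ are the maximal time intervals on which one component persists unchanged.

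I would then check that edge events are isolated. Each link is determined by the continuous paths $Y^{(ij)}$, so it changes state only at hitting times: of the level $\epsilon_{\mathrm{in}}$ from above for formation, and of the level $\epsilon$ for breaking. By Lemma \ref{lemma:stopped_bounded} and path continuity, these times are well defined stopping times. Almost surely, no two pairs change state simultaneously, since distinct hitting times of nondegenerate diffusions coincide with probability zero.

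\textbf{Condensation.} At a formation time $s$, $Y_s^{(ij)} = \epsilon_{\mathrm{in}}$ and the edge $\{i,j\}$ enters $E_s^\epsilon$. There are two cases.
\begin{itemize}
\item If $[i]_{s-}\neq[j]_{s-}$, the component containing both after time $s$ is exactly $[i]_{s-}\cup[j]_{s-}$. The two branches of $\mathcal{R}$ over $(s-\delta,s)$ are identified at time $s$, which gives a vertex of in-degree at least $2$.
\item If $i$ and $j$ are already in the same component, the edge only closes a cycle in $G^\epsilon$. The component set is unchanged, so $\mathcal{R}$ is unchanged.
\end{itemize}
Applying Lemma \ref{lemma:doob-app} at time $s$ shows the new edge survives forever with probability at least $1-(\epsilon_{\mathrm{in}}/\epsilon)^\alpha$. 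This justifies calling the merge node persistent.

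\textbf{Fragmentation.} At $\tau_\epsilon^{(ij)}$, continuity gives $Y^{(ij)}=\epsilon$, so the link is deleted. Again there are two cases.
\begin{itemize}
\item If $\{i,j\}$ is a bridge of its component, deleting it splits the component into exactly two components, by the standard graph fact. The closedness convention for $\sim_{\mathcal{R}}$ makes both new branches emanate from a common split vertex of out-degree $2$.
\item Otherwise, an alternative path keeps $i$ and $j$ connected, and $\mathcal{R}$ is unchanged.
\end{itemize}
The phrase ``variance deviation'' in the statement is interpretive. Mathematically it just names the event $\{\tau_\epsilon^{(ij)}\le t\}$, which Corollary \ref{cor:prob_non_escape-app} shows has positive probability bounded by $\epsilon^{-\alpha}Y^\alpha$. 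A large-deviation estimate in the sense of \citep{freidlin2012random} can be cited to say such escapes occur, but the theorem does not need it.

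The main obstacle is the topological bookkeeping near event times. Three things need care:
\begin{itemize}
\item Showing that $\mathcal{R}$ is a genuine locally finite graph, which requires ruling out accumulation of events. Here a diffusion hitting $\epsilon_{\mathrm{in}}$ recrosses it immediately.
\item Handling the left- and right-continuity of $t\mapsto G_t^\epsilon$, since a link exists from its formation time through the instant before its break.
\item Making the split vertex well defined via the smallest closed equivalence relation.
\end{itemize}
The first point is the real concern. A formation is followed by a possible break only after a return from $\epsilon_{\mathrm{in}}$ to $\epsilon$, which takes positive time almost surely. So on any bounded time interval there are finitely many upcrossings of $[\epsilon_{\mathrm{in}},\epsilon]$ by each continuous path $Y^{(ij)}$, and hence finitely many events. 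I would make this finite-upcrossing argument the first lemma of the proof. The two bullets of the statement then follow as the case analysis above.
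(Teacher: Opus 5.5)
Your core argument is the paper's. Condensation is read off from the edge forming at the inner radius: Lemma \ref{lemma:doob-app} applied at that instant bounds the break probability by $(\epsilon_{\mathrm{in}}/\epsilon)^\alpha$, and the quotient fuses disjoint $[i]_{s-}$ and $[j]_{s-}$ into a vertex of in-degree at least $2$. Fragmentation is read off from $\tau_\epsilon^{(ij)}$, with a bridge deletion producing a split vertex. You go further than the paper's proof in two useful ways. First, you treat the ``otherwise'' cases (an edge closing a cycle, a non-bridge deletion), which the statement asserts but the paper's proof never addresses. Second, your finite-upcrossing lemma gives local finiteness of $\mathcal{R}$, which the paper takes for granted: a continuous path on a compact interval makes finitely many upcrossings of $[\epsilon_{\mathrm{in}},\epsilon]$, and each re-formation of a link needs one.

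The step I would not accept as written is the claim that almost surely no two pairs change state at the same instant, ``since distinct hitting times of nondegenerate diffusions coincide with probability zero.'' Nothing in the paper makes the SGF nondegenerate. The noise factor $G$ has rank at most $N_{\mathcal{D}}$, and its transverse part vanishes on every coincidence set. The $Y^{(ij)}$ are also not separate diffusions but coupled functionals of a single diffusion that share blocks. For instance, on the invariant set $A_{ij}$ one has $Y^{(ik)} \equiv Y^{(jk)}$, so those two pairs form and break links at the same instants. The paper itself claims only that simultaneous multi-pair events are non-generic, by a codimension count (Proposition \ref{thm:hypercondensation}), not that they have probability zero under its hypotheses. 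You do need some isolation for the bullets as literally stated: a bridge deletion accompanied by a simultaneous edge formation that reconnects the two pieces would not split the node. So either state isolation of events as a genericity hypothesis, or read the theorem one event at a time, as the paper's proof implicitly does. A smaller point: Corollary \ref{cor:prob_non_escape-app}, and Lemma \ref{lemma:doob-app} for pairs, give only an upper bound on the escape probability, not positivity. As you note, the theorem is conditional on the breach and needs neither.
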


\begin{proof}
We map the SDE dynamics to the topological level sets of $\mathcal{R}$ \citep{edelsbrunner2008topological}.

For \textbf{condensation}, let $Y_t^{(ij)}$ drop below $\epsilon_{\mathrm{in}}$. The adjacency function updates to $e_{ij} = 1$ in $G_t^\epsilon$, and Lemma \ref{lemma:doob-app} bounds the probability that this edge ever deletes by
\begin{equation}
    \mathbb{P}(\tau_\epsilon^{(ij)} < \infty \mid \mathcal{F}_t) \le \frac{(Y_t^{(ij)})^\alpha}{\epsilon^\alpha} \le \Big(\frac{\epsilon_{\mathrm{in}}}{\epsilon}\Big)^\alpha,
\end{equation}
which is small when the inner radius is small against the tube. The quotient projection $\pi_{\mathcal{R}}: \mathcal{S} \times [0, \infty) \to \mathcal{R}$ identifies $(i,t) \sim_{\mathcal{R}} (j,t)$; when $[i]_{t^-} \cap [j]_{t^-} = \emptyset$, this fuses the two components into a single merge vertex $v \in \mathcal{R}$ with $\text{in-degree}(v) \ge 2$.

For \textbf{fragmentation}, let a stochastic large deviation \citep{freidlin2012random} breach the local basin $\mathcal{N}(A_{ij}, \epsilon)$. This exit realizes the stopping time:
\begin{equation}
    \exists s \in (t-\delta, t] \text{ s.t. } Y_s^{(ij)} \ge \epsilon \implies \tau_\epsilon^{(ij)} \le t.
\end{equation}
The edge $e_{ij}$ deletes from $G_t^\epsilon$. When $e_{ij}$ was a bridge, the component disconnects ($[i]_t \cap [j]_t = \emptyset$) and the quotient map partitions the trajectory into a critical split vertex $v \in \mathcal{R}$ with $\text{out-degree}(v) \ge 2$.
\end{proof}

\section{Percolative Collapse and Discrete Scale Invariance}
\label{sec:appendix_dsi_percolation}

We first bridge the geometric symmetries of the non-convex loss landscape with the statistical mechanics of percolation. We then show how the invariant sets generated by network symmetries restrict collapse, breaking continuous scale invariance and causing the macroscopic network to show Discrete Scale Invariance (DSI).

\subsection{Approximate \texorpdfstring{$Q$}{Q}-Symmetry, Affine Trapping, and Transverse Diffusion}

In deep neural networks, intrinsic architectural symmetries, specifically permutation invariance among functionally equivalent hidden neurons, naturally partition the parameter space into distinct geometric subspaces \citep{chen2023stochastic}. Under stochastic gradient dynamics, these subspaces become stochastically attractive and trap the learning trajectory. To ground the subsequent macroscopic topology in the physical architecture of the network, we formalize this geometric property.

\begin{definition}[Approximate $Q$-Symmetry]\label{def:approximate_q_symmetry}
Let $Q \in \mathbb{R}^{d \times d}$ be an orthogonal matrix, $Q Q^\top = I_d$, which covers the coordinate permutations used throughout. This operator represents a parameter transformation, such as the permutation of weights between identical hidden subnetworks. A loss functional $\mathcal{L}(\boldsymbol{\theta})$ exhibits approximate $Q$-symmetry around the affine subspace $A = \{\boldsymbol{\theta} \in \mathbb{R}^d \mid Q\boldsymbol{\theta} = \boldsymbol{\theta}\}$ if, for any arbitrary tolerance $\epsilon > 0$, there exists a spatial radius $\delta > 0$ such that:
\begin{equation}
    |\mathcal{L}(Q\boldsymbol{\theta}) - \mathcal{L}(\boldsymbol{\theta})| < \epsilon \cdot d(\boldsymbol{\theta}, A) \quad \forall \boldsymbol{\theta} \text{ s.t. } d(\boldsymbol{\theta}, A) < \delta,
\end{equation}
where $d(\boldsymbol{\theta}, A) = \inf_{\boldsymbol{a} \in A} \|\boldsymbol{\theta} - \boldsymbol{a}\|_2$ defines the minimal Euclidean distance from the parameter vector to the subspace.
\end{definition}

Treating $A$ as an arbitrary stochastic attractor omits the structural mechanics of the merge process. Deriving the geometry of $A$ directly from the network's linear permutation operations guarantees that the merge geometry is affine and that transverse projections are constant.

\begin{theorem}[Affine Generation from Architectural Permutations]
\label{thm:affine_generation}
If the invariant set $A$ is generated by the architectural permutation of identical sub-components (the symmetric group $S_n$), $A$ is strictly a flat, affine subspace, characterized by a constant orthogonal projection operator $P^\perp$.
\end{theorem}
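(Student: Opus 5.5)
The plan is to identify $A$ concretely as the fixed-point set of a finite group of orthogonal linear maps and read off flatness from linearity. Model the architectural permutation as a representation $\rho : S_n \to O(d)$, where each $\rho(\sigma)$ is a coordinate permutation matrix $Q_\sigma$ acting on the parameter blocks of the $n$ identical sub-components (and fixing the shared coordinates). I take the invariant set to be either the fixed set of a single generator, $A = \{\boldsymbol{\theta} : Q\boldsymbol{\theta} = \boldsymbol{\theta}\}$ as in Definition \ref{def:approximate_q_symmetry}, or the fixed set of a subgroup $H \le S_n$, $A_H = \bigcap_{\sigma \in H} \ker(Q_\sigma - I_d)$. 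In either case $A$ is an intersection of kernels of linear maps, hence a linear (in particular affine) subspace of $\mathbb{R}^d$. In the coincidence case $A_{ij}$ this is just $\ker(Q_{(ij)} - I_d)$ for the transposition $(ij)$.

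The second step is to produce the projection explicitly and show it is constant. I would use the group average (Reynolds operator) $P_H := \frac{1}{|H|}\sum_{\sigma\in H} Q_\sigma$. This operator is idempotent because $Q_\tau P_H = P_H$ for every $\tau\in H$. It is symmetric because $Q_\sigma^\top = Q_{\sigma^{-1}}$ and the sum runs over the whole group. Its range is exactly $A_H$: it fixes every point of $A_H$, and its image is $H$-invariant. So $P_H$ is the orthogonal projection onto $A_H$, and $P^\perp := I_d - P_H$ is a fixed matrix, independent of $\boldsymbol{\theta}$. For the transposition case this is concretely $P^\perp = \tfrac12(I - Q_{(ij)})$, which recovers $Y^{(ij)} = \tfrac12\|\boldsymbol{\theta}^{(i)}-\boldsymbol{\theta}^{(j)}\|^2 = \|P^\perp\boldsymbol{\theta}\|^2$. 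This matches the normalization in Definition \ref{def:attractor_equiv} and serves as a consistency check.

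The step I expect to need care is the word "affine", as opposed to merely "linear". If symmetries act with a translation part, or if $A$ is intersected with a set of constraints on the shared coordinates, one should write $Q\boldsymbol{\theta} = R\boldsymbol{\theta} + b$. I would then note that for a finite group the barycenter $\boldsymbol{\theta}_0 = \frac{1}{|H|}\sum_\sigma Q_\sigma(\boldsymbol{0})$ is a fixed point. Conjugating by translation to $\boldsymbol{\theta}_0$ reduces to the linear case, giving $A = \boldsymbol{\theta}_0 + \ker(\cdot)$ with the same constant $P^\perp$. I would conclude by remarking that $\pi_A(\boldsymbol{\theta}) = \boldsymbol{\theta} - P^\perp(\boldsymbol{\theta}-\boldsymbol{\theta}_0)$ is affine with constant linear part. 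Its Hessian $\nabla^2 Y = 2P^\perp$ is therefore constant, which is what Lemma \ref{lemma:stopped_bounded} and Theorem \ref{thm:main_affine_trap} use, and all principal curvatures of $A$ vanish.
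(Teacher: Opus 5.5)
Your proposal is correct and takes the same route as the paper, which identifies $A$ with the eigenvalue-one eigenspace $\ker(I-Q)$ of a constant block-permutation matrix and concludes that this linear subspace has a globally constant orthogonal projection. Your Reynolds-operator formula $P^\perp = I - \frac{1}{|H|}\sum_{\sigma\in H} Q_\sigma$, the intersection over a subgroup for the full $S_n$ case, and the check $P^\perp = \tfrac12(I - Q_{(ij)})$ recovering $Y^{(ij)}$ make explicit what the paper asserts in one line; the translation-part case you flag does not arise for pure coordinate permutations.
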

\begin{proof}
The transposition or permutation of $n$ indistinguishable neural components corresponds to swapping specific blocks of parameters within the weight vector $\boldsymbol{\theta}$. This operation is isomorphic to left-multiplication by a constant block-permutation matrix $Q$. The set of configurations invariant under this structural symmetry constitutes the eigenspace of $Q$ corresponding to the eigenvalue $\lambda = 1$. The null space $\ker(I - Q)$ forms a linear subspace by definition, in particular an affine one. Consequently, the normal bundle is globally parallel, and the orthogonal projection $P^\perp$ mapping any state to its transverse distance vector is globally constant and independent of the local coordinate $\boldsymbol{\theta}$.
\end{proof}

\begin{theorem}[Symmetry-Induced Trapping]
\label{thm:q_symmetry_invariant}
If the differentiable loss function $\mathcal{L}$ satisfies approximate $Q$-symmetry around the affine subspace $A$, then $A$ constitutes a strict invariant set for the continuous deterministic gradient dynamics. \citep{chen2023stochastic}
\end{theorem}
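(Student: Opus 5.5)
The plan is to show that the gradient field is tangent to $A$ at every point of $A$. Once this holds, $A$ is invariant for the flow $\dot{\boldsymbol{\theta}} = -\nabla\mathcal{L}(\boldsymbol{\theta})$ by uniqueness of solutions.

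Fix $\boldsymbol{a} \in A$ and a unit normal $\boldsymbol{n} \in A^\perp$. By Theorem~\ref{thm:affine_generation}, $A = \ker(I-Q)$ is a linear subspace with constant projection $P^\perp$. Since $Q$ is orthogonal and fixes $A$ pointwise, it preserves $A^\perp$, so $Q$ acts on $A^\perp$ as an orthogonal map $Q_\perp$ without the eigenvalue $1$. For small $h>0$ the point $\boldsymbol{a} + h\boldsymbol{n}$ has distance $h$ to $A$.

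Expanding $\mathcal{L}$ to first order at $\boldsymbol{a}$ gives
\[
\mathcal{L}(Q(\boldsymbol{a}+h\boldsymbol{n})) - \mathcal{L}(\boldsymbol{a}+h\boldsymbol{n}) = h\,\nabla\mathcal{L}(\boldsymbol{a})^\top (Q-I)\boldsymbol{n} + o(h).
\]
Dividing by $h$ and applying approximate $Q$-symmetry, whose tolerance $\epsilon$ is arbitrary, yields $\nabla\mathcal{L}(\boldsymbol{a})^\top (Q-I)\boldsymbol{n} = 0$ for every $\boldsymbol{n} \in A^\perp$.

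Because $(Q-I)$ restricted to $A^\perp$ is invertible, with no eigenvalue $1$ there, the vectors $(Q-I)\boldsymbol{n}$ span all of $A^\perp$. Hence $P^\perp \nabla\mathcal{L}(\boldsymbol{a}) = 0$, so the drift is tangent to $A$ at every point of $A$.

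To conclude invariance, I would mirror the projected-system argument of Proposition~\ref{prop:affine_invariance}, now without noise. Consider the projected ODE $\dot{\tilde{\boldsymbol{\theta}}} = -P\nabla\mathcal{L}(\tilde{\boldsymbol{\theta}})$ started in $A$. Its increments lie in the linear space parallel to $A$, so its solution stays in $A$. On $A$ it has the same vector field as the original flow, so assuming $\nabla\mathcal{L}$ is locally Lipschitz, Picard--Lindelöf uniqueness identifies the two solutions.

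The main obstacle is the uniqueness step, because the statement assumes only differentiability. Without a Lipschitz gradient, tangency alone does not exclude a non-unique solution that leaves $A$. I would first add local Lipschitz continuity of $\nabla\mathcal{L}$, which matches the Lipschitz per-sample gradients assumed in Proposition~\ref{prop:affine_invariance}. Failing that, I would fall back to showing that some solution stays in $A$, using the projected-ODE construction together with Peano existence. A secondary point needs checking: $Q_\perp$ has no fixed vector, which follows because any such vector would lie in $\ker(I-Q)=A$ and also in $A^\perp$, hence be zero.
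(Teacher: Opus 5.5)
Your argument is correct. The tangency part is the paper's proof: you expand $\mathcal{L}$ to first order at $\boldsymbol{a}\in A$ along a unit normal, use the arbitrary tolerance to get $\nabla\mathcal{L}(\boldsymbol{a})^\top(Q-I)\boldsymbol{n}=0$, and conclude $P^\perp\nabla\mathcal{L}(\boldsymbol{a})=0$ because $Q-I$ maps $A^\perp$ onto itself. The paper gets that last fact from normality of $I-Q$; you get it from $Q$-invariance of $A^\perp$ plus injectivity there. The two are equivalent.

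The real difference is that the paper's proof stops at ``the deterministic drift is tangent to $A$ on $A$'' and never passes from tangency to invariance. You carry out that step with the projected ODE and Picard--Lindel\"of, mirroring Proposition~\ref{prop:affine_invariance}.

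Your caveat about this step is substantive, not overcautious. With only a differentiable, even $C^1$, loss the conclusion can fail. Take $d=1$, $Q=-1$, $A=\{0\}$ and $\mathcal{L}(\theta)=-\tfrac23|\theta|^{3/2}$. This loss is exactly $Q$-symmetric, yet $\dot\theta=-\mathcal{L}'(\theta)=\mathrm{sign}(\theta)|\theta|^{1/2}$ has the solution $\theta(t)=t^2/4$, which leaves $A$. So either a local Lipschitz hypothesis on $\nabla\mathcal{L}$ or the weaker ``some solution stays in $A$'' conclusion is genuinely required.

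Two smaller points. Your Peano fallback also needs $\nabla\mathcal{L}$ to be continuous, which differentiability alone does not give. And citing Theorem~\ref{thm:affine_generation} for linearity of $A$ is unnecessary, since $A=\ker(I-Q)$ directly from the definition of approximate $Q$-symmetry.
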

\begin{proof}
Let $\boldsymbol{\theta} \in A$, so $Q\boldsymbol{\theta} = \boldsymbol{\theta}$, and let $\boldsymbol{n}$ be a unit vector orthogonal to $A$. Then $Q(\boldsymbol{\theta} + h\boldsymbol{n}) = \boldsymbol{\theta} + hQ\boldsymbol{n}$ and $d(\boldsymbol{\theta} + h\boldsymbol{n}, A) = h$, so approximate $Q$-symmetry gives, for every $\epsilon > 0$ and $0 < h < \delta$,
\begin{equation}
    \Big|\frac{\mathcal{L}(\boldsymbol{\theta} + hQ\boldsymbol{n}) - \mathcal{L}(\boldsymbol{\theta} + h\boldsymbol{n})}{h}\Big| < \epsilon.
\end{equation}
Letting $h \to 0$ gives $|\nabla\mathcal{L}(\boldsymbol{\theta})^\top(Q - I)\boldsymbol{n}| \le \epsilon$ for every $\epsilon > 0$, hence $\nabla\mathcal{L}(\boldsymbol{\theta})^\top(Q - I)\boldsymbol{n} = 0$. Since $Q$ is orthogonal, $I - Q$ is normal, so its range is $\ker(I - Q)^\perp = A^\perp$ and it maps $A^\perp$ onto $A^\perp$. Hence $\nabla\mathcal{L}(\boldsymbol{\theta})$ is orthogonal to $A^\perp$, that is $P^\perp\nabla\mathcal{L}(\boldsymbol{\theta}) = 0$ and $Q\nabla\mathcal{L}(\boldsymbol{\theta}) = \nabla\mathcal{L}(\boldsymbol{\theta})$. The deterministic drift is tangent to $A$ on $A$.
\end{proof}

The affine geometry of $A$ (Theorem \ref{thm:affine_generation}) removes every curvature term from the transverse dynamics.

\begin{theorem}[Curvature-Free Transverse Diffusion]
\label{thm:curvature_free}
Because the permutation-invariant set $A$ is an affine subspace, the It\^o correction of the transverse distance $Y_t = d(\boldsymbol{\theta}_t, A)^2$ carries no geometric curvature term, and its generator is the flat expression of Equation \eqref{eq:generator}, and $Y_t$ decouples from the tangential coordinates up to the $\boldsymbol{\theta}$-dependence of $\nabla\mathcal{L}$ and $\Sigma_\perp$.
\end{theorem}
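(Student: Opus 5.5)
The plan is to compute the It\^o expansion of $Y_t = d(\boldsymbol{\theta}_t, A)^2$ directly. We use the fact, from Theorem \ref{thm:affine_generation}, that $A$ is affine with a constant transverse projection $P^\perp$. Fix any $\boldsymbol{a}_0 \in A$. Then $\pi_A(\boldsymbol{\theta}) = \boldsymbol{a}_0 + P(\boldsymbol{\theta} - \boldsymbol{a}_0)$, so the transverse vector is $\boldsymbol{x} = P^\perp(\boldsymbol{\theta} - \boldsymbol{a}_0)$ and
\begin{equation*}
    Y(\boldsymbol{\theta}) = (\boldsymbol{\theta} - \boldsymbol{a}_0)^\top P^\perp (\boldsymbol{\theta} - \boldsymbol{a}_0).
\end{equation*}
This expression is a global quadratic form. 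The first step is to differentiate it explicitly:
\begin{equation*}
    \nabla Y = 2P^\perp(\boldsymbol{\theta} - \boldsymbol{a}_0) = 2\boldsymbol{x}, \qquad \nabla^2 Y = 2P^\perp.
\end{equation*}
Both identities hold everywhere, not only inside the tube. For a curved invariant manifold, the Hessian of the squared distance would equal $2P^\perp$ plus a correction involving the shape operator, evaluated along $\boldsymbol{x}$. In the flat case that correction is identically zero, and this is the entire content of "no curvature term."

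The second step is to apply It\^o's formula to $Y$ under the SGF driven by the Lipschitz factor $G$ of Appendix \ref{app:sde_and_invariance}. This gives
\begin{equation*}
    dY_t = \big[-2\boldsymbol{x}_t^\top\nabla\mathcal{L} + \operatorname{tr}(P^\perp GG^\top P^\perp)\big]dt + 2\boldsymbol{x}_t^\top P^\perp G\, dW_t.
\end{equation*}
Using $\boldsymbol{x}^\top = \boldsymbol{x}^\top P^\perp$ and the idempotence of $P^\perp$, both the drift and the quadratic variation involve only $P^\perp\nabla\mathcal{L}$ and $\Sigma_\perp = P^\perp\Sigma P^\perp$. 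Identifying $GG^\top$ with $\eta\Sigma$ under the normalization of Section \ref{sec:stochastic_collapse}, the drift reduces to Equation \eqref{eq:generator_Y}. Applying the chain rule with $\phi_\alpha$, where the second-order term is $\tfrac12\phi_\alpha''(Y)\,(2\boldsymbol{x})^\top\eta\Sigma(2\boldsymbol{x})$, then recovers Equation \eqref{eq:generator}. I would also check the normalization between $2D$ and $\eta\Sigma$ carefully, so that the constants match the flat form.

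The third step handles decoupling. Write $\boldsymbol{\theta} = \pi_A(\boldsymbol{\theta}) + \boldsymbol{x}$ and use that $P$ and $P^\perp$ are constant. Then $d\boldsymbol{x}_t = P^\perp d\boldsymbol{\theta}_t$ and $d\pi_A(\boldsymbol{\theta}_t) = P\,d\boldsymbol{\theta}_t$ exactly, with no It\^o cross-correction, because the projection is linear. As a result, the tangential coordinate $\boldsymbol{u} = P(\boldsymbol{\theta} - \boldsymbol{a}_0)$ enters the equation for $Y$ only through the coefficients $P^\perp\nabla\mathcal{L}(\boldsymbol{u} + \boldsymbol{x})$ and $\Sigma_\perp(\boldsymbol{u} + \boldsymbol{x})$, which is the stated caveat. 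Beyond those coefficients there is no geometric drift term depending on $\boldsymbol{u}$.

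The main obstacle is conceptual rather than computational: making precise what "curvature term" means, since nothing is there to cancel. I would handle it with a short remark contrasting this with the general tubular-neighborhood formula, in which $\nabla^2 d^2 = 2P^\perp + O(\|\boldsymbol{x}\|\,\|\mathrm{II}\|)$. A secondary technical point is that It\^o's formula applies globally only because $Y$ is a smooth quadratic. This avoids any need for the tube radius to lie below the reach of $A$, whereas a curved manifold would require stopping at the reach.
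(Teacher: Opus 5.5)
Your proposal is correct and follows the paper's own argument: both rest on $A$ being affine, so that the Hessian of $Y$ is the constant $2P^\perp$, and It\^o's formula then gives the flat generators \eqref{eq:generator_Y} and \eqref{eq:generator} with no curvature drift. Your explicit global quadratic form, and your use of the linearity of $P$ and $P^\perp$ to justify the decoupling caveat, make the paper's sketch more precise without changing its route.
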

\begin{proof}
Let the parameter update follow the stochastic differential equation $d\boldsymbol{\theta}_t = -\nabla \mathcal{L}(\boldsymbol{\theta}_t) dt + G(\boldsymbol{\theta}_t) dW_t$. Applying It\^o's Lemma to the transverse distance function $Y_t$, the differential $dY_t$ picks up a correction governed by the Hessian of the distance function. For a general manifold $M$, the Laplacian of the distance function encodes the mean curvature trace $\mathrm{Tr}(H_M)$, injecting a geometric drift of the form $\propto \frac{1}{2} D \cdot \mathrm{Tr}(H_M) dt$. 

Because $A$ is an affine subspace (Theorem \ref{thm:affine_generation}), all principal curvatures are identically zero ($H_A = 0$) and the Hessian of $Y$ is the constant $2P^\perp$. The normal bundle is globally flat, and the generator of $Y_t$ reduces to Equation \eqref{eq:generator}, which contains only the gradient attraction $-2\boldsymbol{x}^\top\nabla\mathcal{L}$ and the transverse noise trace. The radial part of the transverse process is then a Bessel-type process driven by the transverse noise projection and the gradient attraction. A one-dimensional Fokker--Planck description of $Y_t$ alone additionally requires the transverse coefficients to depend on $\boldsymbol{\theta}$ only through $Y$, a closure that Remark \ref{remark:fp_heuristic} states.
\end{proof}

\begin{remark}[Generalization to Highly Curved Invariant Manifolds]
If one generalizes this framework to neural architectures possessing continuous symmetries (e.g., continuous rotation invariances generating highly curved, non-linear invariant manifolds $M$), the non-zero mean curvature $H_M \neq 0$ introduces a geometric drift proportional to the local concavity of $M$. This drift couples the transverse fluctuations to the tangential motion along the manifold, modifying the local stationary distribution. Consequently, generalizing this topological approach to curved invariant manifolds requires incorporating these geometric potential terms to accurately model the edge formation rates.
\end{remark}

\subsection{Constructing the Percolation}

While the deterministic gradient aligns parallel to $A$, the continuous injection of stochastic gradient noise drives the parameters into these lower-dimensional geometric spaces. 

\begin{definition}[The Macroscopic Topological Graph]
Index the $N$ initially decoupled subnetworks as nodes in a dynamic spatial graph $G_\tau = (\mathcal{S}, E_\tau^\epsilon)$. An undirected edge $e_{ij}$ exists at time $\tau$ if and only if $\boldsymbol{\theta}^{(i)} \sim_A \boldsymbol{\theta}^{(j)}$ in the sense of Definition \ref{def:attractor_equiv-app}, so that the pair has entered the inner tube and has not since left the tube around its shared invariant set $A_{ij}$.
\end{definition}

To map stochastic network dynamics to percolation, we require a monotonically increasing control parameter. Continuous Brownian noise injects transient fragmentations (edges breaking) and condensations (edges forming) across the basin boundary $\epsilon$, which breaks monotonicity for any single realization, and monotonicity is carried instead by the ensemble. Assumption \ref{assum:condensation_phase} asks that the expected number of components $\mathbb{E}[M(\tau)]$ be nonincreasing on the training interval under study, which holds when link formation dominates link breaking, and Lemma \ref{lemma:doob-app} bounds the breaking probability of a formed link by $(\epsilon_{\mathrm{in}}/\epsilon)^\alpha$. This interval is the condensation phase. A task shift or a learning-rate drop ends it (Appendix \ref{sec:appendix_toy_experiments}), and under Adam with weight decay so does a slingshot instability (Remark \ref{remark:adam_diagnostic}).

\begin{remark}[The Fokker--Planck Picture]
\label{remark:fp_heuristic}
Let $p(y, t)$ denote the density of $Y_t^{(ij)}$, so that $\mathcal{P}_e(t) = \int_0^\epsilon p(y, t) dy$ and, with $J(y,t)$ the probability current of the one-dimensional Fokker--Planck equation \citep{risken1996fokker} and a reflecting origin, $\frac{d}{dt}\mathcal{P}_e(t) = -J(\epsilon, t)$. If the transverse dynamics relax on a fast timescale $\tau_{fast}$ to a quasi-stationary law $p_{ss}(y; \sigma(t))$ that deforms on a slow timescale $\tau_{slow} \gg \tau_{fast}$ through a contracting scale $\sigma(t)$, then $\frac{d}{dt}\mathcal{P}_e = \dot\sigma(t) \int_0^\epsilon \partial_\sigma p_{ss}(y;\sigma) dy$, and this is positive exactly when the family is stochastically ordered on the tube, that is, when the distribution function $F_\sigma(y)$ rises as $\sigma$ falls for every $y \le \epsilon$. Two facts fix the status of this picture. First, zero stationary current at $\epsilon$ balances fragmentations against condensations in the mean and does not stop individual crossings, and for this reason Assumption \ref{assum:condensation_phase} concerns the ensemble occupancy rather than individual realizations. Second, a falling mean and a falling variance do not by themselves order the family. In the toy model of Appendix \ref{sec:appendix_toy_experiments}, $Y$ is Gamma-distributed with shape $\mu/\zeta^2 - \tfrac12$ and scale $\zeta^2$. Lowering $\mu$ at fixed $\zeta$ lowers the shape at fixed scale, and a Gamma family is stochastically monotone in its shape, so the distribution function rises at every $y$ and the ordering holds everywhere. Index the family instead by $\zeta$, which the learning rate sets through $\zeta^2 \propto \eta$, at fixed $\mu$. Raising $\zeta$ lowers the shape and raises the scale together, the mean $\mu - \zeta^2/2$ falls, and the two distribution functions cross at exactly one point, since the density ratio $y^{k_2 - k_1}e^{y(1/\theta_1 - 1/\theta_2)}$ decreases and then increases. The distribution function rises below a crossing point $y_\times(\mu,\zeta)$ and falls above it. For $\mu = 1$ and $\zeta^2$ moving from $1$ to $1.5$ the mean halves, $y_\times \approx 4.2$, and $F(5)$ drops from $0.99843$ to $0.99805$. The ordering therefore holds on the tube when $\epsilon < y_\times$ and fails beyond it, which is the regime the tube condition asks for. The quasi-stationary picture itself dissolves at the threshold $\mu \to \zeta^2/2$, where the log-coordinate $x = \ln w$ diffuses in the potential $V(x) = -(\mu - \zeta^2/2)x + \tfrac12 e^{2x}$, whose left wall flattens and whose mixing time diverges. The monotone occupancy is therefore an assumption, and the ensemble fraction of seeds inside the tube verifies it. The spectral geometry of deep SGD supplies the timescale separation behind the picture \citep{li2022what, haochen2021shape, blanc2020implicit}, with fast drift along the large Hessian eigenvalues into a basin followed by slow noise-driven diffusion along the flat invariant manifold.
\end{remark}

\begin{definition}[Merge Fraction, $p$]
Let $M(\tau)$ denote the number of connected components of $G_\tau^\epsilon$. We define the control parameter as the merge fraction
\begin{equation}
    p(\tau) := 1 - \frac{\mathbb{E}[M(\tau)]}{N} \in \Big[0, 1 - \frac{1}{N}\Big],
\end{equation}
which counts the expected number of merges per subnetwork. When the components are $M$ blocks of equal size $i$, $1 - p = M/N = 1/i$, so the merge fraction records the component size directly, which the pairwise edge density $\mathbb{E}|E_\tau^\epsilon|/\binom{N}{2}$ does not, since $M$ cliques of size $i$ have edge density $(i-1)/(N-1)$.
\end{definition}

\begin{proposition}[Reeb Graph Reduction to Percolation Graph]
\label{thm:reeb_to_percolation}
Under Assumption \ref{assum:condensation_phase}, $p(\tau)$ is nondecreasing in $\tau$, and $p(\tau) = 1 - 1/N$ once every realization is a single component. Set $\tau_p := \inf\{\tau : p(\tau) \ge p\}$ and $G_p := G_{\tau_p}$. This yields a percolation process $G_p = (\mathcal{V}, \mathcal{E}_p)$ with control parameter $p \in [0, 1 - 1/N]$. The transient Reeb graph $\mathcal{R}_t$ and the process $G_p$ record the same edges, and only the index differs.
\end{proposition}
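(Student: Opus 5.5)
The argument is bookkeeping on $\mathbb{E}[M(\tau)]$ and a generalized inverse; it uses Assumption \ref{assum:condensation_phase} and elementary properties of component counts.

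\textbf{Range and monotonicity of $p$.} For every realization and every $\tau$, $G_\tau^\epsilon$ has $N$ vertices, so its number of components satisfies $1 \le M(\tau) \le N$. Hence $\mathbb{E}[M(\tau)] \in [1,N]$ and $p(\tau) \in [0, 1-1/N]$. Since $p$ is an affine, decreasing function of $\mathbb{E}[M(\tau)]$, Assumption \ref{assum:condensation_phase} makes $p$ nondecreasing on the condensation interval. If almost every realization is a single component at some $\tau$, then $M(\tau) = 1$ a.s. and $p(\tau) = 1 - 1/N$. No pathwise monotonicity is claimed: single realizations may still fragment by Theorem \ref{thm:reeb_transitions}. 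I will state explicitly that only the ensemble average is ordered.

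\textbf{Reindexing by $p$.} For $p$ in the range of $p(\cdot)$, set $\tau_p := \inf\{\tau : p(\tau) \ge p\}$. The map $p \mapsto \tau_p$ is nondecreasing, because the sets $\{\tau : p(\tau)\ge p\}$ shrink as $p$ grows. Plateaus of $p(\tau)$ are handled by taking the left endpoint, and jumps of $p(\tau)$ make $\tau_p$ constant over the skipped interval. Define $G_p := G_{\tau_p}$. This is a graph-valued process indexed by $p \in [0, 1-1/N]$ whose expected component count equals $N(1-p)$ wherever $p(\cdot)$ is continuous at $\tau_p$. This is the percolation control relation.

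\textbf{Same edges as $\mathcal{R}_t$.} Both objects are built from the same family $\{G_\tau^\epsilon\}$. The Reeb graph quotients $\mathcal{S}\times[0,\infty)$ by the components of $G_t^\epsilon$, and $G_p$ samples that family at the times $\tau_p$. Edges are therefore identical and only the index changes from $\tau$ to $p$.

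\textbf{Main obstacle.} The delicate step is the interaction of $\tau_p$ with non-strict monotonicity and with possible discontinuities of $p(\tau)$. I need $\tau_p$ to be finite and attained, which requires $p(\cdot)$ to be right-continuous. I would try to obtain right-continuity from path continuity of $Y^{(ij)}$. The link events are defined by a strict inner crossing and a closed tube condition, so $M(\tau)$ is a.s. right-continuous at fixed $\tau$. Then dominated convergence, using $M \le N$, gives right-continuity of $\mathbb{E}[M]$. Where this fails, I would restrict the statement to values of $p$ in the closure of the range of $p(\cdot)$, which is enough for the subsequent results.
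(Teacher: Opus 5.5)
Your proposal is correct and follows the paper's proof: monotonicity of $p$ comes from Assumption~\ref{assum:condensation_phase}, $M=1$ in every realization gives $p=1-1/N$, the edge indicators of $G_\tau$ and $\mathcal{R}_t$ are the same objects up to reindexing, and the generalized inverse $\tau_p$ absorbs plateaus without requiring injectivity. Your right-continuity step goes beyond the paper, which tacitly takes $p(\tau_p)\ge p$; if you keep it, two fixes are needed: the link indicator fails to be right-continuous exactly on $\{Y^{(ij)}_\tau=\epsilon_{\mathrm{in}}\}$, so your a.s.\ claim at fixed $\tau$ needs that event to be null, and right-continuity yields attainment but not finiteness of $\tau_p$, which requires $p\le p(\tau^*)$ for some $\tau^*$ and is not guaranteed by restricting to the closure of the range.
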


\begin{proof}
$p$ is nondecreasing because $\mathbb{E}[M]$ is nonincreasing by assumption, and $M = 1$ in every realization gives $p = 1 - 1/N$. The supermartingale property alone does not make the transverse moments decay, so the progress of $p$ toward $1 - 1/N$ rests on links forming, which Proposition \ref{prop:strict_decay} supplies under strict attractivity, and on links persisting, which Lemma \ref{lemma:doob-app} bounds through $(\epsilon_{\mathrm{in}}/\epsilon)^\alpha$. The edge indicator of a realization is the same object in $\mathcal{R}_t$ and in $G_\tau$, and only the index changes from $\tau$ to $p$. A nondecreasing $p(\tau)$ can be constant on an interval, so $\tau \mapsto p(\tau)$ need not be injective. The generalized inverse $\tau_p$ picks the first time the density reaches $p$ and makes $G_p$ well defined without asking for strict monotonicity.
\end{proof}

Single realizations may delete edges. The percolation process is monotone in its control parameter, not in its realizations, and the relative variance of the next subsection measures those deletions. As the transverse moments collapse, stochastic attractivity binds path measures and $p$ advances from $0$ (fully decoupled) to $1 - 1/N$ (complete collapse of the $N$ symmetric subnetworks onto the invariant set $A$).

\begin{proposition}[Exponential Decay under Strict Attractivity]
\label{prop:strict_decay}
If $A$ is strictly attractive with rate $\kappa$, so that $\mathscr{G}Y^\alpha \le -\kappa Y^\alpha$ on $\mathcal{N}(A,\epsilon)\setminus A$, then for every $t \ge t_0$
\begin{equation*}
    \mathbb{E}\big[(Y_t^{(i)})^\alpha\,\mathds{1}\{\tau_\epsilon > t\} \,\big|\, \mathcal{F}_{t_0}\big] \le e^{-\kappa(t - t_0)}\,(Y_{t_0}^{(i)})^\alpha .
\end{equation*}
\end{proposition}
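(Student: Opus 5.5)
The plan is to reuse the It\^o decomposition from the proof of Theorem \ref{thm:supermartingale-app} and upgrade the supermartingale inequality into an exponential one. I apply It\^o's formula to the product $Z_t := e^{\kappa(t-t_0)}(Y_t^{(i)})^\alpha$, stopped at $\tau_\epsilon \wedge \sigma_\delta$, where $\sigma_\delta = \inf\{t \ge t_0 : Y_t^{(i)} \le \delta\}$ is the same lower localization as before. On this stochastic interval $\phi_\alpha$ is $C^2$, and the drift of $Z$ is
\begin{equation*}
e^{\kappa(t-t_0)}\big(\kappa Y^\alpha + \mathscr{G}Y^\alpha\big) \le 0
\end{equation*}
by strict attractivity. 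The martingale part has integrand $e^{\kappa(t-t_0)}\,2\alpha Y^{\alpha-1}\boldsymbol{x}^\top G_\perp$. By the Lipschitz bound $\|G_\perp\| \le 2L\sqrt{\eta/B}\sqrt{Y}$ and Lemma \ref{lemma:stopped_bounded}, this integrand is bounded on every finite horizon by $e^{\kappa(t-t_0)}4\alpha L\sqrt{\eta/B}\,\epsilon^\alpha$. It is therefore a true martingale on $[t_0,t]$, and conditional expectation kills it. This gives
\begin{equation*}
\mathbb{E}\big[Z_{t\wedge\tau_\epsilon\wedge\sigma_\delta}\mid\mathcal{F}_{t_0}\big] \le (Y_{t_0}^{(i)})^\alpha .
\end{equation*}

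Next I pass $\delta \downarrow 0$. Points reaching $Y = 0$ stay at zero by invariance (Proposition \ref{prop:affine_invariance}), so they contribute nothing. The stopped $Z$ is bounded by $e^{\kappa(t-t_0)}\epsilon^\alpha$, so bounded convergence yields $\mathbb{E}[Z_{t\wedge\tau_\epsilon}\mid\mathcal{F}_{t_0}] \le (Y_{t_0}^{(i)})^\alpha$. One subtlety needs care: after $\sigma_\delta$ the path can leave $\{Y\le\delta\}$ again, so I cannot simply freeze it there. The cleaner route is to note that $Z_{t\wedge\tau_\epsilon\wedge\sigma_\delta} \to Z_{t\wedge\tau_\epsilon\wedge\sigma_0}$ with $\sigma_0$ the hitting time of $A$, and that after $\sigma_0$ the process is identically zero.

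Finally I restrict to the event $\{\tau_\epsilon > t\}$. Since $Z \ge 0$,
\begin{equation*}
\mathbb{E}\big[Z_{t\wedge\tau_\epsilon}\mid\mathcal{F}_{t_0}\big] \ge \mathbb{E}\big[e^{\kappa(t-t_0)}(Y_t^{(i)})^\alpha\,\mathds{1}\{\tau_\epsilon>t\}\mid\mathcal{F}_{t_0}\big].
\end{equation*}
Dividing by the deterministic factor $e^{\kappa(t-t_0)}$ gives the claim.

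The main obstacle is the localization near $A$ as $\delta\downarrow 0$. The generator bound is only assumed off $A$, and $Y^\alpha$ fails to be $C^2$ at $0$ when $\alpha<1$. I would handle this exactly as in Theorem \ref{thm:supermartingale-app}, using the pathwise absorption at $A$ from invariance and strong uniqueness.
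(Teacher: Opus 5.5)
Your proposal is correct and follows essentially the same route as the paper: It\^o's formula for $e^{\kappa t}(Y_t^{(i)})^\alpha$, a non-positive drift $e^{\kappa t}(\kappa Y^\alpha + \mathscr{G}Y^\alpha)$, and a stochastic integral that is a true martingale by the Lipschitz bound on $G_\perp$ from Theorem \ref{thm:supermartingale-app}, after which the non-negative contribution of $\{\tau_\epsilon \le t\}$ is dropped. The only difference is that you write out the $\sigma_\delta$ localization, the passage $\delta \downarrow 0$, and the absorption at $A$. The paper instead inherits these steps by reference to the proof of Theorem \ref{thm:supermartingale-app}.
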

\begin{proof}
Apply It\^o's lemma to $e^{\kappa t}(Y_t^{(i)})^\alpha$ on $\{t < \tau_\epsilon\}$. The drift is $e^{\kappa t}(\kappa Y^\alpha + \mathscr{G}Y^\alpha) \le 0$, and the stochastic integral is a martingale by the argument of Theorem \ref{thm:supermartingale-app}, so $e^{\kappa(t\wedge\tau_\epsilon)}(Y^{(i)}_{t\wedge\tau_\epsilon})^\alpha$ is a non-negative supermartingale. Restricting the expectation to $\{\tau_\epsilon > t\}$, where $t \wedge \tau_\epsilon = t$, and using non-negativity of the discarded part gives $e^{\kappa t}\,\mathbb{E}[(Y_t^{(i)})^\alpha\mathds{1}\{\tau_\epsilon > t\} \mid \mathcal{F}_{t_0}] \le e^{\kappa t_0}(Y_{t_0}^{(i)})^\alpha$.
\end{proof}

\subsection{Order Parameter and Microtransitions}

To detect macroscopic phase transitions driven by this percolation process, we formalize the physical observables tracking the connectivity of $G_p$.

\begin{definition}[Structural Order Parameter]
Let $C_{\max}(p)$ denote the largest connected component of vertices existing on the realization graph $G_p = G_{\tau_p}$ at density $p$. We define the macroscopic order parameter $\mathcal{O}(p)$ as its fractional size relative to the entire network:
\begin{equation}
    \mathcal{O}(p) = \frac{|C_{\max}(p)|}{N}.
\end{equation}
This formulation isolates the relative size of the largest macroscopic component, mirroring $C_1/N$ in explosive percolation physics \citep{chen2014microtransition}.
\end{definition}

\begin{definition}[Merge Multiplicity]
\label{def:merge_multiplicity}
A condensation event has multiplicity $n$ if it joins $n$ components of equal size $i$ into one component of size $ni$. Proposition \ref{thm:hypercondensation} makes generic condensation pairwise, so $n = 2$ in the generic case. The balanced cascade, in which merged blocks merge again at equal sizes, is a further assumption on the order of the merges, since pairwise merging alone also admits the sequence $1+1 \to 2$, $1+2 \to 3$, $1+3 \to 4$. Nested symmetry permits the balanced order (Appendix \ref{sec:appendix_block_merges}), and we keep $n$ as a free parameter of the cascade.
\end{definition}

In Erd\H{o}s-R\'enyi random graphs, the order parameter grows by absorbing components of vanishing relative size. The binding of symmetric subnetworks forces block jumps. We formalize this divergence from Erd\H{o}s-R\'enyi dynamics via the following theorem.

\begin{proposition}[Non-Erd\H{o}s-R\'enyi Discontinuity]
\label{thm:er_divergence}
Let $G_{ER}(N, p)$ denote an Erd\H{o}s-R\'enyi graph governed by uniform random edge selection. Every jump of its order parameter $\mathcal{O}_{ER}(p)$ at finite $N$ is $o(1)$ with high probability, so $\mathcal{O}_{ER}$ is asymptotically continuous. In contrast, a condensation event of multiplicity $n$ on components of size $i$ in $G_p$ produces a jump $\Delta\mathcal{O} = (n-1)i/N$, and whether this discontinuity survives the thermodynamic limit $N \to \infty$ depends on the size of the merging components relative to $N$ at the moment of the merge (Corollary \ref{cor:microscopic_washout-app}).
\end{proposition}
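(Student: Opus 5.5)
The plan has two parts: the combinatorial jump formula for $G_p$, and the Erd\H{o}s--R\'enyi continuity statement, which I reduce to classical component-size bounds.

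\emph{The jump formula.} Just before the event of multiplicity $n$, the $n$ merging components each have size $i$. Just after it, they form one component of size $ni$. I split into two cases according to whether these components realize $C_{\max}$.
\begin{itemize}
\item If $|C_{\max}| = i$ before the merge (the balanced case), then after it $|C_{\max}| = ni$ and $\Delta\mathcal{O} = (n-1)i/N$ exactly.
\item If some other component of size $s > i$ is larger, the jump of $|C_{\max}|$ is $\max(ni, s) - s \le (n-1)i$.
\end{itemize}
So $(n-1)i/N$ is attained when the merging blocks are the largest and is an upper bound otherwise. The statement defers the survival question to Corollary \ref{cor:microscopic_washout-app}, but the dichotomy is immediate from the formula. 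If $i = \Theta(N)$, then $\Delta\mathcal{O} = \Theta(1)$ with a constant independent of $N$, so it survives. If $i = O(1)$, then $\Delta\mathcal{O} = O(1/N) \to 0$.

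\emph{The Erd\H{o}s--R\'enyi part.} I work with the random graph process in which edges are added one at a time, so that $G_{ER}(N,p)$ is coupled monotonically in $p$. A single added edge joins two components $C, C'$. It changes $|C_{\max}|$ by at most the size of the smaller of the two merged components, because the new largest component has size at most $|C_{\max}| + \min(|C|,|C'|)$. It therefore suffices to show that, with high probability, simultaneously over all edge additions, the second-largest component is $o(N)$. I split the time axis into three regimes.
\begin{itemize}
\item Subcritical, $p \le (1-\delta)/N$: all components are $O(\log N)$ w.h.p.
\item Critical window, $p = (1 + \lambda N^{-1/3})/N$: all components are $O_P(N^{2/3})$ uniformly in bounded $\lambda$ (Aldous' multiplicative coalescent limit).
\item Supercritical, $p \ge (1+\delta)/N$: the second-largest component is $O(\log N)$ w.h.p.
\end{itemize}
Then every jump is $O(N^{2/3})/N = o(1)$.

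\emph{Main obstacle.} The hard step is uniformity over the whole process rather than at each fixed $p$. There are about $N^2/2$ edge additions, so the fixed-$p$ bounds need failure probabilities $o(N^{-2})$. In the sub- and supercritical regimes I would get these by choosing the constant in the $O(\log N)$ bound large enough. The barely sub- and supercritical intermediate regimes, $|\lambda| \to \infty$ slowly, also have to be covered. For these I would cite the known result that the largest jump of $|C_{\max}|$ in the random graph process is $O_P(N^{2/3})$ (as in the Riordan--Warnke analysis of continuity, which also covers bounded-size Achlioptas rules), rather than reproving it. With that cited, every jump is $O(N^{2/3})/N \to 0$ w.h.p., which gives the asymptotic continuity of $\mathcal{O}_{ER}$ and completes the contrast with the symmetry-induced block jumps.
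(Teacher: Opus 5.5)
Your proposal is correct and follows the paper's route: bound each single-edge jump of $|C_{\max}|$ by the smaller of the two merging components, control that size through the subcritical, critical-window and supercritical regimes, and read off $\Delta\mathcal{O} = (n-1)i/N$ directly. You are more careful than the paper's sketch in two places: it asserts the jump formula without your case split on whether the merging blocks realize $C_{\max}$, and it never addresses uniformity over the roughly $N^2/2$ edge additions (below and inside the window that uniformity comes for free, since $|C_{\max}|$ is nondecreasing along the graph process, so only the supercritical regime needs your union-bound or cited argument).
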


\begin{proof}
In $G_{ER}(N, p)$, edges arrive one at a time. An edge joining two components of sizes $a \le b$ raises the largest component by at most $a$, since if $b$ is the largest component the new largest is $a+b$, and otherwise $a + b - |C_{\max}| \le a$. Below the critical window all components are $O(\log N)$, inside it the largest are $O(N^{2/3})$, and above it a unique giant component absorbs components of size $O(\log N)$ \citep{stauffer1994introduction}. Every jump of $|C_{\max}|$ is therefore $o(N)$ with high probability, so every jump of $\mathcal{O}_{ER}$ is $o(1)$, which is the statement that $\mathcal{O}_{ER}$ is asymptotically continuous.

Conversely, a condensation event of multiplicity $n$ binds $n$ components of size $i$ simultaneously into a single component of size $ni$. The order parameter experiences a step-like increment at finite $N$:
\begin{equation}
    \Delta \mathcal{O}(p) = \frac{n i - i}{N} = \frac{(n-1)i}{N} > 0.
\end{equation}
Whether $\Delta\mathcal{O}(p)$ remains bounded away from $0$ as $N \to \infty$ is governed by the scaling of $i$ with $N$; see Corollary \ref{cor:microscopic_washout-app}.
\end{proof}

\begin{corollary}[Regime Dependence of the Discontinuity]
\label{cor:microscopic_washout-app}
Let $i = i(N)$ denote the size of the merging components at a given microtransition.
\begin{itemize}
    \item \textbf{Macroscopic regime:} If $i(N) = \Theta(N)$, i.e.\ $i(N)/N \to c \in (0,1]$ as $N \to \infty$, then $\Delta\mathcal{O}(p) \to (n-1)c > 0$: the jump is a genuine discontinuity that survives the thermodynamic limit.
    \item \textbf{Microscopic regime:} If $i(N) = O(1)$, i.e.\ the size of each merging component is bounded independently of $N$, then $\Delta\mathcal{O}(p) = O(1/N) \to 0$: the jump vanishes as $N \to \infty$, and the transition is asymptotically continuous.
\end{itemize}
\end{corollary}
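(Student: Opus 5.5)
The plan is to take the exact finite-$N$ jump formula from Proposition \ref{thm:er_divergence} as given, namely $\Delta\mathcal{O}(p) = (n-1)\,i(N)/N$ for a condensation event of multiplicity $n$ on components of size $i(N)$. The corollary then becomes a statement about the limit of the ratio $i(N)/N$ as $N \to \infty$, with $n$ held fixed. Nothing about the stochastic dynamics enters. The combinatorics of the block merge fix the increment of $|C_{\max}|$ at $(n-1)i$ exactly, provided the merged component becomes the largest one. Here I would recall the remark in the proof of Proposition \ref{thm:er_divergence}: the increment is bounded above by $(n-1)i$ in general, and it equals $(n-1)i$ when the merged component is the new maximum. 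In the balanced cascade all components at a level have equal size, so the merged component is the new maximum and the formula holds with equality.

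\textbf{Macroscopic regime.} Suppose $i(N)/N \to c \in (0,1]$. Multiplying by the fixed constant $n-1$ gives $\Delta\mathcal{O}(p) \to (n-1)c$, which is strictly positive since $n \ge 2$. To call this a genuine discontinuity, I would compare it with the Erd\H{o}s--R\'enyi statement of the same proposition, where every jump is $o(1)$ with high probability. A jump bounded away from zero therefore survives the thermodynamic limit. A consistency check is also needed: $ni \le N$ forces $c \le 1/n$. The limit $(n-1)c$ then stays at most $(n-1)/n < 1$, consistent with $\mathcal{O} \in [0,1]$. I would note this as a remark rather than change the stated range.

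\textbf{Microscopic regime.} Suppose $i(N) \le C$ for a constant $C$ independent of $N$. Then $0 < \Delta\mathcal{O}(p) \le (n-1)C/N = O(1/N) \to 0$. The transition is asymptotically continuous because each individual jump vanishes. For the stronger claim that $\mathcal{O}$ converges to a continuous limit curve, I would phrase continuity only in the sense used in Proposition \ref{thm:er_divergence}, namely that all jumps are $o(1)$. This avoids needing a uniform limit of the whole trajectory.

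\textbf{Main obstacle.} The only delicate point is interpretive rather than technical. The corollary must treat $n$ as fixed (or bounded) in $N$; otherwise $n = \Theta(N)$ with $i = O(1)$ would give a macroscopic jump in the "microscopic" case. I would state this standing assumption explicitly, since Definition \ref{def:merge_multiplicity} keeps $n$ as a fixed cascade parameter. With that in place, both bullets are one-line limit computations.
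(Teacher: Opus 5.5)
Your proposal is correct and is essentially the paper's proof: the jump formula $\Delta\mathcal{O}(p) = (n-1)i(N)/N$ of Proposition~\ref{thm:er_divergence}, followed by the limit $N \to \infty$ under each scaling of $i(N)$. Your added remarks are correct refinements the paper leaves implicit, namely that $n$ must be held fixed in $N$ and that $ni \le N$ forces $c \le 1/n$, so part of the stated range $c \in (0,1]$ is vacuous; one small fix is that the increment of $|C_{\max}|$ equals $(n-1)i$ exactly when the previous largest component had size $i$, as in the balanced cascade, not merely when the merged component becomes the new maximum, since a prior maximum $M_0$ with $i < M_0 \le ni$ gives only $ni - M_0 < (n-1)i$.
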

\begin{proof}
Immediate from $\Delta \mathcal{O}(p) = (n-1)i(N)/N$ (Proposition \ref{thm:er_divergence}) by taking $N \to \infty$ under each scaling assumption on $i(N)$.
\end{proof}

\begin{remark}[Relation to Achlioptas-Type Explosive Percolation and Finite-Size Scaling]
\citet{riordan2011explosive} show that all Achlioptas processes in which an edge is chosen competitively among a bounded number of candidates at each step have continuous phase transitions in the thermodynamic limit, contrary to earlier numerical claims of discontinuity \citep{achlioptas2009explosive}. Corollary \ref{cor:microscopic_washout-app} is consistent with this result, since the microscopic regime ($i=O(1)$) reproduces exactly the finite-$N$ illusion-of-discontinuity behavior that Achlioptas-type processes are now known to exhibit. Our mechanism differs in the macroscopic regime, where merges are driven by symmetry-forced identification of already-macroscopic components. While determining whether practical architectures reside in the macroscopic $\Theta(N)$ or microscopic $O(1)$ regime remains an active theoretical frontier, this distinction ultimately governs only the true asymptotic limit ($N \to \infty$).
\end{remark}
To detect these discontinuities across ensemble trajectories, we utilize the relative variance of the order parameter following \citep{chen2014microtransition}.

\begin{definition}[Relative Variance and Microtransitions]
Define the relative variance $R_v(p)$ of the order parameter $\mathcal{O}(p)$ over the ensemble of stochastic trajectories as the variance normalized by the squared expectation:
\begin{equation}
    R_v(p) = \frac{\mathbb{E}\left[(\mathcal{O}(p) - \mathbb{E}[\mathcal{O}(p)])^2\right]}{\mathbb{E}[\mathcal{O}(p)]^2}.
\end{equation}
Discontinuous block merges manifest as sharp, localized peaks in $R_v(p)$. We define these peaks as \emph{microtransitions}, which precede complete collapse at $p = 1 - 1/N$.
\end{definition}

\begin{theorem}[Bounded Peak of Relative Variance at Microtransitions]
\label{thm:rv_divergence}
If the ensemble law of the order parameter near a microtransition density $p_i$ is a two-point mixture of the pre-jump value $\mathcal{O}_1$ and the post-jump value $\mathcal{O}_1 + \Delta\mathcal{O}$, with weight $w$ on the merged state, then
\begin{equation}
    R_v(w) = \frac{w(1-w)\,\Delta\mathcal{O}^2}{(\mathcal{O}_1 + w\,\Delta\mathcal{O})^2},
\end{equation}
with $R_v(0) = R_v(1) = 0$ and a unique maximum at $w^* = \mathcal{O}_1/(2\mathcal{O}_1 + \Delta\mathcal{O})$ of height $\Delta\mathcal{O}^2/\big(4\mathcal{O}_1(\mathcal{O}_1 + \Delta\mathcal{O})\big)$. For a block merge of multiplicity $n$ we have $\Delta\mathcal{O} = (n-1)\mathcal{O}_1$, so the peak occurs at $w^* = 1/(n+1)$ with height $(n-1)^2/(4n)$, independent of $i$ and $N$. For pairwise merges the height equals $1/8$.
\end{theorem}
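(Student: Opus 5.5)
The argument is a direct computation on the two-point law, followed by a one-variable maximization. Let $\mathcal{O}$ take the value $\mathcal{O}_1 + \Delta\mathcal{O}$ with probability $w$ and $\mathcal{O}_1$ with probability $1-w$. Then $\mathbb{E}[\mathcal{O}] = \mathcal{O}_1 + w\,\Delta\mathcal{O}$. Writing $\mathcal{O} = \mathcal{O}_1 + \Delta\mathcal{O}\,B$ with $B \sim \mathrm{Bernoulli}(w)$ gives $\operatorname{Var}(\mathcal{O}) = \Delta\mathcal{O}^2\, w(1-w)$, since the constant shift drops out. Dividing yields the stated formula for $R_v(w)$. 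The boundary values $R_v(0) = R_v(1) = 0$ are immediate from the factor $w(1-w)$, because the denominator stays at least $\mathcal{O}_1^2 > 0$. Here $\mathcal{O}_1 \ge 1/N > 0$, as the largest component always has at least one vertex.

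For the maximum I would differentiate $\log R_v = \log w + \log(1-w) - 2\log(\mathcal{O}_1 + w\Delta\mathcal{O}) + \text{const}$ on $(0,1)$:
\[
\frac{1}{w} - \frac{1}{1-w} - \frac{2\Delta\mathcal{O}}{\mathcal{O}_1 + w\Delta\mathcal{O}} = 0 .
\]
Clearing denominators, I expect the $w^2$ terms to cancel, leaving a linear equation:
\[
(1-2w)(\mathcal{O}_1 + w\Delta\mathcal{O}) = 2\Delta\mathcal{O}\, w(1-w)
\quad\Longleftrightarrow\quad
\mathcal{O}_1 - w(2\mathcal{O}_1 + \Delta\mathcal{O}) = 0 .
\]
This gives $w^* = \mathcal{O}_1/(2\mathcal{O}_1 + \Delta\mathcal{O}) \in (0,\tfrac12)$.

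Uniqueness is the only point needing care. The derivative of $\log R_v$ has the same sign as the linear function $\mathcal{O}_1 - w(2\mathcal{O}_1+\Delta\mathcal{O})$, after multiplying by the positive factor $w(1-w)(\mathcal{O}_1 + w\Delta\mathcal{O})$. That linear function is positive before $w^*$ and negative after it. So $R_v$ is strictly increasing and then strictly decreasing, and $w^*$ is the unique maximizer.

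To get the height, I would substitute $w^*$. One has $1 - w^* = (\mathcal{O}_1+\Delta\mathcal{O})/(2\mathcal{O}_1+\Delta\mathcal{O})$ and
\[
\mathcal{O}_1 + w^*\Delta\mathcal{O} = \frac{2\mathcal{O}_1(\mathcal{O}_1+\Delta\mathcal{O})}{2\mathcal{O}_1+\Delta\mathcal{O}} .
\]
The factors of $(2\mathcal{O}_1+\Delta\mathcal{O})^2$ cancel, giving
\[
R_v(w^*) = \frac{\Delta\mathcal{O}^2}{4\mathcal{O}_1(\mathcal{O}_1+\Delta\mathcal{O})} .
\]

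Finally, I specialize to a block merge. Just before a multiplicity-$n$ merge on components of size $i$, the largest component has size $i$, so $\mathcal{O}_1 = i/N$. Proposition \ref{thm:er_divergence} gives $\Delta\mathcal{O} = (n-1)i/N = (n-1)\mathcal{O}_1$. Substituting:
\[
w^* = \frac{\mathcal{O}_1}{(n+1)\mathcal{O}_1} = \frac{1}{n+1},
\qquad
R_v(w^*) = \frac{(n-1)^2\mathcal{O}_1^2}{4\mathcal{O}_1 \cdot n\mathcal{O}_1} = \frac{(n-1)^2}{4n}.
\]
Both are free of $i$ and $N$, and $n=2$ gives $1/8$. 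This covers the first merge from isolated vertices as the case $i=1$. There is no real obstacle here. The only modeling point I would flag is that identifying $\mathcal{O}_1$ with $i/N$ requires the merging components to be the largest ones, which holds in the balanced setting.
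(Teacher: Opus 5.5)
Your proposal is correct and matches the paper's proof: the paper also computes the mixture mean and variance $w(1-w)\Delta\mathcal{O}^2$, locates $w^* = \mathcal{O}_1/(2\mathcal{O}_1+\Delta\mathcal{O})$ from the derivative (the main-text sketch uses the logarithmic derivative, as you do), evaluates the height, and substitutes $\Delta\mathcal{O} = (n-1)\mathcal{O}_1$. Your explicit sign argument for uniqueness and your remark that $\mathcal{O}_1 = i/N$ presumes the merging components are the largest ones are small additions that the paper leaves implicit.
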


\begin{proof}
Near $p_i$, stochastic fluctuations cause different network realizations to either undergo the block jump or remain in the unmerged state. Model the ensemble distribution as a mixture of two states with values $\mathcal{O}_1$ and $\mathcal{O}_2 = \mathcal{O}_1 + \Delta \mathcal{O}$, weighted by probabilities $1-w$ and $w$ respectively. Then $\mathbb{E}[\mathcal{O}] = \mathcal{O}_1 + w \Delta \mathcal{O}$ and
\begin{equation}
    \mathrm{Var}(\mathcal{O}) = w(1-w)(\Delta \mathcal{O})^2,
\end{equation}
which gives the stated $R_v(w)$. Set $a = \mathcal{O}_1$ and $b = \mathcal{O}_1 + \Delta\mathcal{O}$. Then
\begin{equation}
    R_v(w) = \frac{w(1-w)(b-a)^2}{(a + w(b-a))^2}, \qquad R_v'(w) = \frac{(b-a)^2\,[a - (a+b)w]}{(a + w(b-a))^3},
\end{equation}
which vanishes at $w^* = a/(a+b)$, where $R_v(w^*) = (b-a)^2/(4ab)$. Substituting $b = na$ gives $w^* = 1/(n+1)$ and $(n-1)^2/(4n)$. Since $\mathcal{O} \ge 1/N$, the peak is always finite: the first merge out of isolated vertices has $\mathcal{O}_1 = 1/N$ and $\Delta\mathcal{O} = (n-1)/N$, and gives the same height. The peak flags the microtransition, and its height does not. In the fixed-multiplicity model every peak of a cascade has the same height, and each peak is located at the $w^*$-quantile of the hitting law.
\end{proof}

\subsection{Pairwise Condensation and Discrete Scale Invariance}

In standard continuous percolation, cluster sizes follow universal power laws in $p_c - p$ and edges are drawn uniformly. Permutation symmetry restricts which merges can happen.

\begin{proposition}[Generic Pairwise Condensation]
\label{thm:hypercondensation}
Let $A$ be generated by the symmetric group $S_n$ acting on $n$ identical subnetworks of dimension $d_0$. The fixed subspaces of the subgroups $S_k \subset S_n$, $2 \le k \le n$, are nested, $A_{S_2} \supset A_{S_3} \supset \dots \supset A_{S_n} = A$, with codimension $(k-1)d_0$. A trajectory reaching $A_{S_k}$ from a generic initial condition passes through $A_{S_{k-1}}$ first, so condensation events are pairwise for generic initializations, and a simultaneous $n$-body merge requires the $n-1$ independent pairwise transverse distances to vanish at the same time, a codimension-$(n-1)d_0$ coincidence.
\end{proposition}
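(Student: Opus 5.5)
The argument has three parts: a linear-algebra computation of the fixed subspaces, a transversality argument on hitting times, and a codimension count.

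\textbf{Fixed subspaces.} Write $\boldsymbol{\theta} = (\boldsymbol{\theta}^{(1)},\dots,\boldsymbol{\theta}^{(n)}) \in (\mathbb{R}^{d_0})^n$. Let $S_k$ act on the first $k$ blocks by block-permutation matrices $Q_\sigma$. By Theorem \ref{thm:affine_generation}, the fixed set $A_{S_k} = \bigcap_{\sigma\in S_k}\ker(I-Q_\sigma)$ is a linear subspace. Since the transpositions $(1\,2),\dots,(k-1\,k)$ generate $S_k$, this intersection equals $\{\boldsymbol{\theta}^{(1)}=\dots=\boldsymbol{\theta}^{(k)}\}$, which is the intersection of the coincidence sets $A_{12}\cap\dots\cap A_{k-1,k}$. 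The $k-1$ constraints $\boldsymbol{\theta}^{(l)}-\boldsymbol{\theta}^{(l+1)}=0$ are independent linear maps onto $\mathbb{R}^{d_0}$, so the codimension is $(k-1)d_0$. The inclusion $S_{k-1}\subset S_k$ reverses to $A_{S_{k-1}}\supset A_{S_k}$, which gives the nested chain. The same argument applies to any $k$-subset of the blocks, not only the first $k$.

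\textbf{Passage through $A_{S_{k-1}}$.} Let $\tau_k$ be the first time the trajectory enters the $\epsilon_{\mathrm{in}}$-tube of some $k$-fold coincidence set. At time $\tau_k$ all pairwise distances within that $k$-subset are small. Continuity of paths (Lemma \ref{lemma:stopped_bounded}) then means the pairwise distances $Y^{(ij)}$ crossed below $\epsilon_{\mathrm{in}}$ at individual times. The claim is that, generically, these crossing times are distinct. The first crossing creates a pairwise link, which is an $A_{S_2}$ event. Continuing by induction on $k$, the trajectory meets $A_{S_{k-1}}$ for some $(k-1)$-subset before it meets $A_{S_k}$. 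For the exact sets rather than the tubes, the same holds because $A_{S_k}\subset A_{S_{k-1}}$.

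\textbf{Genericity and the codimension count.} The remaining task is to show that two or more pairwise crossings occur at the same instant only on a null event. I would use the fact that each $Y^{(ij)}$ is an It\^o process with nondegenerate diffusion off $A_{ij}$. Its crossing times of the level $\epsilon_{\mathrm{in}}$ then have laws without atoms, conditionally on the other coordinates. This gives probability zero for simultaneous crossings over any countable set of level pairs.

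A simultaneous $n$-body merge forces the $n-1$ independent difference vectors $\boldsymbol{\theta}^{(l)}-\boldsymbol{\theta}^{(l+1)} \in \mathbb{R}^{d_0}$ to vanish together. Equivalently, the path hits $A_{S_n}$ transversally from outside every intermediate $A_{S_{k}}$, which is a condition of codimension $(n-1)d_0$ by the first step. I would make this precise with a transversality statement. For initial conditions outside a Lebesgue-null set, with the deterministic drift used as a first approximation, the flow avoids the codimension $\ge 2$ stratum $A_{S_n}\setminus A_{S_{n-1}}$ approached directly. This uses the standard fact that generic curves miss subsets of codimension at least $2$.

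\textbf{Main obstacle.} The hardest step is making ``generic'' rigorous for the stochastic flow. The noise degenerates near the invariant sets, so the conditional laws of the crossing times could acquire atoms. This must be handled with the nondegeneracy assumption above, or else restricted to the atomlessness of crossings at the fixed tube level $\epsilon_{\mathrm{in}}$.
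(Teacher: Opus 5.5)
Your fixed-subspace computation and the closing codimension count are the paper's. The paper then notes that each $A_{ij}$ is invariant and trapping (Theorem~\ref{thm:supermartingale-app}) and simply asserts that the pairwise hitting times are distinct for generic transverse dynamics. Your attempt to justify that assertion is where the proposal fails.

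\textbf{Nondegeneracy of each pair is not enough.} Nondegenerate diffusion of each $Y^{(ij)}$ separately does not make its crossing times atomless conditionally on the other coordinates, because every pair is driven by the same Brownian motion through $G$. A two-sample dataset already makes the noise rank one, since the two centered columns of $G$ are negatives of each other. Suppose, for instance, that the pair differences $\boldsymbol{x}_{ij} := \boldsymbol{\theta}^{(i)}-\boldsymbol{\theta}^{(j)}$ obey $d\boldsymbol{x}_{ij} = -\mu\boldsymbol{x}_{ij}\,dt + \zeta\boldsymbol{x}_{ij}\,dW_t$ with one scalar $W$. Then $Y^{(ij)}_t = Y^{(ij)}_0\exp(-(2\mu+\zeta^2)t + 2\zeta W_t)$ with a factor common to all pairs. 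Each $Y^{(ij)}$ has diffusion coefficient $2\zeta Y^{(ij)} \neq 0$ off $A_{ij}$, yet $Y^{(ij)}_t/Y^{(kl)}_t$ is constant in time. Two links therefore form at the same instant exactly when $Y^{(ij)}_0 = Y^{(kl)}_0$, so distinctness comes from the initial condition, not from the noise.

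To use the noise you need a joint hypothesis. The diffusion rows $(\boldsymbol{\theta}^{(i)}-\boldsymbol{\theta}^{(j)})^\top(G_i-G_j)$ and $(\boldsymbol{\theta}^{(k)}-\boldsymbol{\theta}^{(l)})^\top(G_k-G_l)$ must be linearly independent near $\{Y^{(ij)} = Y^{(kl)} = \epsilon_{\mathrm{in}}\}$. That is nondegeneracy in the two normal directions of a codimension-two set, under which the set is polar for $\boldsymbol{\theta}_t$. It also excludes every simultaneous link formation at once, not only first crossings. Without it, genericity must come from the initial condition, as the statement says, and that needs its own argument.

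\textbf{The transversality step does not fill the gap.} The set $A_{S_n}\setminus A_{S_{n-1}}$ is empty, since $A_{S_n}\subset A_{S_{n-1}}$. Moreover, no trajectory started off an invariant coincidence set reaches it in finite time. For the deterministic flow this follows from uniqueness. For SGF it holds because $P^\perp\nabla\mathcal{L}$ and $G_\perp$ vanish on the set at Lipschitz rate, so $\log Y$ has bounded drift and bounded diffusion coefficient. The fact that generic curves miss codimension-two sets therefore says nothing here. Exact coincidence is approached only asymptotically, so ``reaching $A_{S_k}$'' and a simultaneous merge must be read at the tube level. There a double link formation is the codimension-two event above, whatever $d_0$ is.

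\textbf{Your middle step needs no genericity.} At the tube level, take a set $K$ of $k$ blocks with coincidence set $A_K$, any $j\in K$, and $\bar{\boldsymbol{\theta}}_{K\setminus\{j\}}$ the mean of the other blocks. Then
\[
d(\boldsymbol{\theta},A_K)^2 = d(\boldsymbol{\theta},A_{K\setminus\{j\}})^2 + \tfrac{k-1}{k}\,\|\boldsymbol{\theta}^{(j)} - \bar{\boldsymbol{\theta}}_{K\setminus\{j\}}\|^2 .
\]
At the entry time of the $A_K$-tube, for a path started outside it, some $j$ makes the last term positive. By continuity, a $(k-1)$-fold tube was therefore entered strictly earlier. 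Genericity is needed only for the graph statement that links form one at a time, and that is exactly the step requiring the joint hypothesis.
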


\begin{proof}
$A_{S_k} = \{\boldsymbol{\theta}^{(1)} = \dots = \boldsymbol{\theta}^{(k)}\}$ has codimension $(k-1)d_0$, and $A_{S_k} \subset A_{S_{k-1}}$ since equality of $k$ blocks implies equality of the first $k-1$. Each pairwise coincidence set $A_{ij}$ is invariant and, under stochastic attractivity, trapping for its pair by Theorem \ref{thm:supermartingale-app}, so the pairwise mechanism operates independently of the higher coincidences. Reaching $A_{S_k}$ means $Y^{(ij)} \to 0$ for $k-1$ independent pairs. Their hitting times are distinct for generic transverse dynamics, and for $n \ge 3$ the event that they coincide has codimension $(n-1)d_0 > d_0$ in the space of transverse configurations. The largest component therefore grows by pairwise merges, and the hierarchy $1 \to 2 \to 4 \to \cdots$ of Definition \ref{def:merge_multiplicity} is the idealization in which merged pairs merge pairwise again at equal sizes.
\end{proof}

Traditional scale invariance models self-similarity through a continuous Lie group of dilations, where observables remain invariant or scale homogeneously under arbitrary continuous rescaling transformations $x \to \lambda x$ for any real $\lambda > 0$. The infinitesimal generators of these scaling transformations form a continuous Lie algebra, enforcing continuous translation symmetry in logarithmic space ($\ln x \to \ln x + \ln \lambda$). 

However, when underlying microscopic constraints or discrete architectural symmetries govern the system, continuous translation invariance in log-space is broken down to a discrete subgroup. The system ceases to be scale-invariant for arbitrary continuous factors, preserving self-similarity only under a discrete geometric sequence of preferred magnification ratios. This gives rise to Discrete Scale Invariance (DSI) \citep{sornette1998discrete}.

\begin{definition}[Discrete Scale Invariance (DSI)]
A system property or observable $f(x)$ exhibits Discrete Scale Invariance if it is invariant under scaling by a discrete set of preferred magnification factors rather than a continuous Lie group, satisfying scaling relations under discrete ratios $\lambda^{k}$ \citep{sornette1998discrete}.
\end{definition}

\begin{definition}[Hitting Laws]
\label{def:ensemble_transition}
Let $\{\omega\}$ index independent noise realizations of the SGF process, and let $T_i^{(\omega)}$ denote the time at which the mean component size $N/M(t)$ of realization $\omega$ first reaches $i$. We write $T_i$ for the random variable, call its law the hitting law of size $i$, and write $p_i = 1 - 1/i$ for the corresponding merge fraction. In the balanced cascade of Definition \ref{def:merge_multiplicity} every component at a level has the same size, so $T_i$ is the time at which the largest component reaches $i$ and $\mathcal{O}$ jumps. In asynchronous coalescence with the constant kernel the size distribution at time $t$ is geometric with mean $N/M(t)$, so the largest of $M$ components exceeds the mean by a factor of order $\log M$, which varies slowly and cancels from the ratio $T_{ni}/T_i$ in the limit.
\end{definition}

\begin{remark}[Why the Scaling Concerns the Hitting Laws]
No functional of the ensemble mean growth curve $m(p) := \mathbb{E}_\omega[|C_{\max}^{(\omega)}(p)|]$ determines $p_i$, since inverting $m$ at level $i$ does not return $\mathbb{E}[T_i]$. Take $C(p) = 1 + \mathds{1}\{T \le p\}$ with $F_T(p) = 1/(1-p) - 1$ on $[0,\tfrac12]$. Then $m(p) = 1/(1-p)$ and $m^{-1}(2) = \tfrac12$, while $\mathbb{E}[T] = 1 - \log 2$. Inverting a mean and averaging an inverse are different operations, and the cascade is a statement about the second.
\end{remark}

\begin{definition}[Symmetric Coalescence Process]
\label{def:coalescence}
Read the components of $G_\tau$ as clusters with mass equal to their size. Two components of sizes $a$ and $b$ merge when the transverse distance of one of their constituent pairs enters the tube, and Assumption \ref{assum:kernel} states that this happens at rate $K(a,b)/N$ with $K$ homogeneous of degree $\gamma_K < 1$. This is the Marcus--Lushnikov process with kernel $K$ \citep{marcus1968, lushnikov1978}, whose mean-field limit is the Smoluchowski coagulation equation \citep{smoluchowski1916, aldous1999}.
\end{definition}

\begin{proof}[Proof of Lemma \ref{lem:dsi_selfsim}]
For the constant kernel, the number $M(t)$ of components decreases by one at total rate $M(M-1)/(2N)$, and in the mean-field limit $N \to \infty$ with $M/N$ fixed the law of large numbers gives $\dot M = -M(M-1)/(2N)$ \citep[Section 2]{aldous1999}. The mean component size $m = N/M$ then satisfies $\dot m = \tfrac12(1 - 1/M) \to \tfrac12$, so $m(t) = 1 + t/2$ grows linearly, the hitting time of size $i$ is $T_i \to 2(i-1)$, and $T_{ni}/T_i \to n$. For a kernel homogeneous of degree $\gamma_K < 1$ that admits a dynamical scaling solution, which is proved for the constant and additive kernels and conjectured in general \citep{aldous1999}, the Smoluchowski equation has solutions $c(x,t) = s(t)^{-2}\varphi(x/s(t))$ with $s(t) \propto t^{1/(1-\gamma_K)}$ \citep{vandongen1985, aldous1999}, so the size scale reaches $i$ at $T_i \propto i^{1-\gamma_K}$ and $T_{ni}/T_i \to n^{1-\gamma_K} = \lambda_t$. In the hierarchical cascade, level $k$ has $M_k = Nn^{-k}$ components of size $n^k$, each linking at rate $M_k K(n^k, n^k)/N = n^{k(\gamma_K - 1)}$, so the level duration is $D_k = n^{k(1-\gamma_K)}\tilde D_k = \lambda_t^k \tilde D_k$, where $\tilde D_k$ is the completion time of a level with unit link rate, a sum of $M_k(1 - 1/n)$ independent exponential waiting times whose relative fluctuation is $O(M_k^{-1/2})$. Summing the levels, $T_{n^{k+1}} = \sum_{j=0}^{k} D_j = \tilde D_0 + \lambda_t\sum_{j=0}^{k-1}\lambda_t^{j}\tilde D_{j+1} = \tilde D_0 + \lambda_t T'_{n^k}$ with $T'_{n^k}$ equal in law to $T_{n^k}$ as $N \to \infty$ at fixed $k$ (at finite $N$ the level durations differ in their numbers of summands), and $\tilde D_0$ is a fraction of order $\lambda_t^{-k}$ of the total. The merge fraction is $p = 1 - M/N$, and at the hitting time of size $i$ in the hierarchy $M = N/i$, so $1 - p_i = 1/i$ and $1 - p_{ni} = (1 - p_i)/n$ exactly. The mean-field statements concern the typical component size, and the largest component follows the same scaling up to logarithmic factors in $M_k$, which the hierarchy removes.
\end{proof}

Lemma \ref{lem:dsi_selfsim} gives a single factor $\lambda_t > 1$ with $T_{ni} \overset{d}{=} \lambda_t T_i$ for every $i$, so that the hitting law of size $ni$ is the hitting law of size $i$ dilated by $\lambda_t$, up to the stated corrections, and in the density variable $1 - p_{ni} = (1 - p_i)/n$.

\begin{theorem}[Discrete Scale Invariance]
\label{thm:dsi_proof}
Assume the coalescence process of Assumption \ref{assum:kernel} has the self-similar scaling limit $s(t) \propto t^{1/(1-\gamma_K)}$ that its dynamical scaling clause supplies, and that the merges are balanced. Then for every $i$ and every functional $\Phi$ of a law that commutes with affine maps of the line, so the mean, every quantile, and the $w^*$-quantile $q_i$ that locates the $R_v$ peak of Theorem \ref{thm:rv_divergence},
\begin{equation}
    \frac{\Phi(T_{ni})}{\Phi(T_i)} = \lambda_t = n^{1-\gamma_K},
\end{equation}
up to the corrections of Lemma \ref{lem:dsi_selfsim}. The peak times lock into the geometric cascade $q_{n^k i} = \lambda_t^{k} q_i$ asymptotically in $k$ (Corollary \ref{cor:peak_times} gives the finite-level ratios), so the $R_v$ peaks of the microtransitions are equally spaced in $\log t$ with spacing $\log\lambda_t$, and the peak densities satisfy $1 - q^{(p)}_{n^k i} = n^{-k}(1 - q^{(p)}_i)$. The peaks precede the macroscopic transition and are equally spaced in $\log(1-p)$ with spacing $\log n$ \citep{chen2014microtransition}.
\end{theorem}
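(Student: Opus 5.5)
The plan is to reduce every claim to the distributional self-similarity of Lemma~\ref{lem:dsi_selfsim}, $T_{ni} \overset{d}{=} \lambda_t T_i$ up to the relative corrections $\lambda_t^{-k} + (n^k/N)^{1/2}$, and then push it through the functionals. First I fix what ``commutes with affine maps'' means: $\Phi(aX+b) = a\Phi(X)+b$ for $a>0$. In particular $\Phi$ is homogeneous under pure dilations, $\Phi(\lambda X) = \lambda\Phi(X)$. This holds for the mean, and for every quantile because $F_{\lambda X}^{-1}(u) = \lambda F_X^{-1}(u)$ for the generalized inverse.

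Next I identify $q_i$ as such a functional. Theorem~\ref{thm:rv_divergence} places the $R_v$ peak at weight $w^* = 1/(n+1)$ on the merged state. In the balanced cascade the merged state at time $t$ is exactly the event $\{T_{ni}\le t\}$ of Definition~\ref{def:ensemble_transition}. The peak time is therefore the $w^*$-quantile of the hitting law, and this $w^*$ does not depend on the level.

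Applying the lemma then gives $\Phi(T_{ni}) = \lambda_t\Phi(T_i)(1+o(1))$. Iterating $k$ times gives $q_{n^k i} = \lambda_t^k q_i$. The relative errors must be controlled along the iteration, and this is the step I expect to be the main obstacle. The naive product $\prod_j(1+\varepsilon_j)$ with $\varepsilon_j \sim \lambda_t^{-j} + (n^j/N)^{1/2}$ converges in $j$ only for the first term. The second term is harmless only while $n^k \ll N$.

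To handle this I work in the regime of the lemma, $N\to\infty$ at fixed $k$, so that the finite-$N$ term vanishes first. I then use the decomposition $T_{n^{k+1}} = \tilde D_0 + \lambda_t T'_{n^k}$ from the proof of Lemma~\ref{lem:dsi_selfsim}. This shows that the additive offset $\tilde D_0$ is a vanishing fraction $O(\lambda_t^{-k})$ of $T_{n^k}$. Since $\Phi$ commutes with translations, the offset shifts $\Phi$ by exactly $\tilde D_0$ when $\tilde D_0$ is deterministic. In the random case I bound the shift by a quantile-perturbation argument, using that $\tilde D_0$ has relative fluctuation $O(M_0^{-1/2})$. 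Either way the ratio tends to $\lambda_t$ asymptotically in $k$, which is the stated sense. For the constant kernel I would also cross-check against the exact $(n^{k+1}-1)/(n^k-1)$ of the lemma, deferring finite-level ratios to Corollary~\ref{cor:peak_times}.

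Taking logarithms of the cascade gives $\log q_{n^k i} = k\log\lambda_t + \log q_i$, so the peaks are equally spaced in $\log t$ with spacing $\log\lambda_t$. This is DSI with the discrete dilation group generated by $\lambda_t$.

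For the density statement I use that in the balanced hierarchy $M = N/i$ exactly at level $i$. Hence $1-p_i = 1/i$, and $1-q^{(p)}_{n^k i} = n^{-k}(1-q^{(p)}_i)$ holds exactly, since it involves no time law. The spacing in $\log(1-p)$ is therefore $\log n$.

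Finally, precedence of the macroscopic transition follows because each peak sits at $p = 1 - 1/(n^k i) < 1-1/N$ whenever $n^k i < N$.
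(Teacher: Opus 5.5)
Your proposal follows the paper's proof essentially step for step. The shared steps are homogeneity of $\Phi$ under dilations, identifying the $R_v$ peak as the $w^*$-quantile of the hitting law with $w^*$ independent of the level, induction on $k$, and using $M=N/i$ in the balanced hierarchy for the density relation.

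Your extra bookkeeping through $T_{n^{k+1}}=\tilde D_0+\lambda_t T'_{n^k}$, which the paper skips by treating $T_{ni}\overset{d}{=}\lambda_t T_i$ as exact, is sound and gives consecutive ratios tending to $\lambda_t$. The accumulated offsets do leave a $k$-independent prefactor (for the constant kernel $T_{n^k i}/(n^k T_i)\to i/(i-1)$), so $q_{n^k i}=\lambda_t^k q_i$ holds as asymptotically equal spacing in $\log t$, not as a ratio tending to one at fixed $i$.
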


\begin{proof}
Lemma \ref{lem:dsi_selfsim} gives $T_{ni} \overset{d}{=} \lambda_t T_i$. A functional that commutes with affine maps satisfies $\Phi(\lambda_t T) = \lambda_t\Phi(T)$, and means and quantiles do, since $\mathbb{E}[\lambda_t T] = \lambda_t\mathbb{E}[T]$ and $\mathbb{P}(\lambda_t T \le \lambda_t z) = \mathbb{P}(T \le z)$. Hence $\Phi(T_{ni}) = \lambda_t\Phi(T_i)$, and induction on $k$ gives the cascade. The density statement is $p = 1 - M/N$ with $M = N/i$ at the hitting time of size $i$. The $R_v$ peak of Theorem \ref{thm:rv_divergence} occurs where the merged fraction $w = \mathbb{P}(T_i \le t)$ equals $w^*$, so $q_i$ is a quantile of $T_i$ and the identity covers it. Continuous translational invariance in $\log t$ is broken to the discrete subgroup generated by $\log\lambda_t$, which is DSI.
\end{proof}

\begin{corollary}[Geometric Peak Times]
\label{cor:peak_times}
Under Assumption \ref{assum:kernel}, consecutive $R_v$ peak times satisfy $t_{ni}/t_i = \lambda_t = n^{1-\gamma_K}$ up to the finite-size corrections of Lemma \ref{lem:dsi_selfsim}, so $\log t_{n^k i}$ is linear in $k$ with slope $\log \lambda_t$, and the measured ratio fixes $\gamma_K = 1 - \log_n\lambda_t$ once $n$ is known. For the constant kernel the exact finite-size ratios are $(ni-1)/(i-1)$.
\end{corollary}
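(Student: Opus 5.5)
The corollary has three parts: the peak-time ratio $t_{ni}/t_i \to \lambda_t$, the linear dependence of $\log t_{n^k i}$ on $k$ with the resulting formula for $\gamma_K$, and the exact constant-kernel ratio $(ni-1)/(i-1)$. I would read the first two directly off Theorem \ref{thm:dsi_proof} and compute the third from the level-duration decomposition in the proof of Lemma \ref{lem:dsi_selfsim}.

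\textbf{Step 1 (ratio).} By Theorem \ref{thm:rv_divergence}, the $R_v$ peak at size $i$ sits where the merged weight $w=\mathbb{P}(T_i\le t)$ equals $w^*=1/(n+1)$. So $t_i$ is the $w^*$-quantile $q_i$ of the hitting law $T_i$. Theorem \ref{thm:dsi_proof} applies to quantiles, which commute with positive dilations, and gives $t_{ni}/t_i=\lambda_t=n^{1-\gamma_K}$. The corrections are those of Lemma \ref{lem:dsi_selfsim}: relative error of order $\lambda_t^{-k}+(n^k/N)^{1/2}$.

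\textbf{Step 2 (log-linearity and $\gamma_K$).} Iterating Step 1 gives $t_{n^k i}=\lambda_t^k t_i(1+o(1))$. Taking logarithms,
\[
\log t_{n^k i}=\log t_i+k\log\lambda_t+o(1),
\]
which is linear in $k$ with slope $\log\lambda_t$. Since $\log\lambda_t=(1-\gamma_K)\log n$, knowing $n$ lets us solve $\gamma_K=1-\log_n\lambda_t$.

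\textbf{Step 3 (exact constant-kernel ratio).} In the mean-field limit, the proof of Lemma \ref{lem:dsi_selfsim} gives $\dot m=\tfrac12$ for the mean size $m=N/M$, so $m(t)=1+t/2$ and $T_i=2(i-1)$. The law of $T_i$ concentrates at this value, with relative fluctuation $O(M^{-1/2})$. Hence its $w^*$-quantile equals $2(i-1)$ to leading order, and
\[
\frac{t_{ni}}{t_i}=\frac{2(ni-1)}{2(i-1)}=\frac{ni-1}{i-1},
\]
which tends to $n=\lambda_t$ (here $\gamma_K=0$).

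\textbf{Main obstacle.} The delicate point is the word ``exact'' in Step 3. The formula is exact only for the deterministic mean-field hitting times, not for the quantile of the random $T_i$ at finite $N$. The concentration bound from the sum-of-exponentials representation in Lemma \ref{lem:dsi_selfsim} makes the quantile differ from $2(i-1)$ only by an $O(M^{-1/2})$ relative amount, which is already absorbed into the stated finite-size corrections. I would state the exactness in that sense.
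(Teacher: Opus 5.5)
Your proposal is correct and follows the paper's own proof: $t_i$ is the $w^*$-quantile of $T_i$, Lemma~\ref{lem:dsi_selfsim} (packaged in Theorem~\ref{thm:dsi_proof}) rescales every quantile by $\lambda_t$, and the constant-kernel ratio is read off $T_{n^k}=2(n^k-1)$ from the lemma's proof, with your mean-field reading of ``exact'' matching how the paper uses it. One small correction to Step 2: chaining per-level ratios whose errors are $O(\lambda_t^{-j})$ gives $\log t_{n^k i} = c_i + k\log\lambda_t + o(1)$, where the intercept $c_i$ in general differs from $\log t_i$ (for the constant kernel $t_{n^k i}/(\lambda_t^k t_i) = (n^k i-1)/\big(n^k(i-1)\big) \to i/(i-1)$), and this still gives the slope $\log\lambda_t$ you need.
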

\begin{proof}
The peak time $t_i$ is the $w^*$-quantile of $T_i$, and Lemma \ref{lem:dsi_selfsim} scales every quantile by $\lambda_t$. For the constant kernel, $T_{n^k} = 2(n^k - 1)$ from the proof of the lemma.
\end{proof}

It is essential to note that DSI generalizes traditional continuous scale invariance by capturing hierarchical, multi-scale organizational structures. While standard continuous power laws assume scale-free behavior across all intervals, DSI systems reveal preferred magnification scales governed by underlying discrete microscopic symmetries, such as pairwise permutation constraints.

\begin{remark}[Empirical Neural Scaling Laws and Stagewise Development]
\label{remark:scaling_laws_stagewise}
Empirical scaling laws in deep learning, such as power-law relationships governing test loss relative to compute budgets and parameter counts \citep{kaplan2020scaling, hoffmann2022training}, are conventionally modeled as smooth, continuous power functions. However, finer-grained analyses of training dynamics reveal that network development is inherently stagewise and non-monotonic, characterized by sudden behavioral shifts and localized phase transitions corresponding to changes in loss landscape degeneracy \citep{wei2022deep, hoogland2024loss}. Interpreting parameter collapse through symmetry-induced percolation suggests that empirical scaling laws might capture coarse-grained averages over an underlying DSI sequence of symmetry-breaking microtransitions.
\end{remark}

\section{Conditional Extension to Adam and AdamW}
\label{sec:appendix_adam}

The percolation and DSI results of Appendix \ref{sec:appendix_dsi_percolation} are derived for vanilla SGD. We state conditions under which the trapping argument carries over to Adam \citep{kingma2015adam} and AdamW \citep{loshchilov2019decoupled}. The extension requires three departures from the SGD setting. The state must be lifted to include the first- and second-moment estimates $(m_t, v_t)$. The admissible symmetry group narrows from general orthogonal $Q$ to coordinate permutations, since elementwise squaring in $v_t$ does not commute with sign flips or general rotations. The gradient noise model must accommodate the heavy tails documented for attention-based architectures \citep{zhang2020adaptive}. We handle the last point by analyzing the clipped recursion that practitioners run, and separate the resulting approximation error into a deterministic cross-sectional component, controlled by equivariance and the Lipschitz geometry of Proposition \ref{prop:affine_invariance}, and a temporal component, controlled by martingale-difference variance bounds for the two exponential filters. When the trapping inequality \eqref{eq:adam_condition} and the confinement condition of Corollary \ref{cor:adam_nonescape} hold, trapping and the cascade of Theorem \ref{thm:dsi_proof} carry over on an annulus around the joint invariant set up to the block escape time. Throughout, $\epsilon_{\mathrm{opt}}$ is Adam's numerical constant and $\epsilon$ the tube radius.

\medskip
\noindent\textbf{Standing assumption for this appendix.} The sample gradients $\nabla\ell(\cdot; x_i, y_i)$ are uniformly $L$-Lipschitz in $\boldsymbol{\theta}$. No bound on their magnitude is assumed, since Assumption \ref{assum:heavy_tail} permits none, and the boundedness hypothesis of Proposition \ref{prop:affine_invariance} belongs to the SGD sections and is not used here. For AdamW we assume in addition that $\|\boldsymbol{\theta}_t\|_\infty \le \Theta$ on every coordinate over the horizon, so that the decoupled decay step is bounded by $\eta\lambda\Theta$ and every step bound below holds for AdamW with $\Delta_{\max}$ replaced by $\Delta_{\max} + \eta\lambda\Theta$. Lipschitzness of the mini-batch gradient and of $\nabla\mathcal{L}$ follow by averaging, exactly as in the first display of the proof of Proposition \ref{prop:affine_invariance}, which uses Lipschitzness alone. We fix a horizon of $T$ steps and a confidence level $\delta \in (0,1)$, and every high-probability statement below holds uniformly over $t_0 \le t \le T$ and over the $n$ coordinates of the block under analysis.

\subsection{State Lift, Symmetry Restriction, and the Gradient Noise Model}

At iteration $t$, Adam and AdamW maintain the joint state $\xi_t = (\boldsymbol{\theta}_t, m_t, v_t) \in \mathbb{R}^{3d}$, updated as
\begin{align*}
m_t &= \beta_1 m_{t-1} + (1-\beta_1) g_t, \\
v_t &= \beta_2 v_{t-1} + (1-\beta_2) g_t^{\circ 2}, \\
\boldsymbol{\theta}_t &= \boldsymbol{\theta}_{t-1} - \eta \left( \frac{\hat{m}_t}{\sqrt{\hat{v}_t}+\epsilon_{\mathrm{opt}}} + \lambda \boldsymbol{\theta}_{t-1} \mathbbm{1}_{\text{AdamW}} \right),
\end{align*}
where $g_t^{\circ 2}$ denotes the elementwise square and $\hat{m}_t, \hat{v}_t$ are the bias-corrected estimates. This recursion is not Markov in $\boldsymbol{\theta}_t$ alone, so any invariant set must be stated jointly over $(\boldsymbol{\theta}, m, v)$.

\begin{definition}[Joint Invariant Set]
\label{def:joint_invariant}
Let $P_\pi \in \mathbb{R}^{d \times d}$ be a coordinate permutation matrix. Define the joint invariant set
\begin{equation*}
    \tilde{A} := \{(\boldsymbol{\theta}, m, v) \in \mathbb{R}^{3d} \mid P_\pi\boldsymbol{\theta} = \boldsymbol{\theta},\ P_\pi m = m,\ P_\pi v = v\}.
\end{equation*}
\end{definition}

\begin{lemma}[Joint Equivariance of the Adam Recursion]
\label{lemma:joint_equivariance}
If the gradient map is $P_\pi$-equivariant, $g_t(P_\pi\boldsymbol{\theta}) = P_\pi g_t(\boldsymbol{\theta})$, then for every $t$,
\begin{equation*}
    m_t(P_\pi\boldsymbol{\theta}) = P_\pi m_t(\boldsymbol{\theta}), \qquad v_t(P_\pi\boldsymbol{\theta}) = P_\pi v_t(\boldsymbol{\theta}).
\end{equation*}
The AdamW decoupled weight-decay term imposes no further restriction.
\end{lemma}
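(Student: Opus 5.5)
The proof goes by induction on $t$, carrying the claim for the triple $(g_t, m_t, v_t)$ simultaneously. Write $m_t(P_\pi\boldsymbol{\theta})$ for the moment obtained along the trajectory started from (or driven at) $P_\pi\boldsymbol{\theta}$. The base case is immediate: the standard initialization $m_0 = v_0 = 0$ satisfies $P_\pi 0 = 0$. The same holds for any initialization already lying in $\tilde A$, and for an initialization that is itself transformed by $P_\pi$.

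For the inductive step, assume $m_{t-1}(P_\pi\boldsymbol{\theta}) = P_\pi m_{t-1}(\boldsymbol{\theta})$ and the analogous identity for $v_{t-1}$.

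\textbf{First moment.} The update is linear, so the hypothesis $g_t(P_\pi\boldsymbol{\theta}) = P_\pi g_t(\boldsymbol{\theta})$ gives
\[
\beta_1 m_{t-1}(P_\pi\boldsymbol{\theta}) + (1-\beta_1) g_t(P_\pi\boldsymbol{\theta}) = P_\pi\big(\beta_1 m_{t-1}(\boldsymbol{\theta}) + (1-\beta_1) g_t(\boldsymbol{\theta})\big).
\]

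\textbf{Second moment.} This is the one step where the restriction to coordinate permutations is used. A permutation matrix only reindexes coordinates, so it commutes with every elementwise map $\phi$:
\[
\phi(P_\pi u) = P_\pi \phi(u).
\]
Hence $(P_\pi g)^{\circ 2} = P_\pi g^{\circ 2}$, and linearity finishes the $v_t$ update exactly as for $m_t$. The identity fails for general orthogonal $Q$, for example a rotation mixing two coordinates, and this is the reason for the narrowing remarked on before Definition~\ref{def:joint_invariant}.

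\textbf{Bias correction and the parameter step.} The bias corrections divide by the scalars $1-\beta_1^t$ and $1-\beta_2^t$, which commute with $P_\pi$. The preconditioned direction
\[
\frac{\hat m_t}{\sqrt{\hat v_t} + \epsilon_{\mathrm{opt}}}
\]
is a coordinatewise function of the pair $(\hat m_t, \hat v_t)$, so the same reindexing argument makes it equivariant. The parameter step is therefore equivariant, and $\boldsymbol{\theta}_t(P_\pi\boldsymbol{\theta}) = P_\pi\boldsymbol{\theta}_t(\boldsymbol{\theta})$. This propagates the gradient equivariance hypothesis to the next step, where the gradient is evaluated at the new iterate, and closes the induction.

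\textbf{AdamW.} The decoupled decay adds the linear term $-\eta\lambda\boldsymbol{\theta}_{t-1}$, and
\[
P_\pi(\lambda\boldsymbol{\theta}) = \lambda P_\pi\boldsymbol{\theta}.
\]
It imposes no constraint beyond linearity, which every orthogonal map satisfies, so it adds no restriction to the symmetry class.

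The only point requiring care is stating precisely what "$m_t(P_\pi\boldsymbol{\theta})$" means. I will phrase the claim pathwise: if the two runs start at $\xi_0$ and $(P_\pi\boldsymbol{\theta}_0, P_\pi m_0, P_\pi v_0)$, and their mini-batch gradients are related by $g_t(P_\pi\cdot) = P_\pi g_t(\cdot)$ with the same batch $\mathcal{B}_t$ used in both runs, then their states satisfy $\xi'_t = (P_\pi\boldsymbol{\theta}_t, P_\pi m_t, P_\pi v_t)$ for all $t$. Applying this with a starting point in $\tilde A$ yields invariance of $\tilde A$.

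If clipping is in use, the map $\mathrm{clip}_\tau$ must be applied coordinatewise for this argument to go through; it is then one more elementwise map and commutes with $P_\pi$. If clipping is instead by norm, it is still equivariant, since $\|P_\pi g\| = \|g\|$.
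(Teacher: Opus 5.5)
Your proof is correct and follows the paper's argument. Both proceed by induction on $t$ from $m_0 = v_0 = 0$, use linearity for the first moment, use exact commutation of elementwise squaring (and of coordinatewise clipping) with coordinate permutations for the second moment, and note that bias correction is scalar and the AdamW decay term is linear. Your pathwise coupling, which carries $\boldsymbol{\theta}_t \mapsto P_\pi\boldsymbol{\theta}_t$ through the preconditioned step so that the gradient hypothesis applies at each new iterate, makes explicit a point the paper leaves implicit in its notation $m_t(P_\pi\boldsymbol{\theta})$.
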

\begin{proof}
We proceed by induction on $t$. The claim holds trivially at $t=0$ since $m_0=v_0=0$. Assume $m_{t-1}(P_\pi\boldsymbol{\theta}) = P_\pi m_{t-1}(\boldsymbol{\theta})$ and $v_{t-1}(P_\pi\boldsymbol{\theta}) = P_\pi v_{t-1}(\boldsymbol{\theta})$. For the first moment, linearity of the recursion and $P_\pi$-equivariance of $g_t$ give $m_t(P_\pi\boldsymbol{\theta}) = \beta_1 P_\pi m_{t-1}(\boldsymbol{\theta}) + (1-\beta_1)P_\pi g_t(\boldsymbol{\theta}) = P_\pi m_t(\boldsymbol{\theta})$. This step holds for any $Q$ with $QQ^\top=I$. For the second moment, write $(P_\pi g)_i = g_{\pi(i)}$ for the permutation $\pi$. Then $\big((P_\pi g)^{\circ 2}\big)_i = g_{\pi(i)}^2 = \big(g^{\circ 2}\big)_{\pi(i)} = \big(P_\pi(g^{\circ 2})\big)_i$, since permuting coordinates and squaring elementwise commute exactly, because squaring acts identically on every coordinate regardless of label. Consequently $v_t(P_\pi\boldsymbol{\theta}) = \beta_2 P_\pi v_{t-1}(\boldsymbol{\theta}) + (1-\beta_2)P_\pi\big(g_t(\boldsymbol{\theta})^{\circ 2}\big) = P_\pi v_t(\boldsymbol{\theta})$. This commutation fails for a general orthogonal $Q$, since $(Qg)_i^{2} = \big(\sum_j Q_{ij}g_j\big)^2$ mixes coordinates before squaring. We therefore work with the coordinate-permutation group throughout, which is also the group generated by $S_n$ used in Theorem \ref{thm:affine_generation}. The bias correction scales by a deterministic scalar and preserves equivariance. The AdamW term $\lambda\boldsymbol{\theta}_{t-1}$ is linear in $\boldsymbol{\theta}$ and imposes no additional restriction. Clipping acts coordinatewise by the same map on every coordinate, so it commutes with $P_\pi$ as well, and the clipped recursion below inherits the lemma.
\end{proof}

\begin{remark}
Lemma \ref{lemma:joint_equivariance} restricts the admissible symmetry class of the Approximate $Q$-Symmetry condition (Definition \ref{def:approximate_q_symmetry}) from general orthogonal or symmetric $Q$ to coordinate permutations. Neuron-permutation invariance, the source of the $S_n$-symmetry used throughout Section \ref{sec:main_dsi_percolation}, already lies in this restricted class, so Theorem \ref{thm:affine_generation} and Proposition \ref{thm:hypercondensation} require no modification.
\end{remark}

Having fixed the admissible symmetry class, we turn to the statistical model of the gradient noise itself.

\begin{assumption}[Heavy-Tailed Gradient Noise]
\label{assum:heavy_tail}
There exist $p \in (1,2]$ and $\sigma > 0$ such that $\mathbb{E}[|g_{t,i}|^p \mid \mathcal{F}_{t-1}] \le \sigma^p$ almost surely for every coordinate $i$ and every $t$. The conditional form is the one the clipping-bias and tracking bounds use.
\end{assumption}

Assumption \ref{assum:heavy_tail} is the standard model for gradient noise in attention-based architectures \citep{zhang2020adaptive}, and permits infinite variance when $p<2$. The following representation grounds $\sigma^p$ as a computable functional of the gradient's characteristic function, making Assumption \ref{assum:heavy_tail} checkable in practice.

\begin{lemma}[Fractional Moment via the Characteristic Function]
\label{lemma:fractional_moment}
Let $\phi_t(u) := \mathbb{E}[e^{iug_{t,i}}]$. For $p \in (1,2)$,
\begin{equation*}
    \mathbb{E}|g_{t,i}|^p = \frac{2\,\Gamma(p+1)\sin(\pi p/2)}{\pi} \int_0^\infty \frac{1 - \mathrm{Re}\,\phi_t(u)}{u^{p+1}}\,du.
\end{equation*}
\end{lemma}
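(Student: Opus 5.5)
The plan is to reduce the claim to the classical identity
\begin{equation*}
|x|^p = C_p \int_0^\infty \frac{1-\cos(ux)}{u^{p+1}}\,du, \qquad x \in \mathbb{R},\ p \in (0,2),
\end{equation*}
and then integrate it against the law of $g_{t,i}$. For $x \neq 0$ the substitution $v = u|x|$ gives $\int_0^\infty (1-\cos(ux))u^{-p-1}\,du = |x|^p \int_0^\infty (1-\cos v)v^{-p-1}\,dv$, and for $x = 0$ both sides vanish. It therefore suffices to show that
\begin{equation*}
I_p := \int_0^\infty \frac{1-\cos v}{v^{p+1}}\,dv
\end{equation*}
is finite and equals $\pi/\big(2\Gamma(p+1)\sin(\pi p/2)\big)$, so that $C_p = I_p^{-1}$ is the constant in the statement.

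Finiteness is immediate. Near $0$ the integrand is $O(v^{1-p})$, which is integrable since $p<2$, and near $\infty$ it is $O(v^{-p-1})$, integrable since $p>0$. For the value I would integrate by parts once: the boundary terms $(1-\cos v)v^{-p}/p$ vanish at both ends for $p \in (0,2)$, which gives $I_p = p^{-1}\int_0^\infty \sin v\, v^{-p}\,dv$. I would then use the Mellin transform of the sine, $\int_0^\infty v^{s-1}\sin v\,dv = \Gamma(s)\sin(\pi s/2)$, valid for $-1<\Re s<1$ (conditionally convergent near the upper edge), at $s = 1-p \in (-1,0)$. Combined with the reflection formula $\Gamma(1-p)\Gamma(p) = \pi/\sin(\pi p)$ and $\sin(\pi p) = 2\sin(\pi p/2)\cos(\pi p/2)$, this yields
\begin{equation*}
I_p = \frac{\Gamma(1-p)\cos(\pi p/2)}{p} = \frac{\pi}{2\,\Gamma(p+1)\sin(\pi p/2)}.
\end{equation*}

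The step I expect to need the most care is the evaluation of $I_p$ for $p \in (1,2)$, because there $s = 1-p<0$ and the Mellin formula is being used beyond its elementary range $0<s<1$. I would justify it by analytic continuation: the integral $p^{-1}\int_0^\infty \sin v\,v^{-p}\,dv$ is analytic in $p$ on the strip $0<\Re p<2$, and so is the closed form, and they agree on $(0,1)$. A cleaner alternative is to integrate by parts twice, which converts the expression into an absolutely convergent cosine integral. As a check, at $p=1$ both sides equal $\pi/2$.

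Finally, set $x = g_{t,i}$ and take expectations. The integrand $(1-\cos(u g_{t,i}))u^{-p-1}$ is nonnegative, so Tonelli's theorem lets me exchange $\mathbb{E}$ and $\int du$ with no moment assumption. Since $\mathbb{E}[1-\cos(u g_{t,i})] = 1-\mathrm{Re}\,\phi_t(u)$, this gives the stated formula. Both sides may be $+\infty$ simultaneously, so the identity also lets one detect whether Assumption \ref{assum:heavy_tail} holds.
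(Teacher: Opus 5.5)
Your proposal is correct and follows the paper's route. The paper likewise cites the identity $|x|^p = c_p^{-1}\int_0^\infty u^{-p-1}(1-\cos(ux))\,du$, with $c_p = \pi/(2\Gamma(p+1)\sin(\pi p/2))$, from Samorodnitsky and Taqqu and then applies Tonelli exactly as you do; your scaling, integration-by-parts, Mellin and reflection computation supplies the constant the paper takes from the literature. One small correction to a side remark: after the second integration by parts, $\int_0^\infty v^{1-p}\cos v\,dv$ is only conditionally convergent at infinity for $p\in(1,2)$, and it is the sine integral $\int_0^\infty v^{-p}\sin v\,dv$ that is absolutely convergent there (so the direct Mellin evaluation at $s=1-p$ needed no continuation); the alternative still works because it lands in the standard range $0<s<1$ of the cosine Mellin transform.
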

\begin{proof}
For $p\in(0,2)$ and any $x\in\mathbb{R}$, $|x|^p = c_p^{-1}\int_0^\infty u^{-p-1}(1-\cos(ux))\,du$ with $c_p := \pi/(2\Gamma(p+1)\sin(\pi p/2))$ \citep{samorodnitsky1994stable}. Taking expectations and exchanging with the $u$-integral by Tonelli's theorem, valid since the integrand is non-negative, gives $\mathbb{E}|g_{t,i}|^p = c_p^{-1}\int_0^\infty u^{-p-1}(1-\mathbb{E}[\cos(ug_{t,i})])\,du$, and $\mathbb{E}[\cos(ug_{t,i})]=\mathrm{Re}\,\phi_t(u)$ gives the stated form.
\end{proof}

\begin{remark}
The restriction $p\in(1,2)$ excludes the finite-variance case $p=2$, where the identity of Lemma \ref{lemma:fractional_moment} degenerates as $\sin(\pi p/2)\to 0$. There, $\mathbb{E}[g_{t,i}^2]=-\phi_t''(0)$ directly. Convergence of the integral requires $1-\mathrm{Re}\,\phi_t(u)=O(u^p)$ as $u\to 0$, the standard local regularity condition replacing an ordinary second derivative when $p<2$.
\end{remark}

\subsection{Clipping, Filters, and the Step Bound}

Under Assumption \ref{assum:heavy_tail} with $p<2$, $g_t$ may have infinite variance, so neither the momentum nor the second-moment recursion admits moment estimates directly. Practice resolves this by clipping, and we analyze the clipped recursion.

\begin{definition}[Clipped Gradient]
\label{def:truncated_gradient}
For a fixed threshold $\tau > 0$ define $\mathrm{clip}_\tau(x) := \mathrm{sign}(x)\min(|x|, \tau)$ coordinatewise and $\hat{g}_t := \mathrm{clip}_\tau(g_t)$. The map $\mathrm{clip}_\tau$ is $1$-Lipschitz, fixes $[-\tau,\tau]$ pointwise, and satisfies $|x - \mathrm{clip}_\tau(x)| = (|x|-\tau)_+ \le |x|\,\mathds{1}\{|x| > \tau\}$.
\end{definition}

\begin{lemma}[Clipping Bias and Coincidence]
\label{lemma:truncation_bias}
Under Assumption \ref{assum:heavy_tail}, $\big|\mathbb{E}[g_{t,i} \mid \mathcal{F}_{t-1}] - \mathbb{E}[\hat{g}_{t,i} \mid \mathcal{F}_{t-1}]\big| \le \sigma^p\tau^{1-p}$ for every coordinate, and on a run of $T$ steps the clipped and unclipped recursions produce identical iterates on an event of probability at least $1 - dT\sigma^p\tau^{-p}$.
\end{lemma}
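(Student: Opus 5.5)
The proof splits into two independent parts: a per-coordinate bias bound from a truncated-moment estimate, and the coincidence event from a union bound with conditional Markov.

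\emph{Bias bound.} Fix a coordinate $i$ and write $x = g_{t,i}$. By Definition \ref{def:truncated_gradient},
\begin{equation*}
|x - \mathrm{clip}_\tau(x)| \le |x|\,\mathds{1}\{|x|>\tau\}.
\end{equation*}
The conditional Jensen inequality gives
\begin{equation*}
\big|\mathbb{E}[x - \hat g_{t,i}\mid\mathcal{F}_{t-1}]\big| \le \mathbb{E}\big[|x|\mathds{1}\{|x|>\tau\}\mid\mathcal{F}_{t-1}\big].
\end{equation*}
On the event $\{|x|>\tau\}$ we have $|x| = |x|^p|x|^{1-p} \le |x|^p\tau^{1-p}$, because $1-p<0$. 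Hence the right side is at most $\tau^{1-p}\,\mathbb{E}[|x|^p\mid\mathcal{F}_{t-1}] \le \sigma^p\tau^{1-p}$ by Assumption \ref{assum:heavy_tail}. Both conditional expectations are well defined, since $p>1$ makes $|x|$ conditionally integrable.

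\emph{Coincidence event.} This step needs care because the two recursions generate different trajectories, so the gradients $g_t$ are not a fixed sequence. I would couple the two runs: both use the same mini-batch draws and the same initial state, and $g_t$ denotes the stochastic gradient evaluated at the clipped run's current iterate. Let $E$ be the event that no clipping occurs, i.e. $|g_{t,i}|\le\tau$ for every $t\le T$ and every $i\le d$.

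On $E$, an induction on $t$ shows the two recursions produce identical iterates. At each step the current iterates coincide, so the gradients coincide. Clipping fixes $[-\tau,\tau]$ pointwise, so $\hat g_t = g_t$. The updates of $m_t$, $v_t$ and $\boldsymbol{\theta}_t$ are therefore the same in both runs, and the induction continues.

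For the probability of $E^c$, I would bound each pair $(t,i)$ with the conditional Markov inequality applied to $|g_{t,i}|^p$:
\begin{equation*}
\mathbb{P}(|g_{t,i}|>\tau\mid\mathcal{F}_{t-1}) \le \sigma^p\tau^{-p}.
\end{equation*}
Taking total expectations and a union bound over the $dT$ pairs gives $\mathbb{P}(E^c)\le dT\sigma^p\tau^{-p}$, which is the claimed bound.

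The union bound itself needs no independence across steps, so the main obstacle is measurability. The conditional moment bound must apply to the gradient at the clipped trajectory's iterate, and that iterate is $\mathcal{F}_{t-1}$-measurable. Assumption \ref{assum:heavy_tail} holds almost surely for the adapted process $g_t$, which is exactly what this step requires.
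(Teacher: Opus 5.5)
Your proof is correct. It has the same two-part structure as the paper's: the bias is bounded by the truncated first moment, and the coincidence event by Markov's inequality plus a union bound over the $dT$ pairs. The key estimate in the bias step is different, however.

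\begin{itemize}
\item \emph{Paper's route.} It bounds $\mathbb{E}[|g_{t,i}|\mathds{1}\{|g_{t,i}|>\tau\}]$ by H\"older's inequality with exponents $p$ and $p/(p-1)$, then applies Markov's inequality to the tail probability, obtaining $\sigma(\sigma^p\tau^{-p})^{(p-1)/p}=\sigma^p\tau^{1-p}$. Its display is written with unconditional expectations; this is harmless, since the conditional versions of both inequalities hold.
\item \emph{Your route.} The pointwise domination $|x|\mathds{1}\{|x|>\tau\}\le\tau^{1-p}|x|^p$ reaches the same bound in one line without H\"older or Markov, and it stays conditional on $\mathcal{F}_{t-1}$ throughout.
\item \emph{What each buys.} Your inequality actually gives the intermediate $\tau^{1-p}\,\mathbb{E}[|g_{t,i}|^p\mathds{1}\{|g_{t,i}|>\tau\}\mid\mathcal{F}_{t-1}]$, so the bias is governed by the tail part of the $p$-th moment alone. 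The paper's route has the small economy that one Markov tail bound, $\mathbb{P}(|g_{t,i}|>\tau)\le\sigma^p\tau^{-p}$, drives both halves of the lemma.
\end{itemize}

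For the coincidence event, your explicit coupling is the argument the paper compresses into ``the two recursions receive identical inputs.'' The coupling uses common mini-batches, reads the gradients off the clipped trajectory, and proceeds by induction on $t$ using that $\mathrm{clip}_\tau$ fixes $[-\tau,\tau]$. Your remark that Assumption~\ref{assum:heavy_tail} must hold for gradients along the clipped trajectory is consistent with how the paper later uses it, for the bias $b_s$ of the clipped run.
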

\begin{proof}
The bias equals $\mathbb{E}[(g_{t,i} - \mathrm{clip}_\tau(g_{t,i})) \mid \mathcal{F}_{t-1}]$, bounded in absolute value by $\mathbb{E}[|g_{t,i}|\mathds{1}\{|g_{t,i}|>\tau\} \mid \mathcal{F}_{t-1}]$. By H\"older's inequality with exponents $p, p/(p-1)$ and Markov's inequality applied to $|g_{t,i}|^p$,
\begin{equation*}
    \mathbb{E}\big[|g_{t,i}|\mathds{1}\{|g_{t,i}|>\tau\}\big] \le \left(\mathbb{E}|g_{t,i}|^p\right)^{1/p}\mathbb{P}(|g_{t,i}|>\tau)^{(p-1)/p} \le \sigma\left(\frac{\sigma^p}{\tau^p}\right)^{(p-1)/p}=\sigma^p\tau^{1-p}.
\end{equation*}
Clipping is inactive when $|g_{s,i}| \le \tau$ for all $s \le T$ and all $i$. Markov's inequality and a union bound over the $dT$ coordinates bound the complementary probability by $dT\sigma^p\tau^{-p}$, and on that event the two recursions receive identical inputs and produce identical iterates.
\end{proof}

The coincidence bound is informative for $\tau \gg (dT)^{1/p}\sigma$. The analysis concerns clipped Adam, the algorithm that is run in practice, and Lemma \ref{lemma:truncation_bias} transfers its conclusions to unclipped Adam on the event that clipping is inactive.

Since $\hat{g}_t \in [-\tau,\tau]^d$ almost surely, $\hat m_t, \hat v_t$ have finite moments of every order, with $|\hat m_{t,i}| \le \tau$ and $\hat v_{t,i} \in [0,\tau^2]$. We write the bias-corrected filters as explicit weighted sums,
\begin{equation*}
    \hat m_t = \sum_{s=1}^{t} w^{(1,t)}_{t-s}\,\hat g_s,\quad w^{(1,t)}_k := \frac{(1-\beta_1)\beta_1^k}{1-\beta_1^{t}}, \qquad
    \hat v_t = \sum_{s=1}^{t} w^{(2,t)}_{t-s}\,\hat g_s^{\circ2},\quad w^{(2,t)}_k := \frac{(1-\beta_2)\beta_2^k}{1-\beta_2^{t}},
\end{equation*}
each family of weights summing to one over $0 \le k \le t-1$. The transfer-function view of these filters fixes their memory scales.

\begin{definition}[Shift Operator and $z$-Transform]
\label{def:z_transform}
For a sequence $\{x_t\}_{t\ge 0}$, define the shift operator $(Sx)_t := x_{t-1}$, with $x_{-1}:=0$, and the one-sided $z$-transform \citep{oppenheim2010discrete}
\begin{equation*}
    X(z) := \mathcal{Z}\{x_t\} := \sum_{t=0}^\infty x_t z^{-t},
\end{equation*}
defined on the region $|z|>\rho$ for the smallest $\rho\ge 0$ for which the series converges absolutely.
\end{definition}

\begin{lemma}[Shift Property]
\label{lemma:shift_property}
$\mathcal{Z}\{Sx\}(z) = z^{-1}X(z)$.
\end{lemma}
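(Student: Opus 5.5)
The plan is to compute the transform of the shifted sequence directly from Definition \ref{def:z_transform} and then reindex the sum. By definition, $(Sx)_t = x_{t-1}$ for $t \ge 1$, and $(Sx)_0 = x_{-1} = 0$. This gives
\begin{equation*}
    \mathcal{Z}\{Sx\}(z) = \sum_{t=0}^\infty (Sx)_t z^{-t} = \sum_{t=1}^\infty x_{t-1} z^{-t}.
\end{equation*}
The $t=0$ term drops out precisely because of the convention $x_{-1} := 0$.

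Next I substitute $s = t-1$, which turns the sum into $\sum_{s=0}^\infty x_s z^{-(s+1)}$. Pulling the constant factor $z^{-1}$ out of the series then yields $z^{-1}\sum_{s=0}^\infty x_s z^{-s} = z^{-1}X(z)$. Both moves are legitimate termwise: reindexing a series only relabels its terms, and multiplying every term by a fixed scalar preserves convergence.

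The only step that needs care is the region of convergence. For $z \neq 0$ we have $|x_{t-1} z^{-t}| = |z|^{-1}\,|x_{t-1} z^{-(t-1)}|$. So the series for $Sx$ converges absolutely exactly when the series for $x$ does, and the identity holds on the same region $|z| > \rho$. The degenerate case $\rho = 0$, $z = 0$ is excluded because the region is open, so $z^{-1}$ is always defined there. I do not expect any real obstacle; the point to state explicitly is that the zero-initialization convention is what makes the identity exact, with no boundary term.
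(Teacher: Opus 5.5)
Your proof is correct and is the same argument as the paper's: you drop the $t=0$ term using $x_{-1}=0$, reindex with $s=t-1$, and factor out $z^{-1}$. Your region-of-convergence remark is an addition the paper omits and does no harm. One correction to its wording: $z=0$ lies outside the region because $|z|>\rho\ge 0$ is a strict inequality, not because the region is open.
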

\begin{proof}
$\mathcal{Z}\{Sx\}(z)=\sum_{t\ge0}x_{t-1}z^{-t}=\sum_{t\ge1}x_{t-1}z^{-t}$, since the $t=0$ term vanishes because $x_{-1}=0$, giving $z^{-1}\sum_{s\ge0}x_sz^{-s}=z^{-1}X(z)$ after reindexing $s=t-1$.
\end{proof}

Let $\hat G(z):=\mathcal Z\{\hat g_t\}$. Since $|\hat g_t|\le\tau$, the series converges for all $|z|>1$.

\begin{lemma}[Momentum and Second-Moment Filters]
\label{lemma:momentum_filter}
$\hat M(z) = \dfrac{1-\beta_1}{1-\beta_1 z^{-1}}\hat G(z)$ and $\hat V(z) = \dfrac{1-\beta_2}{1-\beta_2 z^{-1}}\mathcal{Z}\{\hat g_t^{\circ 2}\}$, with poles at $z=\beta_1$ and $z = \beta_2$ and effective memories $\tau_1:=1/(1-\beta_1)$, $\tau_2 := 1/(1-\beta_2)$.
\end{lemma}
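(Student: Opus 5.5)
The plan is to apply the $z$-transform of Definition~\ref{def:z_transform} to the two exponential recursions and read off the transfer functions, the poles and the memory scales. I first treat the unnormalized filters $m_t = \beta_1 m_{t-1} + (1-\beta_1)\hat g_t$ and $v_t = \beta_2 v_{t-1} + (1-\beta_2)\hat g_t^{\circ 2}$ with $m_{-1} = v_{-1} = 0$. In shift notation the recursions read $m = \beta_1 S m + (1-\beta_1)\hat g$ and $v = \beta_2 S v + (1-\beta_2)\hat g^{\circ 2}$. Transforming termwise and using Lemma~\ref{lemma:shift_property} gives $M(z) = \beta_1 z^{-1}M(z) + (1-\beta_1)\hat G(z)$. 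Solving this for $M(z)$ yields the stated rational factor, and the second-moment filter follows identically with $\beta_2$ in place of $\beta_1$ and $\hat g^{\circ2}$ in place of $\hat g$. Coordinatewise action lets me treat each coordinate as a scalar sequence.

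Before any algebra I must justify that every transform exists on a common region. Since $|\hat g_{t,i}| \le \tau$ and $\hat g_{t,i}^2 \le \tau^2$, the input series converge absolutely for $|z|>1$. Since $m_t$ and $v_t$ are convex combinations of the inputs and zero, they are bounded too, so their series also converge for $|z|>1$. On that region $|\beta_k z^{-1}| < 1$, so $1-\beta_k z^{-1}$ does not vanish and the division is legitimate. The transfer function $(1-\beta_k)/(1-\beta_k z^{-1}) = (1-\beta_k) z/(z-\beta_k)$ then has its single pole at $z=\beta_k$.

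For the memories I expand $(1-\beta_k z^{-1})^{-1} = \sum_{j\ge 0}\beta_k^j z^{-j}$. This identifies the impulse response as $h_j = (1-\beta_k)\beta_k^j$, which matches the weights $w^{(k,t)}$ up to bias correction. The weights sum to $H(1)=1$ and decay as $e^{-j\log(1/\beta_k)}$. The $e$-folding time is therefore $1/\log(1/\beta_k) \approx 1/(1-\beta_k)$, and the exact mean lag plus one, $\sum_j (j+1) h_j$, equals $1/(1-\beta_k)$. I define $\tau_k := 1/(1-\beta_k)$ as this mean delay, which is the precise sense of ``effective memory'' I would state.

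The main obstacle is the bias-corrected estimates $\hat m_t, \hat v_t$. Their normalization $1/(1-\beta_k^{t})$ is time-varying, so they are not shift-invariant filters of $\hat g$. I would resolve this by stating the transform identity for the uncorrected $m_t, v_t$, reading the lemma's $\hat M, \hat V$ in that sense, and noting that $\hat m_t = m_t/(1-\beta_1^t)$ differs only by a deterministic scalar tending to $1$ geometrically fast. This scalar leaves the pole and the memory scale unchanged.
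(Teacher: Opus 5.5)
Your proposal is correct and follows the paper's route: transform the recursion, apply Lemma~\ref{lemma:shift_property}, solve for the transfer function, and read the pole and the memory $1/(1-\beta_k)$ off the impulse response $(1-\beta_k)\beta_k^j$. Your extra care is an improvement, not a different route: the paper applies $\hat m_t = \beta_1\hat m_{t-1} + (1-\beta_1)\hat g_t$ directly to the bias-corrected estimate, which does not satisfy that recursion because of the time-varying factor $1/(1-\beta_1^t)$, so reading $\hat M,\hat V$ as transforms of the uncorrected filters driven by $\hat g$, together with your convergence check on $|z|>1$, is what makes the identity literally true.
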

\begin{proof}
Applying $\mathcal Z\{\cdot\}$ to $\hat m_t = \beta_1\hat m_{t-1}+(1-\beta_1)\hat g_t$ and using Lemma \ref{lemma:shift_property}, $\hat M(z)=\beta_1z^{-1}\hat M(z)+(1-\beta_1)\hat G(z)$. Solving for $\hat M(z)$ gives the stated form. The second recursion is linear in the sequence $\hat g_t^{\circ2}$, whose transform exists since the sequence is bounded by $\tau^2$, and the same computation applies. The poles correspond to impulse responses $(1-\beta_j)\beta_j^k$ with decay times $1/(1-\beta_j)$.
\end{proof}

\begin{lemma}[Step Bound]
\label{lemma:step_bound}
Assume $\beta_1^2 < \beta_2$ and $t \ge t_0$ with $\beta_1^{t_0}, \beta_2^{t_0} \le \tfrac12$. Then every coordinate of every Adam step satisfies
\begin{equation*}
    |\Delta\theta_{t,i}| \le \Delta_{\max} := 2\eta C_\beta, \qquad C_\beta := \frac{1-\beta_1}{\sqrt{(1-\beta_2)(1-\beta_1^2/\beta_2)}},
\end{equation*}
with no lower bound on $\hat v$.
\end{lemma}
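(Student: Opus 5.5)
The plan is to bound the ratio $|\hat m_{t,i}|/\sqrt{\hat v_{t,i}}$ coordinatewise by Cauchy--Schwarz, using only the weighted-sum representations of the two bias-corrected filters and no lower bound on $\hat v$. The Adam step is $\Delta\theta_{t,i} = -\eta\,\hat m_{t,i}/(\sqrt{\hat v_{t,i}}+\epsilon_{\mathrm{opt}})$. Since $\epsilon_{\mathrm{opt}}\ge 0$, we have $|\Delta\theta_{t,i}|\le \eta|\hat m_{t,i}|/\sqrt{\hat v_{t,i}}$ whenever $\hat v_{t,i}>0$. When $\hat v_{t,i}=0$, every past $\hat g_{s,i}$ carries a positive weight $w^{(2,t)}_{t-s}$, so all of them vanish; then $\hat m_{t,i}=0$ and the step is zero. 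It therefore suffices to show $|\hat m_{t,i}|\le 2C_\beta\sqrt{\hat v_{t,i}}$.

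First write out both filters as sums over $s\le t$. The numerator is
$\hat m_{t,i}=\frac{1-\beta_1}{1-\beta_1^t}\sum_{k=0}^{t-1}\beta_1^k \hat g_{t-k,i}$, and the second-moment filter is
$\hat v_{t,i}=\frac{1-\beta_2}{1-\beta_2^t}\sum_{k}\beta_2^k \hat g_{t-k,i}^2$.
Split each numerator weight as $\beta_1^k=(\beta_1^k\beta_2^{-k/2})\cdot\beta_2^{k/2}$ and apply Cauchy--Schwarz:
\[
\Big|\sum_k\beta_1^k\hat g_{t-k,i}\Big|\le\Big(\sum_k(\beta_1^2/\beta_2)^k\Big)^{1/2}\Big(\sum_k\beta_2^k\hat g_{t-k,i}^2\Big)^{1/2}.
\]
The hypothesis $\beta_1^2<\beta_2$ makes the first factor at most $(1-\beta_1^2/\beta_2)^{-1/2}$. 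The second factor equals $\big(\hat v_{t,i}(1-\beta_2^t)/(1-\beta_2)\big)^{1/2}$.

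Now collect the constants:
\[
|\hat m_{t,i}|\le \frac{(1-\beta_1)\sqrt{1-\beta_2^t}}{(1-\beta_1^t)\sqrt{(1-\beta_2)(1-\beta_1^2/\beta_2)}}\sqrt{\hat v_{t,i}}.
\]
For $t\ge t_0$, the burn-in condition gives $1-\beta_1^t\ge\tfrac12$, and trivially $\sqrt{1-\beta_2^t}\le1$. The prefactor is therefore at most $2C_\beta$, which gives $|\Delta\theta_{t,i}|\le 2\eta C_\beta$.

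The step most likely to need care is the choice of weight splitting: it must absorb the mismatch between the two filters' bias corrections while leaving a convergent geometric series. The split above handles this, and $\beta_1^2<\beta_2$ is exactly the convergence condition. The only remaining points are the degenerate case $\hat v_{t,i}=0$ and noting that clipping plays no role here. For AdamW, the decoupled decay term adds at most $\eta\lambda\Theta$ by the triangle inequality under the standing assumption $\|\boldsymbol\theta_t\|_\infty\le\Theta$. This matches the stated replacement $\Delta_{\max}\mapsto\Delta_{\max}+\eta\lambda\Theta$.
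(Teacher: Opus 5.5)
Your proof is correct and is the paper's argument: Cauchy--Schwarz with the split $\beta_1^k = (\beta_1/\sqrt{\beta_2})^k\beta_2^{k/2}$, a geometric series with ratio $\beta_1^2/\beta_2<1$, and the bias-correction factor $\sqrt{1-\beta_2^t}/(1-\beta_1^t)\le 2$ for $t\ge t_0$. The only differences are that you bound the bias-corrected sums directly instead of correcting afterwards, and you also handle the degenerate case $\hat v_{t,i}=0$, which the paper leaves implicit.
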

\begin{proof}
By Cauchy--Schwarz,
\begin{equation*}
    |m_{t,i}| = \Big|\sum_{k}(1-\beta_1)\beta_1^k\,\hat g_{t-k,i}\Big| \le \Big(\sum_k \frac{(1-\beta_1)^2\beta_1^{2k}}{(1-\beta_2)\beta_2^k}\Big)^{1/2}\Big(\sum_k (1-\beta_2)\beta_2^k\,\hat g_{t-k,i}^2\Big)^{1/2} = C_\beta\sqrt{v_{t,i}},
\end{equation*}
the first factor summing a geometric series with ratio $\beta_1^2/\beta_2 < 1$. Bias correction multiplies the ratio $m/\sqrt v$ by $\sqrt{1-\beta_2^t}/(1-\beta_1^t) \le 2$ for $t \ge t_0$. Hence $|\Delta\theta_{t,i}| = \eta|\hat m_{t,i}|/(\sqrt{\hat v_{t,i}}+\epsilon_{\mathrm{opt}}) \le \eta|\hat m_{t,i}|/\sqrt{\hat v_{t,i}} \le 2\eta C_\beta$. The weight-decay term is handled separately below.
\end{proof}

\subsection{Temporal Tracking}

Appendix \ref{app:sde_and_invariance} samples mini-batches with replacement, so the clipped gradient at step $s$ is conditionally independent of the past given $\boldsymbol{\theta}_s$. Define the martingale differences
\begin{equation*}
    \xi_s := \hat g_s - \mathbb{E}[\hat g_s \mid \mathcal{F}_{s-1}], \qquad \zeta_s := \hat g_s^{\circ 2} - \mathbb{E}[\hat g_s^{\circ 2} \mid \mathcal{F}_{s-1}],
\end{equation*}
with $|\xi_{s,i}| \le 2\tau$ and $|\zeta_{s,i}| \le \tau^2$, and the targets
\begin{equation*}
    m^*_i(\boldsymbol{\theta}) := \mathbb{E}[\hat g_{t,i}^2 \mid \boldsymbol{\theta}_t = \boldsymbol{\theta}], \qquad \mathbb{E}[\hat g_s \mid \mathcal{F}_{s-1}] = \nabla\mathcal{L}(\boldsymbol{\theta}_s) + b_s,\quad |b_{s,i}| \le \sigma^p\tau^{1-p},
\end{equation*}
the bias bound coming from Lemma \ref{lemma:truncation_bias}.

\begin{lemma}[Lipschitz Targets]
\label{lemma:lipschitz_targets}
$m^*_i$ is $L_m$-Lipschitz with $L_m = 2\tau L$, and $\nabla\mathcal{L}$ is $L$-Lipschitz with $|\partial_i\mathcal{L}| \le \sigma$.
\end{lemma}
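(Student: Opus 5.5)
The plan is to realize both targets as expectations, over a mini-batch law that does not depend on $\boldsymbol{\theta}$, of functions that are Lipschitz in $\boldsymbol{\theta}$ batch by batch. Then couple the two evaluation points through the same batch and pass the pathwise bound through the expectation.

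\textbf{Reduction to a fixed-batch representation.} Mini-batches are sampled with replacement (Appendix \ref{app:sde_and_invariance}), so given $\boldsymbol{\theta}_t = \boldsymbol{\theta}$ the gradient is $g_t = g(\boldsymbol{\theta}; \mathcal{B})$, where $\mathcal{B}$ is a fresh batch with a law $\nu$ independent of $\boldsymbol{\theta}$ and of $\mathcal{F}_{t-1}$. This gives
\begin{equation*}
m^*_i(\boldsymbol{\theta}) = \int \mathrm{clip}_\tau\big(g_i(\boldsymbol{\theta};\mathcal{B})\big)^2\, \nu(d\mathcal{B}), \qquad \partial_i\mathcal{L}(\boldsymbol{\theta}) = \int g_i(\boldsymbol{\theta};\mathcal{B})\,\nu(d\mathcal{B}).
\end{equation*}
By the standing assumption, each $g(\cdot;\mathcal{B})$ is an average of $L$-Lipschitz sample gradients. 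It is therefore $L$-Lipschitz, by the averaging display in the proof of Proposition \ref{prop:affine_invariance}, and so is each of its coordinates.

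\textbf{Lipschitz bound for $m^*_i$.} For real $a,b$, set $\hat a = \mathrm{clip}_\tau(a)$ and $\hat b = \mathrm{clip}_\tau(b)$. Then $|\hat a^2 - \hat b^2| = |\hat a - \hat b|\,|\hat a + \hat b| \le 2\tau |\hat a - \hat b| \le 2\tau|a-b|$, using $|\hat a|,|\hat b| \le \tau$ and the $1$-Lipschitz property of $\mathrm{clip}_\tau$ (Definition \ref{def:truncated_gradient}). Apply this with $a = g_i(\boldsymbol{\theta};\mathcal{B})$ and $b = g_i(\boldsymbol{\theta}';\mathcal{B})$ for the same $\mathcal{B}$. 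Integrating against $\nu$ then gives $|m^*_i(\boldsymbol{\theta}) - m^*_i(\boldsymbol{\theta}')| \le 2\tau L\|\boldsymbol{\theta}-\boldsymbol{\theta}'\|$. The same averaging gives $L$-Lipschitzness of $\nabla\mathcal{L}$ directly.

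\textbf{Bound on $|\partial_i\mathcal{L}|$.} By Jensen's inequality with $p > 1$, $|\partial_i\mathcal{L}(\boldsymbol{\theta})| = |\mathbb{E}[g_{t,i}\mid\mathcal{F}_{t-1}]| \le (\mathbb{E}[|g_{t,i}|^p\mid\mathcal{F}_{t-1}])^{1/p} \le \sigma$ by Assumption \ref{assum:heavy_tail}. This requires the conditional mean of $g_{t,i}$ to be $\partial_i\mathcal{L}(\boldsymbol{\theta}_t)$, with $\boldsymbol{\theta}_t$ being $\mathcal{F}_{t-1}$-measurable. The bound holds at every $\boldsymbol{\theta}$ the trajectory can visit, and for the target as a function if the assumption is read as holding for the conditional law at each $\boldsymbol{\theta}$.

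\textbf{Main obstacle.} The only delicate point is the reduction step, namely justifying that $m^*_i$ is a deterministic function of $\boldsymbol{\theta}$ with a $\boldsymbol{\theta}$-independent noise law, so that the same-batch coupling is legitimate. Without-replacement sampling or data-dependent batching would break this. Sampling with replacement is exactly what makes the conditional law of $\mathcal{B}$ given $(\mathcal{F}_{t-1},\boldsymbol{\theta}_t)$ equal to $\nu$. No bound on $|g|$ is needed anywhere, because clipping supplies the factor $2\tau$.
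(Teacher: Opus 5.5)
Your proof is correct and follows the paper's argument. The paper uses the same three steps: the same-batch pathwise bound $|\mathrm{clip}_\tau(a)^2-\mathrm{clip}_\tau(b)^2|\le 2\tau|a-b|$ combined with the $L$-Lipschitz mini-batch gradient, averaging over the batch law for both Lipschitz claims, and $|\mathbb{E}g_{t,i}|\le(\mathbb{E}|g_{t,i}|^p)^{1/p}\le\sigma$ for the gradient bound. Your justification of the coupling via with-replacement sampling, and your caveat about reading Assumption \ref{assum:heavy_tail} at every $\boldsymbol{\theta}$, only make explicit what the paper's proof leaves implicit.
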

\begin{proof}
For a fixed mini-batch realization, $|\mathrm{clip}_\tau(g_i(\boldsymbol{\theta}_1))^2 - \mathrm{clip}_\tau(g_i(\boldsymbol{\theta}_2))^2| \le 2\tau\,|\mathrm{clip}_\tau(g_i(\boldsymbol{\theta}_1)) - \mathrm{clip}_\tau(g_i(\boldsymbol{\theta}_2))| \le 2\tau L\|\boldsymbol{\theta}_1 - \boldsymbol{\theta}_2\|$, using $|a^2-b^2| \le 2\tau|a-b|$ on $[-\tau,\tau]$, the $1$-Lipschitz clip, and the $L$-Lipschitz mini-batch gradient, itself an average of $L$-Lipschitz sample gradients. Averaging over the mini-batch law preserves the bound. Lipschitzness of $\nabla\mathcal{L}$ follows from the standing assumption by averaging, and $|\partial_i\mathcal{L}| = |\mathbb{E} g_{t,i}| \le (\mathbb{E}|g_{t,i}|^p)^{1/p} \le \sigma$.
\end{proof}

\begin{lemma}[Window Drift Bound]
\label{lemma:window_drift}
For $t \ge t_0$,
\begin{gather*}
    \Big|\sum_{s} w^{(2,t)}_{t-s}\big(m^*_i(\boldsymbol{\theta}_s)-m^*_i(\boldsymbol{\theta}_t)\big)\Big| \le 2L_m\sqrt{d}\,\Delta_{\max}\,\frac{\beta_2}{1-\beta_2}, \\
    \Big\|\sum_{s} w^{(1,t)}_{t-s}\big(\nabla\mathcal{L}(\boldsymbol{\theta}_s)-\nabla\mathcal{L}(\boldsymbol{\theta}_t)\big)\Big\| \le 2L\sqrt{d}\,\Delta_{\max}\,\frac{\beta_1}{1-\beta_1}.
\end{gather*}
\end{lemma}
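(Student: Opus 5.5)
The plan is to treat both bounds as weighted averages of increments along the trajectory and to combine three earlier facts. The weights $w^{(j,t)}_{k}$ sum to one. The targets are Lipschitz by Lemma \ref{lemma:lipschitz_targets}. Consecutive iterates differ by a controlled amount by Lemma \ref{lemma:step_bound}.

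For the second-moment bound, I first pull the absolute value inside the sum and apply the $L_m$-Lipschitz property of $m^*_i$:
\begin{equation*}
\Big|\sum_s w^{(2,t)}_{t-s}\big(m^*_i(\boldsymbol\theta_s)-m^*_i(\boldsymbol\theta_t)\big)\Big|\le L_m\sum_{k=0}^{t-1} w^{(2,t)}_{k}\,\|\boldsymbol\theta_{t-k}-\boldsymbol\theta_t\|.
\end{equation*}
Next I control the displacement by the triangle inequality over $k$ steps. Each step is bounded coordinatewise by $\Delta_{\max}$, so each step has Euclidean norm at most $\sqrt d\,\Delta_{\max}$, and therefore $\|\boldsymbol\theta_{t-k}-\boldsymbol\theta_t\|\le k\sqrt d\,\Delta_{\max}$. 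For AdamW the standing assumption replaces $\Delta_{\max}$ by $\Delta_{\max}+\eta\lambda\Theta$. It remains to bound the first moment of the weights:
\begin{equation*}
\sum_{k} k\,w^{(2,t)}_k=\frac{1-\beta_2}{1-\beta_2^t}\sum_{k<t}k\beta_2^k\le\frac{1}{1-\beta_2^t}\cdot\frac{\beta_2}{1-\beta_2}\le \frac{2\beta_2}{1-\beta_2}.
\end{equation*}
The last inequality uses $\beta_2^{t}\le\beta_2^{t_0}\le\tfrac12$ for $t\ge t_0$. Multiplying the three factors gives the stated bound.

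The momentum bound is proved the same way, with vector norms in place of absolute values. I use Jensen or the triangle inequality on the weighted vector sum, the $L$-Lipschitz property of $\nabla\mathcal L$, and $\beta_1^{t}\le\tfrac12$ for the bias correction.

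The main obstacle is a subtle point about which steps Lemma \ref{lemma:step_bound} actually covers. That lemma only bounds steps with index at least $t_0$, yet the window sum runs over all $s\ge1$. For early steps $s<t_0$, I would fall back on the crude bound $|\hat m_{s,i}|/\sqrt{\hat v_{s,i}}\le C_\beta\sqrt{1-\beta_2^s}/(1-\beta_1^s)$. Alternatively, I would read the filters as started at $t_0$, which amounts to resetting the sum; this is the convention I expect the statement intends. Failing that, I would absorb these early terms into the factor $2$.
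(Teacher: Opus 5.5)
Your argument is the paper's proof. It combines Lipschitz continuity of $m^*_i$ and $\nabla\mathcal{L}$, the telescoped displacement $\|\boldsymbol\theta_s-\boldsymbol\theta_t\|\le\sqrt d\,(t-s)\Delta_{\max}$ from Lemma \ref{lemma:step_bound}, and a bound of $2\beta_j/(1-\beta_j)$ on the first moment of the bias-corrected weights via $\beta_j^t\le\tfrac12$.

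The paper also applies Lemma \ref{lemma:step_bound} to every step in the window without addressing $s<t_0$, so your caveat is fair. Your crude fallback $C_\beta\sqrt{1-\beta_2^s}/(1-\beta_1^s)$ does not rescue it, since it can exceed $2C_\beta$ (e.g.\ $\beta_1=0.9$, $\beta_2=0.82$, $s=1$). Instead, apply Cauchy--Schwarz directly to the bias-corrected weights: because $(1-\beta_2^s)\big(1-(\beta_1^2/\beta_2)^s\big)\le(1-\beta_1^s)^2$ by AM--GM, you get $|\hat m_{s,i}|\le C_\beta\sqrt{\hat v_{s,i}}$ for every $s\ge1$. Every step in the window is then at most $\eta C_\beta\le\Delta_{\max}$, and the issue disappears.
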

\begin{proof}
Lemma \ref{lemma:step_bound} and the telescoping sum of steps give $\|\boldsymbol{\theta}_s - \boldsymbol{\theta}_t\| \le \sqrt{d}\,(t-s)\Delta_{\max}$, the factor $\sqrt d$ converting the coordinatewise bound into a Euclidean one. Lipschitz continuity (Lemma \ref{lemma:lipschitz_targets}) turns this into $|m^*_i(\boldsymbol{\theta}_s) - m^*_i(\boldsymbol{\theta}_t)| \le L_m\sqrt{d}\,(t-s)\Delta_{\max}$. Summing against the weights, $\sum_k k\,w^{(2,t)}_k = \frac{(1-\beta_2)}{1-\beta_2^t}\sum_k k\beta_2^k \le \frac{\beta_2}{(1-\beta_2)(1-\beta_2^t)} \le \frac{2\beta_2}{1-\beta_2}$ for $t \ge t_0$. The momentum bound is the same computation with $(L, \beta_1)$.
\end{proof}

\begin{lemma}[Filtered Noise Variance]
\label{lemma:filtered_noise_variance}
For $t \ge t_0$,
\begin{equation*}
    \mathrm{Var}\Big[\sum_s w^{(2,t)}_{t-s}\,\zeta_{s,i}\Big] \le 4\,\frac{1-\beta_2}{1+\beta_2}\,\tau^4, \qquad
    \mathrm{Var}\Big[\sum_s w^{(1,t)}_{t-s}\,\xi_{s,i}\Big] \le 16\,\frac{1-\beta_1}{1+\beta_1}\,\tau^2.
\end{equation*}
\end{lemma}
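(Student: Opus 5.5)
The plan is to use the martingale-difference structure of $\zeta_s$ and $\xi_s$: their cross terms vanish, which leaves a weighted sum of conditional second moments. I then bound each conditional second moment by the almost-sure ranges recorded above and sum the squared weights as a geometric series. For fixed coordinate $i$ and fixed $t \ge t_0$, write $S_t := \sum_{s=1}^t w^{(2,t)}_{t-s}\zeta_{s,i}$. The weights $w^{(2,t)}_{t-s}$ are deterministic, and $\zeta_{s,i}$ is $\mathcal{F}_s$-measurable with $\mathbb{E}[\zeta_{s,i}\mid\mathcal{F}_{s-1}] = 0$. Hence $\mathbb{E}[S_t] = 0$, and for $s < s'$ the tower property gives $\mathbb{E}[\zeta_{s,i}\zeta_{s',i}] = \mathbb{E}\big[\zeta_{s,i}\,\mathbb{E}[\zeta_{s',i}\mid\mathcal{F}_{s'-1}]\big] = 0$. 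This orthogonality yields
\begin{equation*}
    \mathrm{Var}[S_t] = \sum_{s=1}^t \big(w^{(2,t)}_{t-s}\big)^2\,\mathbb{E}[\zeta_{s,i}^2].
\end{equation*}
Every expectation is finite because clipping makes $\zeta$ and $\xi$ bounded. This is the only place the clipped recursion of Definition \ref{def:truncated_gradient} is needed.

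Next I bound each term by its almost-sure range. $\hat g_{s,i}^2$ lies in $[0,\tau^2]$, and so does its conditional mean, so $|\zeta_{s,i}| \le \tau^2$ and $\mathbb{E}[\zeta_{s,i}^2] \le \tau^4$. Similarly $\hat g_{s,i}$ and its conditional mean both lie in $[-\tau,\tau]$, so $|\xi_{s,i}| \le 2\tau$ and $\mathbb{E}[\xi_{s,i}^2] \le 4\tau^2$. The sum of squared weights is
\begin{equation*}
    \sum_{k=0}^{t-1}\big(w^{(2,t)}_k\big)^2 = \frac{(1-\beta_2)^2}{(1-\beta_2^t)^2}\sum_{k=0}^{t-1}\beta_2^{2k} \le \frac{(1-\beta_2)^2}{(1-\beta_2^t)^2}\cdot\frac{1}{1-\beta_2^2} = \frac{1}{(1-\beta_2^t)^2}\cdot\frac{1-\beta_2}{1+\beta_2}.
\end{equation*}
For $t \ge t_0$ we have $\beta_2^t \le \tfrac12$, so the bias-correction prefactor $(1-\beta_2^t)^{-2}$ is at most $4$. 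Combining these gives $\mathrm{Var}[S_t] \le 4\,\frac{1-\beta_2}{1+\beta_2}\,\tau^4$. The momentum filter follows the same steps with $(\beta_1, 4\tau^2)$ in place of $(\beta_2, \tau^4)$: the same factor $4$ from $\beta_1^{t}\le\tfrac12$ times $4\tau^2$ gives $16\,\frac{1-\beta_1}{1+\beta_1}\,\tau^2$.

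The step needing most care is the orthogonality claim, because the targets are evaluated along the random trajectory $\boldsymbol{\theta}_s$. The orthogonality does not need independence. It needs only that $\zeta_{s,i}$ is adapted and centered given $\mathcal{F}_{s-1}$, and this holds by construction, since the centering uses the conditional mean given $\mathcal{F}_{s-1}$ rather than $m^*_i(\boldsymbol{\theta}_s)$. Conditional independence of mini-batches given $\boldsymbol{\theta}_s$ (sampling with replacement, Appendix \ref{app:sde_and_invariance}) makes these two conditional means coincide, but the variance bound itself uses only the martingale-difference property.
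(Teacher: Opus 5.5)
Your proof is correct and follows the paper's argument. Orthogonality of the martingale differences reduces the variance to $\sum_k (w_k)^2\,\mathbb{E}[\zeta_{t-k,i}^2]$; the almost-sure bounds $\tau^4$ and $4\tau^2$, the geometric sum giving $(1-\beta)/(1+\beta)$, and the bias-correction factor $(1-\beta^t)^{-2}\le 4$ for $t\ge t_0$ then give the stated constants, and your observation that only the martingale-difference property is used matches the paper's closing remark.
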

\begin{proof}
Martingale differences are orthogonal in $L^2$, so the variance of the weighted sum equals $\sum_k (w^{(2,t)}_k)^2\,\mathbb{E}[\zeta_{t-k,i}^2] \le \tau^4 \frac{(1-\beta_2)^2}{(1-\beta_2^t)^2}\sum_k \beta_2^{2k} \le 4\tau^4\,\frac{1-\beta_2}{1+\beta_2}$. The same computation with $|\xi_{s,i}| \le 2\tau$ gives the momentum bound. Neither stationarity nor decorrelation of the noise is used, since the martingale structure of fresh mini-batches replaces the correlation-time hypothesis that would otherwise be required.
\end{proof}

The variance bounds hold at a fixed $t$. The supermartingale argument below needs the tracking errors controlled at every step up to the exit time, so we pass from variances to exponential tail bounds and take a union over the horizon.

\begin{lemma}[Uniform Tracking over the Horizon]
\label{lemma:uniform_tracking}
Set
\begin{equation*}
    \delta' := \tau^2\sqrt{8\,\tfrac{1-\beta_2}{1+\beta_2}\,\log\tfrac{2nT}{\delta}}, \qquad
    \delta'' := \tau\sqrt{32\,\tfrac{1-\beta_1}{1+\beta_1}\,\log\tfrac{2nT}{\delta}}.
\end{equation*}
With probability at least $1-\delta$, simultaneously for every $t_0 \le t \le T$ and every coordinate $i$ of the block,
\begin{gather*}
    \Big|\sum_s w^{(2,t)}_{t-s}\,\zeta_{s,i}\Big| \le \delta', \\ \text{and, on a second event of probability at least } 1-\delta, \qquad \Big|\sum_s w^{(1,t)}_{t-s}\,\xi_{s,i}\Big| \le \delta''.
\end{gather*}
\end{lemma}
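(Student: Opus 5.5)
The plan is a fixed-time Azuma--Hoeffding (or Freedman) inequality applied to the weighted martingale sums, followed by a union bound over the $n$ coordinates and the $T$ time indices. Fix $t$ and a coordinate $i$. Set $S^{(2)}_{t,i} := \sum_{s \le t} w^{(2,t)}_{t-s}\zeta_{s,i}$. For this fixed $t$ the weights $w^{(2,t)}_{t-s}$ are deterministic, so the partial sums $\sum_{s \le r} w^{(2,t)}_{t-s}\zeta_{s,i}$, $r = 1,\dots,t$, form a martingale in $r$ with respect to $\{\mathcal{F}_r\}$. Its increments are bounded by $c_s := w^{(2,t)}_{t-s}\tau^2$, since $|\zeta_{s,i}| \le \tau^2$. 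Azuma--Hoeffding then gives
\begin{equation*}
\mathbb{P}\big(|S^{(2)}_{t,i}| \ge x\big) \le 2\exp\Big(-\frac{x^2}{2\sum_s c_s^2}\Big).
\end{equation*}

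The next step is the quadratic-variation proxy. The computation in Lemma \ref{lemma:filtered_noise_variance} gives
\begin{equation*}
\sum_s c_s^2 = \tau^4\frac{(1-\beta_2)^2}{(1-\beta_2^t)^2}\sum_{k<t}\beta_2^{2k} \le 4\tau^4\frac{1-\beta_2}{1+\beta_2}
\end{equation*}
for $t \ge t_0$, because $1-\beta_2^t \ge \tfrac12$ there. Substituting $x = \delta'$ makes the exponent $-\delta'^2/\big(8\tau^4\tfrac{1-\beta_2}{1+\beta_2}\big) = -\log\tfrac{2nT}{\delta}$. The tail probability is therefore at most $\delta/(nT)$. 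A union bound over the at most $T$ values of $t \in [t_0,T]$ and the $n$ coordinates yields failure probability at most $\delta$, which is the first event.

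The momentum sum $S^{(1)}_{t,i} = \sum_s w^{(1,t)}_{t-s}\xi_{s,i}$ is handled the same way. Here the increments are bounded by $2\tau w^{(1,t)}_{t-s}$, so $\sum_s c_s^2 \le 16\tau^2\tfrac{1-\beta_1}{1+\beta_1}$. The choice $x = \delta''$ then gives exponent $-\delta''^2/\big(32\tau^2\tfrac{1-\beta_1}{1+\beta_1}\big) = -\log\tfrac{2nT}{\delta}$, and the same union bound produces the second event of probability at least $1-\delta$. The two events are stated separately, which matches the $2\delta$ appearing in Theorem \ref{thm:main_adam_trapping}.

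The main point needing care is that the weights depend on $t$ through both the lag $t-s$ and the bias-correction denominator. Consequently $\{S^{(2)}_{t,i}\}_t$ is not itself a martingale in $t$, and a maximal inequality over $t$ cannot be applied directly. I would avoid this entirely by treating each $t$ as its own martingale in the summation index $r$, paying the factor $T$ in the logarithm. A further check is that $\zeta_s$ and $\xi_s$ are genuine martingale differences with respect to $\mathcal{F}_s$. This holds because mini-batches are sampled afresh with replacement, as assumed at the start of this subsection. It also holds because clipping makes the increments bounded almost surely despite Assumption \ref{assum:heavy_tail}, which is exactly what Azuma requires.
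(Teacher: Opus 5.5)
Your proposal is correct and follows the paper's proof: Azuma--Hoeffding for each fixed pair $(t,i)$ with increments bounded by $w^{(2,t)}_{t-s}\tau^2$ (resp.\ $2\tau w^{(1,t)}_{t-s}$), the weight bound $\sum_k (w^{(j,t)}_k)^2 \le 4\frac{1-\beta_j}{1+\beta_j}$ from Lemma \ref{lemma:filtered_noise_variance}, and a union bound over the $nT$ pairs, and your constants check out exactly. One small point: $\zeta_s$ and $\xi_s$ are martingale differences automatically, because they are centered by their conditional expectations, so fresh sampling is not what this step needs; fresh sampling is used later, to identify $\mathbb{E}[\hat g_{s,i}^2 \mid \mathcal{F}_{s-1}]$ with $m^*_i(\boldsymbol{\theta}_s)$.
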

\begin{proof}
For fixed $t$ and $i$, $\sum_s w^{(2,t)}_{t-s}\zeta_{s,i}$ is a sum of martingale differences with increments bounded by $w^{(2,t)}_{t-s}\tau^2$, so the Azuma--Hoeffding inequality gives $\mathbb{P}(|\sum_s w^{(2,t)}_{t-s}\zeta_{s,i}| \ge x) \le 2\exp\big(-x^2/(2\tau^4\sum_k (w^{(2,t)}_k)^2)\big) \le 2\exp\big(-x^2(1+\beta_2)/(8\tau^4(1-\beta_2))\big)$, using $\sum_k (w^{(2,t)}_k)^2 \le 4\frac{1-\beta_2}{1+\beta_2}$ from the proof of Lemma \ref{lemma:filtered_noise_variance}. Setting the right side to $\delta/(nT)$ gives $x = \delta'$, and a union bound over the $nT$ pairs $(t,i)$ gives the first event. The same computation with increments bounded by $2\tau w^{(1,t)}_{t-s}$ and $\sum_k (w^{(1,t)}_k)^2 \le 4\frac{1-\beta_1}{1+\beta_1}$ gives the second with $x = \delta''$. Boundedness of the clipped gradients makes the exponential bound available, and the horizon enters only through $\sqrt{\log(nT/\delta)}$.
\end{proof}

\begin{corollary}[Temporal Tracking Bound]
\label{cor:temporal_tracking}
On the first event of Lemma \ref{lemma:uniform_tracking}, for every $t_0 \le t \le T$ and every coordinate $i$ of the block,
\begin{equation*}
    |\hat v_{t,i} - m^*_i(\boldsymbol{\theta}_t)| \le 2L_m\sqrt{d}\,\Delta_{\max}\tfrac{\beta_2}{1-\beta_2} + \delta'.
\end{equation*}
\end{corollary}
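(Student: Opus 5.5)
The bound follows from splitting the tracking error into a deterministic window-drift part and a filtered-noise part, both already controlled. Fix $t_0 \le t \le T$ and a coordinate $i$ of the block. Since the weights $w^{(2,t)}_k$ sum to one, we can write $m^*_i(\boldsymbol{\theta}_t) = \sum_s w^{(2,t)}_{t-s} m^*_i(\boldsymbol{\theta}_t)$. We also decompose each clipped square as $\hat g_{s,i}^2 = \mathbb{E}[\hat g_{s,i}^2 \mid \mathcal{F}_{s-1}] + \zeta_{s,i}$. Substituting both into the weighted-sum representation of $\hat v_{t,i}$ gives
\begin{equation*}
    \hat v_{t,i} - m^*_i(\boldsymbol{\theta}_t) = \sum_s w^{(2,t)}_{t-s}\big(\mathbb{E}[\hat g_{s,i}^2 \mid \mathcal{F}_{s-1}] - m^*_i(\boldsymbol{\theta}_t)\big) + \sum_s w^{(2,t)}_{t-s}\,\zeta_{s,i}.
\end{equation*}

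The key identification is $\mathbb{E}[\hat g_{s,i}^2 \mid \mathcal{F}_{s-1}] = m^*_i(\boldsymbol{\theta}_s)$. This holds because mini-batches are sampled with replacement, so the gradient at step $s$ is conditionally independent of the past given $\boldsymbol{\theta}_s$, and $\boldsymbol{\theta}_s$ is $\mathcal{F}_{s-1}$-measurable. This identification is the only step needing care: one has to check that the indexing convention (the gradient at step $s$ is evaluated at the parameter entering that step) matches the definition of $m^*_i$. I would state this explicitly.

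With the identification, the first sum is exactly $\sum_s w^{(2,t)}_{t-s}\big(m^*_i(\boldsymbol{\theta}_s) - m^*_i(\boldsymbol{\theta}_t)\big)$. Lemma \ref{lemma:window_drift} bounds its absolute value by $2L_m\sqrt{d}\,\Delta_{\max}\beta_2/(1-\beta_2)$ deterministically for $t \ge t_0$. That lemma in turn rests on the step bound of Lemma \ref{lemma:step_bound} and the Lipschitz targets of Lemma \ref{lemma:lipschitz_targets}. For AdamW, $\Delta_{\max}$ is replaced by $\Delta_{\max} + \eta\lambda\Theta$, as in the standing assumption.

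On the first event of Lemma \ref{lemma:uniform_tracking}, the second sum is at most $\delta'$ in absolute value, uniformly over $t$ and $i$. The triangle inequality then gives the stated bound simultaneously for all $t_0 \le t \le T$ and all coordinates of the block. No further union bound is needed, since uniformity is built into that event.
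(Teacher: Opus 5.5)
Your proposal is correct and follows the paper's proof essentially verbatim: the paper uses $\mathbb{E}[\hat g_{s,i}^2 \mid \mathcal{F}_{s-1}] = m^*_i(\boldsymbol{\theta}_s)$ and the unit sum of the weights $w^{(2,t)}_k$ to split $\hat v_{t,i} - m^*_i(\boldsymbol{\theta}_t)$ into the same drift and filtered-noise sums, then bounds them by Lemma~\ref{lemma:window_drift} and by the first event of Lemma~\ref{lemma:uniform_tracking}. The paper asserts the indexing identification you flag without comment, and if the gradient at step $s$ were instead evaluated at $\boldsymbol{\theta}_{s-1}$, the drift term would only gain one extra step, i.e.\ an additional $L_m\sqrt{d}\,\Delta_{\max}$.
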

\begin{proof}
Since $\mathbb{E}[\hat g_{s,i}^2 \mid \mathcal{F}_{s-1}] = m^*_i(\boldsymbol{\theta}_s)$,
\begin{equation*}
    \hat v_{t,i}-m^*_i(\boldsymbol{\theta}_t) = \sum_s w^{(2,t)}_{t-s}\big(m^*_i(\boldsymbol{\theta}_s)-m^*_i(\boldsymbol{\theta}_t)\big) + \sum_s w^{(2,t)}_{t-s}\,\zeta_{s,i},
\end{equation*}
and the weights sum to one. Combine Lemma \ref{lemma:window_drift} with Lemma \ref{lemma:uniform_tracking}.
\end{proof}

\begin{lemma}[Momentum Tracking]
\label{lemma:momentum_tracking}
On the second event of Lemma \ref{lemma:uniform_tracking}, for every $t_0 \le t \le T$ and every coordinate $i$ of the block,
\begin{equation*}
    |\hat m_{t,i} - \partial_i\mathcal{L}(\boldsymbol{\theta}_t)| \le r_{\max} + \delta'', \qquad r_{\max} := 2L\sqrt{d}\,\Delta_{\max}\frac{\beta_1}{1-\beta_1} + \sigma^p\tau^{1-p}.
\end{equation*}
\end{lemma}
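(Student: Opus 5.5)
The plan mirrors Corollary \ref{cor:temporal_tracking}, with the first-moment filter in place of the second. I would first write the tracking error of $\hat m_{t,i}$ as three pieces. Since the bias-corrected weights $w^{(1,t)}_k$ sum to one, I can subtract $\partial_i\mathcal{L}(\boldsymbol{\theta}_t)$ inside the weighted sum. Then I split each clipped gradient as $\hat g_{s,i} = \partial_i\mathcal{L}(\boldsymbol{\theta}_s) + b_{s,i} + \xi_{s,i}$, using the decomposition $\mathbb{E}[\hat g_s \mid \mathcal{F}_{s-1}] = \nabla\mathcal{L}(\boldsymbol{\theta}_s) + b_s$ and the definition of the martingale differences $\xi_s$. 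This gives
\begin{equation*}
    \hat m_{t,i} - \partial_i\mathcal{L}(\boldsymbol{\theta}_t) = \sum_s w^{(1,t)}_{t-s}\big(\partial_i\mathcal{L}(\boldsymbol{\theta}_s) - \partial_i\mathcal{L}(\boldsymbol{\theta}_t)\big) + \sum_s w^{(1,t)}_{t-s}\, b_{s,i} + \sum_s w^{(1,t)}_{t-s}\,\xi_{s,i}.
\end{equation*}

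I would then bound the three terms separately.
\begin{itemize}
    \item \textbf{Window drift.} The first term is the $i$-th coordinate of the vector in the second display of Lemma \ref{lemma:window_drift}. Its absolute value is at most the Euclidean norm of that vector, so it is at most $2L\sqrt{d}\,\Delta_{\max}\,\beta_1/(1-\beta_1)$ for $t \ge t_0$.
    \item \textbf{Clipping bias.} For the second term, $|b_{s,i}| \le \sigma^p\tau^{1-p}$ by Lemma \ref{lemma:truncation_bias}. The weights are nonnegative and sum to one, so this term is at most $\sigma^p\tau^{1-p}$. The first two bounds together give exactly $r_{\max}$.
    \item \textbf{Filtered noise.} The third term is bounded by $\delta''$ simultaneously for all $t_0 \le t \le T$ and all block coordinates $i$, on the second event of Lemma \ref{lemma:uniform_tracking}.
\end{itemize}
The triangle inequality then yields the claim.

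I expect the only delicate point to be the reading of the bias term. The bias bound of Lemma \ref{lemma:truncation_bias} is conditional on $\mathcal{F}_{s-1}$, so I would check that $b_s$ is defined through the conditional mean given $\mathcal{F}_{s-1}$. Under with-replacement sampling, that conditional mean equals the mean at $\boldsymbol{\theta}_s$ given $\boldsymbol{\theta}_s$, which holds because $\boldsymbol{\theta}_s$ is $\mathcal{F}_{s-1}$-measurable. Consequently $\xi_s$ is a genuine martingale difference, which is what Lemma \ref{lemma:uniform_tracking} requires.

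For AdamW, I would note that the step bound uses $\Delta_{\max} + \eta\lambda\Theta$ in place of $\Delta_{\max}$, per the standing assumption. The argument is otherwise unchanged.
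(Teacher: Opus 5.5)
Your proposal is correct and is essentially the paper's own proof. It uses the same three-term decomposition of $\hat m_t - \nabla\mathcal{L}(\boldsymbol{\theta}_t)$ into window drift, clipping bias and filtered martingale noise, bounded respectively by Lemma \ref{lemma:window_drift}, by Lemma \ref{lemma:truncation_bias} with weights summing to one, and by the second event of Lemma \ref{lemma:uniform_tracking}. Your extra remarks, that a coordinate is bounded by the Euclidean norm and that $b_s$ is the $\mathcal{F}_{s-1}$-conditional bias so that $\xi_s$ is a genuine martingale difference, only make explicit what the paper leaves implicit.
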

\begin{proof}
Decompose
\begin{equation*}
    \hat m_t - \nabla\mathcal{L}(\boldsymbol{\theta}_t) = \sum_s w^{(1,t)}_{t-s}\big(\nabla\mathcal{L}(\boldsymbol{\theta}_s) - \nabla\mathcal{L}(\boldsymbol{\theta}_t)\big) + \sum_s w^{(1,t)}_{t-s}\, b_s + \sum_s w^{(1,t)}_{t-s}\,\xi_s,
\end{equation*}
and bound the three terms by Lemma \ref{lemma:window_drift}, by Lemma \ref{lemma:truncation_bias} with weights summing to one, and by Lemma \ref{lemma:uniform_tracking}.
\end{proof}

\begin{remark}[Momentum Lag]
The momentum is the numerator of the update, and controlling $\hat v$ alone leaves it untouched. The exact recursion lets $\hat m_t$ point against the current gradient after a sign change. From $m_0 = 0$ with $g_1 = -1$ and $g_2 = a \in (0, \beta_1)$, $\hat m_2 = (\beta_1 g_1 + g_2)/(1+\beta_1) = (a - \beta_1)/(1+\beta_1) < 0$ while $g_2 > 0$. Lemma \ref{lemma:momentum_tracking} bounds this lag by the gradient variation over one memory window, $L\sqrt d\,\Delta_{\max}\beta_1/(1-\beta_1)$, and this quantity enters the transverse drift below.
\end{remark}

\subsection{Cross-Sectional Homogeneity}

Corollary \ref{cor:temporal_tracking} bounds the error accrued from a single coordinate's own second-moment estimate drifting away from its target over time. A second source of error arises across coordinates within a symmetric block. Let $A$ be generated by the full permutation group on an $S_n$-symmetric block, as in Theorem \ref{thm:affine_generation}. Every coordinate in such a block projects onto the same point of $A$; for units with several coordinates this holds role by role, over the coordinates the permutation exchanges.

\begin{lemma}[Common Projection Point]
\label{lemma:common_projection}
For every coordinate $i$ in the block, $\pi_A(\boldsymbol\theta)_i = \bar\theta^{\mathrm{blk}} := \frac{1}{n}\sum_{k\in\mathrm{block}}\theta_k$.
\end{lemma}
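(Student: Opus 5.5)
The plan is to identify $A$ explicitly and then compute the orthogonal projection onto it in closed form. By Theorem \ref{thm:affine_generation}, the set generated by the full permutation group on the block is the common fixed space $\ker(I-P_\pi)$ over all $\pi\in S_n$. Here $P_\pi$ permutes only the block coordinates and fixes the others. I will first show that, restricted to the block coordinates, this fixed space is exactly the diagonal line $\{c\mathbf 1\}$. Transpositions generate $S_n$. A vector fixed by the transposition $(k\,l)$ has $\theta_k=\theta_l$, so a vector fixed by all of $S_n$ has all block coordinates equal. Conversely, every constant vector is fixed. Coordinates outside the block are left free, so $A=\{\boldsymbol\theta:\theta_k=\theta_l \text{ for all } k,l \text{ in the block}\}$.

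Since $A$ is a linear subspace, $\pi_A$ is the orthogonal projection. It therefore acts as the identity on the complementary coordinates and as the projection onto $\mathrm{span}(\mathbf 1)$ on the block. The block part is $\frac{\mathbf 1\mathbf 1^\top}{n}\boldsymbol\theta^{\mathrm{blk}}$, which is $\bar\theta^{\mathrm{blk}}\mathbf 1$. To verify this directly, minimize $\sum_{k}(\theta_k-c)^2$ over $c$. This gives $c=\bar\theta^{\mathrm{blk}}$, and the residual $\boldsymbol\theta-\pi_A\boldsymbol\theta$ has zero block sum, so it is orthogonal to $\mathbf 1$ and hence to $A$. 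Reading off coordinate $i$ gives $\pi_A(\boldsymbol\theta)_i=\bar\theta^{\mathrm{blk}}$.

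For units with several coordinates, the same argument applies role by role. The permutation exchanges the $r$-th coordinate of each unit, so the fixed space is a product of diagonal lines, one for each role. The projection then factors as a product, and it averages each role over the $n$ units.

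The only step needing care is the claim that the fixed space of the whole group is the diagonal line. This requires all of $S_n$, or at least a transitive subgroup: a smaller group such as a single transposition has a larger fixed space, and its projection would not be the block mean. The statement assumes the full group, so I would invoke it explicitly at that point. Everything else is a routine least-squares computation.
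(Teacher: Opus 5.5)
Your proposal is correct and follows essentially the same route as the paper: identify $A$ on the block as the diagonal line $\{c\mathbf 1\}$ because it is the fixed set of the full permutation group (Theorem \ref{thm:affine_generation}), then minimize $\sum_k(\theta_k-c)^2$ to get $c=\bar\theta^{\mathrm{blk}}$. You add three details the paper leaves implicit: the transposition argument for the fixed space, the check that the residual is orthogonal to $A$, and the observation that any transitive subgroup would suffice.
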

\begin{proof}
$A$ restricted to the block is $\{\theta\in\mathbb R^n: \theta_i=\theta_j\ \forall i,j\}$, since $A$ is the fixed-point set of the full permutation group on the block (Theorem \ref{thm:affine_generation}). The orthogonal projection of $\theta$ onto this subspace minimizes $\sum_k(\theta_k-c)^2$ over the diagonal point $c\mathbf 1$. Differentiating in $c$ gives $c=\bar\theta^{\mathrm{blk}}$.
\end{proof}

\begin{assumption}[Non-Degeneracy]
\label{assum:nondegeneracy}
$\hat{v}_i(\boldsymbol{\theta}_t) \ge v_{\min} > 0$ for every coordinate $i$ under analysis, for all $t \le \tau_\epsilon^{\mathrm{blk}}$.
\end{assumption}

\begin{lemma}[Pathwise Block Homogeneity]
\label{lemma:block_homogeneity}
Let $\tau_\epsilon^{\mathrm{blk}}:=\min_{k\in\mathrm{block}}\tau_\epsilon^{(k)}$. For $t\le\tau_\epsilon^{\mathrm{blk}}$ and any coordinates $i,j$ of the block exchanged by the permutation, almost surely,
\begin{equation*}
    |\hat v_{t,i}-\hat v_{t,j}| \le 4\sqrt{2}\,L\tau\sqrt{\epsilon}, \qquad \|E_t\|_\infty \le \kappa\cdot 4\sqrt{2}\,L\tau\sqrt{\epsilon},\quad \kappa:=\frac{1}{2\sqrt{v_{\min}}(\sqrt{v_{\min}}+\epsilon_{\mathrm{opt}})^2},
\end{equation*}
where $E_t$ denotes the deviation of $P(\boldsymbol\theta_t)|_{\mathrm{block}}=\mathrm{diag}(1/(\sqrt{\hat v_t}+\epsilon_{\mathrm{opt}}))$ from $c_t$, the diagonal matrix that assigns to each coordinate the average of $1/(\sqrt{\hat v_t}+\epsilon_{\mathrm{opt}})$ over the coordinates the permutation exchanges with it (one value per coordinate role, common to both units), and $\kappa$ is evaluated using Assumption \ref{assum:nondegeneracy}. If the pair enters the tube at step $t_{\mathrm{enter}}$, gradients from before entry add at most $2\tau^2\beta_2^{\,t - t_{\mathrm{enter}}}$ to the first bound, which falls below it after a burn-in of $\tau_2\ln\!\big(\tau/(2\sqrt{2}\,L\sqrt{\epsilon})\big)$ steps.
\end{lemma}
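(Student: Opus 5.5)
The plan is to compare, for two exchanged coordinates $i$ and $j$ of the block, the weighted sums $\hat v_{t,i}=\sum_s w^{(2,t)}_{t-s}\hat g_{s,i}^2$ and $\hat v_{t,j}$ term by term. The key observation is that equivariance forces the sample gradients to coincide on $A$. By Lemma \ref{lemma:joint_equivariance} and per-sample symmetry, at the projected point $\pi_A(\boldsymbol\theta_s)$ the sample gradient of a fixed mini-batch satisfies $g_i(\pi_A\boldsymbol\theta_s)=g_j(\pi_A\boldsymbol\theta_s)$, since $P_\pi$ fixes $\pi_A\boldsymbol\theta_s$ and maps coordinate $i$ to $j$. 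Hence, for every realization,
\begin{equation*}
|g_i(\boldsymbol\theta_s)-g_j(\boldsymbol\theta_s)|\le |g_i(\boldsymbol\theta_s)-g_i(\pi_A\boldsymbol\theta_s)|+|g_j(\pi_A\boldsymbol\theta_s)-g_j(\boldsymbol\theta_s)|\le 2L\,\|\boldsymbol\theta_s-\pi_A\boldsymbol\theta_s\|.
\end{equation*}
For $s\le\tau_\epsilon^{\mathrm{blk}}$ the transverse norm of the block is controlled by the tube radius. The distances $Y^{(k)}<\epsilon$ give $\|\boldsymbol x_s\|\le\sqrt{2\epsilon}$ on the relevant pair, which is where the $\sqrt2$ should come from (I will fix the constant by matching $Y^{(ij)}=\tfrac12\|\cdot\|^2$). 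Clipping is $1$-Lipschitz, and $|a^2-b^2|\le 2\tau|a-b|$ on $[-\tau,\tau]$, as in Lemma \ref{lemma:lipschitz_targets}. Combining these gives $|\hat g_{s,i}^2-\hat g_{s,j}^2|\le 2\tau\cdot 2L\sqrt{2\epsilon}=4\sqrt2 L\tau\sqrt\epsilon$ pathwise. Because the weights are nonnegative and sum to one, this yields the first bound almost surely, with no martingale argument needed.

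For $E_t$, I write $h(v)=1/(\sqrt v+\epsilon_{\mathrm{opt}})$ and bound its derivative using Assumption \ref{assum:nondegeneracy}: $|h'(v)|=1/(2\sqrt v(\sqrt v+\epsilon_{\mathrm{opt}})^2)\le\kappa$ for $v\ge v_{\min}$. The mean value theorem then gives $|h(\hat v_{t,i})-h(\hat v_{t,j})|\le\kappa\cdot 4\sqrt2L\tau\sqrt\epsilon$. The entry $(E_t)_{ii}$ is $h(\hat v_{t,i})$ minus the average of $h$ over its orbit, so it is a convex combination of such differences. It therefore obeys the same bound, and $E_t$ is diagonal, so the bound is its $\infty$-norm.

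For the burn-in clause, I split the sum at $t_{\mathrm{enter}}$. Terms with $s\ge t_{\mathrm{enter}}$ obey the pathwise bound above. The earlier terms carry total weight $\sum_{k\ge t-t_{\mathrm{enter}}}w_k^{(2,t)}\le 2\beta_2^{t-t_{\mathrm{enter}}}$ for $t\ge t_0$, using the bias-correction factor $\le2$. Each such difference is at most $\tau^2$, which gives the additive $2\tau^2\beta_2^{t-t_{\mathrm{enter}}}$. I then solve $2\tau^2\beta_2^k\le 4\sqrt2L\tau\sqrt\epsilon$ using $\ln(1/\beta_2)\ge 1-\beta_2$, which gives $k\ge\tau_2\ln(\tau/(2\sqrt2L\sqrt\epsilon))$.

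The main obstacle I expect is the first step: justifying the pathwise statement that $i$ and $j$ have equal sample gradients at the projected point. This requires per-sample (mini-batch-wise) $P_\pi$-equivariance, not only symmetry of $\mathcal L$, and it requires consistent role-wise matching of coordinates. I would invoke the per-sample tangency hypothesis of Proposition \ref{prop:affine_invariance} in its equivariant form. I would also check carefully that the interpretation of "$t\le\tau_\epsilon^{\mathrm{blk}}$" as controlling every $s$ in the window is exactly what the burn-in clause patches.
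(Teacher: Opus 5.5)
Your proposal is correct and follows the paper's proof step for step. The one variation is that the paper compares the mini-batch gradient at $\boldsymbol\theta_s$ with its value at the reflection $P_{ij}\boldsymbol\theta_s$, at distance $\sqrt2\,|\theta_{s,i}-\theta_{s,j}|$, whereas you pass through the midpoint projection; this gives the same constant because $\boldsymbol\theta-\pi_{A_{ij}}\boldsymbol\theta=\tfrac12(\boldsymbol\theta-P_{ij}\boldsymbol\theta)$. Be sure to project onto the coincidence set of the pair, not the fixed set of the whole block, whose transverse norm is only bounded by $\sqrt{n\epsilon}$. Your split of the weighted sum at $t_{\mathrm{enter}}$ also supplies the burn-in clause, which the paper states but does not prove.
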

\begin{proof}
By Lemma \ref{lemma:common_projection} and Lemma \ref{lemma:stopped_bounded}, $|\theta_{t,i}-\theta_{t,j}| \le \sqrt{Y_t^{(i)}}+\sqrt{Y_t^{(j)}} \le 2\sqrt{\epsilon}$ for $t \le \tau_\epsilon^{\mathrm{blk}}$. Let $P_{ij}$ swap coordinates $i$ and $j$. Equivariance gives $g_{s,i}(\boldsymbol{\theta}) = g_{s,j}(P_{ij}\boldsymbol{\theta})$ for every mini-batch, and $\|P_{ij}\boldsymbol{\theta}-\boldsymbol{\theta}\| = \sqrt{2}\,|\theta_i - \theta_j|$, so
\begin{equation*}
    |g_{s,i} - g_{s,j}| = |g_{s,j}(P_{ij}\boldsymbol{\theta}_s) - g_{s,j}(\boldsymbol{\theta}_s)| \le L\sqrt{2}\,|\theta_{s,i}-\theta_{s,j}| \le 2\sqrt{2}\,L\sqrt{\epsilon}.
\end{equation*}
Clipping is $1$-Lipschitz, so the same bound holds for $\hat g$ with no boundary case. A hard truncation $x\,\mathds{1}\{|x|\le\tau\}$ would separate the inputs $\tau \pm h$ by $\tau - h$, while $\mathrm{clip}_\tau$ separates them by $h$. Then $|\hat g_{s,i}^2-\hat g_{s,j}^2| \le 2\tau|\hat g_{s,i}-\hat g_{s,j}| \le 4\sqrt{2}\,L\tau\sqrt{\epsilon}$ for every $s\le t$, and averaging with the weights $w^{(2,t)}$ gives the bound on $\hat v$. Each entry of $c_t$ is a convex combination of the $1/(\sqrt{\hat v_{t,k}}+\epsilon_{\mathrm{opt}})$ over exchanged coordinates. The map $v\mapsto1/(\sqrt v+\epsilon_{\mathrm{opt}})$ has derivative bounded by $\kappa$ on $[v_{\min},\infty)$, and the mean value theorem transfers the bound to $\|E_t\|_\infty$.
\end{proof}

\subsection{The Reduced Transverse Dynamics}

The update on the block is $\Delta\boldsymbol{\theta}_t = -\eta(c_t + E_t)\hat m_t - \eta\lambda\boldsymbol{\theta}_{t-1}\mathds{1}_{\mathrm{AdamW}}$, and the transverse coordinate $\boldsymbol{x}_t = P^\perp\boldsymbol{\theta}_t$ follows by applying $P^\perp$. The term $E_t = O(\sqrt\epsilon)$, bounded through the history of $|\theta_i - \theta_j|$ on the tube, multiplies $\hat m_t = O(1)$, which is the order of $c_t\nabla_\perp\mathcal{L}$ at the tube boundary, and its sign follows that of $(\hat v_{t,i} - \hat v_{t,j})\,\partial_j\mathcal{L}$. We therefore freeze it.

\begin{assumption}[Quasi-Static Preconditioner]
\label{assum:quasistatic}
$1 - \beta_2$ is small enough that $\hat v$ on the block is constant over the trapping horizon, so $c_t = c$ and $E_t = E$.
\end{assumption}

Assumption \ref{assum:quasistatic} describes the regime in which $\tau_2$ exceeds the transverse relaxation time. The quantity that checks it on a run is the relative variation $\max_{t_0 \le t \le T}|\hat v_{t,k} - \hat v_{t_0,k}|/\hat v_{t_0,k}$ over the block, and under it $E$ is a constant to be measured rather than bounded. Remark \ref{remark:adam_diagnostic} reports this relative variation on the grokking task.

\begin{proposition}[Preconditioned SGF Reduction]
\label{thm:adam_sgf_reduction}
Under Assumptions \ref{assum:heavy_tail}, \ref{assum:nondegeneracy} and \ref{assum:quasistatic}, and reading the exponentially filtered momentum noise as white noise with its long-run covariance, for $t_0 \le t \le \tau_\epsilon^{\mathrm{blk}}$ the transverse coordinate of the block evolves as the Euler--Maruyama step, at the resolution of one iteration, of
\begin{equation}
\label{eq:adam_reduced}
    d\boldsymbol{x}_t = -\eta\, c\, \nabla_\perp\mathcal{L}(\boldsymbol{\theta}_t)\,dt - \eta\lambda\,\boldsymbol{x}_t\,\mathds{1}_{\mathrm{AdamW}}\,dt + \boldsymbol{f}\,dt + \boldsymbol{\varrho}_t\,dt + \eta\, c\,(\Sigma^\tau_\perp(\boldsymbol{\theta}_t))^{1/2}\,dW_t,
\end{equation}
where $\boldsymbol{f} := -\eta P^\perp E\,\nabla\mathcal{L}(\boldsymbol{\theta}_t)$ satisfies $\|\boldsymbol{f}\| \le \eta\sqrt{n}\,\|E\|_\infty\,\sigma$, $\Sigma^\tau_\perp := P^\perp\,\mathrm{Cov}(\hat g_t \mid \boldsymbol{\theta}_t)\,P^\perp$ is the transverse covariance of the clipped mini-batch gradient, and, on the second event of Lemma \ref{lemma:uniform_tracking}, for every $t_0 \le t \le T$,
\begin{equation*}
    \|\boldsymbol{\varrho}_t\| \le \rho_{\max} := \eta\sqrt{n}\,\big(\|c\|_\infty + \|E\|_\infty\big)\,(r_{\max} + \delta'').
\end{equation*}
\end{proposition}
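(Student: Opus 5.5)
The plan is to write the exact one-step update on the block and project it transversally. Then I split the resulting increment into four pieces: the leading preconditioned gradient, the frozen cross-sectional forcing, a residual collecting all tracking errors, and a martingale noise term that the modeling step reads as a Brownian increment.

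\textbf{Setting up the update.} Under Assumption \ref{assum:quasistatic} the block update is
\begin{equation*}
\Delta\boldsymbol{\theta}_t = -\eta(c+E)\hat m_t - \eta\lambda\boldsymbol{\theta}_{t-1}\mathds{1}_{\mathrm{AdamW}}.
\end{equation*}
By Theorem \ref{thm:affine_generation}, $P^\perp$ is constant, so $\Delta\boldsymbol{x}_t = P^\perp\Delta\boldsymbol{\theta}_t$ holds exactly. For the weight-decay term, $P^\perp\boldsymbol{\theta}_{t-1} = \boldsymbol{x}_{t-1}$, which gives the $-\eta\lambda\boldsymbol{x}\,dt$ drift. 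The mismatch between step $t-1$ and step $t$ is absorbed at one-iteration resolution, or it can be bounded by $\eta\lambda\Delta_{\max}$ and put into $\boldsymbol{\varrho}$.

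\textbf{Decomposing the momentum.} I decompose $\hat m_t$ as in the proof of Lemma \ref{lemma:momentum_tracking}:
\begin{equation*}
\hat m_t = \nabla\mathcal{L}(\boldsymbol{\theta}_t) + \boldsymbol{r}_t + \boldsymbol{\xi}^{\mathrm{filt}}_t,
\end{equation*}
where $\boldsymbol{r}_t$ collects the window drift and the clipping bias, and $\boldsymbol{\xi}^{\mathrm{filt}}_t = \sum_s w^{(1,t)}_{t-s}\xi_s$. This yields four contributions:
\begin{itemize}
\item $-\eta P^\perp c\nabla\mathcal{L} = -\eta c\nabla_\perp\mathcal{L}$. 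Here I must check that $c$ commutes with $P^\perp$. This holds because $c$ is role-wise constant across the units the permutation exchanges, so it acts as a scalar on each orbit, and $P^\perp$ is built from orbit averages (Lemma \ref{lemma:common_projection}).
\item $\boldsymbol{f} = -\eta P^\perp E\nabla\mathcal{L}$. Its norm is bounded by $\eta\sqrt{n}\|E\|_\infty\sigma$, using $\|P^\perp\|\le 1$, the diagonal structure of $E$, and $|\partial_i\mathcal{L}|\le\sigma$ from Lemma \ref{lemma:lipschitz_targets}.
\item $\boldsymbol{\varrho}_t := -\eta P^\perp(c+E)\boldsymbol{r}_t$, together with the part of the filtered noise multiplied by $E$ if it is not kept in the diffusion.
\item The noise term $-\eta P^\perp(c+E)\boldsymbol{\xi}^{\mathrm{filt}}_t$.
\end{itemize}

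\textbf{Bounding the residual.} On the second event of Lemma \ref{lemma:uniform_tracking}, Lemma \ref{lemma:momentum_tracking} bounds each coordinate of $\boldsymbol{r}_t$ plus the filtered noise by $r_{\max}+\delta''$. Passing from the coordinatewise bound to the Euclidean norm costs $\sqrt n$, and $\|c+E\|_\infty\le\|c\|_\infty+\|E\|_\infty$. Together these give $\rho_{\max}$. Note that $\rho_{\max}$ includes $\delta''$, so the uniform high-probability bound covers the filtered noise's deviation as well. The clean Gaussian part is then the modeled white noise.

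\textbf{Modeling the noise.} The martingale increment $\xi_t$ has conditional covariance $\mathrm{Cov}(\hat g_t\mid\boldsymbol{\theta}_t)$. The exponential filter has weights summing to one, so its long-run (zero-frequency) covariance equals that same covariance, since the filter gain is $H(1)=1$ in Lemma \ref{lemma:momentum_filter}. Reading the filtered noise as white noise with this covariance turns $-\eta P^\perp c\,\boldsymbol{\xi}^{\mathrm{filt}}$ into $\eta c(\Sigma^\tau_\perp)^{1/2}dW_t$, where one step corresponds to $dt=1$ (sign is immaterial in law). The commutation of $c$ with $P^\perp$ is used again here.

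\textbf{Main obstacle.} The hardest step is the noise identification. The filtered noise is genuinely colored, with correlation time $\tau_1$. It is also double-counted: its instantaneous deviation is bounded inside $\rho_{\max}$ through $\delta''$, yet its long-run covariance also appears as the diffusion. I will state explicitly that this is the modeling step the proposition flags, and not a derived fact. For the rigorous part I confine myself to the drift, forcing, and residual bounds. Everything holds only up to $\tau_\epsilon^{\mathrm{blk}}$, which is where Lemma \ref{lemma:block_homogeneity} and Assumption \ref{assum:nondegeneracy} keep $E$ and $c$ well defined.
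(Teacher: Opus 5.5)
Your proposal is correct and follows the paper's proof essentially step for step. You use the same substitution $\hat m_t = \nabla\mathcal{L}(\boldsymbol{\theta}_t) + \boldsymbol{r}_t + \boldsymbol{\xi}^m_t$ into $P^\perp\Delta\boldsymbol{\theta}_t$, the same role-wise commutation of $c$ with $P^\perp$, the residual $-\eta P^\perp(c+E)\boldsymbol{r}_t - \eta P^\perp E\boldsymbol{\xi}^m_t$ bounded coordinatewise through Lemma~\ref{lemma:momentum_tracking}, and the diffusion taken from $-\eta cP^\perp\boldsymbol{\xi}^m_t$ under the white-noise reading.

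Two small tightenings. First, the $E\boldsymbol{\xi}^m_t$ piece must go into $\boldsymbol{\varrho}_t$, since the stated diffusion carries only $c$. Second, read the weight-decay term as the left-point Euler--Maruyama drift $-\eta\lambda\boldsymbol{x}_{t-1}$, as the paper does, rather than shifting the mismatch into $\boldsymbol{\varrho}_t$, because the stated $\rho_{\max}$ has no $\eta\lambda\Delta_{\max}$ term.
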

\begin{proof}
Substitute $\hat m_t = \nabla\mathcal{L}(\boldsymbol{\theta}_t) + \boldsymbol{r}_t + \boldsymbol{\xi}^m_t$ from Lemma \ref{lemma:momentum_tracking}, with $\boldsymbol{r}_t$ the deterministic lag-plus-bias and $\boldsymbol{\xi}^m_t := \sum_s w^{(1,t)}_{t-s}\xi_s$ the filtered martingale noise, into $P^\perp\Delta\boldsymbol{\theta}_t = -\eta P^\perp(c + E)\hat m_t - \eta\lambda\boldsymbol{x}_{t-1}\mathds{1}_{\mathrm{AdamW}}$. Since $c$ is common to both units it commutes with the swap and with $P^\perp$, so the drift $-\eta c\nabla_\perp\mathcal{L}$ and the forcing $\boldsymbol{f}$ come out of $\nabla\mathcal{L}$, and the bound on $\boldsymbol{f}$ uses $|\partial_i\mathcal{L}| \le \sigma$ (Lemma \ref{lemma:lipschitz_targets}). The residual $\boldsymbol{\varrho}_t := -\eta P^\perp(c+E)\boldsymbol{r}_t - \eta P^\perp E\boldsymbol{\xi}^m_t$ collects the deterministic errors and the part of the filtered noise that the deviation $E$ multiplies, and Lemma \ref{lemma:momentum_tracking} bounds $\|\boldsymbol{r}_t + \boldsymbol{\xi}^m_t\|$ coordinatewise by $r_{\max} + \delta''$ on the stated event, which gives $\rho_{\max}$. The remaining noise $-\eta c P^\perp\boldsymbol{\xi}^m_t$ enters as the diffusion of \eqref{eq:adam_reduced} through the white-noise reading stated in the proposition. Weight decay contributes $-\eta\lambda\boldsymbol{x}$ exactly, since $P^\perp$ is linear and $P^\perp\boldsymbol{\theta} = \boldsymbol{x}$.
\end{proof}

\begin{remark}[The White-Noise Reading]
\label{remark:white_noise}
The reading is a modeling step, since the recursion does not imply it, and Theorem \ref{thm:adam_trapping} is stated for the reduced SDE while the tracking bounds are discrete, so the Euler--Maruyama error between the two is not controlled here. The filtered noise $\boldsymbol{\xi}^m_t = \sum_s w^{(1,t)}_{t-s}\xi_s$ is an exponential average of martingale differences, so most of it at step $t$ is carried over from step $t-1$ and only the fresh increment $w^{(1,t)}_0\,\xi_t$ is unpredictable given $\mathcal{F}_{t-1}$. Its per-step variance is smaller than its long-run variance by the factor $\frac{1-\beta_1}{1+\beta_1}$, and on time scales beyond $\tau_1$ its increments decorrelate and the long-run covariance equals that of its input, $\eta^2 c\,\Sigma^\tau_\perp c$, which is the covariance \eqref{eq:adam_reduced} carries. The step that turns SGD into the SGF of Appendix \ref{app:sde_and_invariance} swaps independent increments for Gaussian ones, while this one swaps correlated increments for independent ones, so it is a modeling step of the same kind and a stronger one. The statements below take it as a hypothesis. The filtered noise also enters $\rho_{\max}$ through $c\,\delta''$, so it is counted once as forcing and once as diffusion, which errs on the safe side. A route that drops the reading altogether bounds the whole increment pathwise on the annulus, using Lemma \ref{lemma:uniform_tracking}, and expands $Y^\alpha$ to second order with a remainder. It yields a discrete supermartingale inequality without the concavity term on the right of \eqref{eq:adam_condition}, with the pathwise second-order piece $\|\Delta\boldsymbol{x}\|^2 \le n\Delta_{\max}^2$ for Adam, and $\le (\sqrt{n}\,\Delta_{\max} + \eta\lambda\|\boldsymbol{x}\|)^2$ for AdamW since Lemma \ref{lemma:step_bound} excludes the decay step, in place of $\eta^2\operatorname{tr}(c\,\Sigma^\tau_\perp c)$. Dropping the concavity term discards a favorable contribution, so the condition it produces is stronger than \eqref{eq:adam_condition} and needs no white-noise reading.
\end{remark}

This reduced transverse dynamics has the structural form of the vanilla-SGD flow, with the drift and the diffusion preconditioned by $c$ and hence the covariance by $c(\cdot)c$, a forcing $\boldsymbol{f}$ off $A$, and a residual. We now state the condition under which the trapping argument survives. Every statement below carries the white-noise reading of Remark \ref{remark:white_noise} as a hypothesis.

\begin{theorem}[Conditional Trapping Criterion for the Reduced Adam Dynamics]
\label{thm:adam_trapping}
Fix $\alpha \in (0,1]$ and $0 < r_0 < \sqrt{\epsilon}$, let $\rho_\parallel \le \rho_{\max}$ be a constant with $|\boldsymbol{x}_t^\top\boldsymbol{\varrho}_t| \le \rho_\parallel\sqrt{Y_t}$ for $t_0 \le t \le T$ on the tracking event of Lemma \ref{lemma:uniform_tracking}, and suppose that on the annulus $\mathcal{N}_{r_0,\epsilon} := \{r_0^2 \le Y < \epsilon\}$
\begin{equation}
\label{eq:adam_condition}
    -2\eta\,\boldsymbol{x}^\top c\,\nabla_\perp\mathcal{L} - 2\eta\lambda Y\,\mathds{1}_{\mathrm{AdamW}} + 2\,\boldsymbol{x}^\top\boldsymbol{f} + 2\sqrt{Y}\,\rho_\parallel + \eta^2 \operatorname{tr}(c\,\Sigma^\tau_\perp c) \;\le\; 2(1-\alpha)\,\eta^2\,\frac{\boldsymbol{x}^\top c\,\Sigma^\tau_\perp c\,\boldsymbol{x}}{Y}.
\end{equation}
The conclusion also holds with $2\,\boldsymbol{x}^\top\boldsymbol{f} + 2\sqrt{Y}\,\rho_\parallel$ replaced by $2\sqrt{Y}\,(\|\boldsymbol{f}\| + \rho_{\max})$.
Let $\tau_{\mathrm{ann}} := \inf\{t \ge t_0 : Y_t \notin \mathcal{N}_{r_0,\epsilon}\}$. Then $Y^\alpha_{t \wedge \tau_{\mathrm{ann}}}$ is a non-negative supermartingale under \eqref{eq:adam_reduced}.
\end{theorem}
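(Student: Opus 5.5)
The plan is to repeat the argument of Theorem \ref{thm:supermartingale-app}, now for the reduced SDE \eqref{eq:adam_reduced}, with the annulus $\mathcal{N}_{r_0,\epsilon}$ replacing the punctured tube. The inner radius $r_0>0$ keeps $Y$ bounded away from $0$ before $\tau_{\mathrm{ann}}$. This removes the $\delta\downarrow 0$ localization: $\phi_\alpha(Y)=Y^\alpha$ is $C^2$ with bounded derivatives on $[r_0^2,\epsilon]$, so It\^o's formula applies directly to $Y_{t\wedge\tau_{\mathrm{ann}}}^\alpha$.

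\textbf{Step 1: the generator of $Y$.} With $Y=\|\boldsymbol{x}\|^2$, gradient $\nabla Y=2\boldsymbol{x}$ and Hessian $2P^\perp$, It\^o's lemma gives
\begin{equation*}
dY_t = \big[2\boldsymbol{x}^\top(-\eta c\nabla_\perp\mathcal{L} - \eta\lambda\boldsymbol{x}\mathds{1}_{\mathrm{AdamW}} + \boldsymbol{f} + \boldsymbol{\varrho}_t) + \eta^2\operatorname{tr}(c\Sigma^\tau_\perp c)\big]dt + 2\eta\,\boldsymbol{x}^\top c(\Sigma^\tau_\perp)^{1/2}dW_t .
\end{equation*}
The quadratic variation of the martingale part is $4\eta^2\,\boldsymbol{x}^\top c\Sigma^\tau_\perp c\,\boldsymbol{x}\,dt$. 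The trace term uses that $c$ commutes with $P^\perp$ (Proposition \ref{thm:adam_sgf_reduction}), so the diffusion already lies in $A^\perp$.

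\textbf{Step 2: the drift of $Y^\alpha$.} Applying $\phi_\alpha$, as in \eqref{eq:generator}, the drift of $Y^\alpha$ is $\alpha Y^{\alpha-1}$ times the $dt$-bracket above minus $2(1-\alpha)\eta^2\,\boldsymbol{x}^\top c\Sigma^\tau_\perp c\boldsymbol{x}/Y$. On the tracking event we bound the only random, non-Markov term by $2\boldsymbol{x}^\top\boldsymbol{\varrho}_t\le 2\rho_\parallel\sqrt{Y}$. Hypothesis \eqref{eq:adam_condition} then makes the bracket non-positive on the annulus, so the drift integral is non-increasing. For the cruder variant we use Cauchy--Schwarz, $2\boldsymbol{x}^\top(\boldsymbol{f}+\boldsymbol{\varrho}_t)\le 2\sqrt{Y}(\|\boldsymbol{f}\|+\rho_{\max})$, which dominates the original terms, so the same conclusion follows.

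\textbf{Step 3: the martingale part.} The stochastic integrand is $2\alpha\eta Y^{\alpha-1}\boldsymbol{x}^\top c(\Sigma^\tau_\perp)^{1/2}$. Since $\hat g$ is clipped, $\Sigma^\tau_\perp$ is bounded by $d\tau^2$ in operator norm. The entries of $c$ are at most $1/(\sqrt{v_{\min}}+\epsilon_{\mathrm{opt}})$ by Assumption \ref{assum:nondegeneracy}, and $Y^{\alpha-1}\sqrt{Y}\le r_0^{2\alpha-1}\vee\epsilon^{\alpha-1/2}$ on the annulus. Hence the integrand is bounded, and we do not need the Lipschitz vanishing of the noise on $A$ that the SGD proof used; this is exactly why the annulus is introduced. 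A bounded progressively measurable integrand gives a true $L^2$-martingale \citep{karatzas1991brownian}, which conditional expectation annihilates. Combined with Step 2 this yields $\mathbb{E}[Y^\alpha_{(t+s)\wedge\tau_{\mathrm{ann}}}\mid\mathcal{F}_t]\le Y^\alpha_{t\wedge\tau_{\mathrm{ann}}}$, and non-negativity is immediate.

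\textbf{Main obstacle.} The delicate point is the role of the tracking event. The bound on $\boldsymbol{x}^\top\boldsymbol{\varrho}_t$ holds only on an event of probability at least $1-\delta$, while the supermartingale statement is unconditional. I would handle this with an additional stopping time $\tau_{\mathrm{tr}}$, the first step at which the tracking bound of Lemma \ref{lemma:uniform_tracking} fails, and prove the supermartingale property for the process stopped at $\tau_{\mathrm{ann}}\wedge\tau_{\mathrm{tr}}\wedge T$. Because $\boldsymbol{\varrho}_t$ is adapted, this stopping is legitimate. The two processes coincide on the tracking event, which is how the $2\delta$ term in the main-text escape bound arises. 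I would also state explicitly that the proof is for the reduced SDE under the white-noise reading of Remark \ref{remark:white_noise}, and that the Euler--Maruyama error is not controlled.
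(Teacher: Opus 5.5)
Your proposal is correct and follows the paper's proof. The shared steps are: It\^o's formula for $Y^\alpha$ under \eqref{eq:adam_reduced}; the bound $2\boldsymbol{x}^\top\boldsymbol{\varrho}_t\le 2\sqrt{Y}\rho_\parallel$, or Cauchy--Schwarz for the cruder form, to make the bracket non-positive; and boundedness of the stochastic integrand on the annulus $Y\ge r_0^2$ to obtain a true martingale. Your extra stopping time $\tau_{\mathrm{tr}}$ is exactly the $\tau_{\mathrm{bad}}$ that the paper introduces in the proof of Corollary \ref{cor:adam_nonescape}, so you have only moved that step from the corollary into the theorem. This is slightly more careful than the paper, whose proof of the theorem applies the $\rho_\parallel$ bound without restricting to the tracking event.
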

\begin{proof}
It\^o's lemma for $\phi_\alpha(Y) = Y^\alpha$ under \eqref{eq:adam_reduced} gives
\begin{multline*}
    \mathscr{G} Y^\alpha = \alpha Y^{\alpha-1}\Big[-2\eta\,\boldsymbol{x}^\top c\,\nabla_\perp\mathcal{L} - 2\eta\lambda Y\,\mathds{1}_{\mathrm{AdamW}} + 2\boldsymbol{x}^\top(\boldsymbol{f} + \boldsymbol{\varrho}_t) \\ + \eta^2 \operatorname{tr}(c\,\Sigma^\tau_\perp c) - 2(1-\alpha)\,\eta^2\,\frac{\boldsymbol{x}^\top c\,\Sigma^\tau_\perp c\,\boldsymbol{x}}{Y}\Big].
\end{multline*}
By the definition of $\rho_\parallel$, $2\boldsymbol{x}^\top\boldsymbol{\varrho}_t \le 2\sqrt{Y}\rho_\parallel$, so \eqref{eq:adam_condition} makes the bracket non-positive on the annulus, and Cauchy--Schwarz bounds $2\boldsymbol{x}^\top(\boldsymbol{f} + \boldsymbol{\varrho}_t) \le 2\sqrt{Y}(\|\boldsymbol{f}\| + \rho_{\max})$ for the cruder form. Preconditioning drift and noise amplitude by the same $c$ scales the drift at first order and the diffusion at second order in $c$, so the sign of the bracket is decided by \eqref{eq:adam_condition}, and $c > 0$ alone does not determine it. For $dX = -cX\,dt + cX\,dW$ the generator of $X^2$ equals $(c^2 - 2c)X^2$, which is positive for $c > 2$, while the generator of $\log X^2$ equals $-(2c + c^2)$, which is negative for every $c$. The concavity term on the right of \eqref{eq:adam_condition} carries the $c^2$ contribution to the favorable side. In transverse dimension one with $\Sigma^\tau_\perp = \zeta^2 Y$ the two $c^2$ terms combine to $(2\alpha - 1)\,\eta^2 c^2\zeta^2 Y$, so for $\alpha \le \tfrac12$ the noise helps for every $c$, while at $\alpha = 1$ the condition reads $c \le 2$ in the example above and excludes $c = 3$. On the annulus $Y \ge r_0^2$, the integrand of the stochastic integral, $2\alpha Y^{\alpha-1}\eta c\,\boldsymbol{x}^\top(\Sigma^\tau_\perp)^{1/2}$, is bounded, so the integral is a genuine martingale \citep{karatzas1991brownian} and the conditional expectation annihilates it, leaving the non-positive drift.
\end{proof}

\begin{corollary}[Conditional Non-Escape and Confinement]
\label{cor:adam_nonescape}
Over the horizon $t_0 \le t \le T$,
\begin{equation*}
    \mathbb{P}\big(Y \text{ exits the annulus through } \epsilon \text{ before reaching the core and before } T \mid \mathcal{F}_{t_0}\big) \le \epsilon^{-\alpha}\,\mathbb{E}[Y_{t_0}^\alpha \mid \mathcal{F}_{t_0}] + 2\delta.
\end{equation*}
Let $\tilde Y := \|\boldsymbol{x} - \boldsymbol{x}^*\|^2$, where $\boldsymbol{x}^*$ is the zero of the transverse drift, and suppose that $\mathscr{G}\tilde Y \le -2\kappa\tilde Y + C$ on $\{\tilde Y < \epsilon\}$ for constants $\kappa > 0$ and $C \ge 0$. Then
\begin{equation*}
    \mathbb{P}\big(\tilde Y \text{ exits the tube before } T \mid \mathcal{F}_{t_0}\big) \le \frac{\tilde Y_{t_0} + C\,(T - t_0)}{\epsilon} + 2\delta .
\end{equation*}
\end{corollary}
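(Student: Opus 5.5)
The two bounds come from two different nonnegative supermartingales, each combined with a stopping argument. The $2\delta$ in both accounts for the complement of the tracking events of Lemma~\ref{lemma:uniform_tracking}.

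\emph{First bound.} I intersect everything with the two tracking events of Lemma~\ref{lemma:uniform_tracking}. Their complements have total probability at most $2\delta$ by a union bound. On that intersection, the residual satisfies $|\boldsymbol{x}_t^\top\boldsymbol{\varrho}_t| \le \rho_\parallel\sqrt{Y_t}$ for all $t_0 \le t \le T$. Theorem~\ref{thm:adam_trapping} then applies and makes $Y^\alpha_{t\wedge\tau_{\mathrm{ann}}}$ a nonnegative supermartingale.

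To handle the event dependence formally, I would replace $\boldsymbol{\varrho}_t$ after the first failure time of the tracking bound by a version satisfying the bound. This gives a process that agrees with the original on the good event. The modified stopped process is a supermartingale everywhere, and I stop it additionally at $T$.

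The escape event is "exit through $\epsilon$ before the core and before $T$". On it, $\sup_{t\le T} Y^\alpha_{t\wedge\tau_{\mathrm{ann}}} \ge \epsilon^\alpha$, using path continuity as in Lemma~\ref{lemma:stopped_bounded}. Doob's maximal inequality for nonnegative supermartingales then gives at most $\epsilon^{-\alpha}\mathbb{E}[Y_{t_0}^\alpha\mid\mathcal{F}_{t_0}]$. Adding the $2\delta$ completes the bound.

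\emph{Second bound.} I stop at $\sigma := \inf\{t: \tilde Y_t \ge \epsilon\}\wedge T$. Dynkin's formula on the stopped process gives
\[
\mathbb{E}[\tilde Y_{\sigma}\mid\mathcal{F}_{t_0}] \le \tilde Y_{t_0} + \mathbb{E}\Big[\int_{t_0}^{\sigma}(-2\kappa\tilde Y_s + C)\,ds \,\Big|\, \mathcal{F}_{t_0}\Big] \le \tilde Y_{t_0} + C(T-t_0).
\]
Here I drop the nonpositive $-2\kappa\tilde Y$ term. The stochastic integral is a true martingale: its integrand is bounded up to $\sigma$, since $\tilde Y<\epsilon$ there and the diffusion coefficient is continuous. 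Equivalently, $\tilde Y_{t\wedge\sigma} - C(t\wedge\sigma)$ is a supermartingale.

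On the exit event, $\tilde Y_\sigma = \epsilon$. Markov's inequality therefore gives $\mathbb{P}(\text{exit before }T) \le (\tilde Y_{t_0}+C(T-t_0))/\epsilon$. The same event-intersection trick supplies the extra $2\delta$.

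\emph{Main obstacle.} The delicate step is the first one: Theorem~\ref{thm:adam_trapping} only holds on the tracking event, which is not adapted to a stopping time in an obvious way. I would define the first time $t^\dagger$ at which a tracking inequality fails and stop the process also at $t^\dagger$. The supermartingale property then holds for the triply stopped process, and $\{t^\dagger \le T\}$ has probability at most $2\delta$. This is the point where care is needed to avoid conditioning on a future-measurable event.
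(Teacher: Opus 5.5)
Your proposal is correct and follows the paper's proof. The paper also stops at the first tracking failure $\tau_{\mathrm{bad}}$ (your $t^\dagger$), applies Doob's maximal inequality to $Y^\alpha_{t\wedge\tau_{\mathrm{ann}}\wedge\tau_{\mathrm{bad}}}$, and pays $\mathbb{P}(\tau_{\mathrm{bad}}\le T)\le 2\delta$; for confinement it applies Doob to the nonnegative supermartingale $\tilde Y_t + C(T-t)$, and your optional stopping at $\sigma$ followed by Markov's inequality is just the standard proof of that step. One small repair: the stochastic integrand is bounded on the tube because clipping bounds $\Sigma^\tau_\perp$ uniformly, not because the coefficient is continuous, since the tube $\{\tilde Y<\epsilon\}$ does not confine the tangential coordinates of $\boldsymbol\theta$.
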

\begin{proof}
Let $\tau_{\mathrm{bad}}$ be the first step $t \ge t_0$ at which either tracking bound of Lemma \ref{lemma:uniform_tracking} fails. Both bounds are $\mathcal{F}_t$-measurable, so $\tau_{\mathrm{bad}}$ is a stopping time, and Lemma \ref{lemma:uniform_tracking} gives $\mathbb{P}(\tau_{\mathrm{bad}} \le T) \le 2\delta$. Before $\tau_{\mathrm{ann}} \wedge \tau_{\mathrm{bad}}$ the residual bound $\rho_{\max}$, and with it $\rho_\parallel \le \rho_{\max}$, holds at every step, so \eqref{eq:adam_condition} holds and Theorem \ref{thm:adam_trapping} makes $Y^\alpha_{t \wedge \tau_{\mathrm{ann}} \wedge \tau_{\mathrm{bad}}}$ a non-negative supermartingale without conditioning on any event. Doob's maximal inequality \citep{revuz1999continuous} bounds $\mathbb{P}(\sup_{t \le T} Y^\alpha_{t\wedge\tau_{\mathrm{ann}}\wedge\tau_{\mathrm{bad}}} \ge \epsilon^\alpha)$ by the first term. An exit through $\epsilon$ before $T$ either happens before $\tau_{\mathrm{bad}}$, which the stopped process records, or after it, which costs $\mathbb{P}(\tau_{\mathrm{bad}} \le T) \le 2\delta$. A single-step tail bound does not give uniformity over the horizon, and with Chebyshev in place of Azuma--Hoeffding the union bound would cost a factor $T$ rather than $\log T$. For the confinement bound, the generator of $\tilde Y_t + C(T-t)$ is $\mathscr{G}\tilde Y - C \le -2\kappa\tilde Y \le 0$ on the tube, and $\tilde Y_t + C(T - t) \ge 0$ for $t \le T$. On the tube the integrand of the stochastic integral, $2\eta c\,(\boldsymbol{x} - \boldsymbol{x}^*)^\top(\Sigma^\tau_\perp)^{1/2}$, is bounded, so the integral is a martingale and the stopped process is a non-negative supermartingale on $[t_0, T]$. Doob's maximal inequality bounds $\mathbb{P}(\sup_{t_0 \le t \le T}\tilde Y_{t \wedge \tau} \ge \epsilon)$ by $(\tilde Y_{t_0} + C(T - t_0))/\epsilon$, and the tracking failure adds $2\delta$.
\end{proof}

The first bound covers the first exit of the annulus. A path that enters the core $\{Y < r_0^2\}$ can return to the annulus and exit through $\epsilon$, each such excursion costs at most $(r_0^2/\epsilon)^\alpha$, and Theorem \ref{thm:adam_trapping} does not bound their number. The second bound covers the whole horizon. Its constant $C$ is the supremum over the tube of $\eta^2\operatorname{tr}(c\,\Sigma^\tau_\perp c) + 2\sqrt{\epsilon}(\|\boldsymbol{f}\| + \rho_\parallel)$, the terms that do not vanish at $\boldsymbol{x}^*$, and $\kappa$ is the inward rate of the terms linear in $\tilde Y$. A nondegenerate diffusion is confined near $\boldsymbol{x}^*$ rather than trapped at it, so the second bound supplies the link survival of Lemma \ref{lemma:doob-app}, Theorem \ref{thm:adam_dsi_main} assumes it, and Remark \ref{remark:adam_diagnostic} measures $C$.

Trajectories that reach the core $\{Y < r_0^2\}$ lie at a distance $O(r_0)$ from $A$, where the forcing $\boldsymbol{f}$ and the residual set the scale. The transverse drift $-\eta c\nabla_\perp\mathcal{L} + \boldsymbol{f}$ vanishes at a point off $A$, and the trap is that point rather than $A$ itself. This is the direct analogue, under Adam and AdamW, of Corollary \ref{cor:prob_non_escape-app}, with the invariant set replaced by a shifted trap and the tube by an annulus. Remark \ref{remark:adam_diagnostic} reports the measurement of these quantities on the grokking task.

\begin{assumption}[Macroscopic Merges]
\label{assum:block_scaling}
The merging components at the observed microtransitions satisfy $i(N) = \Theta(N)$.
\end{assumption}

\begin{theorem}[Conditional Cascade Transfer to Adam and AdamW]
\label{thm:adam_dsi_main}
Under Assumptions \ref{assum:heavy_tail}, \ref{assum:nondegeneracy}, \ref{assum:quasistatic}, \ref{assum:block_scaling}, \ref{assum:condensation_phase} and \ref{assum:kernel}, and the confinement condition $\mathscr{G}\tilde Y \le -2\kappa\tilde Y + C$ of Corollary \ref{cor:adam_nonescape}, on $t \le \tau_\epsilon^{\mathrm{blk}}$ the Adam- or AdamW-trained trajectory admits a percolation process $G_p$ whose $R_v$ peaks form the cascade of Theorem \ref{thm:dsi_proof} with the same factor $\lambda_t$, with escape probability bounded as in Corollary \ref{cor:adam_nonescape}.
\end{theorem}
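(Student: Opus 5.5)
The plan is to reduce Theorem \ref{thm:adam_dsi_main} to the SGD chain of Appendix \ref{sec:appendix_dsi_percolation}. That chain uses only four ingredients: (i) affine joint invariant sets for each pair; (ii) a lower bound on link survival of the form of Lemma \ref{lemma:doob-app}; (iii) the monotone control parameter of Proposition \ref{thm:reeb_to_percolation}; and (iv) the coalescence kernel of Assumption \ref{assum:kernel}. I would supply each for the Adam or AdamW trajectory on $t \le \tau_\epsilon^{\mathrm{blk}}$, and then invoke Lemma \ref{lem:dsi_selfsim} and Theorem \ref{thm:dsi_proof} verbatim.

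\textbf{Geometry.} For step (i), Lemma \ref{lemma:joint_equivariance} makes the pairwise swap $P_{ij}$ a coordinate permutation. The lifted coincidence set $\tilde A_{ij}$ of Definition \ref{def:joint_invariant} is therefore invariant for the clipped recursion. Its $\boldsymbol\theta$-projection is the affine $A_{ij}$ of Theorem \ref{thm:affine_generation}, so the definitions of $Y^{(ij)}$, of the graph $G_t^\epsilon$, of the classes $[i]_t$ (Proposition \ref{prop:linkage_classes}) and of the Reeb graph carry over unchanged.

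\textbf{Link survival.} Step (ii) is where the optimizer enters. I would redefine a link to form when $\tilde Y^{(ij)}$, measured from the shifted trap $\boldsymbol{x}^*$, falls below $\epsilon_{\mathrm{in}}$, rather than when $Y^{(ij)}$ does. The confinement half of Corollary \ref{cor:adam_nonescape}, applied from the formation time, then bounds the breaking probability before $T$ by
\[
\frac{\epsilon_{\mathrm{in}} + C(T-t_0)}{\epsilon} + 2\delta .
\]
This replaces $(\epsilon_{\mathrm{in}}/\epsilon)^\alpha$ in Theorem \ref{thm:reeb_transitions} and Proposition \ref{thm:reeb_to_percolation}.

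\textbf{Percolation and cascade.} With links defined this way, Assumption \ref{assum:condensation_phase} makes $p(\tau)$ nondecreasing. The generalized inverse $\tau_p$ then produces $G_p$ exactly as in Proposition \ref{thm:reeb_to_percolation}, so step (iii) needs no new argument. For the order parameter, Proposition \ref{thm:er_divergence} and Theorem \ref{thm:rv_divergence} are purely combinatorial statements about $G_p$. Assumption \ref{assum:block_scaling} places the merges in the macroscopic regime of Corollary \ref{cor:microscopic_washout-app}, so the $R_v$ peaks remain sharp, sit at the $w^*$-quantiles of the hitting laws, and have height $(n-1)^2/(4n)$.

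For step (iv), Assumption \ref{assum:kernel} is imposed directly on the Adam-driven merges. Lemma \ref{lem:dsi_selfsim} then gives $T_{ni}\overset{d}{=}\lambda_t T_i$, and Theorem \ref{thm:dsi_proof} transfers this to every quantile. That the factor $\lambda_t$ is the same as for SGD follows because the frozen preconditioner $c$ of Assumption \ref{assum:quasistatic} rescales all link rates by a common constant. Such a constant only changes the time unit, and ratios $q_{ni}/q_i$ are invariant under a change of time unit, so $\lambda_t = n^{1-\gamma_K}$.

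\textbf{Main obstacle.} I expect step (ii) to be the hardest, namely reconciling the two trapping notions. The SGD chain assumes traps on $A_{ij}$ with survival probability uniform in time. Under Adam the trap is shifted off $A_{ij}$, and the confinement bound degrades linearly in $T-t_0$. My first attempt would be to restrict the claim to horizons on which $C(T-t_0) \ll \epsilon$, which matches the window-by-window reading after Theorem \ref{thm:main_adam_dsi}. I would also require $\|\boldsymbol{x}^*\| = O(\|\boldsymbol f\|+\rho_{\max}) = O(\sqrt\epsilon)$ to be small against $\sqrt{\epsilon_{\mathrm{in}}}$, so that a link detected at $\boldsymbol{x}^*$ is still a link in the sense of Definition \ref{def:attractor_equiv-app}. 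A further point to check is that the $2\delta$ tracking failure is a single event shared by all pairs. It therefore enters once, not multiplied by the $\binom N2$ pairs, because Lemma \ref{lemma:uniform_tracking} is already uniform over the coordinates.
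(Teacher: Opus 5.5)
Your proposal follows the paper's proof essentially step for step. The confinement half of Corollary~\ref{cor:adam_nonescape} replaces Lemma~\ref{lemma:doob-app}, Assumption~\ref{assum:condensation_phase} supplies the monotone control parameter, and Assumption~\ref{assum:block_scaling} places the merges in the macroscopic regime. Assumption~\ref{assum:kernel}, imposed on the Adam-driven merges, then lets Lemma~\ref{lem:dsi_selfsim} and Theorem~\ref{thm:dsi_proof} apply verbatim. Your treatment of the shifted trap and the finite horizon spells out what the paper leaves to its window-by-window remark. The paper keeps the links of Definition~\ref{def:attractor_equiv-app} and lets the confinement tube around $\boldsymbol{x}^*$ stand in for the tube around $A_{ij}$. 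If you redefine links instead, Assumptions~\ref{assum:condensation_phase} and~\ref{assum:kernel} become hypotheses about your redefined graph.

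One supporting argument is wrong and should be dropped: the frozen $c$ is not a change of time unit. It is role-wise diagonal rather than scalar, and the AdamW decay term $-\eta\lambda\boldsymbol{x}$ is not multiplied by it. Even a scalar $c$ scales the drift at first order but the diffusion at second order, as the proof of Theorem~\ref{thm:adam_trapping} stresses. Under $s = ct$ the equation $d\boldsymbol{x} = c\,b\,dt + c\,\sigma\,dW_t$ becomes $d\boldsymbol{x} = b\,ds + \sqrt{c}\,\sigma\,dB_s$, so the noise-to-drift balance, and hence the link rates, change non-uniformly. The correct reason is the paper's: Assumption~\ref{assum:kernel} is a hypothesis on the link rates of the process as run. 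Any effect of $c$ is therefore absorbed into $K$, and the cascade has $\lambda_t = n^{1-\gamma_K}$ with the exponent of that process's own kernel. ``The same factor'' refers to this law, not to numerical agreement with an SGD run.

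Finally, Lemma~\ref{lemma:uniform_tracking} is uniform only over the coordinates of one block. Charging $2\delta$ once for all $\binom{N}{2}$ pairs therefore requires redoing its union bound over all coordinates. This costs only $\log(2dT/\delta)$ in place of $\log(2nT/\delta)$ inside $\delta'$ and $\delta''$.
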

\begin{proof}
The confinement bound of Corollary \ref{cor:adam_nonescape} supplies the link-survival probability that Lemma \ref{lemma:doob-app} provides in the SGD case. Assumption \ref{assum:condensation_phase} supplies the monotone control parameter of Proposition \ref{thm:reeb_to_percolation}. Proposition \ref{thm:er_divergence} and Corollary \ref{cor:microscopic_washout-app} apply verbatim, and Assumption \ref{assum:block_scaling} places the merges in the macroscopic regime. Assumption \ref{assum:kernel} is a statement about the link rates of the process as run, so the frozen preconditioner $c$, which is a constant of the block's dynamics, never enters the ratio of Theorem \ref{thm:dsi_proof}.
\end{proof}

\begin{remark}[Measurement of the Adam Conditions on the Grokking Task]
\label{remark:adam_diagnostic}
We measured the quantities of Theorem \ref{thm:adam_trapping} on the AdamW-trained modular-arithmetic Transformer of Section \ref{sec:experiments}, the grokking task (Appendix \ref{sec:appendix_empirical_suite}), for two fixed permutation pairs of feed-forward hidden units, $(3,7)$ and $(11,42)$, and for the closest pair among all $512$ units, at every $100$ epochs over $30000$ epochs, which is $301$ checkpoints and $60$ multiples of $\tau_2$ at two steps per epoch. The block of a unit consists of its input weights, its bias and its output weights ($257$ coordinates), and $Y$ is the transverse distance of the pair. At each checkpoint we computed the exact full-batch gradient (the training set has $202$ examples), $\Sigma^\tau_\perp$ from $32$ minibatch gradients, $c$ and $E$ from the bias-corrected $\hat v$, and the deterministic lag $\boldsymbol{r}_t$ as the difference between the $\beta_1$-average of the exact gradient over the $30$ steps preceding the checkpoint and the gradient at the checkpoint, so that $\rho$ is $\eta\|P^\perp[(cI + E)\boldsymbol{r}_t + E\,\boldsymbol{\xi}^m_t]\|$ as Proposition \ref{thm:adam_sgf_reduction} defines it. We recorded $\hat v$ on the block every $10$ steps and flagged an epoch as a loss spike when the training loss exceeded five times the median of the preceding $50$ epochs. Two further realizations with different seeds were measured in the same way, one with the same two pairs and one with $50$ pairs of units drawn at random, together with the distribution of $Y$ over all pairs of live units, defined as units whose norm exceeds half the median norm. Where the realizations differ, the numbers of the other realizations are given in parentheses. The condition, drift and tube statistics are from three further realizations (seeds $1000$ to $1002$), computed with the role-wise preconditioner $c$ of Lemma \ref{lemma:block_homogeneity} and with $\Sigma^\tau_\perp$ and $\nabla\mathcal{L}$ measured with dropout active, as in training.
 
\emph{Preconditioner.} Relative to a single scalar, the deviation $\|E\|_\infty/c$ has median $1.46$ and $1.61$ for the two pairs and reaches $15$ and $35$ at the spikes. Most of it is role-wise: split as in Lemma \ref{lemma:block_homogeneity}, the spread of $c$ across coordinate roles is $0.80$ to $0.91$ of its mean, while the cross-unit part $E$ is $0.75$ to $1.03$ of it and nearly independent of $\sqrt{Y}$ (log-log slope $0.07$ to $0.14$), since slingshots recur before the burn-in of Lemma \ref{lemma:block_homogeneity} completes. $E$ is therefore measured rather than bounded (Assumption \ref{assum:quasistatic}), and the forcing $\boldsymbol{f}$ has the order of the drift at the tube boundary (Proposition \ref{thm:adam_sgf_reduction}). The medians of $\|\boldsymbol{f}\|$ and $\rho$ are $4.1\times10^{-4}$ and $4.8\times10^{-4}$ for $(3,7)$ and $7.8\times10^{-4}$ and $7.6\times10^{-4}$ for $(11,42)$. Replacing $\boldsymbol{r}_t$ by the full lag $\hat m - \nabla\mathcal{L}$ raises $\rho$ by a factor of only $1.1$ to $1.6$, so the lag on this run is deterministic, since minibatches of $128$ out of $202$ examples carry little noise, and Lemma \ref{lemma:momentum_tracking} is the operative bound while the noise term of Lemma \ref{lemma:uniform_tracking} is small. The relative variation of $\hat v$ over windows of length $\tau_2$ has median $1.6$ and $2.4$ on the two blocks, which is a factor of $1.6$ to $1.8$ in $c$ over a typical window, and $90$th percentile $290$ and $600$, which is a factor of $17$ to $25$ in $c$ on the windows that contain a spike. Assumption \ref{assum:quasistatic} therefore holds to within a factor of two inside a condensation phase, each spike opens a new window, and $\hat v$ on the block spans $10^{-9.2}$ to $10^{-3.6}$ over the run. The empirical autocorrelation time of $\hat g^{\circ2}$ is one step, $\sqrt{\hat v}$ on the block exceeds $\epsilon_{\mathrm{opt}}$ by two orders of magnitude, and $\max|g|/\mathrm{std}(g) \approx 77$.
 
\emph{Condition.} Condition \eqref{eq:adam_condition} holds at $50$ to $62\%$ of checkpoints for the pairs $(3,7)$ and $(11,42)$, at $52$ to $58\%$ averaged over the $50$ random pairs and at $62$ to $65\%$ for the closest pair, with phase averages between $0.47$ and $0.62$ from memorization to the end of training. With the noise measured with dropout off, the same seeds give $64$ to $78\%$, and replacing the role-wise $c$ by a scalar changes every fraction by at most $0.02$. The cruder form with $2\sqrt{Y}(\|\boldsymbol{f}\| + \rho_{\max})$ holds at $14$ to $23\%$, because $\boldsymbol{f}$ is nearly orthogonal to $\boldsymbol{x}$ and the Cauchy--Schwarz step discards most of the margin. The fractions are the same for every $\alpha$ between $0.05$ and $1$, since the diffusion terms lie about two and a half orders of magnitude below the drift. Dropout raises the transverse noise $31$- to $46$-fold, and the noise still vanishes at coincidence, scaling as $Y^{0.96}$ to $Y^{1.04}$ for live pairs.
 
\emph{Drift.} $\cos(\boldsymbol{x}, \boldsymbol{D})$ with $\boldsymbol{D} = -\eta c\nabla_\perp\mathcal{L} - \eta\lambda\boldsymbol{x} + \boldsymbol{f}$ is negative at $63$ to $74\%$ of checkpoints for the two pairs and the $50$ random pairs. The inward pull comes from weight decay: evaluated without dropout, the gradient term alone points outward, $\boldsymbol{x}^\top\nabla_\perp\mathcal{L} < 0$, at $89$ to $95\%$ of checkpoints on contracting and expanding segments alike, while $\boldsymbol{D}$ points inward at $96$ to $97\%$ of checkpoints on contracting segments and at $57$ to $65\%$ on expanding ones.
 
\emph{Contraction and restarts.} The training loss shows $44$ spike events over the run, with mean spacing $679$ epochs, which is $1.4\,\tau_2$, and median duration $28$ epochs. These are the slingshot instabilities of Adam with weight decay \citep{thilak2022slingshot}, and each one starts a new condensation phase. Within a phase $Y$ evolves geometrically. Of the $45$ inter-spike segments per pair, $60\%$ contract ($53\%$ averaged over the $50$ random pairs), and the segments that expand are those on which the full drift $\boldsymbol{D}$ points inward less often. Among the contracting segments the median e-fold time of $Y$ is $1226$ steps for $(3,7)$ and $802$ steps for $(11,42)$ ($1390$ steps over the $50$ pairs), against $1/(2\eta\lambda) = 333$ steps for weight decay alone, and the fastest phases reach e-fold times of $400$ steps. The largest single-phase contractions are $1.7$ decades for each pair ($2.5 \to 0.053$ over epochs $16863$ to $17642$ and $5.7 \to 0.11$ over epochs $24519$ to $25293$), and the second realization contains a phase of $3.6$ decades. Over the whole run contraction leads expansion by $2.5$ decades for $(3,7)$ ($13.0$ against $10.5$) and by $1.2$ decades for $(11,42)$ ($17.2$ against $16.0$), a net drift toward coincidence that the restarts distribute across the run. Among the $344$ live units at the end of the $50$-pair realization, $114$ live pairs lie below $Y = 10^{-2}$, the $0.1\%$ quantile of $Y$ over live pairs is $7\times10^{-3}$, and random live pairs reach $Y \approx 10^{-3}$ within a single phase (pair $(54,325)$ at epochs $5163$ and $11930$, with unit norms $0.4$ and $2.8$). The closest pair among all units at every checkpoint consists of two units with norms $2\times10^{-3}$ to $5\times10^{-2}$ against a median unit norm of $0.3$ to $0.8$, so the closest-pair statistic tracks units that weight decay has driven toward the origin, and the live-pair statistics above are the ones that measure coincidence.
 
\emph{Tube occupancy.} Over the three further realizations, the sampled pairs enter the tube $\{Y < 10^{-2}\}$ progressively: $0$ to $12\%$ by epoch $2000$, $25$ to $77\%$ by the second variance peak and $62$ to $100\%$ by the end of training, fastest between the first two peaks, with a median stay of $80$ to $605$ epochs per entry. Most of these entries are units that weight decay drives together toward the origin; among live units the $\epsilon$-graph reaches a merge fraction of $0.17$ to $0.24$, with the largest cluster holding $4$ to $17\%$ of them. Condition \eqref{eq:adam_condition} holds more often the closer the pair: at $0.62$ to $0.66$ of checkpoints inside the tube and $0.49$ to $0.50$ for $Y \ge 1$, with the drift inward at $0.71$ to $0.76$ and $0.61$ to $0.62$. Within $100$ epochs after a spike the condition holds at $0.47$ to $0.56$ of checkpoints, against $0.52$ to $0.59$ in quiet windows.
 
\emph{Confinement constant.} The constant $C$ of Corollary \ref{cor:adam_nonescape} is the sum of the diffusion term and the forcing-plus-residual term at the tube boundary, and at typical checkpoints the latter dominates, $2\sqrt{\epsilon}(\|\boldsymbol{f}\| + \rho) \approx 2\sqrt{\epsilon}\times10^{-3}$, with the diffusion term four orders of magnitude smaller. The bound certifies confinement over horizons $T \lesssim \epsilon/C$, which at this value of $C$ is a few hundred steps for $\epsilon$ of order $10^{-2}$. The reinflation of $Y$ at each spike occurs beyond this horizon.
 
\emph{Reading.} Condition \eqref{eq:adam_condition} holds at most checkpoints inside phases of length about $1.4\,\tau_2$, at $0.3$ to $0.4$ of the weight-decay contraction rate in the median and at $0.8$ in the fastest phases, and the slingshot instabilities restart the condensation phase on that timescale. We treat each inter-spike window as a condensation phase in the sense of Assumption \ref{assum:condensation_phase}, with the spikes as its boundaries. The repulsion phases, on which $\boldsymbol{D}$ points outward, are the complement of the annulus in time. The order parameter of the grokking task is the effective rank of the embedding, and the feed-forward block measured here is the composition through which the network reads the embedding, so its coincidence sets are the ones that nest above the embedding's.
\end{remark}

\section{Explicit Theoretical Verification and Toy Models}
\label{sec:appendix_toy_experiments}
 
We construct two distinct experimental environments. First, we impose the topological mechanics in a controlled kinematic setting and check that the detection pipeline recovers the imposed cascade. Second, we deploy an empirical neural network optimized via Stochastic Gradient Descent (SGD) to verify that these topological phase transitions—both condensation and reactive fragmentation—govern non-convex empirical loss landscapes.
 
\begin{figure*}[htbp]
    \centering
    \includegraphics[width=\textwidth]{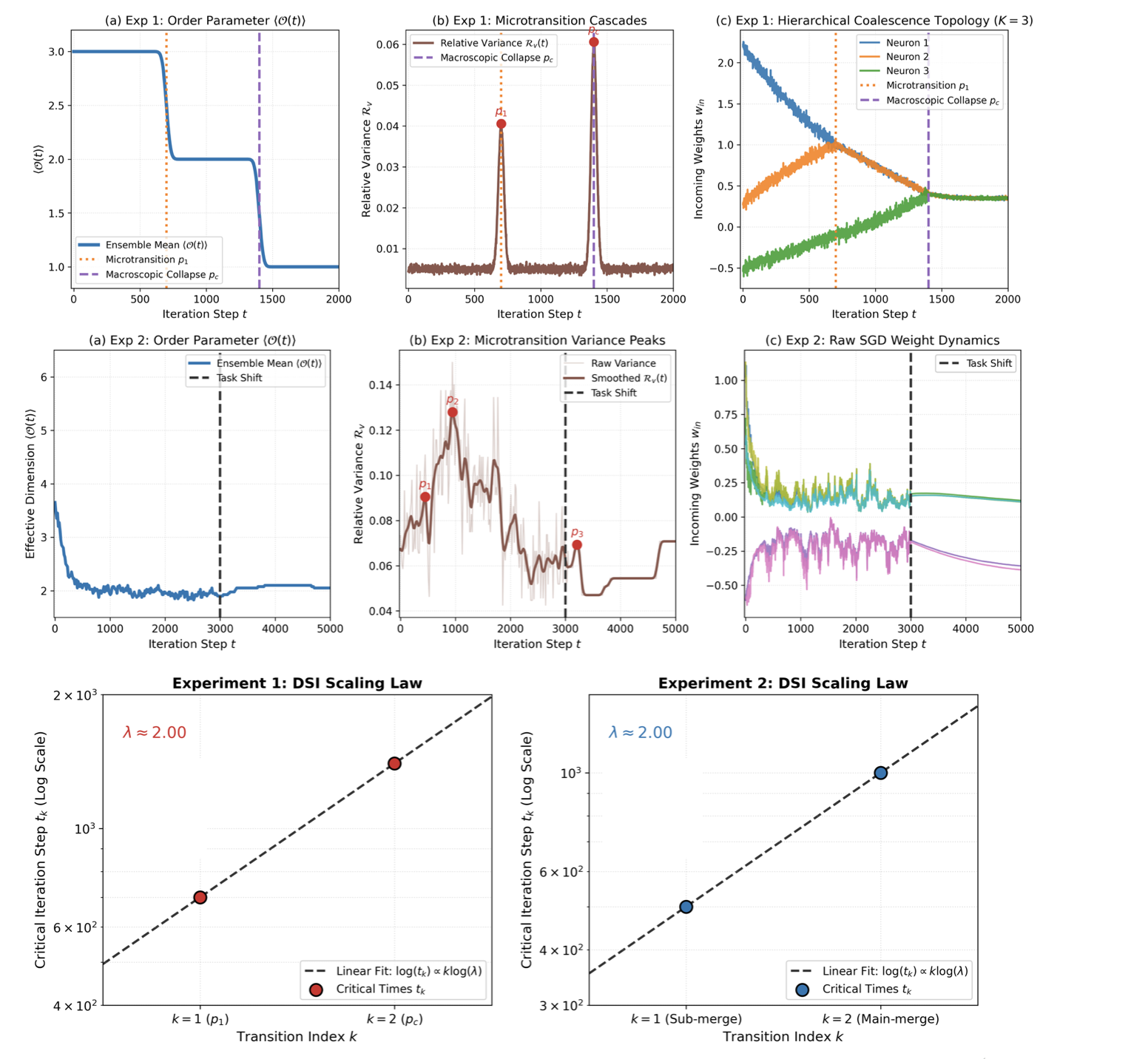}
    \caption{Experimental validation of symmetry-induced topological collapse and the scaling of peak times. \textbf{Top Row (Experiment 1):} Kinematic simulation of hierarchical coalescence ($N=3$), displaying discrete jumps in the order parameter (a), peaks in relative variance exactly at the imposed merge points (b), and forced continuous spline trajectories (c). \textbf{Middle Row (Experiment 2):} Empirical SGD dynamics ($N=6$), showing macroscopic dimensionality collapse (a), distinct precursory variance peaks $p_1, p_2$ during the condensation phase, and a reactive fragmentation peak $p_3$ following a task shift at $t=3000$ (b), alongside the raw stochastic weight trajectories (c). \textbf{Bottom Row:} Log-linear progressions of peak times: Experiment 1 (Left) recovers the imposed ratio $\lambda_t = 2.00$, and Experiment 2 (Right) gives the two-peak ratio $\lambda_t \approx 2.00$; the finite-$N$ prediction for the first two merges of $N=6$ units is $2.5$ (Section \ref{sec:appendix_toy_experiments}). }
    \label{fig:toy_experiments}
\end{figure*}
 
\subsection{Generalization of the Stochastic Collapse Threshold}
 
To derive the exact mathematical condition under which a subnetwork irreversibly collapses into an invariant set, we analyze the local dynamics of continuous stochastic gradient flow (SGF) \citep{li2017stochastic}. 
 
Consider a network with multiplicative parameter interactions, such as adjacent layers connected by weights $w_{in}$ and $w_{out}$. Assuming the subnetwork learns a target signal $\mu$ under a Mean Squared Error (MSE) objective, the local loss is $\mathcal{L}(w_{in}, w_{out}) = \frac{1}{4}(w_{in}w_{out} - \mu)^2$. Under continuous gradient descent, the difference of squared weights is a conserved quantity: $\frac{d}{dt}(w_{out}^2 - w_{in}^2) = 0$. Assuming a balanced initialization, the parameters satisfy $w_{in} = w_{out} = w$. The transverse dynamics consequently collapse onto a one-dimensional effective loss landscape:
\begin{equation}
    \mathcal{L}(w) = \frac{1}{4}(w^2 - \mu)^2.
\end{equation}
 
Near the structurally collapsed state at the origin ($w=0$), the gradient is $\mathcal{L}'(w) = w^3 - \mu w$, establishing $w=0$ as a stationary invariant manifold where the first derivative vanishes. The local curvature is given by the second derivative $\mathcal{L}''(0) = -\mu$, defining a deterministic repulsive drift away from the manifold. 
 
Simultaneously, the gradient noise covariance matrix $D(w)$ inherently scales with the magnitude of the active weights \citep{blanc2020implicit}. Near the manifold, the transverse diffusion term scales quadratically, $D(w) \approx \frac{1}{2}\zeta^2 w^2$, yielding purely multiplicative noise. Substituting the exact gradient and the local noise approximation into the continuous SDE $dw_t = -\nabla_{w}\mathcal{L}(w_t)dt + \sqrt{2D(w_t)}dW_t$ yields:
\begin{equation}
    dw_t = -(w_t^3 - \mu w_t)dt + \zeta w_t dW_t.
\end{equation}
 
To determine whether the network undergoes collapse, we evaluate the Fokker-Planck equation for localized stationarity ($\partial_t p = 0$). Forcing the probability current to vanish yields the stationary density \citep{chen2023stochastic}
\begin{equation}
    p_{ss}(w) \propto w^{2\mu/\zeta^2 - 2}\, e^{-w^2/\zeta^2}.
\end{equation}
The global normalization partition function diverges when the exponent of $w$ forces a non-integrable singularity at the origin, requiring $\frac{2\mu}{\zeta^2} - 2 \le -1$.
 
\textbf{Theoretical Threshold:} The invariant manifold becomes strictly stochastically attractive—collapsing the transverse probability mass into a Dirac delta distribution $\delta(w)$—if and only if the multiplicative noise variance overcomes the deterministic drift:
\begin{equation}
    \mu \le \frac{\zeta^2}{2}.
\end{equation}
Because the empirical SGD diffusion amplitude scales proportionally with the learning rate $\eta$, maintaining a sufficiently high learning rate explicitly satisfies this threshold, permanently trapping the parameters within the lower-dimensional invariant subspace \citep{chen2023stochastic}.
 
Definition \ref{def:stoch-attac} reads this threshold off directly. With $\boldsymbol{x} = w$, $\nabla\mathcal{L} = w^3 - \mu w$ and $\eta\Sigma_\perp = \zeta^2 w^2$, condition \eqref{eq:attractivity} becomes $2\mu w^2 - 2w^4 + \zeta^2 w^2 \le 2(1-\alpha)\zeta^2 w^2$, that is $\mu \le \tfrac{\zeta^2}{2}(1 - 2\alpha) + w^2$, and $\alpha \downarrow 0$ recovers $\mu \le \zeta^2/2$ near the origin. At $\alpha = 1$ the condition reads $2\mu + \zeta^2 \le 0$ and fails for every $\mu > 0$, because the generator of $w^2$ is $(2\mu + \zeta^2)w^2 - 2w^4 > 0$ near the origin even as $w \to 0$ almost surely, and for this reason the definition is stated for $Y^\alpha$ rather than for $Y$. Above the threshold, $Y = w^2$ is Gamma-distributed with shape $\mu/\zeta^2 - \tfrac12$ and scale $\zeta^2$ (Remark \ref{remark:fp_heuristic}).
 
\subsection{Derivation of the Scaling of Peak Times}
 
Once parameters are trapped in the invariant manifold, Definition \ref{def:merge_multiplicity} and Proposition \ref{thm:hypercondensation} make condensation pairwise, $n = 2$, in the generic case. Lemma \ref{lem:dsi_selfsim} then gives $t_{2i}/t_i = \lambda_t = 2^{1-\gamma_K}$ for the $R_v$ peak times (Theorem \ref{thm:dsi_proof}, Corollary \ref{cor:peak_times}). For the constant kernel the mean component size grows linearly in time, $\bar C = 1 + t/2$, so $T_{2^k} = 2(2^k - 1)$ and $\lambda_t = 2$ asymptotically.
 
\textbf{Prediction under the constant kernel:} For pairwise merges the sequential peak times $t_k$ detected by the macroscopic relative variance $\mathcal{R}_v(t)$ satisfy the log-linear progression $t_{k+1} = 2 t_k$ asymptotically, with exact finite-size ratios $(2i-1)/(i-1)$ for the times at which the largest component reaches sizes $i$ and $2i$. This follows from Assumption \ref{assum:kernel} with $\gamma_K = 0$ through Lemma \ref{lem:dsi_selfsim}. For $N$ units the expected waiting time between the merges that take the component count from $M$ to $M-1$ is $2N/(M(M-1))$ in the same units, so for $N = 6$ the expected times of the first five merges are $0.4, 1.0, 2.0, 4.0, 10.0$ and the first two merges are in ratio $2.5$. A measured $\lambda_t \ne 2$ at large $N$ reflects $\gamma_K \ne 0$, $n \ne 2$, or both, and the ratio alone does not separate them.
 
\subsection{Idealized Hierarchical Coalescence}
 
\textbf{Purpose:} In non-convex empirical loss landscapes, topological phase transitions are tightly coupled with geometric distortions, complicating the isolation of pure structural scaling. We impose the topological variable in a controlled environment and check that the detection pipeline recovers it.
 
\textbf{Construction:} We construct a synthetic kinematic system of $N=3$ decoupled parameters. We impose sequential, binary hierarchical coalescence ($n=2$) by routing continuous spline trajectories injected with decaying Gaussian noise, with merge times fixed at $t_1 = 700$ and $t_c = 1400$. This tests whether the $\mathcal{R}_v$ pipeline locates imposed merges and returns the imposed ratio of their times.
 
\textbf{Results:} The binding of parameters 1 and 2 at $t_1 = 700$, followed by parameter 3 at $t_c = 1400$ (mapped in Figure \ref{fig:toy_experiments}, Top Row, Panel c), causes the macroscopic order parameter $\langle \mathcal{O}(t) \rangle$ (Figure \ref{fig:toy_experiments}, Top Row, Panel a) to collapse via discrete jumps. Driven by the two-point ensemble states at these thresholds, the relative variance $\mathcal{R}_v(t)$ (Figure \ref{fig:toy_experiments}, Top Row, Panel b) registers localized peaks exactly at the imposed merge points. Plotting these transition times on a logarithmic scale against their sequential index $k$ (Figure \ref{fig:toy_experiments}, Bottom Left) returns $\lambda_t = 1400/700 = \boldsymbol{2.00}$, the imposed ratio. This checks the pipeline, and the theoretical content of the prediction is tested by the SGD simulation below.
 
\subsection{Empirical Simulation (\texorpdfstring{$N=6$}{N=6}) with Task Shift and Reverse Transition}
 
\textbf{Theoretical Justification:} Having checked the pipeline, we show that the noise characteristics generated by standard neural network optimization satisfy these topological constraints without external kinematic forcing. Furthermore, the theoretical threshold ($\mu \le \zeta^2/2$) dictates that the collapse is dynamically reversible: dropping the noise variance while increasing the signal curvature should render the invariant manifold repulsive, forcing reactive fragmentation, which ends the condensation phase of Assumption \ref{assum:condensation_phase}.
 
\textbf{Empirical Verification:} We deploy a PyTorch environment, initializing a $N=6$ shallow neural network with a GELU activation function, optimized on an MSE objective by label-noise SGD \citep{blanc2020implicit}: each step uses the full batch of $32$ inputs $x \sim \mathcal{N}(0,1)$ with fresh Gaussian label noise of standard deviation $2.0$. The network computes $\sum_{k=1}^{6} w_{out,k}\,\mathrm{gelu}(w_{in,k}x)$ for the target $0.5\sin 2x$, with learning rate $0.2$, weight decay $10^{-3}$, gradient-norm clipping at $5$ and initialization standard deviation $0.8$, and the order parameter counts average-linkage clusters of the $w_{in,k}$ at distance $0.25$. We manipulate the theoretical stochastic collapse condition by separating the training into two distinct phases. In the condensation phase ($t < 3000$), a high learning rate injects multiplicative noise to satisfy $\mu \le \zeta^2/2$. At $t=3000$, we execute a task shift: we inject a high-frequency target signal (increasing the deterministic curvature $\mu$) while simultaneously dropping the learning rate (decreasing the noise amplitude $\zeta^2$): the target gains $0.8\cos 5x$, the learning rate falls to $0.01$ and the label-noise standard deviation to $0.1$. This tests both forward condensation and reverse reactive fragmentation.
 
\textbf{Results:} During the initial high learning rate phase, the aggressive stochastic diffusion neutralizes the deterministic gradient drift (Figure \ref{fig:toy_experiments}, Middle Row, Panel c). This forces the independent parameters to collapse into shared, lower-dimensional invariant subspaces. The SGD noise shifts the merge thresholds across ensemble realizations, creating a distinct multi-phase precursory cascade. We detect sequential peaks in the smoothed relative variance $\mathcal{R}_v(t)$ (Figure \ref{fig:toy_experiments}, Middle Row, Panel b) strictly within the high learning rate regime. Extracting these transition indices and computing their scaling progression (Figure \ref{fig:toy_experiments}, Bottom Right) yields the two-peak ratio $\boldsymbol{\lambda_t \approx 2.00}$. Two facts qualify this number. With a tube radius of $\epsilon = 0.25$ on weights initialized with standard deviation $0.8$, about a third of the pairs lie inside the tube at initialization, so the component count starts near $4$ and the first detected peak partly records links that were present at initialization. And for six units the constant kernel predicts a ratio of $2.5$ for the first two merges, with $2$ the asymptotic value at large $N$ (Section \ref{sec:appendix_toy_experiments}, prediction paragraph), so the two peaks of this run measure the asymptotic ratio only approximately. The large-$N$ test of the prediction is the cascade over many levels, which the constant kernel fixes at $(2i-1)/(i-1)$ level by level.
 
Following the task shift at $t=3000$, the drastic reduction in learning rate and the addition of the new target signal immediately violate the trapping threshold. The invariant manifold becomes stochastically repulsive. The macroscopic topology registers a reverse transition: the relative variance $\mathcal{R}_v(t)$ peaks as the previously bound parameters fragment to map the new target dimensionality.

\textbf{Separating the two interventions.} Past the threshold the escape rate from the manifold scales as $\eta(\mu - \zeta^2/2)$, so the curvature and the noise act on different clocks. We repeat the protocol over $40$ realizations with common random numbers in four arms (Table \ref{tab:task_shift}). Raising the curvature at the original learning rate fragments $30\%$ of seeds and raises $\mathcal{R}_v$ $1.61$-fold within a few hundred steps. Lowering the noise through the learning rate fragments seeds in the same direction on a $20$-fold slower clock, and the combined protocol, which runs on that clock, is still building its response at the end of the $2000$-step window.

\begin{table}[htbp]
    \centering
    \caption{Task-shift ablation ($N=6$, $40$ realizations, common random numbers; interventions at $t=3000$). Fragmenting seeds end with at least half a cluster more, on average over the last $300$ steps, than just before the shift.}
    \label{tab:task_shift}
    \resizebox{\linewidth}{!}{%
    \begin{tabular}{lcccc}
    \toprule
    Arm & Clusters before $\to$ after & Fragmenting seeds & Post/pre $\max\mathcal{R}_v$ & Maximum at \\
    \midrule
    Target and noise (Fig.~\ref{fig:toy_experiments}) & $1.93 \to 2.00$ & $0.12$ & $1.19$ & $t = 4995$ \\
    Noise only & $1.93 \to 2.08$ & $0.10$ & $0.87$ & $t = 3045$ \\
    Target only & $1.93 \to 2.11$ & $0.30$ & $1.61$ & $t = 4155$ \\
    Control & $1.93 \to 1.95$ & $0.03$ & $1.05$ & $t = 4155$ \\
    \bottomrule
    \end{tabular}}
\end{table}

\subsection{Block Merges Require Nested Symmetry}
\label{sec:appendix_block_merges}
 
Proposition~\ref{thm:main_er_discontinuity} separates two ways in which the largest component can grow, namely block merges, in which two already-macroscopic components coincide at once, and single attachment, in which the largest component absorbs one subnetwork at a time. The cascade of Lemma~\ref{lem:dsi_selfsim} is a property of the hitting laws and holds under either mode. What the two modes decide is the sharpness of the individual peaks, since the two-point mixture of Theorem~\ref{thm:main_rv_divergence} needs a macroscopic jump, which block merges produce and single attachment, the Erd\H{o}s--R\'enyi regime, does not. We test which of the two SGD produces, and what the architecture has to supply for block merges to occur, with two networks built from the same units, the same target, and the same optimizer.
 
\textbf{Construction.} Both networks are built from units $a\,\mathrm{gelu}(bx+c)$ with scalar parameters, trained on the target $\mu\,\mathrm{gelu}(x)$ with $\mu=1$ on $256$ standard normal inputs, by SGD with learning rate $\eta=0.1$, weight decay $\lambda = 10^{-2}$, batch size $8$, for $6000$ SGD steps and $32$ seeds. The weight decay supplies the inward drift of Definition~\ref{def:stoch-attac} along the directions in which the gradient is conserved between coincident units. The \emph{flat} layer sums $N$ such units and carries only the pairwise permutation symmetry of Proposition~\ref{thm:hypercondensation}. The \emph{binary tree} of depth $d$ has $N=2^d$ leaf units, every internal node computes $\mathrm{gelu}(u_L h_L + u_R h_R + c)$ from its two children, and the root is linear. Swapping any two sibling subtrees, together with all of their parameters, is an exact symmetry at every level of the tree, so the coincidence set of two siblings at level $k$ is an affine subspace of the form of Theorem~\ref{thm:affine_generation}, and a merge at level $k$ joins two identical blocks of $2^{k-1}$ leaves. Two leaves are linked when the parameters along their paths to the root, $\phi_i$ and $\phi_j$, satisfy $\tfrac12\|\phi_i-\phi_j\|_2^2<\epsilon$ with $\epsilon=10^{-2}$, so that a link between leaves in different subtrees requires the subtrees and their ancestors to coincide, as a block merge requires. Both graphs start with at most one or two links out of $\binom{N}{2}$, with initial mean cluster counts $15.9$, $15.5$, $31.9$ and $30.3$ for tree and flat at $N=16$ and $N=32$. We record the largest component of the $\epsilon$-graph at every step and report the growth of its running maximum per seed, so that each level is counted once and crossings of the tube boundary are not counted as repeated merges.
 
\textbf{Results.} Figure~\ref{fig:block_merges} and Table~\ref{tab:block_merges} summarize the two architectures at $N=16$ and $N=32$. Only the tree reaches full collapse, in $47\%$ of seeds at $N=32$ and $19\%$ at $N=16$, against $0\%$ for the flat layer at either size, whose cluster count plateaus near $2.3$. The tree produces merges that the flat layer does not. At $N=32$ its record growth contains three jumps of $+16$, the coincidence of two half-trees, and twelve of $+8$, the coincidence of two quarter-trees. The flat layer at $N=32$ shows a handful of jumps between $+8$ and $+12$ (nine events in $498$), which are attachments of a group of units that have already collapsed to a common point rather than merges of symmetric blocks, and none reaches $+16$. The share of record-growth events that exactly double the largest component, the signature of a merge of two equal blocks, is $0.22$ and $0.11$ for the tree at $N=16$ and $N=32$, against $0.11$ and $0.03$ for the flat layer. In both architectures the majority of record-growth events are single attachments ($0.60$ to $0.74$), because a subtree need not finish collapsing internally before its ancestors align with those of its sibling, so block merges occur in the tree but do not dominate its growth. Nested symmetry forces every merge across subtrees to be a block merge, since leaves in different subtrees link only when their shared ancestors coincide; within a subtree, leaves can still attach one at a time. Nested symmetry therefore makes macroscopic block merges possible, and by Proposition~\ref{thm:main_er_discontinuity} the jump survives only when the merging components are already macroscopic, which the architecture permits and the dynamics realize in part. At the first level the ensemble is two-point (all seeds at $|C_{\max}| \in \{1, 2\}$ at $N = 32$), and the $\mathcal{R}_v$ peak of $|C_{\max}|/N$ has height $0.125$ ($95\%$ bootstrap interval $[0.124, 0.169]$) at merged weight $w = 0.34$, at the $w^*$-quantile of the hitting law, as Theorem~\ref{thm:main_rv_divergence} predicts. At higher levels the ensemble occupies more than two states, reflecting the single attachments, and the peaks are correspondingly taller ($0.21$ to $0.37$).
 
\textbf{Clock.} Panel (c) reports the per-seed hitting times $\mathbb{E}[T_{2^k}]$ of $|C_{\max}|\ge 2^k$ by level. For the tree at $N=32$ they are $748$, $1659$, $1922$, $2200$ and $2701$ steps, with consecutive ratios $2.2$, $1.16$, $1.14$ and $1.23$, against $3$, $2.33$, $2.14$ and $2.07$ for a constant kernel under a time-homogeneous clock (Lemma~\ref{lem:dsi_selfsim}). The first level is slow because the leaves must converge from a random initialization, and every higher level is paced by the same quantity, the contraction $e^{-\eta\lambda t}$ of the conserved sibling differences under weight decay, so that successive levels close at a nearly constant spacing $\Delta t \approx \ln(\Delta_0/\sqrt{2\epsilon})/(\eta\lambda)$ rather than at geometrically growing intervals. The level structure is the pairwise hierarchy of Definition~\ref{def:merge_multiplicity}, and in the balanced hierarchy the cascade is geometric with factor $2$ in the density variable, since $1-p$ halves at each level. The clock, however, is exponential rather than the power law of a time-homogeneous kernel, so the ratios in $t$ reflect the clock rather than the multiplicity. A kernel exponent fitted to these ratios through $\gamma_K = 1 - \log_2 \lambda_t$ would read $0.74$, attributing to component size a variation that comes from training time. This is the time-inhomogeneity that Remark~\ref{remark:empirical_variance_mass} names, realized in a setting where the level structure and the clock can both be read off, and it places the weight-decay-paced toy outside the time-homogeneous hypothesis of Theorem~\ref{thm:main_dsi} while confirming its level structure.

\begin{table}[htbp]
    \centering
    \caption{Growth of the record largest component under SGD for the two architectures ($32$ seeds, $6000$ SGD steps). ``Exact doubling'' is the share of record-growth events with $|C_{\max}|_{\mathrm{after}} = 2\,|C_{\max}|_{\mathrm{before}}$, and ``macroscopic'' is the share with jump $\ge N/4$. Hitting ratios omit the first level for the flat layer at $N=32$, where one pair is inside the tube at initialization ($T_2=18$).}
    \label{tab:block_merges}
    \resizebox{\linewidth}{!}{%
    \begin{tabular}{lcccc}
    \toprule
    & \textbf{Tree $N=16$} & \textbf{Flat $N=16$} & \textbf{Tree $N=32$} & \textbf{Flat $N=32$} \\
    \midrule
    Full collapse (share of seeds) & $19\%$ & $0\%$ & $47\%$ & $0\%$ \\
    Final mean cluster count & $3.6$ & $2.3$ & $1.7$ & $2.3$ \\
    Single-attachment share & $0.74$ & $0.68$ & $0.65$ & $0.60$ \\
    Exact-doubling share & $0.22$ & $0.11$ & $0.11$ & $0.03$ \\
    Macroscopic jumps ($\ge N/4$) share & $0.09$ & $0.05$ & $0.05$ & $0.02$ \\
    Largest jump observed & $+4$ & $+5$ & $+16$ & $+12$ \\
    Hitting ratios $\mathbb{E}[T_{2i}]/\mathbb{E}[T_i]$ & $1.85,\,1.22$ & $3.8,\,1.5$ & $2.2,\,1.16,\,1.14,\,1.23$ & $1.7,\,1.2$ \\
    \bottomrule
    \end{tabular}}
\end{table}

\section{Extended Empirical Validation Across Diverse Datasets and Architectures}
\label{sec:appendix_empirical_suite}
 
Having recovered the pairwise cascade in controlled kinematic and low-dimensional SGD environments, we now generalize the framework to standard deep learning paradigms. We deploy unconstrained neural networks across a diverse suite of datasets, spanning tabular data, geometric manifolds, vision tasks, and algorithmic grokking.
 
\subsection{Methodological Extensions for Empirical Regimes}
 
Transitioning from isolated theoretical toy models to highly non-convex, unconstrained deep learning environments requires adapting our measurement observables. All empirical experiments were run on a Tesla T4 GPU and require about one day of compute. Architectures and optimization settings are listed in Table \ref{tab:hyperparams}.

\begin{table}[htbp]
    \centering
    \caption{Settings of the empirical suite. Categorical features are one-hot encoded (first level dropped) and tabular features standardized; models are trained on the full dataset, since the observable is the training dynamics. UCI runs use $20$ initializations and the other tasks $15$. UCI peak detection uses macro $\sigma = 20$, local $\sigma = 1.5$ and minimum peak distance $8$; the other tasks use macro $\sigma = E/5$, local $\sigma = E/50 + 1$ and distance $E/20$ for $E$ epochs; the prominence is $0.05$ throughout.}
    \label{tab:hyperparams}
    \small
    \renewcommand{\arraystretch}{1.15}
    \begin{tabularx}{\linewidth}{@{}>{\raggedright\arraybackslash}p{2.0cm}>{\raggedright\arraybackslash}X>{\raggedright\arraybackslash}p{1.9cm}>{\raggedright\arraybackslash}X>{\raggedright\arraybackslash}X@{}}
    \toprule
    Task & Model & Monitored matrix & Optimization & Data \\
    \midrule
    Heart Disease, German Credit, Abalone, Digits & $d_{\mathrm{in}} \to 64 \to 32 \to C$, GELU & first layer ($64 \times d_{\mathrm{in}}$) & SGD, lr $0.05$, batch $32$, $100$ epochs & OpenML \texttt{heart-statlog}, \texttt{credit-g}, \texttt{abalone} (rings $\le 8$, $9$--$10$, $\ge 11$); scikit-learn digits \\
    \addlinespace
    Moons, Swiss Roll & $d_{\mathrm{in}} \to 64 \to 32 \to 2$, GELU & first layer ($64 \times d_{\mathrm{in}}$) & SGD, lr $0.1$, batch $64$, $100$ epochs & $2000$ samples, noise $0.1$ \\
    \addlinespace
    MNIST, FashionMNIST & $784 \to 128 \to 64 \to 10$, GELU & first layer ($128 \times 784$) & SGD, lr $0.05$, batch $128$, $50$ epochs & $5000$-image subset \\
    \addlinespace
    Modular arithmetic & 1 layer, $d = 128$, 4 heads, FF $512$, GELU, dropout $0.1$, no LayerNorm & token embedding ($18 \times 128$) & AdamW, lr $3 \times 10^{-3}$, $\lambda = 0.5$, batch $128$, $30000$ epochs & $(a + b) \bmod 17$, input $[a, b, =]$, $70/30$ split ($202/87$) \\
    \bottomrule
    \end{tabularx}
\end{table} 
 
\textbf{Soft Dimensionality Collapse (Spectral Effective Rank):} In high-dimensional empirical networks, parameter constraints rarely force weights to absolute zero (hard topological collapse). Instead, structural merges manifest as spectral concentration, where the energy of the weight matrix collapses into a smaller subset of principal components. We map the macroscopic order parameter $\langle \mathcal{O}(t) \rangle$ to the Effective Rank of the target layer's weight matrix. Computing the Singular Value Decomposition (SVD), we extract the singular values $\sigma_k$, normalize them into a probability distribution $p_k = \sigma_k / \sum \sigma_i$, and compute the exponential of the Shannon entropy: 
\begin{equation}
    \mathcal{O}(t) = \exp \left( -\sum p_k \ln p_k \right).
\end{equation}
This continuous metric reliably tracks the macroscopic dimensionality collapse as the top singular values diverge and absorb the network's capacity.
 
\textbf{Signal Processing, Detrended Log-Variance, and Ablation Controls:} Real-world SGD exhibits massive baseline heteroskedasticity; the raw variance is known to decay as the network settles into a basin \citep{mandt2017stochastic,mori2022logarithmic}. To isolate the microscopic variance peaks ($\mathcal{R}_v(t)$) from this global decay, we operate strictly in log-variance space, $\log(\text{Var}[\mathcal{O}(t)])$. We subsequently subtract a global linear secant to detrend the signal, preventing artificial boundary artifacts during macroscopic smoothing. This isolates the structural fluctuations from the standard optimization noise. To establish a baseline, we construct a spectral null model testing the null hypothesis that observed cascades are artifacts of this optimization noise. We compute $1000$ phase-randomized spectral surrogates which preserve the empirical power spectrum of the variance trajectory while destroying localized temporal structures to calculate empirical False Positive Rates (FPR), the fraction of surrogates whose peak sequence fits a log-linear law at least as well as the data; with $1000$ surrogates the smallest attainable value is $1/1001$. Additionally, we compute $500$ bootstrap resamples per configuration to ensure statistical stability, aggregating variance across $20$ independent initializations for the UCI tabular suite and $15$ independent initializations for high-dimensional vision and algorithmic grokking datasets. A log-linear fit through $k$ peaks has $k-2$ residual degrees of freedom, so for three peaks the fit has one and the FPR is the primary evidence for the cascade. We report $R^2$ for the longer cascades only.

\textbf{Graded evidence:} We report the FPR as a continuous measure of evidence rather than applying a fixed significance level, following the ASA statements on $p$-values \citep{wasserstein2016asa, wasserstein2019moving}. A $p$-value does not measure the size or importance of an effect, and conclusions should not rest on whether it crosses a specific threshold. Heart Disease ($4.9\%$) and FashionMNIST ($10.7\%$) therefore differ by a factor of two in how often spectrally matched noise reproduces the fit, not by a qualitative change in evidence, and the negative control German Credit ($80.2\%$) marks the other end of the scale.
 
\textbf{Spectral Amplitude of Block Merges:} In the idealized $N=3$ kinematic model (Appendix \ref{sec:appendix_toy_experiments}), exact counting gives an integer jump in $|C_{\max}|$. Mapping unconstrained networks through the continuous Effective Rank changes the amplitude of each jump and leaves its location alone: the cascade factor $\lambda$ is a property of the hitting laws (Theorem \ref{thm:dsi_proof}), the measured ratio $\lambda_t$ a property of the time map (Corollary \ref{cor:peak_times}), and neither depends on the observable used to detect the peaks, since two monotone observables of the same process share every transition time. What the observable sets is the size of the step. The effective rank therefore serves as a detector of transition times; the peak height of Theorem \ref{thm:rv_divergence} is a property of $|C_{\max}|/N$, and Appendix \ref{sec:appendix_block_merges} tests it there.
 
\begin{proposition}[Spectral Step of a Coincidence Merge]
\label{thm:spectral_relaxation}
Let the order parameter be the Spectral Effective Rank $\mathcal{O} = \exp(H)$ with $H = -\sum_k p_k\ln p_k$ and $p_k = \sigma_k/\sum_j\sigma_j$. Let the weight matrix have $N$ mutually orthogonal columns of equal norm, and let $n$ of them coincide. Before the merge $\mathcal{O}_{pre} = N$. After it the singular values are $N-n$ of the original size and one of $\sqrt{n}$ times that size, so with $S := N - n + \sqrt{n}$
\begin{equation*}
    \mathcal{O}_{post} = \exp\!\Big(\frac{(N-n)\ln S + \sqrt{n}\,\ln(S/\sqrt{n})}{S}\Big), \qquad \frac{\mathcal{O}_{pre}}{\mathcal{O}_{post}} \ne n^{n/N}.
\end{equation*}
With the squared normalization $p_k = \sigma_k^2/\sum_j\sigma_j^2$ the spectral mass is conserved through the merge and the step is exactly $\mathcal{O}_{pre}/\mathcal{O}_{post} = n^{n/N}$.
\end{proposition}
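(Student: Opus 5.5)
The plan is a direct computation: first get the singular values from the Gram matrix, then evaluate the entropy under each normalization. Write $s$ for the common column norm. After the merge the matrix has $N-n$ mutually orthogonal columns of norm $s$ and $n$ copies of a single column $\boldsymbol{u}$ with $\|\boldsymbol{u}\| = s$, where $\boldsymbol{u}$ is orthogonal to the other columns.

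\textbf{Singular values.} The Gram matrix $W^\top W$ is block diagonal.
\begin{itemize}
\item The block of the untouched columns is $s^2 I_{N-n}$.
\item The block of the copies is $s^2\mathbf{1}\mathbf{1}^\top$, of size $n\times n$. Its only nonzero eigenvalue is $ns^2$, with eigenvector $\mathbf{1}$.
\end{itemize}
So the nonzero singular values are $N-n$ copies of $s$ and one value $\sqrt{n}\,s$; the remaining $n-1$ vanish and drop out of the entropy. Before the merge all $N$ singular values equal $s$, so the distribution $p$ is uniform and $\mathcal{O}_{pre} = e^{\ln N} = N$.

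\textbf{Linear normalization.} After the merge, with $S = N-n+\sqrt n$, we have $p_k = 1/S$ for $N-n$ indices and $p = \sqrt n/S$ for the merged one. Substituting into $H$ gives
\[
H = \frac{N-n}{S}\ln S + \frac{\sqrt n}{S}\ln\frac{S}{\sqrt n},
\]
which is the displayed formula for $\mathcal{O}_{post}$.

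For the inequality $\mathcal{O}_{pre}/\mathcal{O}_{post} \ne n^{n/N}$ I would not attempt a closed-form comparison. I would exhibit an instance and argue it is generic.
\begin{itemize}
\item Take $N=3$, $n=2$. Then $S = 1+\sqrt2$, $H \approx 0.678$ and $\mathcal{O}_{post} \approx 1.97$, so the ratio is about $1.52$, whereas $2^{2/3} \approx 1.59$.
\item For genericity, read both sides as analytic functions of a real parameter replacing $N$ at fixed $n$. They are not identically equal, so equality holds only at isolated points.
\end{itemize}
The caveat I must state is $n = N$: then $S = \sqrt N$, $H = 0$ and the ratio is $N = N^{N/N}$, so equality holds. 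The inequality should therefore be asserted for $n<N$, or at least for generic $(N,n)$, and I would phrase the proof that way.

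\textbf{Squared normalization.} The squared singular values are $N-n$ copies of $s^2$ and one $ns^2$. Their sum is $Ns^2$, the same as before the merge; this is the conserved spectral mass, i.e.\ the Frobenius norm. So $p = 1/N$ for $N-n$ indices and $p = n/N$ for the merged one, giving
\[
H = \frac{N-n}{N}\ln N + \frac{n}{N}\ln\frac{N}{n} = \ln N - \frac{n}{N}\ln n .
\]
Hence $\mathcal{O}_{post} = N n^{-n/N}$ and the step is exactly $n^{n/N}$.

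The only delicate point is the non-equality claim: its degenerate case $n=N$ and its generic character. Everything else is routine algebra.
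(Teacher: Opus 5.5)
Your proposal is correct and follows essentially the paper's route. You read off the post-merge spectrum ($N-n$ singular values $s$, one $\sqrt n\,s$, the rest zero), evaluate $H$ under both normalizations, and certify the inequality by a numerical instance; the paper uses $N=16$, $n=2$, with steps $1.072$ versus $1.091$. Your caveat that equality holds at $n=N$ is also correct, and the paper does not note it.

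If you want the strict inequality for every $2\le n<N$ rather than only generically, note that isolated zeros in real $N$ do not by themselves exclude an integer coincidence. Use instead that the entropy of $p_k\propto\sigma_k^a$ satisfies $\frac{d}{da}H = -a\,\mathrm{Var}_{p^{(a)}}(\ln\sigma)<0$ whenever two distinct nonzero singular values are present. Then $H_{\mathrm{sq}}<H_{\mathrm{lin}}$ after the merge, while both equal $\ln N$ before it, so the linear step is strictly below $n^{n/N}$.
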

 
\begin{proof}
Orthogonal columns of equal norm $\|c\|$ give $N$ singular values equal to $\|c\|$ and $p_k = 1/N$, so $\mathcal{O}_{pre} = N$. When $n$ columns become identical, their span has dimension one and the single nonzero singular value on it is $\sqrt{n}\|c\|$, while the remaining $N-n$ singular values are unchanged. The $\sigma$-mass $\sum_k\sigma_k$ falls from $N\|c\|$ to $S\|c\|$, so it is not conserved, and evaluating $H$ at the post-merge spectrum gives the displayed formula. Under the squared normalization the masses are $\sigma_k^2$, which sum to $N\|c\|^2$ before and after, the merged component carries $p = n/N$ and each other component $1/N$, and $H_{pre} - H_{post} = (n/N)\ln n$, which is the stated step. For $N = 16$ and $n = 2$ the two normalizations give steps of $1.072$ and $1.091$. Discrete counting registers a component reduction by the factor $n$, the spectral observable registers the step above, and the transition times are those of the underlying process and do not move, so $n$ is read from the height of the effective-rank step given $N$, and the peak spacing is unaffected by the choice of observable.
\end{proof}
 
\begin{remark}[Reading the Measured Ratios]
\label{remark:empirical_variance_mass}
Corollary \ref{cor:peak_times} gives $\lambda_t = n^{1-\gamma_K}$, so the measured ratio constrains $n$ and the kernel exponent $\gamma_K$ jointly, and $\gamma_K = 1 - \log_n\lambda_t$ once $n$ is fixed; the fitted $\gamma_K$ is therefore descriptive rather than a test of the kernel. It treats training time as the coalescence clock, so a link rate that varies with training time enters it alongside the size dependence of the kernel. For $\lambda_t$ near $1$ the corrections $\lambda_t^{-k}$ of Lemma \ref{lem:dsi_selfsim} are large at the levels observed, about $0.3$ at $k = 5$ for $\lambda_t = 1.28$, so those ratios carry finite-level corrections of that size.
\begin{itemize}
    \item \textbf{Modular Arithmetic (Transformer Grokking):} $\lambda_t = 2.11$ is pairwise ($n=2$) with $\gamma_K = -0.08$, a constant kernel within the resolution of three peaks, or three-body ($n=3$) with $\gamma_K = 0.32$. The ratio alone does not separate the two.
    
    \item \textbf{UCI Abalone:} $\lambda_t = 1.28$, which for pairwise merges gives $\gamma_K = 0.64$.
 
    \item \textbf{UCI Heart Disease:} $\lambda_t = 1.71$, which for pairwise merges gives $\gamma_K = 0.23$.
 
    \item \textbf{FashionMNIST (Vision Benchmark):} $\lambda_t = 1.57$, which for pairwise merges gives $\gamma_K = 0.35$.
    
    \item \textbf{Conjecture on Network and Task Complexity:} We cautiously hypothesize that highly deep neural models trained on complex representations form distinct macroscopic sub-network symmetries, yielding clean log-linear cascades. Conversely, shallower networks on low-dimensional tabular data generate variance fluctuations frequently indistinguishable from baseline stochastic gradient noise.
\end{itemize}
\end{remark}
 
\subsection{UCI Tabular Datasets}
 
We evaluate shallow Multi-Layer Perceptrons (MLPs) optimized via SGD on a suite of standard tabular datasets from the UCI Machine Learning Repository. Across all datasets, the macroscopic order parameter (Effective Rank) and the training loss exhibit smooth, continuous decay. However, the detrended variance breaks this continuity, revealing discrete microtransitions. Table \ref{tab:uci_results} aggregates the empirical peak-time fits and spectral null-model evaluations across the suite.
 
Heart Disease gives the strongest tabular evidence ($\text{FPR} = 4.9\%$). Conversely, German Credit serves as a critical negative control. Although the network exhibits a 4-peak variance sequence ($\lambda_t = 1.99, R^2 = 0.86$), spectral null testing yields a False Positive Rate of $80.2\%$. This negative result establishes the necessity of the signal processing pipeline; without it, standard optimization noise mimics geometric scaling artifacts.
 
\begin{table}[htbp]
\centering
\caption{Empirical peak-time cascades and statistical null-model evaluations for UCI tabular datasets (aggregated over $20$ independent initializations). A cascade fit needs at least three peaks, so none is reported for Digits.}
\label{tab:uci_results}
\begin{tabular}{lcccc}
\toprule
\textbf{Dataset} & \textbf{Peaks} & $\lambda_t$ & $R^2$ & \textbf{FPR (\%)} \\
\midrule
Heart Disease    & 4 & 1.71 & 0.98 & 4.9\%  \\
Abalone          & 6 & 1.28 & 0.97 & 15.6\% \\
German Credit    & 4 & 1.99 & 0.86 & 80.2\% \\
Digits           & 1 & N/A  & N/A  & N/A    \\
\bottomrule
\end{tabular}
\end{table}
 
\begin{figure*}[htbp]
    \centering
    \includegraphics[width=\textwidth]{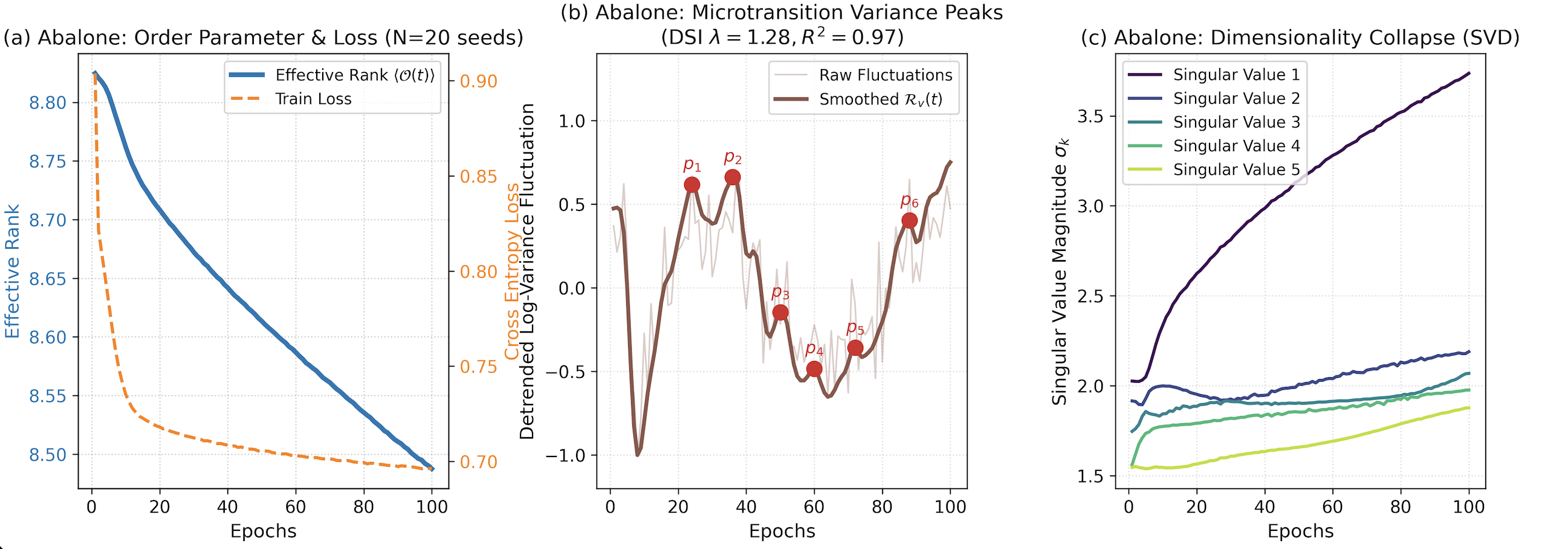}
    \caption{\textbf{Abalone:} The macroscopic continuous decay of effective rank (a) is underscored by a structured 6-peak cascade yielding a log-linear fit of $R^2 = 0.97$ with a fractional ratio $\lambda_t = 1.28$ (b). The collapse is driven by a dominant top singular value (c).}
    \label{fig:uci_abalone}
\end{figure*}
 
\begin{figure*}[htbp]
    \centering
    \includegraphics[width=\textwidth]{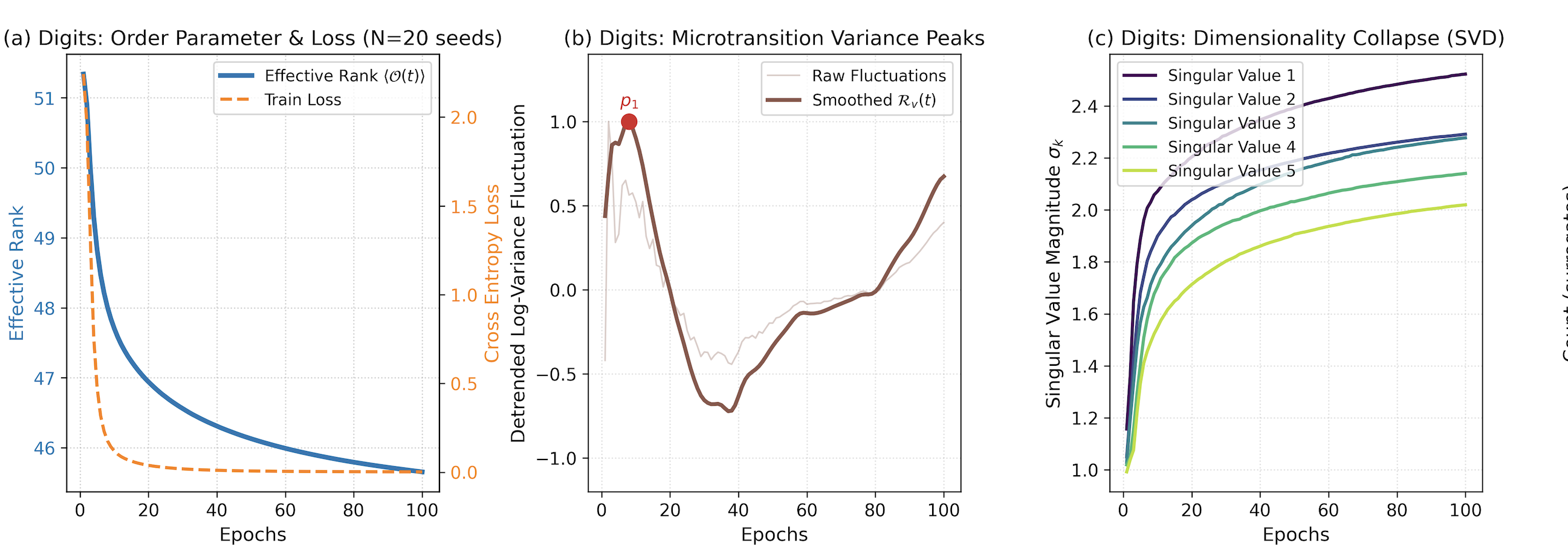}
    \caption{\textbf{Digits:} Variance fluctuations map the topological condensation of the network during early-stage SGD optimization.}
    \label{fig:uci_digits}
\end{figure*}
 
\begin{figure*}[htbp]
    \centering
    \includegraphics[width=\textwidth]{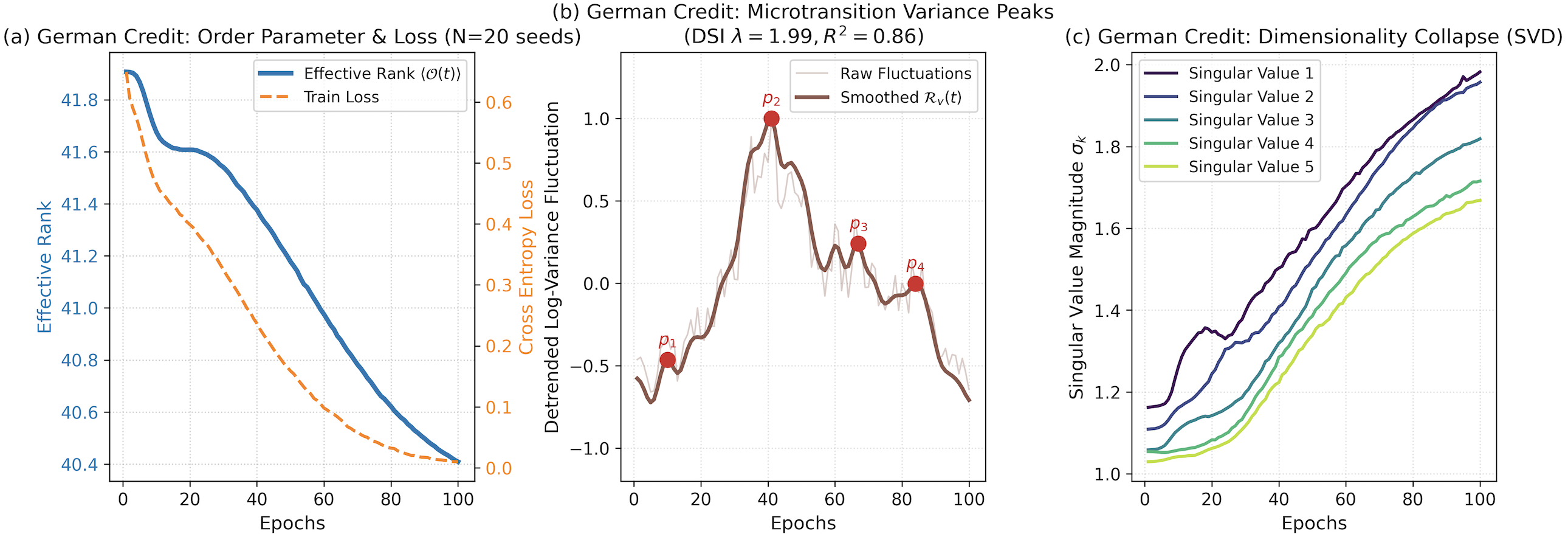}
    \caption{\textbf{German Credit:} The network exhibits a 4-peak cascade ($\lambda_t = 1.99, R^2 = 0.86$), but yields a spectral null False Positive Rate of $80.2\%$, acting as a negative control.}
    \label{fig:uci_german}
\end{figure*}
 
\begin{figure*}[htbp]
    \centering
    \includegraphics[width=\textwidth]{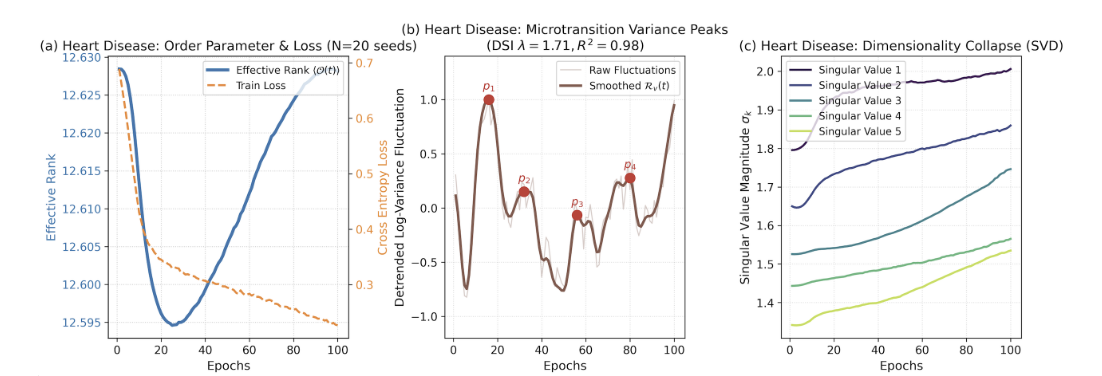}
    \caption{\textbf{Heart Disease:} Dimensionality collapse (c) is mirrored by a sequential 4-peak variance cascade ($\lambda_t = 1.71, R^2 = 0.98, \text{FPR} = 4.9\%$) mapping the internal topological constraints.}
    \label{fig:uci_heart}
\end{figure*}

\subsection{Geometric Manifolds, Vision, and Algorithmic Grokking}
 
To verify that these scaling laws are not strictly an artifact of tabular data or shallow MLPs, we extend the empirical validation to geometric manifolds (Moons, Swiss Roll), vision benchmarks (MNIST, FashionMNIST), and algorithmic tasks (Modular Arithmetic). 
 
Critically, the Modular Arithmetic task utilizes a Transformer architecture optimized via AdamW with explicit weight decay. This algorithmic setting inherently induces grokking, a dynamic phenomenon wherein deep neural networks initially overfit the training distribution before abruptly acquiring perfect generalization after extensive continued optimization \citep{power2022grokking}. Mechanistic analyses reveal that this delayed performance spike masks a representational phase transition; the network progressively dismantles high-rank, dense memorization circuits to construct low-dimensional, structured algorithmic manifolds \cite{nanda2022progress, liu2022towards}. Driven by explicit regularization, specifically the weight decay utilized in our AdamW optimizer, and inherent stochastic gradient noise, the optimization trajectory continuously penalizes complex parameterizations, forcing the weights to systematically escape sharp memorizing minima \cite{thilak2022slingshot}. Weight decay enters condition \eqref{eq:adam_condition} of Theorem \ref{thm:adam_trapping} through the inward term $-\eta\lambda\boldsymbol{x}$, and Remark \ref{remark:adam_diagnostic} reports the measurements. The transverse drift toward permutation coincidence of feed-forward units is inward at $63$ to $74\%$ of checkpoints across three realizations, the trapping condition holds at $50$ to $62\%$, and $Y$ contracts by up to $3.6$ decades between slingshot instabilities, which reset the condensation phase about once per $1.4\,\tau_2$. Consequently, we expect this prolonged macroscopic dimensionality reduction to happen through a sequence of discrete topological collapses. The Transformer's subsequent delayed generalization is consistent with this hypothesis. 
 
Across $15$ independent initializations, the final train loss converges to $0.005 \pm 0.004$, validation loss drops to $0.030 \pm 0.033$, and effective rank collapses to $10.93 \pm 0.99$. The detrended variance isolates a log-linear 3-peak cascade with $\lambda_t = 2.11$. With three peaks the fit has one degree of freedom, so the evidence for the cascade is the spectral null, which gives a False Positive Rate of at most $0.1\%$ (no surrogate matched the data). We test on addition modulo $17$ for $30000$ epochs with AdamW weight-decay coefficient $\lambda = 0.5$ and learning rate $0.003$; the three peaks lie at epochs $4500$, $9500$ and $20000$. Fourier pruning of the embedding, the progress measure of \citet{nanda2022progress}, completes by epoch $2850$ in three probe seeds, so the cascade runs after it at fixed Fourier support. In the same seeds the embedding columns coalesce at a tube radius of about half their norm in halving levels, the first reached at epochs $3960$ to $5470$, while coincidence of the units on which the permutation symmetry acts is measured on the feed-forward block (Remark \ref{remark:adam_diagnostic}).
 
The high-dimensional vision benchmarks mirror this discretized capacity reduction. FashionMNIST exhibits consecutive variance peaks highlighting structural scaling, yielding a 3-peak cascade with $\lambda_t = 1.57$ and a spectral null FPR of $10.7\%$. Conversely, MNIST produces a 2-peak sequence ($p_1, p_2$) mapping early-stage topological condensation without forming a full extended cascade.
 
Evaluations on geometric manifolds map the collapse phase as the MLP parameterizes spatial embeddings. Moons isolates a single primary microtransition ($p_1$) navigating the non-linear 2D geometry. Swiss Roll similarly identifies a primary structural microtransition ($p_1$) charting the 3D spatial embedding. These initial peaks capture the epoch of spatial alignment consistent with discrete topological condensation during early-stage optimization.
 
\begin{figure*}[htbp]
    \centering
    \includegraphics[width=\textwidth]{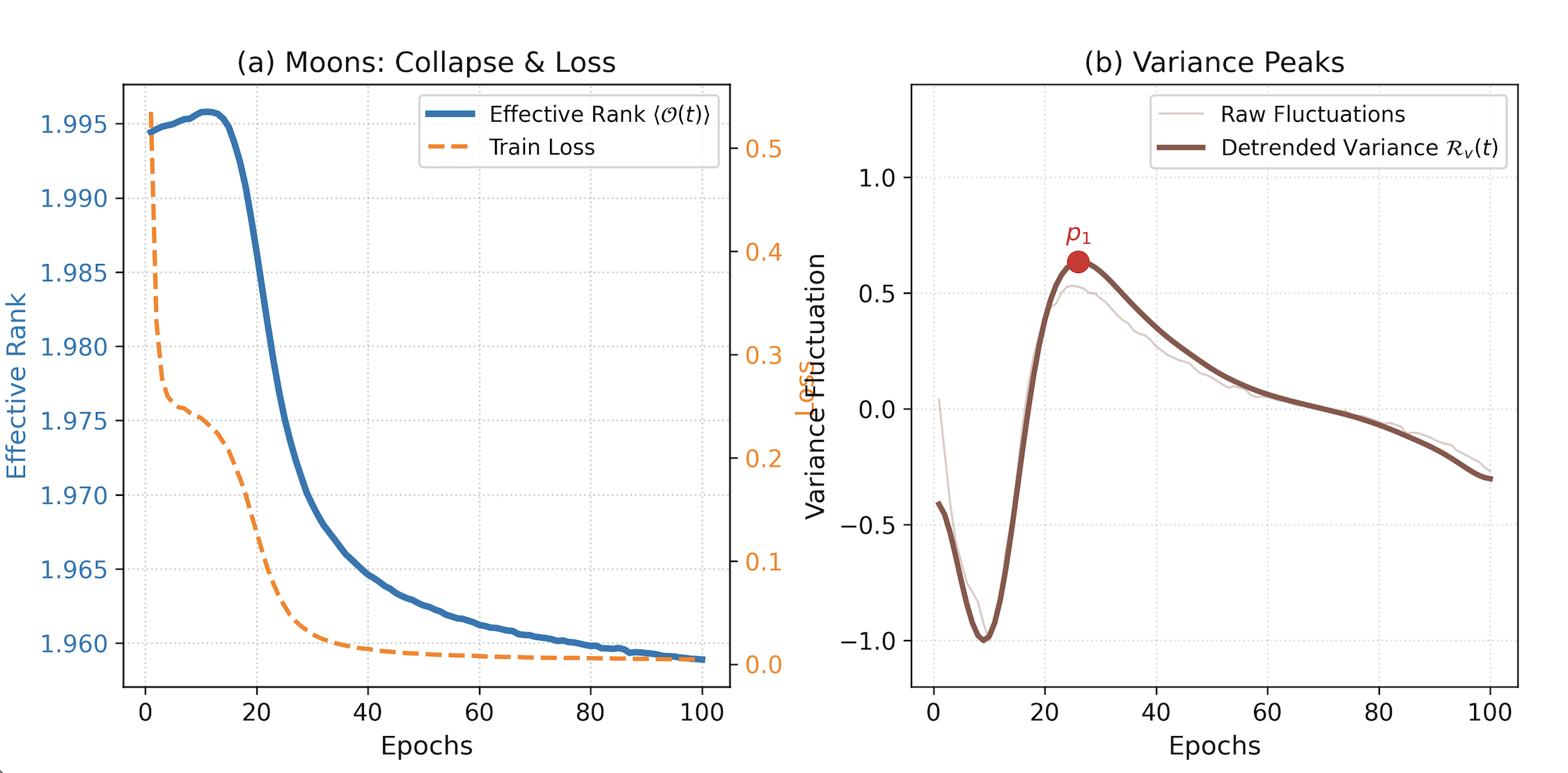}
    \caption{\textbf{Moons (Geometric Manifold):} Collapse phase mapped by variance fluctuations as the MLP parameterizes the 2D non-linear manifold.}
    \label{fig:suite_moons}
\end{figure*}
 
\begin{figure*}[htbp]
    \centering
    \includegraphics[width=\textwidth]{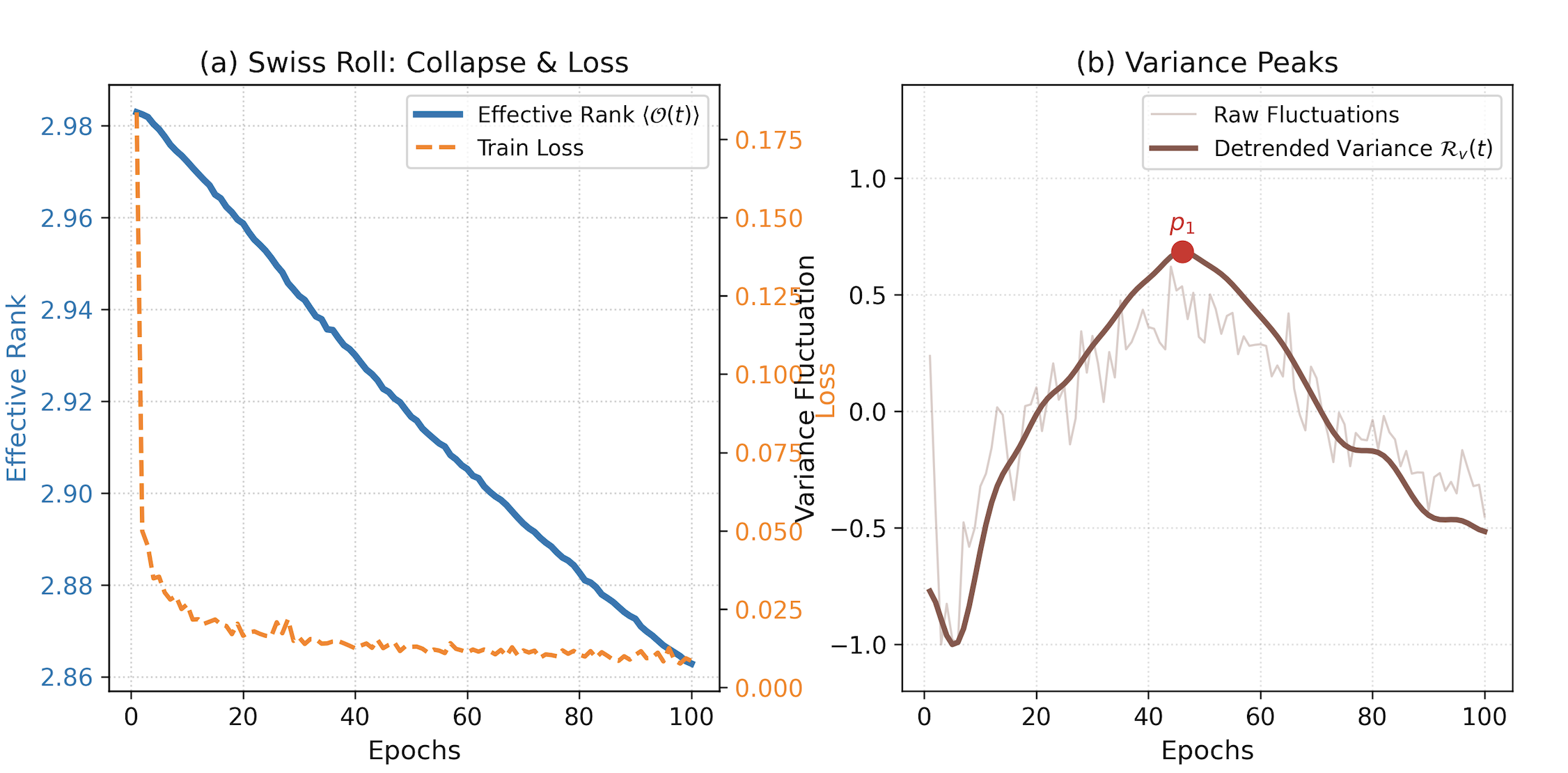}
    \caption{\textbf{Swiss Roll (Geometric Manifold):} Sequential structural microtransitions navigate the 3D spatial embedding of the data.}
    \label{fig:suite_swiss}
\end{figure*}
 
\begin{figure*}[htbp]
    \centering
    \includegraphics[width=\textwidth]{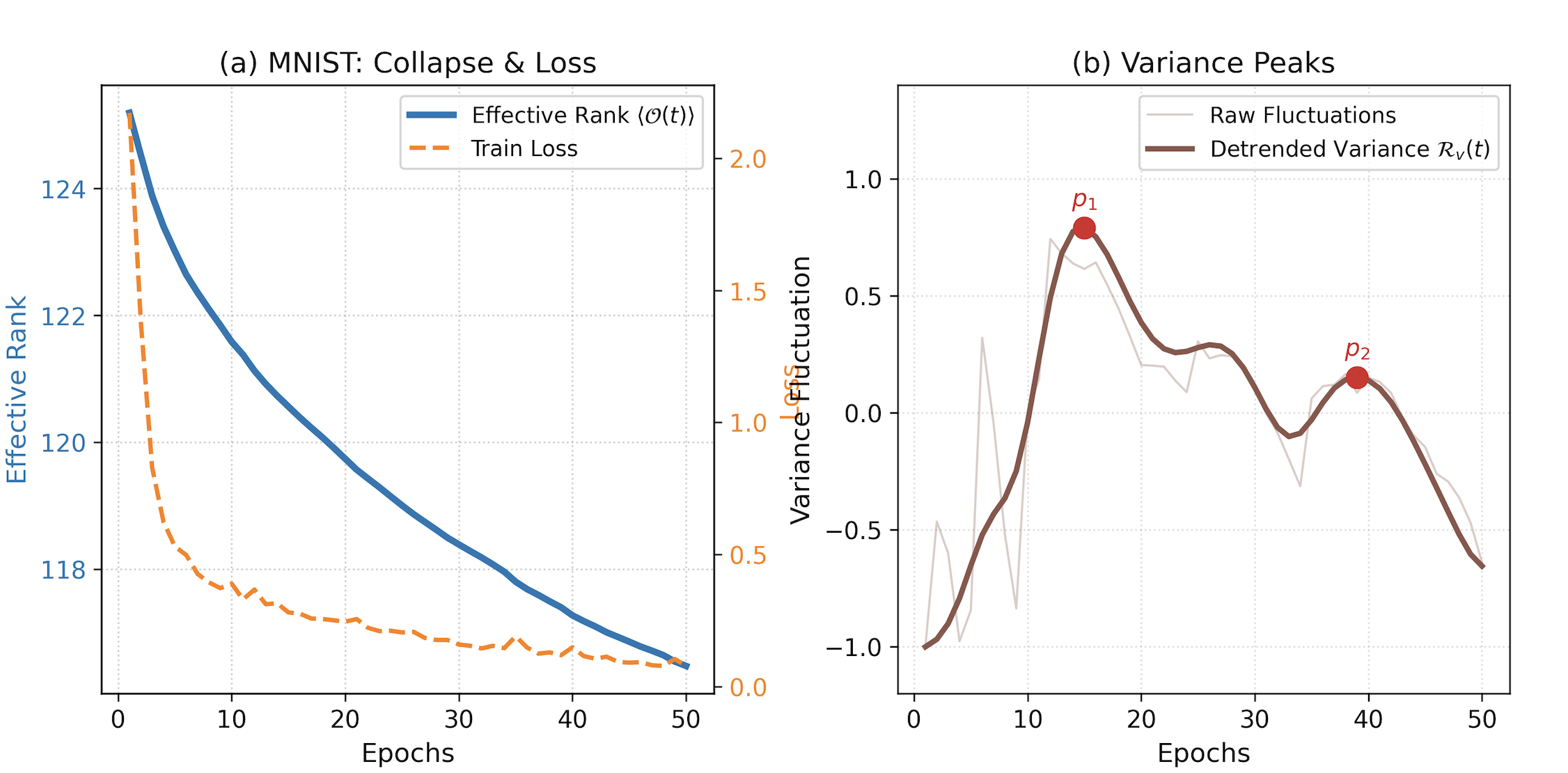}
    \caption{\textbf{MNIST (Vision):} Variance fluctuations produce a 2-peak sequence ($p_1, p_2$) mapping the topological condensation of the network during early-stage SGD optimization.}
    \label{fig:suite_mnist}
\end{figure*}
 
\begin{figure*}[htbp]
    \centering
    \includegraphics[width=\textwidth]{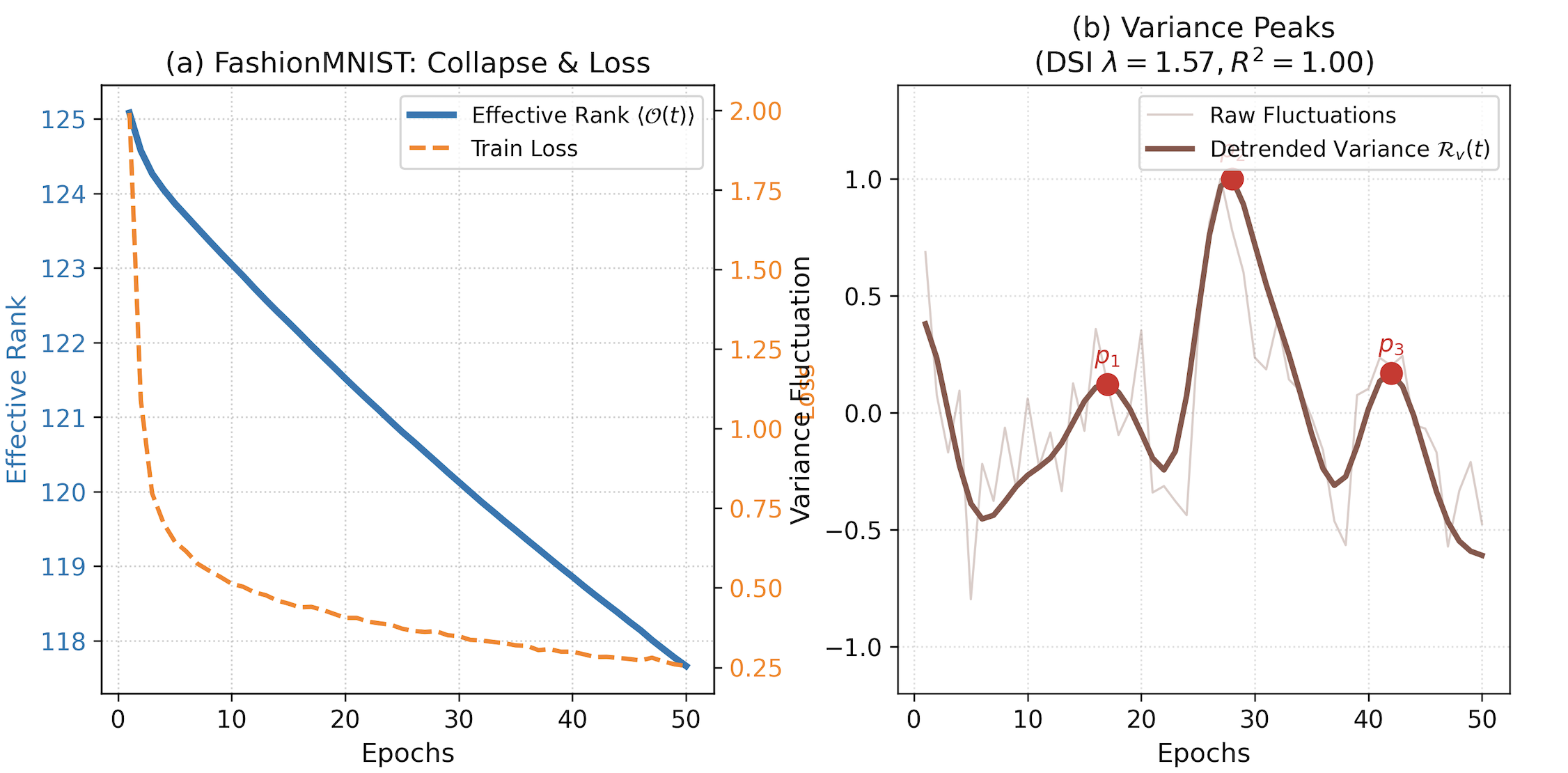}
    \caption{\textbf{FashionMNIST (Vision):} Detection of consecutive variance peaks highlighting the discretized nature of capacity reduction ($\lambda_t = 1.57, \text{FPR} = 10.7\%$).}
    \label{fig:suite_fashionmnist}
\end{figure*}
 
\begin{figure*}[htbp]
    \centering
    \includegraphics[width=\textwidth]{images/modular_grokking_null.png}
    \caption{\textbf{Modular Arithmetic (Grokking):} A shallow Transformer trained with AdamW experiences a delayed topological collapse. The macroscopic decay of the token-embedding table's effective rank is composed of a log-linear 3-peak cascade ($\lambda_t = 2.11, \text{FPR} = 0.1\%$).}
    \label{fig:suite_grokking}
\end{figure*}

\end{document}